\documentclass[11pt]{article}
\pdfoutput=1

\usepackage{mathrsfs}
\usepackage{amsmath,amssymb}
\usepackage{bm}
\usepackage{natbib}
\usepackage[usenames]{color}
\usepackage{amsthm}
\usepackage{mathtools}
\usepackage{multirow} 
\usepackage{enumitem}
\usepackage{graphicx}
\usepackage{subcaption}
\usepackage{booktabs}
\usepackage[colorlinks,
linkcolor=red,
anchorcolor=blue,
citecolor=blue
]{hyperref}

\usepackage{mylatexstyle}

\usepackage{setspace}
\usepackage[left=1in, right=1in, top=1in, bottom=1in]{geometry}

\usepackage{xcolor}

\ifdefined\final
\usepackage[disable]{todonotes}
\else
\usepackage[textsize=tiny]{todonotes}
\fi
\title{\huge One-Layer Transformers Provably Learn In-Context K-Nearest Neighbor Prediction with Chain-of-Thought}

\author
{
Lyumin Wu\thanks{
\scriptsize School of Computing \& Data Science, The University of Hong Kong; {\tt lyuminwu001@connect.hku.hk}}
\qquad
Yuan Cao\thanks{
\scriptsize School of Computing \& Data Science, 
	The University of Hong Kong;  {\tt yuancao@hku.hk}}
}

\date{}

\begin{document}

\title{\huge One-Layer Transformers Provably Learn In-Context K-Nearest Neighbor Prediction with Chain-of-Thought}

\maketitle

\begin{abstract}

Chain-of-Thought (CoT) enables transformers to solve complex reasoning tasks by generating intermediate reasoning steps. Despite strong empirical success, the mechanisms by which such reasoning abilities emerge during training remain poorly understood, particularly from a theoretical perspective. In this work, we study this question in a stylized yet fully analyzable setting: in-context $\mathrm{K}$-nearest-neighbor ($\mathrm{K}$-NN) prediction with a one-layer transformer. Earlier work has demonstrated that one-layer transformers can be trained to perform $1$-NN without CoT \citep{Li2024One} by having the softmax attention attend to the nearest neighbor in context. However, this result on $1$-NN cannot be extended to $\mathrm{K}$-NN for $K>1$, as softmax attention is much less naturally suited to attending to the 2nd through $K$-th nearest neighbors. We first give a rigorous negative result showing that this limitation already appears on a well-separated class of binary classification tasks for odd $K>1$. We then give an explicit construction of a one-layer transformer, and show that it can solve $\mathrm{K}$-NN via CoT reasoning. We further prove that, under a stylized training setup, gradient descent can recover this construction, thereby showing that transformers can acquire the capability to solve $\mathrm{K}$-NN through training. In addition, we show that the resulting trained model applies to a broader test class than that assumed in the training analysis. Our results shed light on how CoT expands the ability of transformers to learn and solve tasks that are otherwise hard to solve. 
\end{abstract}

\section{Introduction}
Transformer-based \citep{Vaswani2017Attention} large language models (LLMs) have achieved remarkable success across various tasks, including natural language processing \citep{Brown2020Language,Touvron2023Llama,Touvron2023Llama2}, image and video generation \citep{Liu2024Sora}, and multimodal learning \citep{Achiam2023Gpt,Singh2025Openai}. Notably, these models have also achieved state-of-the-art performance on complex reasoning tasks \citep{Kojima2022Large,Zhou2022Least,Chowdhery2023Palm,Zhong2024Evaluation}. A key factor driving this progress is Chain-of-Thought (CoT), which enables LLMs to generate intermediate reasoning steps autoregressively rather than producing only a final answer. This process enhances the models’ ability to solve complex problems. However, the mechanisms underlying CoT reasoning remain poorly understood, posing fundamental theoretical challenges.

Theoretical studies on transformers with CoT have recently progressed along three main directions: expressive power, statistical properties, and training theory. On the expressivity side, prior work shows that intermediate reasoning steps can expand the effective computational power of transformers, enabling shallow architectures to solve problems beyond direct-answer prediction and establishing reasoning length as a key computational resource \citep{Feng2023Towards,Merrill2023expressive,Li2024Chain,Amiri2025Lower}. On the statistical side, recent analyses formalize CoT as a structured decomposition of complex inference into simpler intermediate subproblems, leading to improvements in sample efficiency, robustness, and generalization \citep{Prystawski2023Why, Hu2024Unveiling,Li2025Training}. Complementing these results, a growing line of training-theoretic work characterizes how transformers learn multi-step reasoning capabilities under gradient-based optimization, focusing on tasks such as in-context linear regression, symbolic path finding, and synthetic state tracking \citep{Huang2025Transformers,Yang2025Multi,Huang2025Length}.

Among these directions, the training-theoretic perspective is most closely aligned with practical LLM development, as it aims to explain how transformers acquire CoT-style multi-step reasoning capabilities through gradient-based training. Recent work has made encouraging progress in this direction by analyzing several structured settings, but current theoretical understanding remains limited. Following this perspective, we investigate the in-context $\mathrm{K}$-nearest-neighbor ($\mathrm{K}$-NN) problem. Prior work has shown that one-layer transformers can be trained to perform in-context $1$-NN without CoT \citep{Li2024One}. However, this result does not directly extend to $\mathrm{K}$-NN with $K>1$. We prove a proposition showing that direct inference already fails for odd $K>1$, which motivates our study of the CoT setting. We then show that a one-layer transformer admits an explicit construction for solving the $\mathrm{K}$-NN problem via CoT by giving both an asymptotic expressivity guarantee and a finite-margin correctness statement. We further show that gradient descent can recover this construction, and that the resulting trained model applies to a broader test class than the one specified by the training distributional assumptions.

\vspace{0.1cm}
\noindent\textbf{Contributions.} \hspace{0.05cm} We summarize the main contributions of this work as follows.
\begin{itemize}[left=5mm, labelsep=5pt, itemsep=1pt, topsep=1pt]

\item We prove that a one-layer transformer without CoT cannot perform $\mathrm{K}$-NN via direct inference for odd $K>1$ (Proposition~\ref{prop: one layer gap}). In contrast, a one-layer transformer admits an explicit construction for solving the $\mathrm{K}$-NN problem via CoT (Theorem~\ref{thm:expressive power} and Proposition~\ref{prop:positive_margin}). Together, these results provide a concrete example, in a controlled theoretical setting, of how CoT can expand the computational capabilities of transformers.

\item We further provide a convergence analysis of the population loss for a one-layer transformer trained by gradient descent (Theorem~\ref{thm:training convergence}), showing that gradient descent recovers the construction in Theorem~\ref{thm:expressive power}. This extends the in-context 1-NN result of~\citet{Li2024One} to $\mathrm{K}$-NN prediction with CoT. Combining the training and construction analyses further shows that the resulting trained model applies beyond the distributional assumptions used in the training analysis.

\item Our work contributes to a more comprehensive theoretical understanding of how Transformers with CoT can perform complex tasks. Prior work has shown that CoT can implement gradient descent for in-context linear regression~\citep{Huang2025Transformers}, repeatedly apply transition rules to update a state~\citep{Huang2025Length}, and autoregressively trace paths between nodes in trees~\citep{Yang2025Multi}. In contrast, our work theoretically identifies and formalizes a distinct CoT mechanism based on iterative token selection and masking, thereby broadening the current theoretical understanding of CoT.



\item Our work introduces several proof ideas and techniques that may be of independent interest. We identify a mechanism where the conventionally non-sequential $\mathrm{K}$-NN task can be reformulated as a sequential procedure based on iterative token selection and masking, thereby enabling CoT-based prediction, as formalized in Lemma~\ref{lem:expressive power mechanism}. We also develop technical tools for analyzing softmax-attention training, including contradiction and compactness arguments that establish coefficient divergence (Appendix~\ref{sec:parameter divergence}) and drift and crossing arguments that characterize the precise limiting ratios among the coefficients (Appendix~\ref{sec:convergence of the population loss}). Together, these tools reveal how a model can simultaneously learn token selection and masking.

\end{itemize}

\textbf{Notation.} For a scalar $x$, let $\mathrm{sign}(x) = 1$ if $x > 0$, $0$ if $x = 0$, and $-1$ otherwise. For a vector $\vb\in\RR^d$ and a matrix $\Ab\in\RR^{m\times n}$, let $\|\vb\|$ denote the $\ell_2$-norm of $\vb$ and $\|\Ab\|$ the spectral norm of $\Ab$.
We write $\Ab_{:,j}\in\RR^m$ and $\Ab_{i,:}\in\RR^n$ for the $j$-th column and $i$-th row of $\Ab$, respectively.
The symbols $\mathbf{0}_n$ and $\mathbf{0}_{m\times n}$ denote the zero vector in $\RR^n$ 
and the zero matrix in $\RR^{m\times n}$, respectively. 
We denote by $\Ib_d$ the $d$-dimensional identity matrix, 
and by $\Ib$ the identity matrix when the dimension is clear. For an integer $n$, we define $[n] = \{ 1,2, \ldots, n \}$. 
For two sequences $\{a_n\}$ and $\{b_n\}$, we denote $a_n = O(b_n)$ if there exists a constant $C> 0$ such that $|a_n|\leq C |b_n|$ for all large enough $n$. We denote $a_n = \Omega(b_n)$ if $b_n = O(a_n)$. We say $a_n = \Theta(b_n)$ if both $a_n = O(b_n)$ and $a_n = \Omega(b_n)$ hold.

\section{Related Work}
\noindent \textbf{In-context learning.} In-context learning describes the ability of transformers to perform previously unseen tasks without updating their model parameters. \citet{Bai2023Transformers} demonstrated the in-context learning capacities of transformers by constructing multi-head ReLU transformers capable of performing a variety of learning methods, including ridge regression, lasso regression, generalized linear models, and learning with shallow neural networks. \citet{Zhang2024Trained} studied the gradient-flow dynamics of a linear self-attention model. \citet{Huang2024context} analyzed linear prediction tasks using a softmax-attention layer trained via gradient descent. \citet{Li2024One} investigated how transformers perform in-context learning when solving one-nearest-neighbor selection problems. \citet{zhang2024context} showed that the MLP component of a transformer block can learn to simulate the initialization step of in-context gradient descent. \citet{siyu2024training} investigated the gradient-flow dynamics of training multi-head softmax attention and showed that all heads exhibit distinct patterns and interact jointly to solve in-context linear regression.  \citet{zhang2026logistic} showed that softmax transformers can perform in-context logistic regression via normalized gradient descent by recurrently applying a trained self-attention layer.

\noindent \textbf{Transformers as more general algorithm learners.} Beyond standard in-context learning settings, a growing body of work studies how transformers learn more general algorithmic procedures. A line of work studies transformers on Markovian or Markov-like sequence models. \citet{Edelman2024evolution} empirically demonstrated that transformers trained on samples generated by a Markov chain in a bigram setting can develop statistical induction heads. \citet{Nichani2024How} proved that gradient descent can train transformers to learn a generalized induction head on latent causal structures. \citet{Chen2024Unveiling} analyzed a modified two-layer transformer on $n$-gram Markov data. \citet{shi2025towards} showed that gradient-trained softmax attention acts as a token selector for random-walk prediction. A complementary body of work examines optimization in more specialized transformer architectures and training regimes. This includes analyses of vision transformers and their sample complexity \citep{Jelassi2022Vision}, the role of adaptive optimizers in transformer training \citep{pan2023toward, zhang2020adaptive}, the mechanisms through which transformers acquire semantic or topical structure \citep{li2023transformers}, and the optimization of looped transformer architectures, where \citet{wu2026looped} proved that gradient descent on looped linear transformers with layer normalization converges to solutions implementing the power method for principal-component prediction. 
Other studies investigate the joint training dynamics of attention, decoder, and MLP components \citep{Tian2023Scan, tianjoma2024}, characterize the implicit bias of softmax attention by linking its optimization dynamics to max-margin or SVM-like solutions \citep{tarzanagh2023transformers, ataee2023max, Huang2025How}.
Finally, several recent results demonstrate the capability of trained softmax transformers to perform token selection and identify task-relevant structure: \citet{Wang2024Transformers} proved that a one-layer transformer trained by gradient descent can learn sparse token selection, while \citet{zhang2025group,zhang2026teacher} established optimal variable selection in group-sparse classification and teacher-student learning setups.

\noindent \textbf{Understanding Chain-of-Thought Reasoning.} Recently, several studies have sought to understand CoT from both experimental and theoretical perspectives. \citet{Sprague2025CoT} conducted a large-scale meta-analysis showing that CoT is most effective on mathematical, symbolic, and algorithmic tasks, whereas \citet{Stechly2024Chain} showed, in planning domains, that the gains from CoT rely heavily on problem-specific demonstrations, casting doubt on whether CoT yields robust procedural generalization. \citet{Liu2025Mind} identified task regimes in which CoT can systematically degrade performance. \citet{Chen2024Unlocking} proposed a reasoning-boundary framework to quantify CoT capability and analyze how different CoT strategies affect reasoning performance. \citet{Ton2025Understanding} developed an information-theoretic framework for evaluating the contribution of individual reasoning steps and diagnosing CoT failure modes without requiring annotated intermediate traces. \citet{Hou2023Towards,Cabannes2024Iteration} investigated CoT through the lens of mechanistic interpretability. On the theoretical side, most existing studies have examined how CoT enhances transformer expressiveness \citep{Feng2023Towards,Merrill2023expressive,Li2024Chain,Amiri2025Lower} and have explored CoT through its statistical properties \citep{Prystawski2023Why,Hu2024Unveiling,Li2025Training}. Other works have characterized the intrinsic limits of CoT-based reasoning, showing that transformers face inherent barriers to compositional computation and that some problems provably require long CoT traces \citep{Peng2024limitations}. Complementing these results, \citet{Huang2025Transformers,Yang2025Multi,Huang2025Length} characterized how transformers learn multi-step reasoning capabilities under gradient-based optimization, focusing on tasks such as in-context linear regression, symbolic path finding, and synthetic state tracking.
\section{Preliminaries}
\label{sec:preliminaries}
In this section, we first define the $\mathrm{K}$-NN problem and introduce the transformer architecture studied in this paper. Based on these definitions, we then show that direct inference by a one-layer transformer already fails on a well-separated class of inputs for odd $K>1$.


\subsection{\texorpdfstring{$\mathrm{K}$}{K}-Nearest Neighbor Problem}
\label{subsec:knn_problem}
Given a dataset \(\mathcal{D}=\{(\xb_i,y_i)\}_{i=1}^N\), where \(\xb_i\in\RR^d\) is the input feature vector and \(y_i\in\mathcal{Y}\) is the corresponding label, and a query instance \(\xb_q\in\RR^d\), we define the $\mathrm{K}$-NN problem as follows.

\begin{definition}[\(\mathrm{K}\)-nearest neighbor ordering and classification]
\label{def:knn}
Given a query \(\mathbf{x}_q\) and a dataset \(\mathcal{D}=\{(\mathbf{x}_i,y_i)\}_{i=1}^N\), a permutation \(\pi:[N]\to[N]\) is said to order the dataset points by increasing distance to \(\mathbf{x}_q\) if
\[
\|\mathbf{x}_q-\mathbf{x}_{\pi(1)}\|
<
\|\mathbf{x}_q-\mathbf{x}_{\pi(2)}\|
<
\cdots
<
\|\mathbf{x}_q-\mathbf{x}_{\pi(N)}\|.
\]
Then \(\xb_{\pi(k)}\) is the \(k\)-th nearest neighbor of \(\xb_q\). The $\mathrm{K}$-NN problem is to predict the label of \(\xb_q\) from the labels of its top-\(K\) nearest neighbors. Formally, the $\mathrm{K}$-NN target is defined as
$
\arg\max_{c\in\mathcal{Y}}
\sum_{k=1}^K \mathbf{1}\{y_{\pi(k)}=c\},
$
namely the majority label among the top-\(K\) nearest neighbors.
\end{definition}
In the definition above, we implicitly assume that \(\|\mathbf{x}_q-\mathbf{x}_i\| \neq \|\mathbf{x}_q-\mathbf{x}_j\|\) for all \(i\neq j\), so that the ordering permutation \(\pi\) is well defined. This assumption is satisfied almost surely whenever \((\mathbf{x}_1,\dots,\mathbf{x}_N,\mathbf{x}_q)\) are independent and continuously distributed, because the event of a distance tie then has probability zero. 
In this work, we focus on the binary-label case \(\mathcal Y=\{\pm1\}\). In this case, the \(\mathrm K\)-NN target reduces to
$
\sign(\sum_{k=1}^K y_{\pi(k)}),
$
where we take \(\sign(0)=0\) to represent abstention when the top-\(K\) set has no strict majority.

\subsection{Transformer Architecture}
\label{subsec:transformer_arch}
In this subsection, we introduce the stochastic positional encoding module, the input embedding matrix, and the transformer model considered in our analysis. We also give a negative result for using transformers without CoT to solve $\mathrm{K}$-NN tasks.

\noindent\textbf{Stochastic positional encodings.} Let $\{e_j\}_{j=1}^{N+1}$ denote the canonical basis of $\RR^{N+1}$. During each training epoch and at test time, a new set of stochastic positional encodings is generated as follows:
\begin{enumerate}[leftmargin = *, nosep]
\item a uniformly random permutation $\tau \sim \mathrm{Unif}(S_{N+1})$;
\item independent Rademacher signs $s_1, \ldots, s_{N+1} \overset{\mathrm{i.i.d.}}{\sim} \mathrm{Rad}(\pm 1)$.
\end{enumerate}
Here, $S_{N+1}$ denotes the set of all permutations of $[N+1]$, so that $\tau(i)$ specifies the permuted position assigned to the token $i$. We then define the one-hot signed positional vectors by
\begin{equation*}
\pb_i \coloneqq s_i\, e_{\tau(i)} \quad (i \in [N+1]).
\end{equation*}

Stochastic positional encodings have been employed in several prior works~\cite{Shen2023Positional,Wang2024Transformers,Dwivedi2026Relational}. Most closely related to our work, \cite{Wang2024Transformers} theoretically analyzes the training dynamics of one-layer transformers with near-orthogonal stochastic positional encodings, showing that these encodings improve length generalization while inducing distributional symmetries that simplify the characterization of gradient-descent trajectories. Our construction similarly induces a signed-permutation symmetry, which we exploit in Appendix~\ref{subsec:training_dynamics_reduced_k1} to characterize the training dynamics. Importantly, stochastic positional encodings are not essential to our expressive power guarantees (Theorem~\ref{thm:expressive power} and Proposition~\ref{prop:positive_margin}); these results extend directly to any set of mutually orthogonal positional encodings.

\noindent\textbf{Embedding construction.}
We construct the context matrix $\Hb \in \RR^{D \times N}$ and the initial query vector $\qb^{(0)} \in \RR^D$ by concatenating the features, labels, and positional encodings. Specifically, for each training set $\{(\xb_i, y_i)\}_{i=1}^N$ and query input $\xb_q$, the embedding $\hb_i$ (the $i$-th column of $\Hb$) and the initial query vector $\qb^{(0)}$ are given by $\hb_i = \big[ \xb_i^\top,\ y_i,\ \pb_i^\top \big]^\top$ and $\qb^{(0)} = \big[ \xb_q^\top,\ 0,\ \pb_{N+1}^\top \big]^\top$, respectively. Consequently, the initial input embedding matrix is in the following format:
\begin{equation}\label{eq:Z_definition}
    \Zb^{(0)} = [\Hb, \qb^{(0)}]=\begin{bmatrix} \xb_1 &\cdots &\xb_N& \xb_q\\ y_1 &\cdots &y_N&0\\ \pb_1&\cdots & \pb_N &\pb_{N+1} \end{bmatrix} \in \RR^{D\times (N+1)}.
\end{equation}

\noindent\textbf{One-layer self-attention transformer.}
Following the setup of \citet{Wang2024Transformers,Huang2025Transformers,Yang2025Multi}, we consider a one-layer self-attention transformer in which the projection and value matrices are combined into $\Vb \in \RR^{D \times D}$, and the key and query matrices are combined into $\Wb \in \RR^{D \times D}$. The transformer layer, parameterized by $\btheta = (\Vb, \Wb)$, is defined by
\begin{equation}\label{eq:TF_def}
    \mathrm{TF}(\Zb, \btheta) = \Zb + \Vb \Zb\cdot\mathrm{softmax}\left(\Zb^\top \Wb \Zb\right),
\end{equation}
where $\mathrm{softmax}(\Ab)$ applies the softmax operator on each column of the matrix $\Ab$, i.e., $[\mathrm{softmax}(\Ab)]_{ij} \allowbreak = \mathrm{exp}(\Ab_{ij} )/\sum_i \mathrm{exp}(\Ab_{ij} )$.

The transformer definition in \eqref{eq:TF_def} is slightly simplified relative to practical implementations, most notably in its use of merged key and query matrices. These simplifications reflect the limitations of our theoretical framework. However, similar simplifications are common in theoretical analyses of transformers \citep{Li2024One,Wang2024Transformers,Yang2025Multi}. Among these existing works, \citet{Li2024One} showed that transformers with frozen $\Vb = \Ib$ can be trained to perform $1$-NN prediction in context via direct inference. However, their result cannot be directly extended to the case of $\mathrm{K}$-NN with $K>1$, as softmax attention is much less naturally suited to attending to the 2nd through $K$-th nearest neighbors than to attending to the single nearest neighbor. 

To illustrate the limitation of direct inference by transformers, we define a class of well-separated input instances, and show that transformers in \eqref{eq:TF_def} cannot perform $\mathrm{K}$-NN even on such instances. Specifically, for \(\gamma>0\), let
\(\cX_\gamma\) be the class of input instances
\(\{(\xb_i,y_i)_{i=1}^N,\xb_q\}\) such that
\(\xb_1,\ldots,\xb_N,\xb_q\in\mathbb S^{d-1}\), \(y_i\in\{\pm1\}\), and
\[
\min\left\{
\min_{1\le k\le N-1}
\big(\|\xb_q-\xb_{\pi(k+1)}\|-\|\xb_q-\xb_{\pi(k)}\|\big),
\,
2-\|\xb_q-\xb_{\pi(N)}\|
\right\}\ge \gamma.
\]
Then we have the following proposition. 
\begin{proposition}\label{prop: one layer gap}
Assume \(d\ge 2\), \(3\le K\le N-2\), and \(K\) is odd. 
For any \(0<\gamma\le 2/(N+K+1)\), any one-layer transformer
\eqref{eq:TF_def} with fixed finite matrices
\(\Vb,\Wb\in\mathbb R^{D\times D}\) and any realization of the stochastic positional encodings, there exists an input instance in
\(\cX_{\gamma}\) such that the corresponding input embedding matrix $\Zb^{(0)}$ in \eqref{eq:Z_definition} satisfies
\[
\operatorname{sign}\big(\mathrm{TF}(\Zb^{(0)},\btheta)_{d+1,-1}\big)
\ne
\operatorname{sign}\Big(\sum_{k=1}^K y_{\pi(k)}\Big).
\]
\end{proposition}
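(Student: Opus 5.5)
The plan exploits the additive structure of a single softmax attention column. Write $\zb_j$ for the $j$-th column of $\Zb^{(0)}$, with $\zb_{N+1}=\qb^{(0)}$. Since $y_q=0$, the quantity $\mathrm{TF}(\Zb^{(0)},\btheta)_{d+1,-1}$ is a softmax-weighted average:
\[
\frac{\sum_{j=1}^{N+1} \exp(\zb_j^\top \Wb \qb^{(0)})\,(\Vb_{d+1,:}\zb_j)}{\sum_{j=1}^{N+1}\exp(\zb_j^\top \Wb \qb^{(0)})}.
\]
The denominator is positive, so the sign of the output equals the sign of
\[
S=\phi_{N+1}(\xb_q)+\sum_{i=1}^N \phi_i(\xb_i,y_i), \qquad \phi_i(\xb,y):=\exp\big([\xb;y;\pb_i]^\top \Wb \qb^{(0)}\big)\,\Vb_{d+1,:}[\xb;y;\pb_i].
\]
Once $\xb_q$ and the positional encodings are fixed, $S$ is a \emph{sum of per-index terms}: each term sees only its own token and never the rank of that token among the others. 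The $\mathrm{K}$-NN target, by contrast, depends on ranks. Because $K$ is odd, the target $\sign(\sum_{k\le K}y_{\pi(k)})$ is always $\pm1$ and never $0$. It therefore suffices to exhibit four instances $A,B,C,D\in\cX_\gamma$ (same $\xb_q$, same encodings) satisfying two conditions. First, the multiset of pairs (index, token) used by $A$ and $B$ together must equal that used by $C$ and $D$, so that $S_A+S_B=S_C+S_D$. Second, the targets must be $+1$ on $A,B$ and $-1$ on $C,D$. Then $S_A,S_B>0>S_C,S_D$ is impossible, so at least one of the four instances is misclassified.

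\textbf{Geometry.} Fix $\xb_q\in\mathbb S^{d-1}$; this uses $d\ge2$. Place candidate points on the sphere at $N+K+1$ distinct distance levels $\gamma,2\gamma,\dots,(N+K+1)\gamma$ from $\xb_q$. These levels fit in $(0,2-\gamma]$ exactly when $\gamma\le 2/(N+K+1)$, which is where the hypothesis on $\gamma$ is used. Any instance that assigns its $N$ points to $N$ distinct levels then lies in $\cX_\gamma$, since consecutive gaps are at least $\gamma$ and the farthest distance is at most $2-\gamma$. This gives a pool of spare levels from which tokens can be moved between ranks without leaving $\cX_\gamma$.

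\textbf{Combinatorial core (the main obstacle).} I need an exchange in which the per-index tokens are only reshuffled between the two pairs $\{A,B\}$ and $\{C,D\}$, yet the majority among the top $K$ flips; this requires that rank, not token identity, decides membership in the top $K$. My first attempt fixes $N-2$ background indices. Their labels are chosen so that the top-$(K-1)$ region is perfectly balanced, with $(K-1)/2$ labels of each sign; this is possible because $K$ is odd. The remaining two indices $i,j$ are the swing indices. For each swing index I allow two options: a near-level token labeled $+$, or a deeper-level token labeled $-$. The deeper level is chosen so that whether it lands at rank $K$ depends on whether the \emph{other} swing index took its near option, which pushes the background tokens down by one rank.

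The aim is an XOR-type pattern. With the four choices $(n,n),(n,f),(f,n),(f,f)$, the targets should be $+$ on the two "diagonal" instances and $-$ on the two "off-diagonal" ones, or vice versa. Since $\{(n,n),(f,f)\}$ and $\{(n,f),(f,n)\}$ use the same multiset of per-index tokens, this yields the identity $S_A+S_B=S_C+S_D$, and an XOR pattern is not representable as the sign of such an additive sum. If two swing indices with two options each do not suffice to realize XOR, I would enlarge to three options or three swing indices. The condition $K\le N-2$ leaves at least two indices beyond the top $K$ to serve as buffers that absorb the rank shifts. I would then look for any non-trivial solution of the same summability identity.

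Verifying, by a careful count of ranks, that the majority really flips as claimed is the delicate part of the argument. Once that is done, the contradiction above finishes the proof for arbitrary finite $\Vb,\Wb$ and any realization of the encodings.
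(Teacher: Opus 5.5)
Your starting reduction is sound and matches the paper's: with $\xb_q$ and the encodings fixed, the sign of $\mathrm{TF}(\Zb^{(0)},\btheta)_{d+1,-1}$ is the sign of a constant plus a sum of per-index terms. The paper writes this as $A+\sum_i B_i(d_i)y_i$, where $A$ is itself additive. An exchange $S_A+S_B=S_C+S_D$ with targets $+1,+1,-1,-1$ would indeed finish the proof.

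The gap is that you never construct the exchange, and the design you describe cannot produce one. Replacing a token $(x',-1)$ by $(x,+1)$ at a closer level never decreases $\sum_{k\le K}y_{\pi(k)}$. Relabeling in place changes the sum by $0$ or $+2$. Moving a $+1$ token closer then either leaves the top-$K$ set unchanged, or inserts it in place of the old $K$-th token, changing the sum by $1-y\ge 0$. So in your two-swing design the target is monotone in each choice: $t(n,n)\ge t(n,f),\,t(f,n)\ge t(f,f)$. The diagonal and off-diagonal pairs can therefore never carry opposite signs, and the XOR pattern is impossible. The alternatives you mention (three options, three swing indices, ``any non-trivial solution'') are left unspecified. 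This combinatorial step is the whole content of the proposition, so the proposal does not yet prove it. A smaller slip: under $\gamma\le 2/(N+K+1)$, only the $N+K$ levels $\gamma,\dots,(N+K)\gamma$ are guaranteed to lie in $(0,2-\gamma]$, not $N+K+1$ of them.

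What is missing is a non-monotone exchange: one unit flips a whole label pattern, the other shifts ranks by a balanced pair. Here is one that works.
Let $m=(K+1)/2$. Put a set $L$ of $m-1$ indices at levels $(K+1)\gamma,\dots,(K+m-1)\gamma$, and a set $P$ of $m$ indices at $(K+m)\gamma,\dots,2K\gamma$. Let $\sigma$ label $L$ by $-1$ and $P$ by $+1$, and let $\bar\sigma$ be the reverse. Take two more indices $u,v$ (available because $K\le N-2$), labeled $+1$ and $-1$. They sit either far, at $(2K+1)\gamma,(2K+2)\gamma$, or near, at $\gamma,2\gamma$. The remaining indices occupy fixed levels up to $(N+K)\gamma$.

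With $u,v$ far, the top-$K$ set is $L\cup P$, so the target is $+1$ under $\sigma$ and $-1$ under $\bar\sigma$. With $u,v$ near, the top-$K$ set is $\{u,v\}\cup L$ plus the first $m-2$ indices of $P$, so the target is $-1$ under $\sigma$ and $+1$ under $\bar\sigma$. Now set $A=(\sigma,\mathrm{far})$, $B=(\bar\sigma,\mathrm{near})$, $C=(\bar\sigma,\mathrm{far})$, $D=(\sigma,\mathrm{near})$. These lie in $\cX_\gamma$ and have the required targets. They also have identical per-index token multisets, provided you use one fixed sphere point per level (e.g.\ the paper's $\xb(t)$). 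This completes your argument.

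Completed this way, your route is shorter than the paper's. The paper instead derives the dominance inequality $B_r(d_r)>\sum_{j\notin S}|B_j(d_j)|$ and the balance inequality over $m$-subsets of the top-$K$ set. It then reaches a contradiction by covering $P$ with sets $T_\ell$, $\ell\in L$, each placed just outside the top $K$ behind $\ell$.
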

We defer the detailed proof to Appendix~\ref{appendix: proof of one layer gap}. Proposition \ref{prop: one layer gap} shows that, although one-layer transformers are capable of performing in-context $1$-NN prediction \citep{Li2024One}, this direct-inference mechanism already breaks down for odd $K>1$, even on a well-separated class of inputs. This limitation motivates our study of the CoT setting.

\section{Construction of Transformers}\label{sec:construction}
In this section, we first introduce the CoT process, which enables transformers to generate intermediate results autoregressively before producing the final answer. We then show that there exists a parameter setting $\btheta$ such that the one-layer transformer \eqref{eq:TF_def} can solve the $\mathrm{K}$-NN problem.

\noindent\textbf{Prediction with CoT.} With CoT, we enable the transformer to generate $K-1$ steps before it outputs the final prediction. Specifically, at the $k$-th generation step, given the input sequence $\hat{\Zb}^{(k)}$, we obtain $\mathrm{TF}(\hat{\Zb}^{(k)}, \btheta)_{:, -1}$ as the prediction of the $(k+1)$-th token and append it to the end of the current sequence such that $\hat{\Zb}^{(k+1)} = [\hat{\Zb}^{(k)},\mathrm{TF}(\hat{\Zb}^{(k)}, \btheta)_{:, -1}]$. After $K-1$ reasoning steps, the CoT process induces $K-1$ sequences $\{\hat{\Zb}^{(k)}\}_{k=1}^{K-1}$ in the following form:
\begin{equation*}
    \hat{\Zb}^{(k)} =[\Zb^{(0)}, \mathrm{TF}({\Zb}^{(0)}, \btheta)_{:, -1},\ldots,\mathrm{TF}(\hat{\Zb}^{(k-1)}, \btheta)_{:, -1}]
    \in \mathbb{R}^{D \times (N+k+1)}, k \in [K-1].
\end{equation*}
 Note that $\hat{\Zb}^{(0)} = \Zb^{(0)}=[\Hb,\qb^{(0)}]$.

 We impose the following parameters $\btheta = (\Vb, \Wb)$ in \eqref{eq:TF_def}: 
 \begin{equation*}
\Vb=
\begin{bmatrix}
\mathbf{0}_{d\times d} & \mathbf{0}_{d\times 1} & \mathbf{0}_{d\times (N+1)} \\
\mathbf{0}_{1\times d} & 1 & \mathbf{0}_{1\times (N+1)} \\
\mathbf{0}_{(N+1)\times d} & \mathbf{0}_{(N+1)\times 1} & \Ib_{N+1}.
\end{bmatrix},\quad \Wb=
\begin{bmatrix}
\alpha\Ib_d &  \mathbf{0}_{d\times 1} & \mathbf{0}_{d\times (N+1)} \\
\mathbf{0}_{1\times d} & 0 & \mathbf{0}_{1\times (N+1)} \\
\mathbf{0}_{(N+1)\times d} &  \mathbf{0}_{(N+1)\times 1} & \beta\Ib_{N+1},
\end{bmatrix}
 \end{equation*}
 where $\alpha$ and $\beta$ are scalars. 
The following theorem provides a construction of a one-layer transformer that solves the $\mathrm{K}$-NN problem.

\begin{theorem}\label{thm:expressive power}
    Suppose that, almost surely, $\xb_1,\dots,\xb_N,\xb_q\in \mathbb S^{d-1}$, $\|\xb_q-\xb_{\pi(1)}\|<\cdots<\|\xb_q-\xb_{\pi(N)}\|<2$, and $y_i\in\{\pm1\}\,\text{for every }i\in[N]$. For any $K \leq N$, let $\hat{y}_K = \mathrm{sign}(\sum_{k=1}^K y_{\pi(k)})$. Then, as $\alpha \to \infty$, $\beta \to -\infty$, and $\beta / \alpha \to -2$, it holds that
    \begin{equation*}
        \mathbb P\left(
\mathrm{TF}(\hat{\mathbf{Z}}^{(K-1)}, \boldsymbol{\theta})_{d+1, -1}\cdot\hat{y}_K<0
\right)\to0,
    \end{equation*}
    where the probability is taken with respect to the joint law of
\((\mathcal{D},\xb_q,\omega)\), where \(\mathcal{D}=\{(\xb_i,y_i)\}_{i=1}^N\) and
\(\omega=(\tau,s_1,\dots,s_{N+1})\) denotes the randomness in the stochastic positional encodings.
\end{theorem}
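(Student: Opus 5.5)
The plan is to show that, under the stated parameters, the CoT recursion carries out \emph{iterative selection and masking}. At step $k$ the query attends almost entirely to the $(k+1)$-th nearest neighbor. It also accumulates that neighbor's label into coordinate $d+1$ and its positional vector into the positional block, which masks the token from later steps. We argue pointwise for each fixed realization of $(\mathcal D,\xb_q,\omega)$ satisfying the almost-sure assumptions, and then pass to the probability statement by dominated convergence. The indicator is bounded by $1$, so it suffices to show that for every such realization the event $\mathrm{TF}(\hat{\Zb}^{(K-1)},\btheta)_{d+1,-1}\cdot\hat y_K<0$ fails once $\alpha$ is large enough.

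\textbf{Idealized recursion.} First I compute the ``ideal'' trajectory, in which softmax is replaced by an exact one-hot argmax. The claimed form of the $k$-th generated column is
\[
\qb^{(k)}=\Big[\xb_q^\top,\ \textstyle\sum_{j\le k} y_{\pi(j)},\ \big(\pb_{N+1}+\sum_{j\le k}\pb_{\pi(j)}\big)^\top\Big]^\top .
\]
Since $\Vb$ zeroes the feature block and acts as the identity on the label and positional blocks, an exact attention onto context token $\pi(k+1)$ maps $\qb^{(k)}$ to $\qb^{(k+1)}$.

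The attention logits of $\qb^{(k)}$, with $\beta=-2\alpha$, are computed using orthonormality of the signed one-hot vectors $\pb_i$.
\begin{itemize}
\item An unmasked context token $i\notin\{\pi(1),\dots,\pi(k)\}$ has logit $\alpha\,\xb_i^\top\xb_q$.
\item A masked context token $\pi(j)$, $j\le k$, has logit $\alpha\,\xb_{\pi(j)}^\top\xb_q-2\alpha<-\alpha$.
\item A previously generated column $\qb^{(m)}$, $m\le k$, has logit $\alpha+\beta(1+m)\le-\alpha$, because the overlap of the positional parts is $1+m$.
\end{itemize}
Since $\|\xb_q-\xb_i\|<2$ on the sphere, we have $\xb_i^\top\xb_q=1-\|\xb_q-\xb_i\|^2/2>-1$. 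Moreover, the strict distance ordering makes $\xb_{\pi(k+1)}$ the unique maximizer among the unmasked tokens. Hence there is a realization-dependent gap $c>0$ such that the logit of $\pi(k+1)$ exceeds every other logit by at least $c\alpha$.

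At step $K-1$, the label coordinate of the output is therefore $\sum_{j\le K}y_{\pi(j)}$. Its sign is $\hat y_K$, so the product is either $\ge1$ or equals $0$ when $\hat y_K=0$. In both cases it is not negative.

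\textbf{Perturbation control (main obstacle).} The actual trajectory differs from the ideal one because the softmax is not exactly one-hot and because $\beta=-2\alpha+o(\alpha)$. Both effects feed back into the next query, and the next query is multiplied by $\Wb$, whose entries grow like $\alpha$. I will prove by induction on $k$ that $\|\hat{\qb}^{(k)}-\qb^{(k)}\|\le\varepsilon_k(\alpha)$ with $\alpha\,\varepsilon_k(\alpha)\to0$.

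\begin{itemize}
\item The logits involve $\Wb$ only through the label-free blocks. The feature block of every column is exact, since $\Vb$ zeroes it, so the feature contribution to the logits carries no error.
\item The positional-block perturbation of size $\varepsilon_{k}$ shifts each logit by at most $|\beta|\,\varepsilon_{k}=O(\alpha\varepsilon_k)=o(1)$.
\item The replacement of $\beta$ by $-2\alpha+o(\alpha)$ shifts each logit by $o(\alpha)$.
\end{itemize}
Together these shifts are $o(\alpha)$, so the gap $c\alpha$ survives for large $\alpha$, and all non-target softmax weights are at most $(N+K)e^{-c\alpha/2}$. The outputs then satisfy $\varepsilon_{k+1}\le\varepsilon_k+O(e^{-c\alpha/2})$, and $\alpha e^{-c\alpha/2}\to0$ closes the induction. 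The delicate point here is keeping the positional error multiplied by $\beta$ under control, and the exponential decay of the softmax is what supplies this.

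\textbf{Conclusion.} With the induction in place, the final label coordinate converges to $\sum_{j\le K}y_{\pi(j)}$, so the product with $\hat y_K$ is eventually nonnegative for every admissible realization. The indicator of the bad event therefore tends to $0$ pointwise almost surely, and bounded convergence gives $\mathbb P(\cdot)\to0$.
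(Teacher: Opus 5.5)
Your proposal is correct and follows essentially the same route as the paper. Both arguments rest on three ingredients:
\begin{itemize}
\item the ideal selection-and-masking recursion, with normalized logits $\mathbf{x}_i^\top\mathbf{x}_q$, $\mathbf{x}_{\pi(j)}^\top\mathbf{x}_q-2$ and $-(2m+1)$;
\item a step-by-step induction showing the realization-dependent gap $\min\{\Delta_s,\,1+\mathbf{x}_{\pi(s)}^\top\mathbf{x}_q\}$ survives the $o(\alpha)$ logit perturbations;
\item bounded convergence to pass to the probability.
\end{itemize}
The differences are cosmetic: the paper states its induction qualitatively along an arbitrary sequence $(\alpha_n,\beta_n)$, with a priori $2^r$ norm bounds, which also makes the joint limit and the use of dominated convergence rigorous, whereas your quantitative $e^{-c\alpha/2}$ error recursion is closer to the paper's proof of the fixed-margin Proposition~\ref{prop:positive_margin}.
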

We defer the detailed proof to Appendix~\ref{appendix: proof of expressive power}. Theorem~\ref{thm:expressive power} gives an asymptotic expressivity result: under the sharp-attention limit $\alpha \to \infty$, $\beta \to -\infty$, and $\beta / \alpha \to -2$, a one-layer transformer can solve the $\mathrm{K}$-NN problem via CoT reasoning. If we further restrict attention to the $\gamma$-separated input instances in $\cX_{\gamma}$ defined in Section~\ref{subsec:transformer_arch}, this asymptotic construction can be strengthened to a finite-parameter correctness statement, which we state next.

\begin{proposition}[Uniform correctness under a fixed margin]\label{prop:positive_margin}
Under the construction in Theorem~\ref{thm:expressive power}, the following
holds. For any \(K\le N\) and \(\gamma>0\), there exist constants
\(\alpha_0>0\) and \(\delta_0>0\) depending only on \((\gamma,N,K)\) such
that if
\(\alpha\ge \alpha_0\) and \(\left|\beta/\alpha+2\right|\le \delta_0\), then 
\[
\operatorname{sign}\!\left(
\mathrm{TF}(\widehat{\Zb}^{(K-1)},\btheta)_{d+1,-1}
\right)
\equiv
\operatorname{sign}\Big(\sum_{k=1}^K y_{\pi(k)}\Big)
\]
for all instances in \(\cX_\gamma\) with
\(\sum_{k=1}^K y_{\pi(k)}\neq 0\) and all realizations of the stochastic positional encodings.
\end{proposition}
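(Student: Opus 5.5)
The plan is to track the sharp-attention mechanism of Theorem~\ref{thm:expressive power} quantitatively, by induction over the CoT steps, and to keep explicit error terms that are exponentially small in $\alpha$ with rates depending only on $(\gamma,N,K)$. The first step is to compute the attention scores of the last column against every other column. Write $\qb^{(k)}$ for the last column of $\widehat{\Zb}^{(k)}$. Because $\Vb$ kills the feature block, every generated column has feature part exactly $\xb_q$. Its label and position parts are $u_k$ and $\pb_{N+1}+\rb_k$, where $u_k$ is an accumulated weighted sum of labels and $\rb_k$ is a weighted combination of context positions. The induction hypothesis will be
\[
\Big|u_k-\sum_{j\le k}y_{\pi(j)}\Big|\le \epsilon_k,\qquad \Big\|\rb_k-\sum_{j\le k}\pb_{\pi(j)}\Big\|\le \epsilon_k ,
\]
with $\epsilon_k\le C(N,K)e^{-c\alpha}$. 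By orthonormality of the signed one-hot encodings, the score of context token $i$ is $\alpha\,\xb_i^\top\xb_q+\beta\,\pb_i^\top\rb_k$. This is $\approx\alpha\,\xb_i^\top\xb_q+\beta$ if $i\in\{\pi(1),\dots,\pi(k)\}$ and $\approx\alpha\,\xb_i^\top\xb_q$ otherwise. The score of any query or CoT column is $\approx\alpha+\beta(1+m)$, where $m\ge0$ counts common selected indices. This uses only orthogonality, which explains why the result holds for every realization of the encodings.

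The second step converts the $\gamma$-separation into score margins. On the sphere, $\xb_i^\top\xb_q=1-\|\xb_q-\xb_i\|^2/2$. Hence consecutive neighbors satisfy
\[
\xb_{\pi(k)}^\top\xb_q-\xb_{\pi(k+1)}^\top\xb_q=\tfrac12\big(\|\xb_q-\xb_{\pi(k+1)}\|+\|\xb_q-\xb_{\pi(k)}\|\big)\big(\|\xb_q-\xb_{\pi(k+1)}\|-\|\xb_q-\xb_{\pi(k)}\|\big)\ge\gamma^2/2 .
\]
Also every $\xb_i^\top\xb_q\ge -1+\gamma^2/2$ (more precisely $\ge -1+\gamma(2-\gamma)/2$), since $2-\|\xb_q-\xb_{\pi(N)}\|\ge\gamma$. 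Now take $\beta\in[-(2+\delta_0)\alpha,-(2-\delta_0)\alpha]$ with $\delta_0<\gamma^2/8$. The best unselected token $\pi(k+1)$ then beats every other unselected token by $\alpha\gamma^2/2$. It beats every already-selected token, whose score is at most $\alpha(1-2+\delta_0)+O(\alpha\epsilon_k)$, by at least about $\alpha(\gamma^2/2-\delta_0)$. It beats every query or CoT column, whose score is at most $\alpha(1-2+\delta_0)$, by the same amount. Consequently the softmax places mass at least $1-(N+K)e^{-c\alpha\gamma^2}$ on $\pi(k+1)$.

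The third step closes the induction and reads off the sign. The new column equals $\qb^{(k)}+\Vb\widehat{\Zb}^{(k)}\mathrm{softmax}(\cdot)$. Its label is $u_k$ plus a convex combination of labels; the context labels are $\pm1$, and CoT labels are bounded by $K+1$ by induction. Its position is $\pb_{N+1}+\rb_k$ plus a convex combination of positions, each of norm at most $K+1$. The off-target mass is therefore multiplied by constants depending only on $N,K$, giving $\epsilon_{k+1}\le \epsilon_k+C(N,K)e^{-c\alpha\gamma^2}$. The same computation at step $K-1$ gives
\[
\mathrm{TF}(\widehat{\Zb}^{(K-1)},\btheta)_{d+1,-1}=\sum_{k\le K}y_{\pi(k)}+O\big(e^{-c\alpha\gamma^2}\big).
\]
Since the nonzero sum has absolute value at least $1$, choosing $\alpha_0$ so that the error is below $1/2$ gives the sign identity.

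The main obstacle is the feedback in the second step. The errors $\epsilon_k$ in $\rb_k$ enter the scores multiplied by $|\beta|\approx2\alpha$, so the perturbation of the masking term is $O(\alpha\epsilon_k)$ rather than $O(\epsilon_k)$. I would handle this by noting that $\alpha\epsilon_k\le C\alpha e^{-c\alpha\gamma^2}\to0$, so it is absorbed into the margin $\alpha\gamma^2/4$ once $\alpha\ge\alpha_0(\gamma,N,K)$. A second point to check is that a leaked weight on a previously selected token adds $+$ mass to its coordinate of $\rb$, which only strengthens its masking and does no harm.
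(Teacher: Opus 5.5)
Your proposal is correct and follows essentially the same route as the paper: induction over the CoT steps with exponentially small errors, conversion of the distance gaps into inner-product gaps of order $\gamma^2$ (the paper's $\eta_\gamma=\min\{\gamma^2/2,\,2\gamma-\gamma^2/2\}$), absorption of the $O(\alpha\epsilon_k)$ perturbation of the masking term into a margin of order $\alpha\gamma^2$, softmax concentration, and integrality of $\sum_{k\le K}y_{\pi(k)}$ to read off the sign. The differences are minor: you bound competitor scores one-sidedly from above, which lets $\delta_0<\gamma^2/8$ be independent of $K$, whereas the paper compares two-sidedly with the limiting normalized scores and takes $\delta_0$ depending on $K$; you control token norms through the induction hypothesis rather than the a priori $2^r$ bounds; and your parenthetical bound should read $\xb_{\pi(N)}^\top\xb_q\ge -1+2\gamma-\gamma^2/2$, although the weaker bound $-1+\gamma^2/2$ that you actually use is valid.
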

We defer the detailed proof to Appendix~\ref{appendix: proof of positive_margin}. Proposition~\ref{prop:positive_margin} is directly comparable with Proposition~\ref{prop: one layer gap}, showing that a transformer with exactly the same architecture can perform $\mathrm{K}$-NN with $K$ reasoning steps. These two propositions together highlight the strength of CoT over direct inference in handling complicated tasks.

\noindent\textbf{Key intuition.} The key intuition behind Theorem~\ref{thm:expressive power} and Proposition~\ref{prop:positive_margin} is that our construction implements a sequential nearest-neighbor selection procedure. The following lemma summarizes the outcome of this selection procedure and serves as a key step in the proof of Theorem~\ref{thm:expressive power}.
\begin{lemma}\label{lem:expressive power mechanism}
Suppose that $\xb_1,\dots,\xb_N,\xb_q\in \mathbb S^{d-1}$, $\|\xb_q-\xb_{\pi(1)}\|<\cdots<\|\xb_q-\xb_{\pi(N)}\|<2$, and $y_i\in\{\pm1\}\,\text{for every }i\in[N]$. For any $K \leq N$, as $\alpha \to \infty$, $\beta \to -\infty$, and $\beta / \alpha \to -2$, it holds that
$
     \mathrm{TF}(\hat{\mathbf{Z}}^{(K-1)}, \boldsymbol{\theta})_{d+1, -1} - \sum_{k=1}^K y_{\pi(k)}\to 0
$.
\end{lemma}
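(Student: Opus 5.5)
We prove the lemma by induction along the CoT steps. The claim is that in the sharp-attention limit the $k$-th generated token converges to the ``clean'' token
\[
\ub^{(k)}_\star \coloneqq \Big[\xb_q^\top,\ \textstyle\sum_{j=1}^{k} y_{\pi(j)},\ \big(\pb_{N+1}+\sum_{j=1}^{k}\pb_{\pi(j)}\big)^\top\Big]^\top, \qquad k=0,1,\dots,K,
\]
where $\ub^{(0)}_\star=\qb^{(0)}$. Moreover, the error $\|\ub^{(k)}-\ub^{(k)}_\star\|$ is exponentially small in $\alpha$. The lemma is then the $(d+1)$-th coordinate of this statement at $k=K$, since $\mathrm{TF}(\hat\Zb^{(K-1)},\btheta)_{:,-1}=\ub^{(K)}$.

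\textbf{Two structural facts.} First, since $\Vb$ kills the feature block, the residual connection keeps the feature part of every generated token equal to $\xb_q$. Second, the label and position blocks of the output equal those of the current last column plus the attention-weighted average of the label and position blocks of all columns.

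\textbf{Scores for the clean query.} Take the query $\ub^{(k)}_\star$ with $k\le K-1$, and write $\delta_i=\|\xb_q-\xb_i\|$, so that $\xb_i^\top\xb_q=1-\delta_i^2/2$. Using that the $\pb$'s are signed orthonormal vectors, with $\pb_i^\top\pb_i=s_i^2=1$, the scores $\zb_j^\top\Wb\ub^{(k)}_\star$ are as follows.
\begin{itemize}
\item An unselected context token $i\notin\{\pi(1),\dots,\pi(k)\}$ scores $\alpha(1-\delta_i^2/2)$.
\item A selected context token scores $\alpha(1-\delta_i^2/2)+\beta$.
\item A query-type column $\ub^{(m)}$ with $m\le k$ scores $\alpha+\beta(1+m)$.
\end{itemize}
Write $\beta=-2\alpha+o(\alpha)$. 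The selected tokens then score at most $-\alpha+o(\alpha)$, and the query-type columns score at most $-\alpha+o(\alpha)$. Every unselected token scores strictly more than $-\alpha$, because $\delta_i<2$. Among the unselected tokens, which are nonempty because $k\le K-1\le N-1$, the unique maximizer is $\pi(k+1)$. Its score exceeds all other scores by at least $c\alpha-o(\alpha)$, where $c>0$ depends on the strict distance gaps and on $2-\delta_{\pi(N)}$. Hence the softmax weight off $\pi(k+1)$ is at most $(N+K)e^{-c\alpha/2}$ for large $\alpha$. Because all label and position blocks are bounded by constants depending on $N,K$, the output is $\ub^{(k)}_\star+\big[\mathbf 0^\top,\ y_{\pi(k+1)},\ \pb_{\pi(k+1)}^\top\big]^\top$ up to an error of order $e^{-c\alpha/2}$. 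This is exactly $\ub^{(k+1)}_\star$.

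\textbf{Main obstacle: propagating errors.} The actual query and the earlier generated columns are only approximately clean. The scores involve $\beta$ multiplied by position blocks, so a position error $\varepsilon_k$ perturbs the scores by $O(|\beta|\varepsilon_k)$, which is $O(\alpha\varepsilon_k)$. Label errors do not enter the scores, because the label row of $\Wb$ is zero.

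We therefore carry the inductive hypothesis $\varepsilon_j\le C_j e^{-c_j\alpha}$ for all $j\le k$. Then $|\beta|\varepsilon_j\to0$, so every score is within $o(1)$ of its clean value. The exponential concentration above survives with the same $c$, and since softmax weights are Lipschitz in the scores, we get $\varepsilon_{k+1}\le C_{k+1}e^{-c_{k+1}\alpha}$.

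After $K$ steps, the finitely many constants give $\mathrm{TF}(\hat\Zb^{(K-1)},\btheta)_{d+1,-1}=\sum_{k=1}^K y_{\pi(k)}+O(e^{-c'\alpha})\to0$ in difference, as claimed. The only delicate bookkeeping is making sure that the $o(\alpha)$ slack in $\beta+2\alpha$ is absorbed by the fixed gap $c\alpha$; this is also where the strict ordering of distances and $\delta_{\pi(N)}<2$ are used.
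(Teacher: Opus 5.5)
Your proposal is correct and follows essentially the same route as the paper: induction over the CoT steps, exact score formulas for context, selected and generated columns, a margin of order $\alpha$ for $\pi(k+1)$ coming from the strict distance ordering and $\|\xb_q-\xb_{\pi(N)}\|<2$, softmax concentration, and the residual update. The paper's proof of this lemma needs less than you carry: previous tokens are only assumed to converge to their ideal values, since after dividing scores by $\alpha_n$ a score perturbation of size $o(\alpha)$ is already absorbed by the margin, and boundedness comes from a separate a priori $2^r$ bound. Your exponential-rate bookkeeping instead matches the paper's quantitative proof of Proposition~\ref{prop:positive_margin}; it is unnecessary here but harmless.
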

To explain Lemma~\ref{lem:expressive power mechanism}, consider the following idealized setting. Suppose that after $k$ CoT steps, all previously generated tokens are exact, so that
$
\hat{\Zb}^{(k)}=[\Hb,\qb^{(0)},\dots,\qb^{(k)}],
$
where
$
\qb^{(k)}=\big[\xb_q^\top,\ \sum_{i=1}^k y_{\pi(i)},\ \pb_{N+1}^\top+\sum_{i=1}^k \pb_{\pi(i)}^\top\big]^\top.
$
Then the attention logits for the last token take the form
\[
\big(\hat{\Zb}^{(k)}\big)^\top \Wb\hat{\Zb}^{(k)}_{:,-1}
=
\begin{cases}
\alpha \xb_i^\top \xb_q,
& \text{for an unselected context token } h_i,\ i\notin\{\pi(1),\dots,\pi(k)\},\\[1mm]
\alpha \xb_{\pi(i)}^\top \xb_q+\beta,
& \text{for a selected context token } h_{\pi(i)},\ 1\le i\le k,\\[1mm]
\alpha+(m+1)\beta,
& \text{for a previously generated token } \qb^{(m)},\ 0\le m\le k.
\end{cases}
\]
The key point is that the positional component leaves unselected context tokens unchanged, penalizes each previously selected neighbor by an additive factor of \(\beta\), and penalizes each previously generated token \(\qb^{(m)}\) by an additive factor of \((m+1)\beta\). 
Dividing by $\alpha$ and letting $-\beta/\alpha\to 2$, the corresponding asymptotic scores become $\xb_i^\top \xb_q$, $\xb_{\pi(i)}^\top \xb_q-2$, and $-(2m+1)$. Combined with
$
\xb_{\pi(1)}^\top \xb_q>\cdots>\xb_{\pi(N)}^\top \xb_q>-1,
$
which is implied by $\|\xb_q-\xb_{\pi(1)}\|<\cdots<\|\xb_q-\xb_{\pi(N)}\|<2$, the largest score is attained by the next unretrieved neighbor $\pi(k+1)$. Therefore,
$
\mathrm{softmax}((\hat{\Zb}^{(k)})^\top \Wb\hat{\Zb}^{(k)}_{:,-1})
$
converges to the one-hot vector corresponding to \(h_{\pi(k+1)}\). With the residual connection, the transformer prediction $\mathrm{TF}(\hat{\mathbf{Z}}^{(k)}, \boldsymbol{\theta})_{:, -1}$ converges to $
\qb^{(k+1)}=\big[\xb_q^\top,\ \sum_{i=1}^{k+1} y_{\pi(i)},\ \pb_{N+1}^\top+\sum_{i=1}^{k+1} \pb_{\pi(i)}^\top\big]^\top.
$


The discussion above explains the one-step mechanism underlying the construction under the idealization that all previously generated tokens are exact. In the actual CoT rollout, however, the previously generated tokens only approach their ideal counterparts, and the resulting perturbations to subsequent attention scores may affect which token receives the largest attention weight. Appendix~\ref{appendix: proof of expressive power} addresses this challenge through a refined induction argument, thereby establishing Lemma~\ref{lem:expressive power mechanism}. Since the \(\mathrm{K}\)-NN target is given by \(\sign(\sum_{i=1}^K y_{\pi(i)})\), Lemma~\ref{lem:expressive power mechanism} implies that
$
\mathbb{P}(
\mathrm{TF}(\hat{\mathbf{Z}}^{(K-1)}, \boldsymbol{\theta})_{d+1,-1}\cdot \hat{y}_K < 0
)\to 0,
$
which is exactly the statement of Theorem~\ref{thm:expressive power}.

\section{Training Dynamics and Generalization of Transformers}
Section~\ref{sec:construction} establishes the existence of a one-layer transformer capable of solving the $\mathrm{K}$-NN problem using CoT. However, those results do not address whether such a transformer can be learned in practice. In this section, we analyze the training dynamics of the transformer and, interestingly, our result shows that gradient descent can recover the construction in Theorem~\ref{thm:expressive power} under one-step state supervision.


\noindent\textbf{Training.}   
Following prior analyses of training dynamics
\citep{Huang2024How,Li2024One,Gatmiry2024Can,Yang2025Multi},
we keep $\Vb$ fixed throughout training. This simplification is commonly adopted in recent theoretical work to isolate the learning dynamics of softmax attention. Motivated by the construction in Section~\ref{sec:construction}, we set
$
\Vb=\diag(\mathbf{0},1,\Ib_{N+1}).
$
Accordingly, we rewrite \(\mathrm{TF}(\Zb,\btheta)\) in \eqref{eq:TF_def} as \(\mathrm{TF}(\Zb,\Wb)\). We then train the transformer to predict the one-nearest-neighbor ground-truth vector $[\xb_q^\top, y_{\pi(1)},\pb_{N+1}^\top+\pb_{\pi(1)}^\top]^\top$
given the initial embedding matrix \(\Zb^{(0)}\).

The training loss for a single sample is:
\begin{equation*}
    \ell(\Wb) =  \frac{1}{2}\left\| \mathrm{TF}({\Zb}^{(0)}, \Wb)_{:, -1} - [\xb_q^\top, y_{\pi(1)},\pb_{N+1}^\top+\pb_{\pi(1)}^\top]^\top \right\|^2.
\end{equation*}
The population loss is then defined as 
\begin{equation}\label{eq:training target}
\mathcal{L}(\Wb) = \mathbb{E}_{\mathcal{D}, \xb_q} \mathbb{E}_{\omega}\left[ \ell( \Wb) \right].
\end{equation}

The learning objective in \eqref{eq:training target} is optimized via GD with learning rate $\eta > 0$. At $t=0$, we initialize $\Wb^{(0)}=\mathbf{0}$ as the zero matrix. The parameter is updated as follows:
\begin{equation*}\label{GD}
\begin{aligned}
    \Wb^{(t+1)}&=\Wb^{(t)}-\eta \nabla_{\Wb} \mathcal{L}(\Wb^{(t)}).
\end{aligned}
\end{equation*}
\noindent For this training setup, we make the following assumption on the training set.
\begin{assumption}[Training Distribution]
\label{ass:distribution}
For a given dataset $\{(\xb_i, y_i)\}_{i=1}^N \cup \{\xb_q\}$, let $\xb_{1:q} \coloneqq  (\xb_1,\ldots,\xb_N,\xb_q)$ and $y_{1:N} \coloneqq  (y_1,\ldots,y_N)$. We focus on the following underlying distribution: (i) The input vectors $\{\xb_i\}_{i=1}^N$ and the query $\xb_q$ are sampled independently and uniformly from the unit sphere $\mathbb{S}^{d-1} \subset \RR^d$. (ii) The labels $\{y_i\}_{i=1}^N$ satisfy $y_i\in\{\pm 1\}$ and $\EE[y_i y_j \mid \xb_{1:q}] = 0$ for all $i\neq j,i,j\in[N]$. (iii) We have $\mathbb{P}(y_{1:N}\mid \xb_{1:q})=\mathbb{P}(-y_{1:N}\mid \xb_{1:q})$ and $\mathbb{P}(y_{1:N}\mid \xb_{1:q})=\mathbb{P}(y_{1:N}\mid \Rb\xb_{1:q})$ for any rotation matrix $\Rb\in \RR^{d\times d}$.
\end{assumption}
Assumption \ref{ass:distribution} is adapted from \citet{Li2024One}. As discussed in \citet{Li2024One}, it covers the setting in which the $y_i$'s are independent of the $\xb_i$’s and uniformly distributed over $\{\pm 1\}$. Such cases can be challenging because they may induce a complicated $\mathrm{K}$-NN prediction rule and a highly oscillatory optimization landscape.

In the following theorem, we prove that for suitable parameters $(d,N,\eta)$, the loss function \eqref{eq:training target} will converge to zero under gradient descent.

\begin{theorem}
\label{thm:training convergence}
Suppose that
$d\ge 2$, $N=\Omega(\sqrt d)$ and $\eta=O(N^{-2})$. Under Assumption \ref{ass:distribution}, we train the transformer model \eqref{eq:TF_def} by gradient descent to minimize the population loss \eqref{eq:training target}. Then we have
\begin{enumerate}[leftmargin = *,nosep]
    \item  For all training steps $t\geq 0$, 
\[
\Wb^{(t)}=\diag\bigl(\alpha^{(t)}\Ib_d,\;0,\;\beta^{(t)}\Ib_{N+1}\bigr),
\]
with the scalar coefficients satisfying $\beta^{(t)}\to -\infty$, $\alpha^{(t)}\to +\infty$, and ${\beta^{(t)}}/{\alpha^{(t)}}\to -2$.
    \item The population loss $\mathcal{L}( \Wb^{(t)})$ converges to zero as $t\to \infty$.
\end{enumerate}

\end{theorem}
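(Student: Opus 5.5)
First establish the structural claim by a symmetry argument. We show by induction on \(t\) that \(\Wb^{(t)}\) stays in the two-parameter family \(\diag(\alpha\Ib_d,0,\beta\Ib_{N+1})\); the base case \(\Wb^{(0)}=\mathbf{0}\) is immediate. For the step, we show that \(\nabla_{\Wb}\mathcal L\) at such a point has the same form, using three invariances of the data law that leave the loss unchanged. First, rotations \(\xb\mapsto\Rb\xb\) (Assumption~\ref{ass:distribution}(iii)) force the \(d\times d\) block to commute with every rotation, so it is a multiple of \(\Ib_d\); for \(d\ge2\) the off-diagonal blocks mixing \(\xb\) with other coordinates vanish as well. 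Second, label flips \(y\mapsto -y\) kill every block involving the label coordinate, since those gradient terms are odd in \(y\). The \((d+1,d+1)\) entry needs a direct check: the query label is \(0\), so the logit contribution \(y_i\cdot 0\) is identically zero, and hence the gradient with respect to that entry vanishes. Third, signed permutations of the positional encodings (\(\tau\) and the Rademacher signs) force the \((N+1)\times(N+1)\) block to be invariant under conjugation by signed permutation matrices, so it is a multiple of \(\Ib_{N+1}\); the cross blocks between \(\xb\) and \(\pb\) vanish because sign flips make them odd. Together these reduce gradient descent to a two-dimensional recursion for \((\alpha^{(t)},\beta^{(t)})\).

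Next, write the reduced loss explicitly. The attention logits are \(\alpha\xb_i^\top\xb_q\) for \(i\in[N]\) and \(\alpha+\beta\) for the query itself. The output error splits into three parts: the label error \((\sum_i S_i y_i - y_{\pi(1)})^2\); the positional error, which by orthonormality of the \(\pb\)'s equals \(\sum_{i\in[N]}(S_i-\mathbf 1\{i=\pi(1)\})^2+S_q^2\) (the query's own \(\pb_{N+1}\) is kept by the residual, and the attention contributes \(S_q\pb_{N+1}\)); and the feature error, which is zero since \(\Vb\) kills \(\xb\). Using \(\EE[y_iy_j\mid\xb]=0\), the label part becomes \(\sum_i S_i^2-2S_{\pi(1)}+1\). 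So \(\mathcal L(\alpha,\beta)\) is an explicit expectation of a function of the softmax weights. We then differentiate in \(\alpha\) and \(\beta\). We aim to show: (a) \(\partial_\beta\mathcal L>0\) always, because the query weight \(S_q\) is pure error, so \(\beta\) decreases monotonically; and (b) \(-\partial_\alpha\mathcal L>0\) whenever the effective logit of the query, \(\alpha+\beta\), is not too far below the level of the top neighbor, so \(\alpha\) increases. The step size \(\eta=O(N^{-2})\) gives smoothness control and hence loss descent. The condition \(N=\Omega(\sqrt d)\) should enter when lower-bounding the gap \(\xb_{\pi(1)}^\top\xb_q-\xb_{\pi(2)}^\top\xb_q\) in expectation, and when showing that the gradient signal toward concentrating on \(\pi(1)\) is positive at \(\alpha=0\).

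Then prove divergence: \(\alpha^{(t)}\to\infty\) and \(\beta^{(t)}\to-\infty\). The plan is a contradiction-and-compactness argument. If either coefficient stayed bounded along a subsequence, the gradient would stay bounded below on a compact set, because the loss has no stationary point at finite parameters (\(S_q>0\) and the mass on non-nearest tokens are always strictly positive). Summing the descent would then make the loss go negative, a contradiction.

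The main obstacle is the limiting ratio \(\beta/\alpha\to-2\). We handle it with a drift and crossing argument on \(r=\beta/\alpha\). If \(r>-2+\epsilon\), the query token's logit \(\alpha(1+r)\) sits above \(-\alpha\) by a margin. It then competes with neighbors whose similarity lies near \(1+r\), which happens with positive probability over the sphere, so the \(\beta\)-gradient dominates and pushes \(r\) down. If \(r<-2-\epsilon\), the \(\beta\)-gradient is exponentially small, roughly \(e^{-\alpha(|r|-2)}\) relative to the \(\alpha\)-gradient, which decays polynomially in \(\alpha\) from boundary-of-sphere effects. The worry is that \(\beta\) stalls while \(\alpha\) grows, pushing \(r\) back up. To make this precise we would compute the asymptotics of both gradients by Laplace-type estimates on the density of \(\xb_i^\top\xb_q\) near \(\pm1\). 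This yields a restoring drift in both regimes, and then we show that the crossings of \(-2\pm\epsilon\) cannot recur with escape, giving \(r\to-2\). Convergence of the loss to zero then follows by dominated convergence: as \(\alpha\to\infty\) with \(\alpha(1+r)\) well below \(\alpha\,\xb_{\pi(1)}^\top\xb_q\) almost surely, the softmax tends to the one-hot vector on \(\pi(1)\), as in Lemma~\ref{lem:expressive power mechanism} with \(k=0\).
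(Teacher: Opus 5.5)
Your plan has the same architecture as the paper's proof: symmetry reduction, the reduced loss \(1+\sum_i a_i^2+\tfrac12a_0^2-2a_{\pi(1)}\), divergence of the coefficients, a drift/crossing analysis of \(\lambda_t=-\beta^{(t)}/\alpha^{(t)}\), and dominated convergence at the end. The reduction and final steps are fine, but the divergence step has a genuine gap, and so does the ratio step.

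\textbf{Divergence step.} Write \(U_i=\mathbf x_i^\top\mathbf x_q\), let \(b_i\) be the context-only softmax weights, and set \(Q=\sum_ib_i^2\).
\begin{itemize}
\item Claim (a) is false off \(\{\alpha\ge0\}\). We have \(\partial_\beta\varphi=a_0(1-a_0)\bigl(a_0+2b_{\pi}-2(1-a_0)Q\bigr)\), which is positive because \(Q\le b_\pi\). That inequality needs \(\pi(1)\) to carry the largest context weight, i.e.\ \(\alpha\ge0\). For \(\alpha\ll0\) one has \(b_\pi\approx0\) and \(Q\approx1\), and the sign can flip.
\item So you first need forward invariance of \([0,\infty)\times(-\infty,0]\), and this is where the hypotheses actually enter. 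The bound \(\mathbb E[U_{\pi(1)}]\gtrsim d^{-1/2}\) together with \(N\ge100\sqrt d\) gives \(\partial_\alpha\mathcal L(0,\beta)\le-(N+1)^{-2}\) for \(\beta\le0\). Then \(\eta\le1/(840(N+1)^2)\) with the global Hessian bound keeps \(\alpha^{(t)}\ge0\).
\item More seriously, ``no stationary point plus compactness'' only yields \(\|(\alpha^{(t)},\beta^{(t)})\|\to\infty\). (Positivity of the loss does not by itself exclude critical points.) It cannot exclude \(\alpha\to\infty\) with \(\beta\) bounded. There \(\alpha(1-U_i)+\beta\to+\infty\), all attention collapses onto the query, \(a_0\to1\), and \(\nabla\mathcal L\to0\). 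This is a critical point at infinity, so no gradient lower bound is available.
\item The paper closes this with a loss-level comparison. \(\beta^{(t)}\) is monotone, so if it is bounded it converges. Then \(\mathcal L\to\tfrac32\) along \(\alpha\to\infty\), whereas \(\mathcal L(\alpha^{(t)},\beta^{(t)})\le\mathcal L(0,0)=1-(N+\tfrac32)/(N+1)^2<1\).
\item Similarly, for \(\alpha\) bounded with \(\beta\to-\infty\), the needed lower bound comes from \(\partial_\alpha\mathcal L(\alpha,\beta)\to\Psi'(\alpha)<0\), where \(\Psi(\alpha)=\mathbb E[1+Q-2b_\pi]\). It does not come from compactness.
\end{itemize}

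\textbf{Ratio step.} Your claim (b) is backwards exactly where it matters.
\begin{itemize}
\item For \(\lambda_t\in(0,2)\) and large \(\alpha\), \(\varphi\approx\tfrac32a_0^2\) and \(\partial_\alpha a_0=a_0(1-a_0)(1-\mu)>0\), where \(\mu=\sum_ib_iU_i<1\). Hence \(\alpha\,\partial_\alpha\mathcal L\to\tfrac32\lambda h(\lambda)>0\), with \(h\) the density of \(1-U_{\pi(1)}\), and \(\alpha\) \emph{decreases}. Both gradient components push \(\lambda\) up; it is not a matter of the \(\beta\)-gradient ``dominating''.
\item More importantly, a restoring drift is not enough. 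The \(\lambda\)-increments are only of order \(\alpha^{-2}\) in the interior and \(\alpha^{-(d+1)}\) above \(2\). These may be summable, so \(\lambda_t\) could a priori stall at a wrong value.
\end{itemize}
The missing ideas are the following.
\begin{itemize}
\item[(i)] On a compact \(I\subset(0,2)\), each step lowers the loss by at least \(\delta_I\alpha^{-2}\), the same order as the \(\lambda\)-increment. So every late upcrossing of \([a,b]\subset(0,2)\) costs a fixed amount of loss, and only finitely many occur. A tail trapped in \(I\) is impossible, because \(\alpha^{(t)}\) would then be strictly decreasing, contradicting \(\alpha^{(t)}\to\infty\). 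Finally, \(\lambda\to0\) along a subsequence would force \(\mathcal L\to\tfrac32\).
\item[(ii)] Suppose \(\lambda_t\ge2+\varepsilon/2\) forever. The sharpening term \(Q'-2\partial_\alpha b_\pi\) has expectation at most \(-c\,\alpha^{-d}\), which gives \(\alpha_t\gtrsim t^{1/(d+1)}\). Then the \(\beta\)-increments, of size \(\lesssim e^{-\varepsilon\alpha_t/2}\), are summable, so \(\beta^{(t)}\) would converge, contradicting \(\beta^{(t)}\to-\infty\). Small steps below \(2+\varepsilon\), together with the negative drift above \(2+\varepsilon/2\), then prevent re-entry.
\end{itemize}
Your phrase ``crossings cannot recur with escape'' needs exactly these arguments to become a proof.
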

We defer the detailed proof to Appendix~\ref{appendix: proof of gradient descent}. Theorem~\ref{thm:training convergence} shows that gradient descent recovers the construction in Theorem~\ref{thm:expressive power} using supervision only on the first-step target state vector. Prior theoretical studies of one-layer transformers with CoT~\citep{Huang2025Transformers,Huang2025Length,Yang2025Multi} typically supervise multiple intermediate states via teacher forcing, effectively providing repeated examples of a shared transition rule. Such supervision can facilitate optimization analysis, but need not be necessary for identifying the underlying rule. Our result shows that, for $\mathrm{K}$-NN, supervising only the first-step target state vector suffices to learn the mechanism detailed in Section~\ref{sec:construction}, which can then be applied autoregressively over multiple steps.


The first-step target supervision essentially also directly teaches the model to perform 1-NN retrieval, making our setting more comparable to that of \citet{Li2024One}. However, the two settings differ in several important ways. \citet{Li2024One} only use a binary indicator to distinguish the query token from the context tokens and supervise the scalar target $y_{\pi(1)}$. Their GD trajectory has the form $\diag(\xi_1^{(t)} I_d, 0, -\xi_2^{(t)})$, with a single scalar coefficient for the indicator dimension. In our setting, each token instead carries a stochastic positional vector, and we supervise the full target $[\xb_q^\top, y_{\pi(1)}, \pb_{N+1}^\top + \pb_{\pi(1)}^\top]^\top$. The positional component is essential for CoT because it records the retrieved neighbor, allowing that neighbor to be penalized in subsequent reasoning steps. A further difference is that the analysis of \citet{Li2024One} requires $-\xi_2^{(0)}$ to be sufficiently negative, whereas we use $\alpha^{(0)}=\beta^{(0)}=0$, so query-token suppression must emerge from GD. Moreover, their analysis establishes the qualitative separation $\xi_1^{(t)}\to+\infty$ and $\xi_2^{(t)}-\xi_1^{(t)}\to+\infty$. Our analysis provides a finer characterization of the learned coefficients: starting from zero initialization, we prove that $\alpha^{(t)}\to+\infty$ and $\beta^{(t)}\to-\infty$ and further identify their relative growth through $\beta^{(t)}/\alpha^{(t)}\to-2$. Establishing these coupled asymptotics from zero initialization requires new analytical arguments. In particular, we develop contradiction and compactness arguments to establish coefficient divergence (Appendix~\ref{sec:parameter divergence}), and drift and crossing arguments to characterize the limiting ratio (Appendix~\ref{sec:convergence of the population loss}). We provide a proof sketch in Section~\ref{sec:proof sketch}.


\noindent\textbf{Testing.} Compared with the conditions of Theorem~\ref{thm:expressive power}, Assumption~\ref{ass:distribution} is more restrictive. This is natural, as Assumption~\ref{ass:distribution} enables training guarantees that are stronger than the pure expressive power results in Theorem~\ref{thm:expressive power}. Once training has recovered the construction, however, the learned model can still be applied to more general in-context $\mathrm{K}$-NN prediction tasks. Combining the training and construction results therefore yields the following corollary.

\begin{corollary}\label{corollary: test result}
Suppose that, almost surely, $\xb_1,\dots,\xb_N,\xb_q\in \mathbb S^{d-1}$, $\|\xb_q-\xb_{\pi(1)}\|<\cdots<\|\xb_q-\xb_{\pi(N)}\|<2$, and $y_i\in\{\pm1\}\,\text{for every }i\in[N]$. Let the single-layer model \eqref{eq:TF_def} be trained for $t$ steps under the conditions of Theorem~\ref{thm:training convergence}. For any $K \leq N$, let $\hat{y}_K = \mathrm{sign}(\sum_{k=1}^K y_{\pi(k)})$. Then,
    \begin{equation*}
        \mathbb P\left(
\mathrm{TF}(\hat{\mathbf{Z}}^{(K-1)}, \boldsymbol{\theta}^{(t)})_{d+1, -1}\cdot\hat{y}_K <0
\right)\to0\,\,\, \text{as} \,\,\, t\to \infty,
    \end{equation*}
    where the probability is taken with respect to the joint law of
\((\mathcal{D},\xb_q,\omega)\), where \(\mathcal{D}=\{(\xb_i,y_i)\}_{i=1}^N\) and
\(\omega=(\tau,s_1,\dots,s_{N+1})\) denotes the randomness in the stochastic positional encodings. Moreover, $
     \mathrm{TF}(\hat{\mathbf{Z}}^{(K-1)}, \boldsymbol{\theta}^{(t)})_{d+1, -1} - \sum_{k=1}^K y_{\pi(k)}\to 0
$ almost surely.
\end{corollary}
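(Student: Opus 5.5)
The plan is to combine Theorem~\ref{thm:training convergence} with Lemma~\ref{lem:expressive power mechanism} by passing to a sequence of parameters. By part 1 of Theorem~\ref{thm:training convergence}, for every $t$ the trained parameter is $\btheta^{(t)}=(\Vb,\Wb^{(t)})$ with $\Vb=\diag(\mathbf 0,1,\Ib_{N+1})$ and $\Wb^{(t)}=\diag(\alpha^{(t)}\Ib_d,0,\beta^{(t)}\Ib_{N+1})$. This is exactly the parametric family used in Section~\ref{sec:construction}. Moreover, $\alpha^{(t)}\to+\infty$, $\beta^{(t)}\to-\infty$ and $\beta^{(t)}/\alpha^{(t)}\to-2$. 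Thus $t\mapsto(\alpha^{(t)},\beta^{(t)})$ is a particular path approaching the limit regime of Theorem~\ref{thm:expressive power}. Note that the test distribution here only needs to satisfy the hypotheses of Theorem~\ref{thm:expressive power}, not Assumption~\ref{ass:distribution}; the training distribution enters only through the deterministic sequence $(\alpha^{(t)},\beta^{(t)})$.

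First I prove the almost-sure statement. Fix an outcome $(\mathcal D,\xb_q,\omega)$ in the probability-one event on which $\xb_i,\xb_q\in\mathbb S^{d-1}$, the distances are strictly ordered and less than $2$, and $y_i\in\{\pm1\}$. For this fixed instance, Lemma~\ref{lem:expressive power mechanism} says that $\mathrm{TF}(\hat\Zb^{(K-1)},\btheta)_{d+1,-1}-\sum_{k\le K}y_{\pi(k)}\to0$ as $(\alpha,\beta)$ tends to the limit regime. The lemma's limit is understood jointly over $(\alpha,\beta)$: for every $\varepsilon>0$ there exist $A$ and $\delta$ such that $\alpha\ge A$ and $|\beta/\alpha+2|\le\delta$ imply an error below $\varepsilon$. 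If the lemma is proved only along sequences, that is equivalent to this joint form by the usual sequential characterization of limits. Either way, the convergence applies along our specific sequence: there is $t_0$ with $\alpha^{(t)}\ge A$ and $|\beta^{(t)}/\alpha^{(t)}+2|\le\delta$ for $t\ge t_0$. Also $\beta\to-\infty$ follows automatically from the other two conditions. This gives pointwise convergence on a full-measure set, i.e. the "moreover" claim.

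Next, the probability statement follows from the almost-sure one. On the full-measure event, $\hat y_K\in\{-1,0,1\}$. If $\hat y_K=0$, the product is $0$ and is never $<0$. If $\hat y_K\ne0$, then $|\sum_k y_{\pi(k)}|\ge1$, so once the error is below $1/2$ the output has the same sign as $\hat y_K$. Hence the indicator of $\{\mathrm{TF}(\hat\Zb^{(K-1)},\btheta^{(t)})_{d+1,-1}\hat y_K<0\}$ tends to $0$ almost surely. By dominated convergence (the indicators are bounded by $1$), its expectation, which is the probability in question, tends to $0$. Alternatively, one could invoke Theorem~\ref{thm:expressive power} directly along the sequence, but the almost-sure route gives both claims at once.

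The main obstacle is only bookkeeping: making sure Lemma~\ref{lem:expressive power mechanism} is applied in a form that covers an arbitrary sequence in the limit regime. The CoT rollout $\hat\Zb^{(K-1)}$ depends on $(\alpha,\beta)$ through all previous generated tokens, so one must use the lemma's conclusion for the full rollout rather than the idealized one-step picture. If the appendix proof fixed a particular path such as $\beta=-2\alpha$, I would revisit its induction and check that it uses only $\alpha\to\infty$ and $\beta/\alpha\to-2$. I expect this to hold, since the margins between the asymptotic scores $\xb_{\pi(k+1)}^\top\xb_q$, $\xb_i^\top\xb_q$ and $\xb_{\pi(i)}^\top\xb_q-2$ are strictly positive for a fixed instance. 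These margins absorb an $o(\alpha)$ perturbation $\beta+2\alpha$ multiplied by bounded integer counts.
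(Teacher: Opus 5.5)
Your proposal is correct and is the same argument the paper has in mind. The paper gives no separate proof of this corollary; it says it follows by combining Theorems~\ref{thm:training convergence} and~\ref{thm:expressive power}, and the proof of Theorem~\ref{thm:expressive power} is written for an arbitrary sequence $\alpha_n\to\infty$, $\beta_n\to-\infty$, $-\beta_n/\alpha_n\to 2$. Lemma~\ref{lem:as_convergence} gives almost-sure convergence on the sequence-independent event $\Omega_*$, and dominated convergence then handles the indicator. Substituting $(\alpha^{(t)},\beta^{(t)})$ for that sequence is exactly your route, and your worry about the appendix fixing a path such as $\beta=-2\alpha$ does not arise.
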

Corollary~\ref{corollary: test result} summarizes the consequence of first learning the sharp-attention construction under the training assumptions and then applying that learned construction to the broader test class, and is proved by directly combining Theorems~\ref{thm:training convergence} and~\ref{thm:expressive power}. 

\section{Proof Sketch}\label{sec:proof sketch}
In this section, we sketch the proof of Theorem \ref{thm:training convergence} and highlight the techniques we used. The full proof is deferred to Appendix \ref{appendix: proof of gradient descent}.

We first characterize the structure of $\Wb^{(t)}$ preserved throughout training. The key observation is that the population loss is invariant under three symmetries: feature rotations, label sign flips, and transformations of the positional block of the form \(S = DP\), where \(P \in \RR^{(N+1)\times(N+1)}\) is a permutation matrix and \(D \in \RR^{(N+1)\times(N+1)}\) is a diagonal matrix with entries in \(\{\pm 1\}\). Specifically, for \(R \in O(d)\), \(s \in \{\pm 1\}\), and such an \(S\), define \(G_R\coloneqq \diag(R,1,\Ib_{N+1})\), \(G_s\coloneqq \diag(\Ib_d,s,\Ib_{N+1})\), and \(G_S\coloneqq \diag(\Ib_d,1,S)\). These invariances follow, respectively, from the rotational invariance of \((\xb_1,\ldots,\xb_N,\xb_q)\), the sign symmetry of \(y_1,\ldots,y_N\), and the construction of the stochastic positional encodings. Hence, for each \(G\in\{G_R,G_s,G_S\}\),
\begin{equation}\label{sketch: invariance}
\mathcal L(\Wb)=\mathcal L(G\Wb G^{-1})
\quad\text{and}\quad
\nabla \mathcal L(G\Wb G^{-1})=G\bigl(\nabla \mathcal L(\Wb)\bigr)G^{-1}.
\end{equation}
Since \(\Wb^{(0)}=\mathbf 0\), \eqref{sketch: invariance} implies we always have $G\Wb^{(t)} G^{-1}=\Wb^{(t)}$. A direct characterization of this property shows that $\Wb^{(t)}$ takes the form \(\diag(\alpha \Ib_d,\gamma,\beta \Ib_{N+1})\). Moreover, we can prove by induction that the scalar \(\gamma\) remains equal to \(0\) throughout training. We summarize our result in the following lemma.
\begin{lemma}
\label{lem:sketch-structure}
Under the same conditions as in Theorem \ref{thm:training convergence}, for every \(t\ge 0\), there exist scalars \(\alpha^{(t)},\beta^{(t)}\in\RR\) such that
\[
\Wb^{(t)}=\diag\bigl(\alpha^{(t)}\Ib_d,\;0,\;\beta^{(t)}\Ib_{N+1}\bigr).
\]
\end{lemma}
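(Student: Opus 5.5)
We argue by induction on $t$, combining the symmetry identities \eqref{sketch: invariance} with one explicit gradient computation for the label coordinate. We carry the invariance $G\Wb^{(t)}G^{-1}=\Wb^{(t)}$ for all $G\in\{G_R,G_s,G_S\}$ as part of the induction hypothesis, together with the vanishing of the $(d+1,d+1)$ entry.

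\textbf{Step 1: establish \eqref{sketch: invariance}.}
For $G_R$, conjugating $\Wb$ by $G_R$ is the same as replacing every $\xb_i,\xb_q$ by $R^\top\xb_i,R^\top\xb_q$ in $\Zb^{(0)}$. The value matrix $\Vb=\diag(\mathbf 0,1,\Ib_{N+1})$ commutes with $G_R$. The target transforms covariantly, since $\|\cdot\|$ is rotation invariant and the neighbor ordering $\pi$ is preserved. Hence $\ell$ is unchanged sample by sample. Assumption~\ref{ass:distribution}(i),(iii) gives equality in law of the rotated data and labels, so $\mathcal L$ is unchanged.

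For $G_s$, the argument is the same using label sign symmetry; the target label $y_{\pi(1)}$ flips consistently.

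For $G_S$ with $S=DP$, the law of $(\pb_1,\dots,\pb_{N+1})$ is invariant under $\pb_i\mapsto S\pb_i$, because a uniform permutation composed with a fixed permutation is uniform, and Rademacher signs times fixed signs are Rademacher. Moreover, $S$ commutes with $\Ib_{N+1}$ in $\Vb$.

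The gradient identity then follows by differentiating $\mathcal L(\Wb)=\mathcal L(G\Wb G^{-1})$. Here $G$ is orthogonal, so $G^{-1}=G^\top$ and the chain rule gives $\nabla\mathcal L(G\Wb G^{-1})=G\nabla\mathcal L(\Wb)G^{-1}$.

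\textbf{Step 2: propagate the invariance and read off the block structure.}
Because $\Wb^{(0)}=\mathbf 0$ is fixed by every $G$, if $\Wb^{(t)}$ is fixed then so is $\nabla\mathcal L(\Wb^{(t)})$, and hence so is $\Wb^{(t+1)}$.

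Next we characterize matrices commuting with all these $G$.
\begin{itemize}
\item Commuting with $G_s$ for $s=-1$ kills the off-diagonal blocks in row and column $d+1$.
\item Commuting with $G_R$ for $R=-\Ib_d$ kills the feature–position blocks. Commuting with all of $O(d)$ forces the feature block to be a scalar multiple $\alpha\Ib_d$ by Schur's lemma, using $d\ge2$ so that $O(d)$ acts irreducibly.
\item Commuting with all diagonal sign matrices $D$ forces the position block to be diagonal. Commuting with all permutations $P$ then forces it to be $\beta\Ib_{N+1}$.
\end{itemize}
This yields $\Wb^{(t)}=\diag(\alpha^{(t)}\Ib_d,\gamma^{(t)},\beta^{(t)}\Ib_{N+1})$.

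\textbf{Step 3 (main obstacle): show $\gamma^{(t)}=0$.}
By induction it suffices to show $[\nabla\mathcal L(\Wb)]_{d+1,d+1}=0$ whenever $\gamma=0$. Write $z_j$ for the $j$-th column of $\Zb^{(0)}$. The derivative of the logits of the query column with respect to $\Wb_{d+1,d+1}$ is $z_{j,d+1}\,q^{(0)}_{d+1}$. Since the query label coordinate is $0$, this derivative vanishes for every $j$. Therefore the $(d+1,d+1)$ gradient entry is identically zero per sample, and $\gamma$ never moves from its initial value.

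I would verify this carefully against the paper's convention that softmax is taken column-wise over $\Zb^\top\Wb\Zb$, with the last column corresponding to the query. Only the last output column enters the loss, so only the logits $z_j^\top\Wb\,q^{(0)}$ matter. If instead a symmetric-part convention intervened, I would fall back on a sign-flip argument: the loss depends on $\gamma$ only through terms $\gamma\, y_j\cdot 0$. Either route gives $\gamma^{(t)}=0$ for all $t$, completing the induction.
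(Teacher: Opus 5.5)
Your proposal is correct and follows essentially the same route as the paper. Both arguments use invariance of $\mathcal L$ under $G_R$, $G_s$, $G_S$ and gradient equivariance to propagate the fixed-point property from $\mathbf W^{(0)}=\mathbf 0$, giving the form $\operatorname{diag}(\alpha I_d,\gamma,\beta I_{N+1})$, and then conclude $\gamma^{(t)}=0$ because the query's label coordinate is zero (the paper shows the whole label column of the gradient vanishes, of which your $(d+1,d+1)$ entry is the relevant case). One small point: over $\mathbb R$, irreducibility alone only makes the commutant a division algebra in Schur's lemma, so for the feature block use single-coordinate sign flips and coordinate swaps in $O(d)$, as the paper does; these force $\alpha I_d$ directly and need no assumption $d\ge 2$.
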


Lemma~\ref{lem:sketch-structure} reduces the population objective to two scalar parameters. For clarity, we first write the explicit form of the attention weights. Let
$
\mathrm{softmax}\left(\Zb^{(0)\top} \Wb^{(t)} \qb^{(0)}\right)
=
\bigl(a_1^{(t)},\dots,a_N^{(t)},a_0^{(t)}\bigr)^\top.
$
Then
\[
a_i^{(t)}
=
\frac{\exp\bigl(\alpha^{(t)}\,\xb_i^\top \xb_q\bigr)}
{\exp\bigl(\alpha^{(t)}+\beta^{(t)}\bigr)+\sum_{j=1}^N \exp\bigl(\alpha^{(t)}\,\xb_j^\top \xb_q\bigr)},
a_0^{(t)}
=
\frac{\exp\bigl(\alpha^{(t)}+\beta^{(t)}\bigr)}
{\exp\bigl(\alpha^{(t)}+\beta^{(t)}\bigr)+\sum_{j=1}^N \exp\bigl(\alpha^{(t)}\,\xb_j^\top \xb_q\bigr)}.
\]
Moreover, taking the conditional expectation with respect to \(\sigma(\xb_1,\ldots,\xb_N,\xb_q)\) eliminates the dependence of the terms \(a_i^{(t)}a_j^{(t)}y_i y_j\) in the population loss on the labels. Under Assumption~\ref{ass:distribution}, which gives \(\EE[y_i y_j\mid \xb_1,\ldots,\xb_N,\xb_q]=0\) for \(i\neq j\) and \(\EE[y_i^2\mid \xb_1,\ldots,\xb_N,\xb_q]=1\), and using the identity \(\pb_i^\top \pb_j=\delta_{ij}\), we obtain
\begin{equation}\label{sketch: reduced loss}
\mathcal L(\Wb^{(t)})
=
\EE\big[
(1-a_{\pi(1)}^{(t)})^2+\sum_{i\neq\pi(1)}(a_i^{(t)})^2+\frac12(a_0^{(t)})^2
\big].
\end{equation}
Equation~\eqref{sketch: reduced loss} makes the objective transparent: \((1-a_{\pi(1)}^{(t)})^2\) rewards attention to the nearest neighbor, \(\sum_{i\neq\pi(1)}(a_i^{(t)})^2\) penalizes mass assigned to incorrect context tokens, and \(\frac12(a_0^{(t)})^2\) penalizes attention to the query token. 
The convergence analysis therefore reduces to tracking two scalar parameters: \(\beta^{(t)}\) must suppress attention to the query token, while \(\alpha^{(t)}\) must sharpen the ranking of context tokens according to \(\xb_i^\top \xb_q\). The following lemma presents our first result on the dynamics of \(\alpha^{(t)}\) and \(\beta^{(t)}\).
\begin{lemma}\label{lem:sketch-parameter-divergence}
    Under the same conditions as in Theorem \ref{thm:training convergence}, we have
    \[
    \alpha^{(t)}\to +\infty,
    \qquad
    \beta^{(t)}\to -\infty.
    \]
\end{lemma}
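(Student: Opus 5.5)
The plan is to work entirely with the two-parameter reduced objective. By Lemma~\ref{lem:sketch-structure}, gradient descent on $\Wb$ is equivalent to gradient descent on $\cL(\alpha,\beta)$ in \eqref{sketch: reduced loss}. The step-size factor $N^{2}$ in $\eta=O(N^{-2})$ should absorb the per-coordinate scaling of $\Wb=\diag(\alpha\Ib_d,0,\beta\Ib_{N+1})$. With that scaling, the descent lemma gives $\cL(\alpha^{(t+1)},\beta^{(t+1)})\le\cL(\alpha^{(t)},\beta^{(t)})-c\,\eta\|\nabla\cL\|^2$.

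\textbf{Step 1 (smoothness and vanishing gradients).} First I will check that $\cL(\alpha,\beta)$ is $O(N^2)$-smooth in $(\alpha,\beta)$. The $a_i$ are softmax weights whose logits are linear in $(\alpha,\beta)$ with coefficients bounded by $2$, so all second derivatives of the integrand are uniformly bounded. Hence the descent lemma above holds. Summing it gives $\sum_t\|\nabla\cL(\alpha^{(t)},\beta^{(t)})\|^2<\infty$, so the gradient tends to $0$ along the trajectory.

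\textbf{Step 2 (sign of the $\beta$-gradient).} The parameter $\beta$ enters only through $a_0$. Holding the normalized context weights $p_i=a_i/(1-a_0)$ fixed, the integrand becomes
\[
f(a_0)=(1-(1-a_0)p_{\pi(1)})^2+(1-a_0)^2\textstyle\sum_{i\ne\pi(1)}p_i^2+\tfrac12 a_0^2 .
\]
Differentiating gives
\[
f'(a_0)=2p_{\pi(1)}-2(1-a_0)\textstyle\sum_i p_i^2+a_0 .
\]
When $\alpha\ge0$, the weight $p_{\pi(1)}$ is the largest, so $\sum_ip_i^2\le p_{\pi(1)}$ and $f'(a_0)\ge a_0(2p_{\pi(1)}+1)>0$. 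Since $\partial a_0/\partial\beta=a_0(1-a_0)>0$, we get $\partial_\beta\cL>0$ and $\beta^{(t)}$ is strictly decreasing.

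The same computation at $\alpha<0$ is not signed, so I also need $\alpha^{(t)}\ge0$. I will try to prove $\alpha^{(t)}$ is nondecreasing by induction. The argument is that $\partial_\alpha\cL<0$ whenever $\alpha\ge0$ and $\beta\le0$. To see this, split $\partial_\alpha\cL$ into two parts. The first is the context part, a covariance between $\xb_i^\top\xb_q$ and the loss sensitivity, which is negative by the argument of \citet{Li2024One}. The second is the query part, with logit $\alpha+\beta$. I will show the query part is dominated, using $a_0\le1/(N+1)$ once $\beta\le0$.

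\textbf{Step 3 (divergence by contradiction and compactness).} Because $\beta^{(t)}$ is monotone, it either tends to $-\infty$ or converges to a finite limit. I handle the possible limiting behaviors in turn.

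\begin{enumerate}[leftmargin=*,nosep]
\item Suppose $\alpha^{(t)}$ stays bounded and $\beta^{(t)}$ converges. Then a subsequence converges to a finite point where, by Step~1 and continuity, $\nabla\cL=0$. This contradicts the strict sign of $\partial_\beta\cL$ from Step~2.
\item Suppose $\beta^{(t)}\to-\infty$ while $\alpha^{(t)}$ stays bounded. Along a subsequence $\alpha^{(t)}\to\alpha^*$, and by dominated convergence $\partial_\alpha\cL\to\partial_\alpha\cL_\infty(\alpha^*)$. Here $\cL_\infty$ is the loss with $a_0\equiv0$, and its $\alpha$-derivative is strictly negative at every finite $\alpha^*$. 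This contradicts $\nabla\cL\to0$.
\item Suppose $\alpha^{(t)}\to\infty$ while $\beta^{(t)}$ converges to a finite $\beta^*$. Since $\xb_i^\top\xb_q<1$ almost surely, $a_0\to1$ and the loss tends to at least $1/2$. I will rule this out by showing $\cL(\alpha^{(1)},\beta^{(1)})<1/2$, so monotone descent forbids it; if the first step is not enough, I will use finitely many early steps, which are explicitly computable from zero initialization.
\end{enumerate}

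Together with the monotonicity from Step~2, these cases force $\alpha^{(t)}\to+\infty$ and $\beta^{(t)}\to-\infty$.

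\textbf{Main obstacle.} I expect the hardest part to be the uniform sign control of $\partial_\alpha\cL$ in Step~2. Raising $\alpha$ also raises the query logit $\alpha+\beta$, so the two effects compete, and this is exactly where a quantitative bound on $a_0$ along the trajectory is needed. If monotonicity of $\alpha$ cannot be obtained directly, I will instead rely purely on the compactness argument. That route requires showing that $\nabla\cL$ has no zeros, even asymptotically, on any region where $\alpha$ stays bounded. It also requires showing that $\beta$ must eventually go to $-\infty$ before $\alpha$ can turn negative.
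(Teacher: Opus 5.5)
\textbf{The gap is in Step 2.} You plan to obtain $\alpha^{(t)}\ge 0$, indeed monotonicity of $\alpha^{(t)}$, by proving $\partial_\alpha\mathcal L<0$ on the whole quadrant $\{\alpha\ge 0,\ \beta\le 0\}$. That claim is false. So is the bound you intend to use: $a_0\le 1/(N+1)$ for $\beta\le 0$ holds only at $\alpha=0$, where $a_0=1/(1+Ne^{-\beta})$. For $\alpha>0$, write $\lambda=-\beta/\alpha$. The query logit $\alpha+\beta=(1-\lambda)\alpha$ exceeds every context logit $\alpha U_j$ whenever $\lambda<1-U_\pi$, so $a_0$ can be close to $1$. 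Raising $\alpha$ at fixed $\beta$ then pushes samples near the boundary layer $1-U_\pi\approx\lambda$ toward larger $a_0$. Quantitatively, if $\alpha\to\infty$ with $-\beta/\alpha\to\lambda\in(0,2)$, then $\alpha\,\partial_\alpha\mathcal L\to\frac32\lambda h(\lambda)>0$; the computation behind Lemma~\ref{prop:alpha-boundary-layer-final} uses nothing specific to the trajectory. There the negative context part you want to dominate is only $o(1/\alpha)$, so it is the query part that wins. Hence $\alpha^{(t)}$ need not be monotone, and Step 2 cannot be completed as planned. Everything downstream depends on $\alpha^{(t)}\ge 0$: the sign of $\partial_\beta\mathcal L$ and hence the monotonicity of $\beta^{(t)}$, the contradiction in your case 1 (which needs the limit point to have $\alpha^*\ge 0$), and $\partial_\alpha\mathcal L_\infty(\alpha^*)<0$ in case 2. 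Your fallback paragraph does not supply a substitute.

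\textbf{What is needed instead.} You only need forward invariance of $[0,\infty)\times(-\infty,0]$, and for that a sign on the boundary $\alpha=0$ suffices. There $a_0$ is deterministic, and an exact computation gives $\partial_\alpha\mathcal L(0,\beta)=a_0^2(1-a_0)(1+2/N)-\frac{2(1-a_0)}{N}\mathbb E[U_\pi]$ with $a_0\le 1/(N+1)$. Since $\mathbb E[U_\pi]\gtrsim 1/\sqrt d$, the hypothesis $N\ge 100\sqrt d$ makes this $\le -1/(N+1)^2$ uniformly in $\beta\le 0$; your plan never uses this hypothesis. The second derivatives of $\mathcal L$ are bounded by $42$, so $\partial_\alpha\mathcal L<0$ on the strip $0\le\alpha\le 1/(84(N+1)^2)$. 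Each step changes $\alpha$ by at most $10\eta/d$, so $\eta\le 1/(840(N+1)^2)$ prevents an iterate outside the strip from jumping below $0$; this, not the descent lemma, is the real role of $\eta=O(N^{-2})$. Run this by induction jointly with $\partial_\beta\mathcal L>0$ to get $\alpha^{(t)}\ge 0\ge\beta^{(t)}$.

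\textbf{Two smaller fixes.} First, in case 3 the loss tends to exactly $1+\frac12=\frac32$, and the right comparison is $\mathcal L(0,0)=1-(N+\frac32)/(N+1)^2<1$. Showing the loss falls below $\frac12$ within a few explicit steps is hopeless, since early losses are $1-O(1/N)$. Second, case 3 should be run along a subsequence with $\alpha\to\infty$, which exists once case 1 is excluded and $\alpha\ge 0$.

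\textbf{A valid shortcut.} Once $\alpha\ge 0$ is secured, apply your case 2 to any subsequence with bounded $\alpha$. Combined with $\nabla\mathcal L(\alpha^{(t)},\beta^{(t)})\to 0$ and the uniform estimate $|\partial_\alpha\mathcal L(\alpha,\beta)-\partial_\alpha\mathcal L_\infty(\alpha)|\le 22e^{\beta+2\alpha}/N$, this gives $\alpha^{(t)}\to\infty$ by a slightly shorter route than the paper's level-crossing argument.
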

Establishing these results is nontrivial, particularly under the zero initialization of $\Wb$. In contrast, \citet{Li2024One} initialize the entry of $\Wb$ associated with the query token to a sufficiently negative value, thereby suppressing the corresponding attention weight from the outset. Under our zero initialization $\Wb^{(0)}=0$, this suppression must instead arise from the GD dynamics. We first establish, via an inductive argument, the forward invariance of the region defined by $\alpha^{(t)}\geq 0$ and $\beta^{(t)}\leq 0$ throughout training, which further allows us to prove that $\beta^{(t)}$ is strictly decreasing (Lemma~\ref{prop:invariant}).  Building on these properties, we develop a contradiction argument to establish $\beta^{(t)}\to-\infty$ (Lemma~\ref{prop:beta-divergence}). Finally, once $\beta^{(t)}\to-\infty$ has been established, we show that the gradient with respect to $\alpha$ is eventually uniformly negative on every compact interval, which implies $\alpha^{(t)}\to+\infty$ (Lemma~\ref{prop:alpha-divergence}).

However, Lemma \ref{lem:sketch-parameter-divergence} alone does not determine \(a_0^{(t)}\), which depends on the competition between the logit \(\alpha^{(t)}+\beta^{(t)}\) and the top context logit \(\alpha^{(t)}\,\xb_{\pi(1)}^\top \xb_q\). The key issue is therefore the relative scale of \(\alpha^{(t)}\) and \(\beta^{(t)}\).
To identify the relevant threshold, it is natural to compare the weight of the query token with that of the top context token. Since
$
{a_0^{(t)}}/{a_{\pi(1)}^{(t)}}
=
\exp(\alpha^{(t)}(1-\xb_{\pi(1)}^\top \xb_q+{\beta^{(t)}}/{\alpha^{(t)}})),
$
the competition is governed by the sign of \((1-\xb_{\pi(1)}^\top \xb_q+\beta^{(t)}/\alpha^{(t)})\). Because \(1-\xb_{\pi(1)}^\top \xb_q \in (0,2)\, a.s.\), the value \(2\) emerges as the critical threshold for ensuring that \(a_{\pi(1)}^{(t)}\) dominates. This observation motivates splitting the analysis into two separate regimes: 
\begin{equation*}
    -\beta^{(t)}/\alpha^{(t)} \in (0,2)\quad \text{and} \quad-\beta^{(t)}/\alpha^{(t)} > 2.
\end{equation*}
 The next two lemmas formalize these behaviors and show that they imply
\(\liminf_{t\to\infty}{-\beta^{(t)}}/{\alpha^{(t)}}\ge 2\) and \(\limsup_{t\to\infty}{-\beta^{(t)}}/{\alpha^{(t)}}\le 2\).
\begin{lemma}
\label{lem:sketch-interior-drift}
Denote $\lambda_t\coloneqq {-\beta^{(t)}}/{\alpha^{(t)}}$. Let $I=[a,b]\subset(0,2)$.
Then there exist constants
$
0<\underline d_I\le \overline d_I<\infty$, $\delta_I>0,
$
and an integer \(T_I\) such that for every \(t\ge T_I\), if $\lambda_t\in I$, then
\[
\frac{\underline d_I}{(\alpha^{(t)})^2}
\le
\lambda_{t+1}-\lambda_t
\le
\frac{\overline d_I}{(\alpha^{(t)})^2},\quad
\mathcal{L}(\Wb^{(t)})-\mathcal{L}(\Wb^{(t+1)})
\ge
\frac{\delta_I}{(\alpha^{(t)})^2}.
\]
\end{lemma}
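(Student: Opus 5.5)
We write the reduced loss as a function of the two scalars, $\mathcal L(\alpha,\beta)$, using \eqref{sketch: reduced loss}, and compute the partial derivatives $\partial_\alpha\mathcal L$ and $\partial_\beta\mathcal L$ explicitly. We then reparametrize by $(\alpha,\lambda)$ with $\beta=-\lambda\alpha$. The GD update gives
\[
\lambda_{t+1}-\lambda_t=\frac{\eta}{\alpha^{(t+1)}}\bigl(\partial_\beta\mathcal L-\lambda_t\,\partial_\alpha\mathcal L\bigr)\Big|_{t}\cdot(-1)\cdot(-1)+\ldots .
\]
More concretely, $\lambda_{t+1}-\lambda_t=\bigl(\eta\,\partial_\beta\mathcal L-\lambda_t\eta\,\partial_\alpha\mathcal L\bigr)/\alpha^{(t+1)}$, up to signs that we fix carefully. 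The goal is therefore to show that, when $\lambda_t\in I\subset(0,2)$ and $\alpha^{(t)}$ is large, $-\partial_\beta\mathcal L=\Theta_I(1/\alpha)$ while $\lambda\,\partial_\alpha\mathcal L=O(1/\alpha^{2})$. Since $\alpha^{(t)}\to\infty$ by Lemma~\ref{lem:sketch-parameter-divergence} and $\eta$ is small, we also have $\alpha^{(t+1)}=\alpha^{(t)}(1+o(1))$.

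\textbf{Asymptotics of the gradients.} When $\lambda\in(0,2)$, the query logit $\alpha(1-\lambda)$ beats the top context logit $\alpha\,\xb_{\pi(1)}^\top\xb_q$ exactly on the event $\{1-\xb_{\pi(1)}^\top\xb_q<\lambda\}$. This event has probability bounded below on $I$ because $\lambda\ge a>0$. On it, $a_0\to1$. The $\beta$-derivative of the loss is $\EE[\text{stuff}\cdot a_0(\ldots)]$, which involves $a_0(1-a_0)$-type factors. These factors are non-negligible only in a boundary layer of width $O(1/\alpha)$ around $1-\xb_{\pi(1)}^\top\xb_q=\lambda$, where $a_0$ and $a_{\pi(1)}$ are comparable.

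We will change variables to $u=\alpha(\lambda-1+\xb_{\pi(1)}^\top\xb_q)$ using the density of $\xb_{\pi(1)}^\top\xb_q$. This density is smooth and bounded above and below on the compact range corresponding to $I$; note $\lambda\le b<2$ keeps us away from the endpoint $-1$. The rescaling shows $-\partial_\beta\mathcal L=c(\lambda)/\alpha+o(1/\alpha)$, with $c$ continuous and positive on $I$. Positivity holds because decreasing $\beta$ shifts mass from the query token (cost $\tfrac12 a_0^2$) to the nearest neighbor (gain in $(1-a_{\pi(1)})^2$). Other neighbors are exponentially suppressed relative to $\pi(1)$, except on a set $\{\xb_{\pi(2)}^\top\xb_q-\xb_{\pi(1)}^\top\xb_q\ge -\epsilon\}$, whose contribution we show is of lower order.

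For $\partial_\alpha\mathcal L$, the same boundary-layer scaling produces an extra factor $(1-\xb^\top\xb_q)$ from differentiating the logits. Its leading $1/\alpha$ term combines with the $\beta$ term in the combination $\partial_\beta-\lambda\partial_\alpha$ through the identity $\alpha\partial_\alpha+\beta\partial_\beta=$ (derivative along rays). We expect the total drift to be of order $1/\alpha$ divided by $\alpha$, giving the stated $\Theta(1/\alpha^2)$ bounds. The main obstacle is this step: showing that the leading coefficient of $\partial_\beta\mathcal L-\lambda\partial_\alpha\mathcal L$, after rescaling, is strictly positive uniformly on $I$. The contributions from the context-context competition (between $\pi(1)$ and $\pi(2)$), which scale like $1/\alpha$ in $\partial_\alpha$ as in \citet{Li2024One}, must be checked not to cancel it. We would first try to prove that those contributions decay as $O(1/\alpha^2)$ relative to the query-token boundary layer, using the joint density of $(\xb_{\pi(1)}^\top\xb_q,\xb_{\pi(2)}^\top\xb_q)$.

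\textbf{Loss decrease and uniformity.} The loss decrease follows from the standard descent inequality $\mathcal L(\Wb^{(t)})-\mathcal L(\Wb^{(t+1)})\ge\eta\|\nabla\mathcal L\|^2(1-\eta L/2)$, where $\eta=O(N^{-2})$ controls the smoothness constant $L$ of the softmax loss. Combining this with $\|\nabla\mathcal L\|\ge|\partial_\beta\mathcal L|\ge c_I/\alpha$ gives the bound $\delta_I/\alpha^2$. Finally, uniformity over $I$ (the constants $\underline d_I,\overline d_I,\delta_I$ and the threshold $T_I$) follows from continuity of $c(\lambda)$ on the compact set $I$, together with error terms that are uniform in $\lambda\in I$ once $\alpha^{(t)}\ge$ some $A_I$; $T_I$ is then chosen by Lemma~\ref{lem:sketch-parameter-divergence}.
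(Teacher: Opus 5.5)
There is a genuine gap, and it sits exactly at the step you identify as the main obstacle: the form of the $\lambda$-recursion and the size of $\partial_\alpha\mathcal L$. On the invariant manifold the update is $\alpha^{(t+1)}=\alpha^{(t)}-\frac{\eta}{d}\partial_\alpha\mathcal L$ and $\beta^{(t+1)}=\beta^{(t)}-\frac{\eta}{N+1}\partial_\beta\mathcal L$. The weights $1/d$ and $1/(N+1)$ appear because $\alpha$ and $\beta$ occupy $d$ and $N+1$ diagonal entries of $\Wb$, respectively. A direct computation then gives
\[
\lambda_{t+1}-\lambda_t=\frac{\eta\bigl(\frac{1}{N+1}\partial_\beta\mathcal L+\frac{\lambda_t}{d}\partial_\alpha\mathcal L\bigr)}{\alpha^{(t)}-\frac{\eta}{d}\partial_\alpha\mathcal L},
\]
with a \emph{plus} sign. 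Your intermediate goal $\lambda\,\partial_\alpha\mathcal L=O(1/\alpha^2)$ is false. In the interior regime the loss is, to leading order, a function of $\lambda$ alone, namely $\approx\frac32\mathbb P(1-U_\pi>\lambda)$. Hence $\partial_\alpha\mathcal L\approx\lambda\,\partial_\beta\mathcal L$, which is precisely your ray identity. The boundary-layer computation gives $\alpha\,\partial_\beta\mathcal L\to\frac32 h(\lambda)$ and $\alpha\,\partial_\alpha\mathcal L\to\frac32\lambda h(\lambda)$, where $h$ is the density of $1-U_\pi$; both limits are strictly positive. Your sign for $\partial_\beta\mathcal L$ is also reversed. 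In fact $\partial_\beta\mathcal L>0$ whenever $\alpha\ge0$, which is what your own heuristic that ``decreasing $\beta$ helps'' says. Likewise, the query token wins on $\{1-U_\pi>\lambda\}$, not on $\{1-U_\pi<\lambda\}$.

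These sign issues are not cosmetic. The combination you propose to show positive, $\partial_\beta\mathcal L-\lambda\partial_\alpha\mathcal L$, has leading term $\frac32 h(\lambda)(1-\lambda^2)/\alpha$. This is negative for $\lambda\in(1,2)$, so the obstacle as you posed it is false on part of $I$. With the correct recursion the obstacle disappears: the drift has leading coefficient $\frac32 h(\lambda)\bigl(\frac{1}{N+1}+\frac{\lambda^2}{d}\bigr)>0$, a sum of two positive terms with nothing to cancel.

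What remains is to establish these two limits along any sequence with $\alpha_k\to\infty$ and $\lambda_k\to\lambda\in(0,2)$. Here your plan matches the paper. You rescale $z=\alpha(s-\lambda)$ against the density of $s=1-U_\pi$. You show the non-maximal context terms are lower order; for example, $\alpha\,\mathbb E\bigl[\sum_{j\ne\pi}(U_\pi-U_j)e^{-\alpha(U_\pi-U_j)}\bigr]\to0$, so these terms are $o(1/\alpha)$, not of order $1/\alpha$ as you feared. You obtain uniformity over $I$ by compactness. The loss-decrease bound then follows from the preconditioned descent inequality
\[
\mathcal L_t-\mathcal L_{t+1}\ge\frac{\eta}{2}\Bigl(\frac{1}{d}(\partial_\alpha\mathcal L)^2+\frac{1}{N+1}(\partial_\beta\mathcal L)^2\Bigr),
\]
combined with either lower bound, as you suggest.
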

Lemma~\ref{lem:sketch-interior-drift} establishes two key facts. First, on any interval \(I\subset(0,2)\), \(\lambda_t\) exhibits a uniform positive drift: the increment \(\lambda_{t+1}-\lambda_t\) is of order \((\alpha^{(t)})^{-2}\). Second, each such step yields a decrease in the loss of at least order \((\alpha^{(t)})^{-2}\). Together, these facts imply that every sufficiently late upcrossing of \([a,b]\subset(0,2)\) by the sequence \((\lambda_t)_{t\ge 0}\) induces a fixed positive decrease in the loss; therefore, only finitely many such upcrossings can occur. Combining this conclusion with two results from the appendix, we obtain the desired claim (Theorem \ref{thm:liminf-lambda-ge-2-final}). Specifically, Lemma \ref{lem:zero-not-subseq-final} shows that \(0\) is not a subsequential limit of \(\{\lambda_t\}\), while Lemma~\ref{lem:no-tail-trap-final} shows that the trajectory cannot remain indefinitely in any compact interval \(I\subset(0,2)\). Therefore,
$
\liminf_{t\to\infty}\lambda_t\ge 2.
$

\begin{lemma}
\label{lem:sketch-exterior-control}
Denote $\lambda_t\coloneqq {-\beta^{(t)}}/{\alpha^{(t)}}$. For every \(\varepsilon>0\), there exist constants \(c_\varepsilon>0\), \(C_\varepsilon>0\), and an integer \(T_\varepsilon\) such that for all \(t\ge T_\varepsilon\), the following hold:
\begin{enumerate}[leftmargin=*, nosep]
    \item If
    $
    \lambda_t\ge 2+\frac{\varepsilon}{2},
    $
    then
    \[
    \lambda_{t+1}-\lambda_t
    \le
    -\frac{c_\varepsilon}{(\alpha^{(t)})^{d+1}},
    \quad
    \alpha^{(t+1)}-\alpha^{(t)}
    \ge
    \frac{c_\varepsilon}{(\alpha^{(t)})^{d}},\quad
    0\le
    \beta^{(t)}-\beta^{(t+1)}
    \le
    C_\varepsilon e^{-(\varepsilon/2)\alpha^{(t)}}.
    \]
    \item If
    $
    \lambda_t\le 2+\varepsilon,
    $
    then
     $
    |\lambda_{t+1}-\lambda_t|<{\varepsilon}/{2}.
     $
\end{enumerate}
\end{lemma}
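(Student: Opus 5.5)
We will work directly with the reduced two-parameter loss \eqref{sketch: reduced loss}. Write $\mathcal L(\alpha,\beta)$ for it and $u_i=\xb_i^\top\xb_q$. The GD updates are $\alpha^{(t+1)}-\alpha^{(t)}=-\eta' \partial_\alpha\mathcal L$ and $\beta^{(t+1)}-\beta^{(t)}=-\eta''\partial_\beta\mathcal L$, where the positive constants $\eta',\eta''$ come from the trace factors $d$ and $N+1$ in $\diag(\alpha\Ib_d,0,\beta\Ib_{N+1})$. Assume $\lambda_t\ge 2+\varepsilon/2$. Then $\alpha+\beta\le \alpha(-1-\varepsilon/2)$. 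Since $u_{\pi(1)}>-1$, the query weight satisfies
\[
a_0\le \exp\bigl(\alpha(-1-\varepsilon/2-u_{\pi(1)})\bigr)\le e^{-(\varepsilon/2)\alpha}
\]
pointwise. We compute $\partial_\beta\mathcal L$. Only $a_0$ depends directly on $\beta$, with $\partial_\beta a_0=a_0(1-a_0)$ and $\partial_\beta a_i=-a_ia_0$. Hence $\partial_\beta\mathcal L=\EE[a_0\cdot(\text{bounded})]$, and the bound on $a_0$ gives $|\beta^{(t+1)}-\beta^{(t)}|\le C_\varepsilon e^{-(\varepsilon/2)\alpha^{(t)}}$. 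The sign $\beta^{(t+1)}\le\beta^{(t)}$ is already supplied by the monotonicity of $\beta^{(t)}$ from Lemma~\ref{prop:invariant}.

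For the $\alpha$-increment, we split $\partial_\alpha\mathcal L$ into a part involving $a_0$, which is again $O(\alpha e^{-(\varepsilon/2)\alpha})$ and negligible, and the part of the 1-NN loss with the query token removed:
\[
\mathcal L_0(\alpha)=\EE\Bigl[(1-a_{\pi(1)})^2+\sum_{i\ne\pi(1)}a_i^2\Bigr],
\]
where the $a_i$ are now the softmax over $\alpha u_i$ alone. The goal is to show $-\mathcal L_0'(\alpha)\ge c\,\alpha^{-d}$ for large $\alpha$. We will use the exact formula
\[
\partial_\alpha a_i=a_i\Bigl(u_i-\sum_j a_ju_j\Bigr).
\]
This turns $-\mathcal L_0'$ into an expectation of a nonnegative dominant term. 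On the event that the gap $u_{\pi(1)}-u_{\pi(2)}$ is of order $1/\alpha$, the quantity $1-a_{\pi(1)}$ is of constant size and its derivative is of order $1$. Elsewhere the contributions are exponentially small or of favorable sign. The gap law comes from the uniform-sphere assumption: $1-u$ has density of order $(1-u)^{(d-3)/2}$ near $0$. A Laplace-type computation over the top two order statistics should therefore give the rate $\alpha^{-d}$, or at worst $\alpha^{-(d-1)}$ times polynomial factors, which still exceeds $c_\varepsilon\alpha^{-d}$. This lower bound, with uniform constants and with the sign of the cross terms controlled, is the step I expect to be the main obstacle. I would first try to lower bound by the single configuration in which $\xb_{\pi(1)},\xb_{\pi(2)}$ lie within $O(1/\alpha)$ of $\xb_q$ in the $u$ coordinate and all other points satisfy $u_i\le 1-\delta$.

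The drift of $\lambda$ then follows algebraically:
\[
\lambda_{t+1}-\lambda_t=\frac{-\beta^{(t+1)}}{\alpha^{(t+1)}}-\frac{-\beta^{(t)}}{\alpha^{(t)}}
=\frac{-(\beta^{(t+1)}-\beta^{(t)})}{\alpha^{(t+1)}}-\lambda_t\frac{\alpha^{(t+1)}-\alpha^{(t)}}{\alpha^{(t+1)}}.
\]
The first term is at most $C_\varepsilon e^{-(\varepsilon/2)\alpha}/\alpha$. The second is at least $(2+\varepsilon/2)c\,\alpha^{-d}/(2\alpha)$, using $\alpha^{(t+1)}\le 2\alpha^{(t)}$ because the gradients are bounded and $\alpha^{(t)}\to\infty$ by Lemma~\ref{lem:sketch-parameter-divergence}. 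For $t\ge T_\varepsilon$ the exponential term is dominated, which yields $\lambda_{t+1}-\lambda_t\le -c_\varepsilon\alpha^{-(d+1)}$.

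For part 2, note that under $\eta=O(N^{-2})$ both $|\partial_\alpha\mathcal L|$ and $|\partial_\beta\mathcal L|$ are bounded by absolute constants, since all $a_i\in[0,1]$ and $|u_i|\le1$. Hence both increments are $O(1)$. The identity above then gives
\[
|\lambda_{t+1}-\lambda_t|\le \frac{C(1+\lambda_t)}{\alpha^{(t+1)}}\le \frac{C(3+\varepsilon)}{\alpha^{(t+1)}}\quad\text{when }\lambda_t\le 2+\varepsilon.
\]
Since $\alpha^{(t)}\to\infty$, this is below $\varepsilon/2$ for all $t\ge T_\varepsilon$, after enlarging $T_\varepsilon$ if necessary.
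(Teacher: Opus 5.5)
Your overall route is the paper's, and everything around the central estimate is sound and matches Lemma~\ref{lem:exterior-gradient-bounds} and Theorem~\ref{thm:limsup-lambda-le-2}. This includes the pointwise bound $a_0\le e^{-(\varepsilon/2)\alpha}$, the bound $\partial_\beta\mathcal L=\EE[a_0\cdot O(1)]$ with the sign taken from Lemma~\ref{prop:invariant}, and the split $\partial_\alpha\mathcal L=\Psi'(\alpha)+\EE[\partial_\alpha\varphi-\partial_\alpha\psi]$, where your $\mathcal L_0$ is the paper's $\Psi$. In fact $|\partial_\alpha\varphi-\partial_\alpha\psi|\le 22a_0$, so no factor $\alpha$ appears. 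The exact identity for $\lambda_{t+1}-\lambda_t$ with denominator $\alpha^{(t+1)}$ and the bounded-gradient argument for part 2 are also fine. The gap is the step you flag yourself, $-\Psi'(\alpha)\ge c\,\alpha^{-d}$. It carries all of part 1, your sketch does not establish it, and as written the sketch contains a wrong scaling.

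Two ingredients are missing. First, lower-bounding $-\Psi'(\alpha)=\EE[-\partial_\alpha\psi]$ by its restriction to one configuration requires $-\partial_\alpha\psi\ge 0$ pointwise. Your phrase ``exponentially small or of favorable sign elsewhere'' asserts this but does not prove it. The fact does hold. Since $U_i\le U_\pi$, we have $Q'=2\sum_i b_i^2(U_i-\mu)\le 2Q(U_\pi-\mu)$, and therefore $\partial_\alpha\psi=Q'-2\partial_\alpha b_\pi\le -2(b_\pi-Q)(U_\pi-\mu)\le 0$. The paper uses the sharper identity of Lemma~\ref{lem:context-softmax-identities}, namely $Q'-2\partial_\alpha b_\pi\le -\frac{2}{N^2}\sum_{j\ne\pi}(U_\pi-U_j)e^{-2\alpha(U_\pi-U_j)}$. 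Its summands are nonnegative, so one may keep only $j=\pi(2)$.

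Second, your scaling on the good event is off. Suppose the top gap $g=U_\pi-U_{\pi(2)}$ is $\Theta(1/\alpha)$. Then $1-b_\pi$ is of constant size, but its $\alpha$-derivative has magnitude $b_\pi^2\sum_{j\ne\pi}(U_\pi-U_j)e^{-\alpha(U_\pi-U_j)}=\Theta(1/\alpha)$, not order one. Your order-one claim is also inconsistent with the $\alpha^{-d}$ you then predict, since it would give $\alpha^{-(d-1)}$. Take your configuration: both top points in the cap $\{u\ge 1-O(1/\alpha)\}$, each with probability $\asymp\alpha^{-(d-1)/2}$, and the rest bounded away. The correct count is then $\alpha^{-1}\cdot\alpha^{-(d-1)}=\alpha^{-d}$ exactly. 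There is no room for extra polynomial losses, so the hedge about ``polynomial factors'' is not harmless.

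This count is precisely the paper's Lemma~\ref{lem:Sigma2-lower-bound}, with the event $E_\alpha$. You could also drop the cap requirement and place the top two points anywhere in the bulk with gap $\Theta(1/\alpha)$; that already gives $c\,\alpha^{-2}\ge c\,\alpha^{-d}$. Once these two pieces are supplied, $-\partial_\alpha\mathcal L\ge c\,\alpha^{-d}-22e^{-(\varepsilon/2)\alpha}\ge \frac{c}{2}\alpha^{-d}$ for large $\alpha$, and the rest of your argument goes through.
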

Lemma~\ref{lem:sketch-exterior-control} reveals two important facts. First, when \(\lambda_t \ge 2+\varepsilon/2\), \(\lambda_t\) exhibits a uniform negative drift, while \(\alpha^{(t)}\) continues to increase and the \(\beta\)-updates become exponentially small. These estimates imply that the trajectory cannot remain indefinitely in the region \(\{\lambda_t \ge 2+\varepsilon/2\}\). Otherwise, \(\alpha^{(t)}\) would grow at least polynomially fast, the increments \(\beta^{(t)}-\beta^{(t+1)}\) would be summable, and \(\beta^{(t)}\) would converge to a finite limit, contradicting the fact that \(\beta^{(t)}\to -\infty\). Second, once \(t\) is sufficiently large and \(\lambda_t \le 2+\varepsilon\), the final estimate shows that the dynamics cannot cross back above \(2+\varepsilon\) in a single step. Combined with the negative drift in the regime \(\lambda_t \ge 2+\varepsilon/2\), this implies that, after the last time the trajectory exits the exterior region, it must remain below \(2+\varepsilon\) forever. Therefore, for every \(\varepsilon > 0\), we have
$
\lambda_t < 2+\varepsilon
$
for all sufficiently large \(t\), which yields
$
\limsup_{t\to\infty}\lambda_t \le 2.
$

\textit{Proof of Theorem \ref{thm:training convergence}}. Combining Lemma~\ref{lem:sketch-interior-drift} and Lemma~\ref{lem:sketch-exterior-control}, we obtain
$
-{\beta^{(t)}}/{\alpha^{(t)}}\to 2.
$
Together with Lemma \ref{lem:sketch-parameter-divergence},
this implies that the top context token eventually dominates the query token and all other context tokens. Hence \(a_{\pi(1)}^{(t)}\to 1\), \(a_0^{(t)}\to 0\), and \(a_i^{(t)}\to 0\) for \(i\neq \pi(1)\). Recalling the representation \eqref{sketch: reduced loss},
we conclude that \(\mathcal L(\Wb^{(t)})\to 0\).
\section{Numerical Experiments}\label{sec:numerical experiment}
In this section, we conduct numerical experiments to empirically validate our theoretical results. Specifically, we first show that the convergence of the training loss and the learned parameter structure of the transformer match the predictions of Theorem~\ref{thm:training convergence}. We then report the test accuracy of the trained transformer to verify the claims of Theorem~\ref{thm:expressive power} and Proposition~\ref{prop:positive_margin}. In addition, we conduct an experiment in which $\Wb$ and $\Vb$ are jointly trained from random initialization. The results show that fixing $\Vb$ is not essential for recovering the theoretical construction in Theorem~\ref{thm:expressive power}, but is used only for analytical tractability. Due to space constraints, we defer these results to Appendix~\ref{app:joint_training}.

\noindent\textbf{Experimental setup.}
We conduct experiments to assess three theory-facing questions: whether the training loss decreases, whether the learned parameter matrix develops the predicted block structure, and whether the trained model can execute $\mathrm{K}$-NN prediction at test time via CoT. We train the transformer on synthetic data generated according to Assumption~\ref{ass:distribution}, with $\xb_1,\ldots, \xb_N, \xb_q$ independently and uniformly sampled from the unit sphere, and $y_1,\ldots, y_N$ sampled as independent Rademacher variables. For each run, we sample fresh training instances independently, use the stochastic positional encoding procedure described in Section~\ref{subsec:transformer_arch}, and evaluate the trained model on a separate test set. We independently sample this test set of size 1000 consisting of well-separated instances with distance margin $\gamma=0.1$ via rejection sampling, and we report test accuracy on the $\mathrm{K}$-NN problem for $K=1,3,5$. Test accuracy is computed as the fraction of test instances satisfying
$\mathrm{TF}(\hat{\mathbf{Z}}^{(K-1)}, \boldsymbol{\theta})_{d+1, -1}\cdot\hat{y}_K> 0$.


To probe behavior beyond the theorem setup, we use random Gaussian initialization with variance 0.5 and SGD with batch size 1000 and learning rate 1, rather than zero initialization and exact gradient descent on the population loss. We train the model for 10000 online SGD iterations. Accordingly, these experiments should be interpreted as qualitative checks rather than as a direct numerical verification of Theorem~\ref{thm:training convergence}.


\begin{figure*}[t]
    \centering
    \begin{subfigure}[t]{0.31\textwidth}
        \centering
        \includegraphics[width=\linewidth]{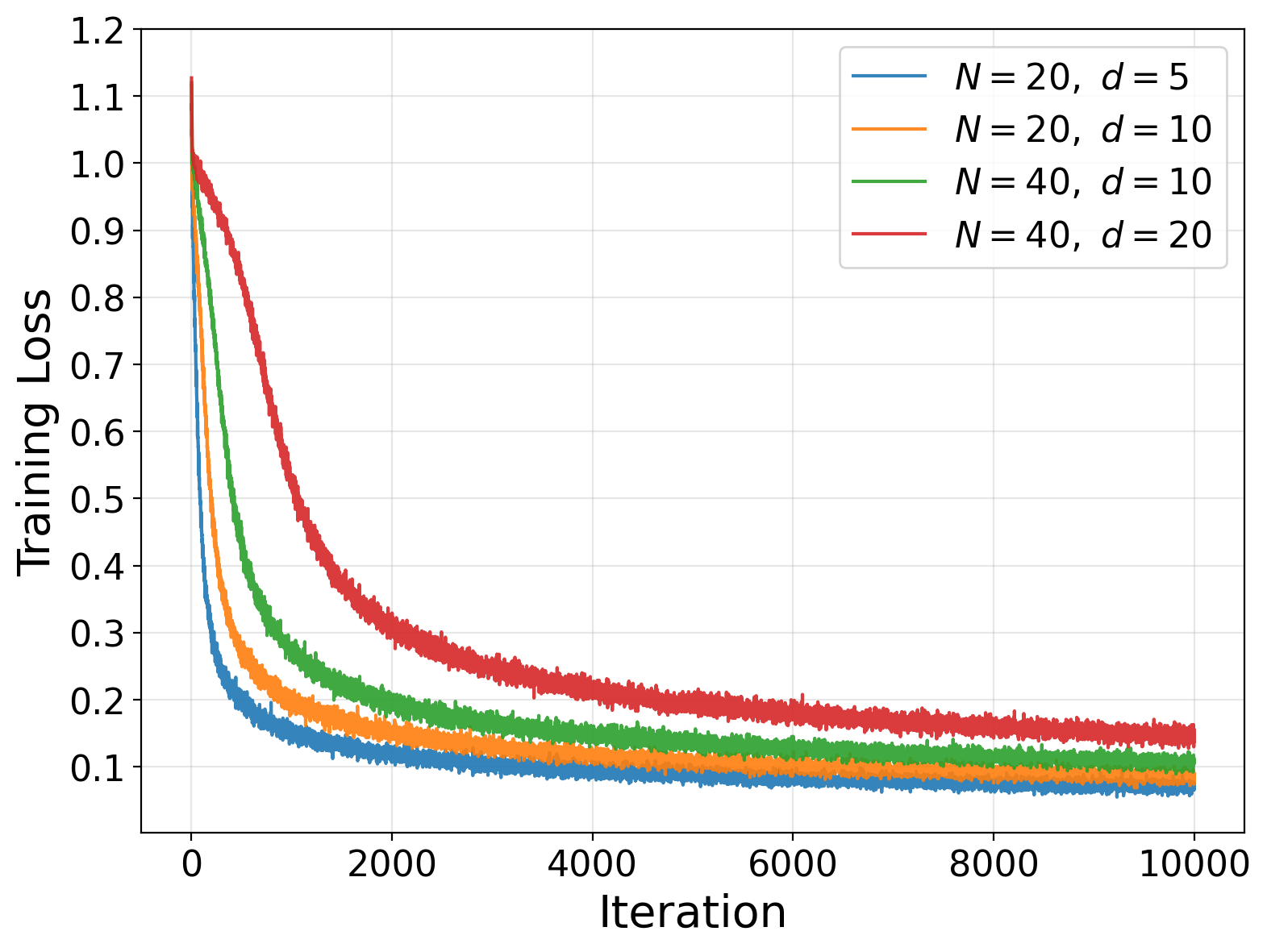}
        \caption{Training loss}
        \label{fig:training loss}
    \end{subfigure}
    \hfill
    \begin{subfigure}[t]{0.31\textwidth}
        \centering
        \includegraphics[width=\linewidth]{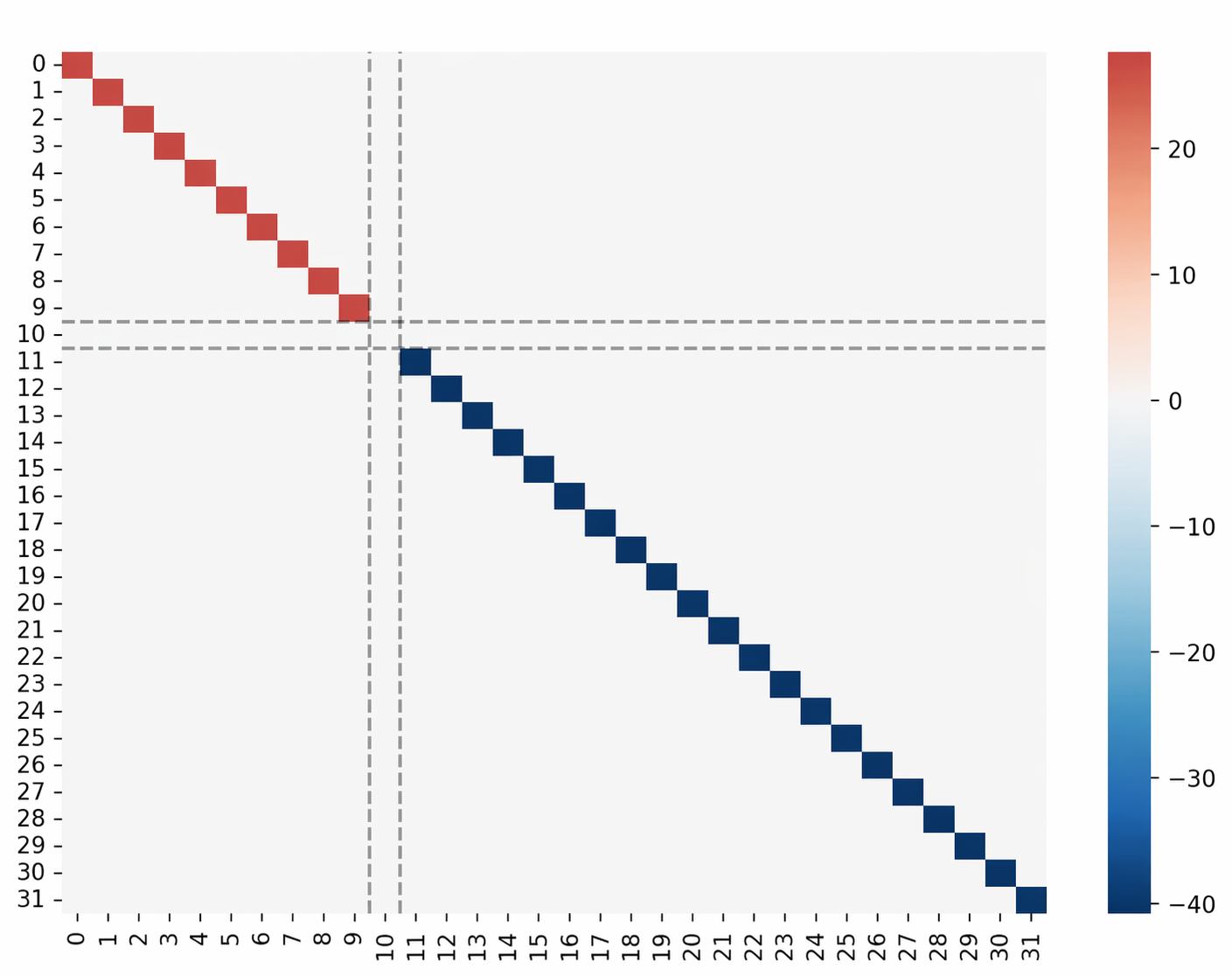}
        \caption{Heatmap of $\Wb$}
        \label{fig:heatmap}
    \end{subfigure}
    \hfill
    \begin{subfigure}[t]{0.36\textwidth}
        \centering
        \includegraphics[width=\linewidth]{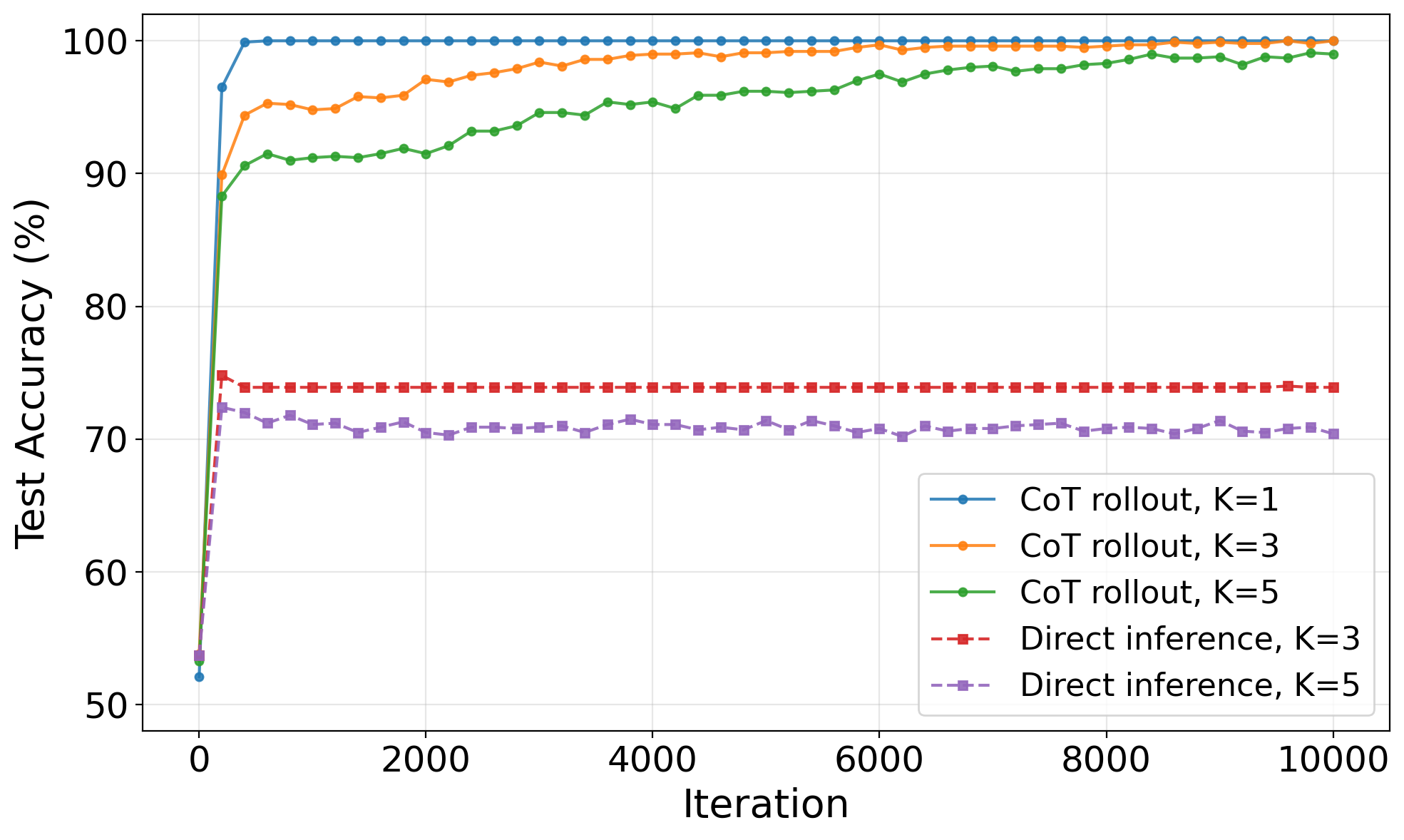}
        \caption{Test accuracy}
        \label{fig:test accuracy}
    \end{subfigure}
    \caption{Training loss, learned parameter heatmap, and test accuracy. The heatmap and test accuracy are shown for the trained transformer with $d=10$ and $N=20$.}
    \label{fig:results}
    \vspace{-15pt}
\end{figure*}
\noindent\textbf{Loss convergence.}
We train the model under various configurations with $d\in\{5,10,20\}$ and $N\in\{20,40\}$. Figure~\ref{fig:training loss} shows that the training loss decreases consistently across all tested configurations, with all curves displaying stable downward trends over training.

\noindent\textbf{Learned parameter structure.}
We plot a heatmap of the learned matrix $\Wb$ after 10000 iterations for $d=10$ and $N=20$. As shown in Figure~\ref{fig:heatmap}, most entries of $\Wb$ are close to zero except for the diagonal entries in the top-left block and the bottom-right block, matching the qualitative block structure predicted by Theorem~\ref{thm:training convergence}. The top-left diagonal entries are positive, while the bottom-right entry is negative, and the magnitude ratio is close to the theoretical value. 



\noindent\textbf{Test accuracy.}
We set $d=10$ and $N=20$, and plot test accuracy against the number of training iterations. As shown in Figure~\ref{fig:test accuracy}, the CoT accuracy for $K=1,3,5$ approaches 1 on the test distribution by the end of training. For comparison, we also train direct-inference baselines for $K=3$ and $K=5$. For each $K$, we train a separate transformer with the same architecture and optimization setup, but supervise it to predict the corresponding final $K$-NN target state $q^{(K)}=[x_q^\top,\sum_{i=1}^{K}y_{\pi(i)},p_{N+1}^\top+\sum_{i=1}^{K}p_{\pi(i)}^\top]^\top$ directly from the initial input in a single forward pass. Despite being trained specifically for each value of $K$, these direct-inference baselines achieve substantially lower test accuracy than their CoT counterparts for both $K=3$ and $K=5$ (Figure~\ref{fig:test accuracy}). These results suggest that CoT enables a sequential retrieval mechanism for $K$-NN prediction that generalizes across reasoning depths, while direct inference appears substantially less effective for $K\ge3$. We further examine direct inference under a larger context size and with multi-head and multi-layer architectures in
Appendix~\ref{app:direct_inference}

\section{Conclusions and Limitations}
This paper gives a concrete theoretical example in which CoT changes what a one-layer transformer can do in context. Without CoT, we show that direct inference fails for odd $\mathrm{K}$-NN on a well-separated class of binary classification tasks. In contrast, a one-layer transformer admits an explicit construction for solving the $\mathrm{K}$-NN problem via CoT. We further show that, in a stylized population-loss training setting with one-step state supervision, gradient descent converges to the same sharp-attention parameter pattern underlying this construction.

The contribution of the paper is primarily theoretical rather than empirical. Our analysis is based on a simplified transformer architecture, frozen $\Vb$ in the training theorem, synthetic data, and population-loss arguments. These choices make the problem tractable and allow us to isolate the role of intermediate reasoning steps, but they also limit how directly the results transfer to practical large-scale transformers. Extending this analysis to richer transformer blocks, less symmetric data models, and finite-sample training remains an important direction for future work.


\appendix

\section{Proof of Proposition \ref{prop: one layer gap}}\label{appendix: proof of one layer gap}
\begin{proof}
Fix arbitrary finite matrices \(\Vb,\Wb\in\mathbb R^{D\times D}\), a realization of \(\pb_1,\ldots,\pb_{N+1}\), and
\(0<\gamma\le 2/(N+K+1)\). We prove that the one-layer transformer
\eqref{eq:TF_def} cannot be correct on all instances in \(\cX_{\gamma}\).
Fix \(\xb_q\in\mathbb S^{d-1}\), and choose a unit vector
\(\ub\perp\xb_q\). For \(t\in(-1,1)\), define
\[
\xb(t)=t\xb_q+\sqrt{1-t^2}\,\ub .
\]
Then \(\xb(t)\in\mathbb S^{d-1}\) and
\[
\|\xb(t)-\xb_q\|^2=2-2t.
\]
Thus larger \(t\) corresponds to smaller distance from \(\xb_q\).

For each context position \(i\), write
\[
\hb_i(t,y)=
\begin{bmatrix}
\xb(t)\\
y\\
\pb_i
\end{bmatrix},
\qquad y\in\{\pm1\}.
\]
For an input with \(\xb_i=\xb(t_i)\), the single-pass last-column label
prediction is
\[
\mathrm{TF}(\Zb^{(0)},\btheta)_{d+1,-1}
=
\frac{
e^{(\qb^{(0)})^\top\Wb\qb^{(0)}}[\Vb\qb^{(0)}]_{d+1}
+
\sum_{i=1}^N
e^{\hb_i(t_i,y_i)^\top\Wb\qb^{(0)}}[\Vb\hb_i(t_i,y_i)]_{d+1}
}{
e^{(\qb^{(0)})^\top\Wb\qb^{(0)}}
+
\sum_{i=1}^N e^{\hb_i(t_i,y_i)^\top\Wb\qb^{(0)}}
}.
\]
The denominator is strictly positive. Hence the sign of
\(\mathrm{TF}(\Zb^{(0)},\btheta)_{d+1,-1}\) is the sign of the numerator.

For \(y\in\{\pm1\}\), define
\[
A_i(t)
=
\frac{
e^{\hb_i(t,1)^\top\Wb\qb^{(0)}}[\Vb\hb_i(t,1)]_{d+1}
+
e^{\hb_i(t,-1)^\top\Wb\qb^{(0)}}[\Vb\hb_i(t,-1)]_{d+1}
}{2},
\]
and
\[
B_i(t)
=
\frac{
e^{\hb_i(t,1)^\top\Wb\qb^{(0)}}[\Vb\hb_i(t,1)]_{d+1}
-
e^{\hb_i(t,-1)^\top\Wb\qb^{(0)}}[\Vb\hb_i(t,-1)]_{d+1}
}{2}.
\]
Then
\[
e^{\hb_i(t,y)^\top\Wb\qb^{(0)}}[\Vb\hb_i(t,y)]_{d+1}
=
A_i(t)+B_i(t)y.
\]
Therefore, on this restricted family of inputs, the prediction has sign
\[
\operatorname{sign}\left(
A(t_1,\ldots,t_N)+\sum_{i=1}^N B_i(t_i)y_i
\right),
\]
where
\[
A(t_1,\ldots,t_N)
=
e^{(\qb^{(0)})^\top\Wb\qb^{(0)}}[\Vb\qb^{(0)}]_{d+1}
+
\sum_{i=1}^N A_i(t_i)
\]
is independent of the labels.

We now switch to distance coordinates. If
\[
d_i=\|\xb(t_i)-\xb_q\|,
\]
then
\[
t_i=1-\frac{d_i^2}{2}.
\]
For notational simplicity, write \(B_i(d_i)\) for
\(B_i(1-d_i^2/2)\), and similarly write \(A(d_1,\ldots,d_N)\).
Thus the prediction has sign
\[
\operatorname{sign}\left(
A(d_1,\ldots,d_N)+\sum_{i=1}^N B_i(d_i)y_i
\right).
\]

Assume for contradiction that the fixed matrices \(\Vb,\Wb\) are correct for
every instance in \(\cX_{\gamma}\).

Fix any distance configuration with strict ordering, and let \(S\) be the set
of the \(K\) closest indices. Since \(K\) is odd, the KNN target is
\[
\operatorname{sign}\left(\sum_{i\in S}y_i\right)\in\{\pm1\}.
\]
Let
\[
R=\sum_{j\notin S}|B_j(d_j)|.
\]
For correctness under all choices of the outside labels, the following
conditions are necessary:
\[
\sum_{i\in S}y_i>0
\quad\Longrightarrow\quad
A(d_1,\ldots,d_N)+\sum_{i\in S}B_i(d_i)y_i>R,
\]
and
\[
\sum_{i\in S}y_i<0
\quad\Longrightarrow\quad
A(d_1,\ldots,d_N)+\sum_{i\in S}B_i(d_i)y_i<-R.
\]
Indeed, the outside labels contribute
\[
\sum_{j\notin S}B_j(d_j)y_j,
\]
whose possible perturbation has magnitude at most \(R\). The inequalities are
strict because \(\operatorname{sign}(0)=0\), while the KNN target is \(\pm1\).

We first record two consequences of these necessary conditions.

First, for every \(r\in S\),
\[
B_r(d_r)>R.
\]
To prove this, choose labels on \(S\setminus\{r\}\) such that
\[
\sum_{i\in S\setminus\{r\}}y_i=0,
\]
which is possible because \(K-1\) is even. Keeping these labels fixed, the KNN
target is \(+1\) when \(y_r=+1\), and \(-1\) when \(y_r=-1\). Hence
\[
A+\sum_{i\in S\setminus\{r\}}B_i(d_i)y_i+B_r(d_r)>R,
\]
and
\[
A+\sum_{i\in S\setminus\{r\}}B_i(d_i)y_i-B_r(d_r)<-R.
\]
Subtracting the second inequality from the first gives
\[
B_r(d_r)>R.
\]

Second, let
\[
m=\frac{K+1}{2}.
\]
For every \(P\subset S\) with \(|P|=m\), one has
\[
\sum_{i\in P}B_i(d_i)-\sum_{i\in S\setminus P}B_i(d_i)
>
R+|A(d_1,\ldots,d_N)|.
\]
Indeed, set \(y_i=+1\) for \(i\in P\) and \(y_i=-1\) for
\(i\in S\setminus P\). Since \(|P|=m\) and \(|S\setminus P|=m-1\), the
top-\(K\) label sum equals \(1\). Therefore,
\[
A+\sum_{i\in P}B_i(d_i)-\sum_{i\in S\setminus P}B_i(d_i)>R.
\]
Flipping all top-\(K\) labels gives target \(-1\), so
\[
A-\sum_{i\in P}B_i(d_i)+\sum_{i\in S\setminus P}B_i(d_i)<-R.
\]
Combining these two inequalities gives
\[
\sum_{i\in P}B_i(d_i)-\sum_{i\in S\setminus P}B_i(d_i)
>
R+|A(d_1,\ldots,d_N)|.
\]

We now construct fixed-margin configurations leading to a contradiction.

Choose disjoint index sets \(L,P\subset[N]\) such that
\[
|L|=m-1,
\qquad
|P|=m.
\]
This is possible because
\[
|L|+|P|=2m-1=K\le N-2<N.
\]

First define a distance configuration in which \(L\cup P\) is exactly the
top-\(K\) set. Enumerate
\[
L=\{\ell_1,\ldots,\ell_{m-1}\},
\qquad
P=\{p_1,\ldots,p_m\}.
\]
Assign
\[
d_{\ell_r}=(K+r)\gamma,
\qquad r=1,\ldots,m-1,
\]
and
\[
d_{p_s}=(K+m-1+s)\gamma,
\qquad s=1,\ldots,m.
\]
Thus the \(P\)-indices are farther than the \(L\)-indices. Assign the remaining
\(N-K\) indices distances
\[
(2K+1)\gamma,\ (2K+2)\gamma,\ \ldots,\ (2K+N-K)\gamma.
\]
All pairwise distance gaps are at least \(\gamma\). The largest distance used is
\[
(2K+N-K)\gamma=(N+K)\gamma\le 2-\gamma.
\]
Hence this is a valid instance in \(\cX_{\gamma}\). In this configuration,
the top-\(K\) set is \(L\cup P\).

Applying the balance consequence to the subset \(P\subset L\cup P\) gives
\begin{equation}
\sum_{p\in P}B_p(d_p)
>
\sum_{\ell\in L}B_\ell(d_\ell).
\label{eq:balance-P-L}
\end{equation}
Applying the dominance consequence to each top-\(K\) index gives
\begin{equation}
B_i(d_i)>0,
\qquad i\in L\cup P.
\label{eq:positive-LP}
\end{equation}
In particular,
\begin{equation}
B_p(d_p)>0,
\qquad p\in P.
\label{eq:positive-P}
\end{equation}

Next we cover \(P\) by subsets associated with indices in \(L\). Since
\(N-K\ge2\),
\[
|L|(N-K)=(m-1)(N-K)\ge 2(m-1)\ge m=|P|,
\]
where the last inequality uses \(m\ge2\). Hence, for each \(\ell\in L\), we may
choose a subset \(T_\ell\subseteq P\) such that
\[
|T_\ell|\le N-K
\]
and
\begin{equation}
\bigcup_{\ell\in L}T_\ell=P.
\label{eq:cover-P}
\end{equation}

Now fix \(\ell\in L\). We construct another fixed-margin distance configuration.
Keep token \(\ell\) at the same distance \(d_\ell\), and keep every token
\(p\in T_\ell\) at the same distance \(d_p\). Choose any \(K-1\) indices from
\[
[N]\setminus(\{\ell\}\cup T_\ell),
\]
which is possible because
\[
N-1-|T_\ell|\ge N-1-(N-K)=K-1.
\]
Assign these \(K-1\) indices the distances
\[
d_\ell-\gamma,\ d_\ell-2\gamma,\ \ldots,\ d_\ell-(K-1)\gamma.
\]
These distances are positive because the smallest possible value of \(d_\ell\)
is \((K+1)\gamma\), and hence
\[
d_\ell-(K-1)\gamma\ge 2\gamma>0.
\]
Assign all remaining indices distances
\[
(2K+1)\gamma,\ (2K+2)\gamma,\ \ldots
\]
using consecutive increments of \(\gamma\).

This configuration also has all pairwise distance gaps at least \(\gamma\).
Its largest distance is at most
\[
(N+K)\gamma\le 2-\gamma.
\]
Therefore it also belongs to \(\cX_{\gamma}\). In this configuration,
token \(\ell\) is exactly the \(K\)-th nearest neighbor, and every token
\(p\in T_\ell\) lies outside the top-\(K\) set because \(d_p>d_\ell\).

Applying the dominance consequence to token \(\ell\) gives
\[
B_\ell(d_\ell)
>
\sum_{p\in T_\ell}|B_p(d_p)|.
\]
Here the distances of the tokens \(p\in T_\ell\) are kept equal to the same
values \(d_p\) used in the first configuration. Hence the coefficient values
\(B_p(d_p)\) are the same as those in \eqref{eq:positive-P}. By
\eqref{eq:positive-P}, \(B_p(d_p)>0\) for every \(p\in P\), and therefore
\begin{equation}
B_\ell(d_\ell)
>
\sum_{p\in T_\ell}B_p(d_p).
\label{eq:ell-dominates-Tell}
\end{equation}

Summing \eqref{eq:ell-dominates-Tell} over all \(\ell\in L\) yields
\[
\sum_{\ell\in L}B_\ell(d_\ell)
>
\sum_{\ell\in L}\sum_{p\in T_\ell}B_p(d_p).
\]
By the covering property \eqref{eq:cover-P} and positivity
\eqref{eq:positive-P},
\[
\sum_{\ell\in L}\sum_{p\in T_\ell}B_p(d_p)
\ge
\sum_{p\in P}B_p(d_p).
\]
Therefore,
\begin{equation}
\sum_{\ell\in L}B_\ell(d_\ell)
>
\sum_{p\in P}B_p(d_p).
\label{eq:L-dominates-P}
\end{equation}
This contradicts \eqref{eq:balance-P-L}.

Thus the assumption that the fixed matrices \(\Vb,\Wb\) are correct on all
instances in \(\cX_{\gamma}\) is false. Consequently, for every finite
\(\Vb,\Wb\), there exists an instance in \(\cX_{\gamma}\) such that
\[
\operatorname{sign}\big(\mathrm{TF}(\Zb^{(0)},\btheta)_{d+1,-1}\big)
\ne
\operatorname{sign}\left(\sum_{k=1}^K y_{\pi(k)}\right).
\]
This proves the proposition.
\end{proof}

\section{Proof of Theorem \ref{thm:expressive power}}\label{appendix: proof of expressive power}

\begin{proof}
We first prove the following stronger vector-valued statement of Lemma~\ref{lem:expressive power mechanism}. By the sequential characterization of limits, it suffices to show that for every sequence $(\alpha_n,\beta_n)_{n\ge 1}$ such that $\alpha_n\to +\infty$, $\beta_n\to -\infty$, and $-\beta_n/\alpha_n\to 2$, one has
\[
\mathrm{TF}(\hat{\mathbf{Z}}_n^{(K-1)},\theta_n)_{:,-1}-\mathbf{q}^{(K)}\to 0\quad \text{a.s.}
\]
where $\qb^{(K)} = [\xb_q^\top,\sum_{k=1}^K y_{\pi(k)},\pb_{N+1}^\top+\sum_{k=1}^K\pb_{\pi(k)}^\top]^\top$. Fix such a sequence \((\alpha_n,\beta_n)_{n\ge1}\). For each \(n\), define
\[
\mathbf W_n\coloneqq \diag(\alpha_n I_d,0,\beta_n I_{N+1}),
\qquad
\boldsymbol\theta_n\coloneqq (\mathbf V,\mathbf W_n),
\]
where
$
\mathbf V=\diag(0,1,I_{N+1}).
$
Define the append-style rollout recursively by
\[
\widehat{\mathbf q}^{(0)}_n\coloneqq \mathbf q^{(0)},
\qquad
\widehat{\mathbf Z}^{(0)}_n\coloneqq \mathbf Z^{(0)}=[\mathbf H,\mathbf q^{(0)}],
\]
and, for \(r\ge1\),
\begin{equation}\label{eq:append-recursion-final}
\widehat{\mathbf q}^{(r)}_n
\coloneqq
\mathrm{TF}(\widehat{\mathbf Z}^{(r-1)}_n,\boldsymbol\theta_n)_{:,-1},
\qquad
\widehat{\mathbf Z}^{(r)}_n
\coloneqq
[\mathbf H,\widehat{\mathbf q}^{(0)}_n,\widehat{\mathbf q}^{(1)}_n,\dots,\widehat{\mathbf q}^{(r)}_n].
\end{equation}
Thus, in particular,
\[
\widehat{\mathbf q}^{(K)}_n
=
\mathrm{TF}(\widehat{\mathbf Z}^{(K-1)}_n,\boldsymbol\theta_n)_{:,-1}.
\]

For \(r=0,\dots,K\), write
\begin{equation}\label{eq:qr-def-final}
\mathbf q^{(r)}
=
\begin{bmatrix}
\mathbf x_q\\
\sum_{m=1}^r y_{\pi(m)}\\
\mathbf p_{N+1}+\sum_{m=1}^r \mathbf p_{\pi(m)}
\end{bmatrix}.
\end{equation}

We now establish the convergence through four lemmas.

\begin{lemma}
\label{lem:expr_x_block_exact_polished}
For every \(r\ge0\) and every \(n\),
\[
\bigl(\widehat{\mathbf q}^{(r)}_n\bigr)_x=\mathbf x_q.
\]
\end{lemma}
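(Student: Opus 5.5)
The plan is induction on \(r\), using only the block structure of \(\mathbf V=\diag(\mathbf 0_{d\times d},1,\Ib_{N+1})\) and the residual form \eqref{eq:TF_def}. Write \((\cdot)_x\) for the first \(d\) coordinates of a vector in \(\RR^D\). The key observation is that \(\mathbf V\) annihilates the feature block: for any vector \(\mathbf z\in\RR^D\), \((\mathbf V\mathbf z)_x=\mathbf 0_d\). This holds independently of \(\mathbf W_n\), the attention weights, and the randomness.

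For the base case \(r=0\), we have \(\widehat{\mathbf q}^{(0)}_n=\mathbf q^{(0)}\) by definition, and its feature block is \(\mathbf x_q\) by \eqref{eq:Z_definition}.

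For the inductive step, let \(r\ge1\) and set \(\mathbf Z=\widehat{\mathbf Z}^{(r-1)}_n\), whose last column is \(\widehat{\mathbf q}^{(r-1)}_n\). By \eqref{eq:TF_def}, the last column of the output is
\[
\widehat{\mathbf q}^{(r)}_n=\widehat{\mathbf q}^{(r-1)}_n+\mathbf V\mathbf Z\,\mathbf a,
\qquad \mathbf a=\mathrm{softmax}\bigl(\mathbf Z^\top\mathbf W_n\widehat{\mathbf q}^{(r-1)}_n\bigr).
\]
Here \(\mathbf a\) is a finite probability vector, since all entries of \(\mathbf Z\) and \(\mathbf W_n\) are finite.

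The vector \(\mathbf V\mathbf Z\mathbf a\) is a convex combination of the columns \(\mathbf V\mathbf Z_{:,j}\). Each of these columns has zero feature block, so \((\mathbf V\mathbf Z\mathbf a)_x=\mathbf 0_d\). Therefore \((\widehat{\mathbf q}^{(r)}_n)_x=(\widehat{\mathbf q}^{(r-1)}_n)_x=\mathbf x_q\) by the induction hypothesis.

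There is no real obstacle here. The one point to state explicitly is that the residual connection copies the current last column and the value map contributes nothing to the first \(d\) coordinates. The identity is therefore exact rather than asymptotic, and it holds for every \(n\) and every realization of the data and positional encodings.
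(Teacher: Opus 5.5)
Your proof is correct and matches the paper's argument: induction on $r$, with the base case read off from $\mathbf q^{(0)}$. The inductive step uses that the $x$-row block of $\mathbf V$ is zero, so the attention correction $\mathbf V\widehat{\mathbf Z}^{(r-1)}_n\mathbf a$ leaves the residual copy of $(\widehat{\mathbf q}^{(r-1)}_n)_x=\mathbf x_q$ unchanged.
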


\begin{proof}
We prove by induction on \(r\). For \(r=0\), by definition,
\[
\widehat{\mathbf q}^{(0)}_n=\mathbf q^{(0)}
=
\begin{bmatrix}
\mathbf x_q\\
0\\
\mathbf p_{N+1}
\end{bmatrix},
\]
so \((\widehat{\mathbf q}^{(0)}_n)_x=\mathbf x_q\).

Assume now that \((\widehat{\mathbf q}^{(r-1)}_n)_x=\mathbf x_q\). Using the transformer formula,
\[
\widehat{\mathbf q}^{(r)}_n
=
\widehat{\mathbf q}^{(r-1)}_n
+
(\mathbf V\widehat{\mathbf Z}^{(r-1)}_n)\,
\mathrm{softmax}\!\left(
(\widehat{\mathbf Z}^{(r-1)}_n)^\top
\mathbf W_n
(\widehat{\mathbf Z}^{(r-1)}_n)_{:,-1}
\right).
\]
Since the \(x\)-row block of \(\mathbf V=\diag(0,1,I_{N+1})\) is identically zero, the additive attention correction has zero \(x\)-block. Hence
\[
(\widehat{\mathbf q}^{(r)}_n)_x
=
(\widehat{\mathbf q}^{(r-1)}_n)_x
=
\mathbf x_q.
\]
This completes the induction.
\end{proof}

\begin{lemma}
\label{lem:expr_pathwise_bounds_polished}
For every \(0\le r\le K\) and every \(n\),
\[
\left|(\widehat{\mathbf q}^{(r)}_n)_y\right|\le 2^r-1,
\qquad
\left\|(\widehat{\mathbf q}^{(r)}_n)_p\right\|\le 2^r.
\]
\end{lemma}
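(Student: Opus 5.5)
We argue by induction on \(r\), exploiting the fact that every new token is the previous last token plus a convex combination of the \(\mathbf V\)-images of the existing columns. The base case \(r=0\) is immediate: \(\widehat{\mathbf q}^{(0)}_n=\mathbf q^{(0)}\) has \(y\)-entry \(0=2^0-1\) and \(p\)-block \(\mathbf p_{N+1}\), a signed canonical basis vector with \(\|\mathbf p_{N+1}\|=1=2^0\).

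For the inductive step, assume the bounds hold for \(\widehat{\mathbf q}^{(0)}_n,\dots,\widehat{\mathbf q}^{(r-1)}_n\). By \eqref{eq:append-recursion-final} and the form of \(\mathbf V=\diag(0,1,I_{N+1})\),
\[
(\widehat{\mathbf q}^{(r)}_n)_y=(\widehat{\mathbf q}^{(r-1)}_n)_y+\sum_{j}w_j\,(\mathbf z_j)_y,
\qquad
(\widehat{\mathbf q}^{(r)}_n)_p=(\widehat{\mathbf q}^{(r-1)}_n)_p+\sum_{j}w_j\,(\mathbf z_j)_p .
\]
Here the \(\mathbf z_j\) range over the columns of \(\widehat{\mathbf Z}^{(r-1)}_n\), namely \(\mathbf h_1,\dots,\mathbf h_N,\widehat{\mathbf q}^{(0)}_n,\dots,\widehat{\mathbf q}^{(r-1)}_n\), and the weights \(w_j\ge0\) are the softmax weights, with \(\sum_j w_j=1\). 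The key observation is that a convex combination is bounded in absolute value (respectively in norm, by the triangle inequality) by the maximum over the columns. Each context column has \(|y_i|=1\le 2^{r-1}-1\) when \(r\ge2\), and \(\|\mathbf p_i\|=1\le 2^{r-1}\). Each generated column \(\widehat{\mathbf q}^{(m)}_n\) with \(m\le r-1\) satisfies the bounds \(2^m-1\le 2^{r-1}-1\) and \(2^m\le 2^{r-1}\) by the induction hypothesis.

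The one wrinkle is the \(y\)-bound at \(r=1\), where the context labels have magnitude \(1\), which exceeds \(2^0-1=0\). We therefore bound the \(y\)-part of the attention term by \(\max\{1,2^{r-1}-1\}\le 2^{r-1}\). This gives
\[
|(\widehat{\mathbf q}^{(r)}_n)_y|\le (2^{r-1}-1)+2^{r-1}=2^r-1,
\qquad
\|(\widehat{\mathbf q}^{(r)}_n)_p\|\le 2^{r-1}+2^{r-1}=2^r.
\]
This closes the induction.

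Nothing here depends on \(n\), on \(\alpha_n,\beta_n\), or on the realization of the data, so the bounds are pathwise and uniform. The only step needing care is the \(r=1\) base of the \(y\)-bound, handled as above; otherwise the argument is routine.
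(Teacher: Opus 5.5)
Your proof is correct and essentially identical to the paper's: induction on $r$, with the attention correction bounded as a convex combination of columns whose $y$-entries are at most $\max\{1,2^{r-1}-1\}\le 2^{r-1}$ and whose $p$-blocks have norm at most $2^{r-1}$. You also state explicitly that the hypothesis must cover all earlier tokens $\widehat{\mathbf q}^{(m)}_n$ with $m\le r-1$ (strong induction), which the paper uses but only phrases as ``depth $r-1$''.
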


\begin{proof}
We prove the two bounds simultaneously by induction on \(r\). For \(r=0\), by definition,
\[
\widehat{\mathbf q}^{(0)}_n=\mathbf q^{(0)}
=
\begin{bmatrix}
\mathbf x_q\\
0\\
\mathbf p_{N+1}
\end{bmatrix}.
\]
Therefore
\[
\left|(\widehat{\mathbf q}^{(0)}_n)_y\right|=0=2^0-1,
\qquad
\left\|(\widehat{\mathbf q}^{(0)}_n)_p\right\|=\|\mathbf p_{N+1}\|=1=2^0.
\]

Assume that the claim holds at depth \(r-1\), where \(r\ge1\). Let
\[
a^{(r)}_n
\coloneqq
\mathrm{softmax}\!\left(
(\widehat{\mathbf Z}^{(r-1)}_n)^\top
\mathbf W_n
(\widehat{\mathbf Z}^{(r-1)}_n)_{:,-1}
\right).
\]
Then
\[
\widehat{\mathbf q}^{(r)}_n
=
\widehat{\mathbf q}^{(r-1)}_n
+
(\mathbf V\widehat{\mathbf Z}^{(r-1)}_n)a^{(r)}_n.
\]

For the \(y\)-block, we may write
\[
(\widehat{\mathbf q}^{(r)}_n)_y
=
(\widehat{\mathbf q}^{(r-1)}_n)_y
+
\sum_j a^{(r)}_{j,n}\,y^{\mathrm{ctx}}_{j,n},
\]
where \(y^{\mathrm{ctx}}_{j,n}\) denotes the \(y\)-block of the \(j\)-th context token in \(\widehat{\mathbf Z}^{(r-1)}_n\). Each context token is either an original token \(h_i\), whose \(y\)-block equals \(y_i\) and hence has magnitude at most 1, or a previously generated token \(\widehat{\mathbf q}^{(m)}_n\) with \(m\le r-1\), whose \(y\)-block has magnitude at most \(2^{r-1}-1\) by the induction hypothesis. Therefore
\[
|y^{\mathrm{ctx}}_{j,n}|
\le
\max\{1,2^{r-1}-1\}
\le 2^{r-1}
\qquad\text{for every }j.
\]
Since \(a^{(r)}_n\) is a probability vector,
\[
\left|\sum_j a^{(r)}_{j,n}y^{\mathrm{ctx}}_{j,n}\right|
\le
\sum_j a^{(r)}_{j,n}\,2^{r-1}
=
2^{r-1}.
\]
Using the induction hypothesis,
\[
|(\widehat{\mathbf q}^{(r-1)}_n)_y|\le 2^{r-1}-1,
\]
we obtain
\[
|(\widehat{\mathbf q}^{(r)}_n)_y|
\le
2^{r-1}-1+2^{r-1}
=
2^r-1.
\]

For the \(p\)-block, similarly,
\[
(\widehat{\mathbf q}^{(r)}_n)_p
=
(\widehat{\mathbf q}^{(r-1)}_n)_p
+
\sum_j a^{(r)}_{j,n}\,p^{\mathrm{ctx}}_{j,n},
\]
where \(p^{\mathrm{ctx}}_{j,n}\) denotes the \(p\)-block of the \(j\)-th context token. Each context token is either an original token \(h_i\), whose \(p\)-block is \(\mathbf p_i\) and has norm \(1\), or a previously generated token \(\widehat{\mathbf q}^{(m)}_n\) with \(m\le r-1\), whose \(p\)-block has norm at most \(2^{r-1}\) by the induction hypothesis. Hence
\[
\|p^{\mathrm{ctx}}_{j,n}\|
\le
\max\{1,2^{r-1}\}
=
2^{r-1}
\qquad\text{for every }j,
\]
and therefore
\[
\left\|\sum_j a^{(r)}_{j,n}p^{\mathrm{ctx}}_{j,n}\right\|
\le
\sum_j a^{(r)}_{j,n}\,2^{r-1}
=
2^{r-1}.
\]
Using the induction hypothesis once more,
\[
\|(\widehat{\mathbf q}^{(r-1)}_n)_p\|\le 2^{r-1},
\]
we conclude
\[
\|(\widehat{\mathbf q}^{(r)}_n)_p\|
\le
2^{r-1}+2^{r-1}
=
2^r.
\]
This completes the induction.
\end{proof}

\begin{lemma}
\label{lem:expr_one_step_consistency_polished}
Fix \(s\in\{1,\dots,K\}\). Let
\[
\Omega_*
\coloneqq
\left\{
\mathbf x_{\pi(1)}^\top \mathbf x_q>\cdots>\mathbf x_{\pi(N)}^\top \mathbf x_q
\right\}
\cap
\left\{
\mathbf x_{\pi(r)}^\top \mathbf x_q>-1\ \text{for all }1\le r\le K
\right\}.
\]
On \(\Omega_*\), define
\[
\Delta_s\coloneqq
\begin{cases}
\displaystyle
\min_{j\in [N]\setminus\{\pi(1),\dots,\pi(s)\}}
\bigl(\mathbf x_{\pi(s)}^\top \mathbf x_q-\mathbf x_j^\top \mathbf x_q\bigr),
& s<N,\\[2mm]
1,& s=N.
\end{cases}
\]
Then \(\Delta_s>0\) and \(1+\mathbf x_{\pi(s)}^\top \mathbf x_q>0\).

Assume that
\[
\widehat{\mathbf q}^{(m)}_n\to \mathbf q^{(m)}
\qquad\text{for every }0\le m\le s-1.
\]
Then on \(\Omega_*\),
\[
\widehat{\mathbf q}^{(s)}_n\to \mathbf q^{(s)}.
\]
\end{lemma}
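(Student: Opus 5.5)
The two preliminary claims follow directly from the definitions. On \(\Omega_*\) the inner products are strictly decreasing, so for \(s<N\) each \(j\notin\{\pi(1),\dots,\pi(s)\}\) satisfies \(\mathbf x_j^\top\mathbf x_q<\mathbf x_{\pi(s)}^\top\mathbf x_q\). This gives \(\Delta_s>0\), and the case \(s=N\) is trivial. Since \(s\le K\), the defining condition of \(\Omega_*\) gives \(1+\mathbf x_{\pi(s)}^\top\mathbf x_q>0\).

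For the main claim, write \(\widehat{\mathbf q}^{(s)}_n=\widehat{\mathbf q}^{(s-1)}_n+(\mathbf V\widehat{\mathbf Z}^{(s-1)}_n)a^{(s)}_n\). The plan is to show that \(a^{(s)}_n\) converges to the one-hot vector at the context token \(h_{\pi(s)}\). Once that holds, boundedness of all columns (Lemma~\ref{lem:expr_pathwise_bounds_polished}) gives
\[
(\mathbf V\widehat{\mathbf Z}^{(s-1)}_n)a^{(s)}_n\to \mathbf V h_{\pi(s)}=[\mathbf 0^\top,\,y_{\pi(s)},\,\mathbf p_{\pi(s)}^\top]^\top .
\]
Adding the hypothesis \(\widehat{\mathbf q}^{(s-1)}_n\to\mathbf q^{(s-1)}\) then yields \(\mathbf q^{(s)}\).

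To establish the one-hot limit, compute the logits against the last column \(\widehat{\mathbf q}^{(s-1)}_n\). By Lemma~\ref{lem:expr_x_block_exact_polished} its \(x\)-block is exactly \(\mathbf x_q\), so each logit is \(\alpha_n\mathbf x^\top\mathbf x_q+\beta_n\langle \mathbf p^{\mathrm{ctx}},(\widehat{\mathbf q}^{(s-1)}_n)_p\rangle\). Write \((\widehat{\mathbf q}^{(s-1)}_n)_p=\mathbf P+\varepsilon_n\), where \(\mathbf P=\mathbf p_{N+1}+\sum_{m<s}\mathbf p_{\pi(m)}\) and \(\varepsilon_n\to0\). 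The resulting logits, divided by \(\alpha_n\), are as follows.
\begin{itemize}
\item An unselected context token \(h_j\) has \(\langle\mathbf p_j,\mathbf P\rangle=0\), so its normalized logit is \(\mathbf x_j^\top\mathbf x_q+(\beta_n/\alpha_n)\langle\mathbf p_j,\varepsilon_n\rangle\).
\item A selected token \(h_{\pi(m)}\) with \(m<s\) has normalized logit \(\mathbf x_{\pi(m)}^\top\mathbf x_q+(\beta_n/\alpha_n)(1+\langle\mathbf p_{\pi(m)},\varepsilon_n\rangle)\).
\item A generated token \(\widehat{\mathbf q}^{(m)}_n\) with \(m\le s-1\) has \(p\)-block converging to \(\mathbf p_{N+1}+\sum_{i\le m}\mathbf p_{\pi(i)}\), so its normalized logit is \(1+(\beta_n/\alpha_n)(m+1+o(1))\).
\end{itemize}
The key observation is that every perturbation enters as \((\beta_n/\alpha_n)\cdot o(1)\). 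Because \(\beta_n/\alpha_n\to-2\) is bounded, all perturbations are \(o(1)\) after dividing by \(\alpha_n\). Hence the limiting normalized scores are \(\mathbf x_j^\top\mathbf x_q\), \(\mathbf x_{\pi(m)}^\top\mathbf x_q-2\), and \(-(2m+1)\), each up to \(o(1)\).

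Next, compare these limiting scores with \(\mathbf x_{\pi(s)}^\top\mathbf x_q\), which is the score of \(h_{\pi(s)}\).
\begin{itemize}
\item Unselected \(j\ne\pi(s)\) (for \(s<N\)) fall below it by at least \(\Delta_s\).
\item Selected tokens satisfy \(\mathbf x_{\pi(m)}^\top\mathbf x_q-2\le -1<\mathbf x_{\pi(s)}^\top\mathbf x_q\). The gap is at least \(1+\mathbf x_{\pi(s)}^\top\mathbf x_q>0\), using \(\mathbf x_{\pi(m)}^\top\mathbf x_q\le1\).
\item Generated tokens have score \(\le -1\), so the same gap applies.
\end{itemize}
Let \(c=\min\{\Delta_s,1+\mathbf x_{\pi(s)}^\top\mathbf x_q\}>0\). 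For large \(n\), every competitor's raw logit is below that of \(h_{\pi(s)}\) by at least \(\alpha_n c/2\). Since there are only \(N+s\) tokens, every competitor's softmax weight is at most \(e^{-\alpha_n c/2}\to0\), which gives the one-hot limit.

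The main obstacle is controlling the perturbations coming from the inexact earlier outputs. The argument above handles them because \(\varepsilon_n\to0\) is multiplied only by the bounded ratio \(\beta_n/\alpha_n\) after normalization. This requires keeping the comparison on the \(\alpha_n\)-normalized scale rather than bounding \(\beta_n\varepsilon_n\) directly, since \(\beta_n\varepsilon_n\) need not vanish. This is exactly where the uniform positive gap \(c\) absorbs an \(o(\alpha_n)\) error.
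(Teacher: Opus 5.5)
Your proposal is correct and follows essentially the same route as the paper: the exact logit formulas (using that every generated token has $x$-block exactly $\mathbf x_q$), normalization by $\alpha_n$ so that the rollout errors are multiplied only by the bounded ratio $\beta_n/\alpha_n$, and the limiting scores $\mathbf x_j^\top\mathbf x_q$, $\mathbf x_{\pi(m)}^\top\mathbf x_q-2$, $-(2m+1)$. It also uses the same margin $\min\{\Delta_s,1+\mathbf x_{\pi(s)}^\top\mathbf x_q\}$, and softmax concentration together with the pathwise bounds to pass to $\widehat{\mathbf q}^{(s)}_n\to\mathbf q^{(s)}$; writing $(\widehat{\mathbf q}^{(s-1)}_n)_p=\mathbf P+\varepsilon_n$ is only a cosmetic repackaging of the paper's Step~2.
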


\begin{proof}
We work pointwise on \(\Omega_*\). At inference step \(s\), the current input is
\[
\widehat{\mathbf Z}^{(s-1)}_n
=
[\mathbf H,\widehat{\mathbf q}^{(0)}_n,\widehat{\mathbf q}^{(1)}_n,\dots,\widehat{\mathbf q}^{(s-1)}_n],
\]
and the query token is the last token \(\widehat{\mathbf q}^{(s-1)}_n\).

\medskip
\noindent
\textbf{Step 1: exact score formulas.}

For an original token \(h_i\), the score at step \(s\) is
\begin{equation}\label{eq:orig-score-polished}
\widehat s^{(s)}_{i,n}
=
\alpha_n\,\mathbf x_i^\top \mathbf x_q
+
\beta_n\,\mathbf p_i^\top (\widehat{\mathbf q}^{(s-1)}_n)_p.
\end{equation}
For a previously generated token \(\widehat{\mathbf q}^{(m)}_n\) with \(0\le m\le s-1\), the score is
\begin{equation}\label{eq:gen-score-polished}
\widehat s^{(s)}_{m,n}
=
\alpha_n
+
\beta_n\,\bigl\langle (\widehat{\mathbf q}^{(m)}_n)_p,(\widehat{\mathbf q}^{(s-1)}_n)_p\bigr\rangle.
\end{equation}
These identities follow directly from the block form
\[
\mathbf W_n=\diag(\alpha_n I_d,0,\beta_n I_{N+1}),
\]
together with Lemma~\ref{lem:expr_x_block_exact_polished}, which implies that every generated token has \(x\)-block exactly equal to \(\mathbf x_q\).

\medskip
\noindent
\textbf{Step 2: normalized score limits.}

Define
\[
\widehat \sigma^{(s)}_{i,n}\coloneqq \frac{\widehat s^{(s)}_{i,n}}{\alpha_n}
\qquad (i\in[N]),
\qquad
\widehat \sigma^{(s)}_{m,n}\coloneqq \frac{\widehat s^{(s)}_{m,n}}{\alpha_n}
\qquad (0\le m\le s-1).
\]
Using \eqref{eq:orig-score-polished},
\[
\widehat \sigma^{(s)}_{i,n}
=
\mathbf x_i^\top \mathbf x_q
-
\left(-\frac{\beta_n}{\alpha_n}\right)\mathbf p_i^\top (\widehat{\mathbf q}^{(s-1)}_n)_p.
\]
By the induction hypothesis,
\[
(\widehat{\mathbf q}^{(s-1)}_n)_p
\to
\mathbf p_{N+1}+\sum_{r=1}^{s-1}\mathbf p_{\pi(r)}.
\]
Since the signed one-hot vectors are orthonormal,
\[
\mathbf p_i^\top
\left(
\mathbf p_{N+1}+\sum_{r=1}^{s-1}\mathbf p_{\pi(r)}
\right)
=
\begin{cases}
1,& i\in \{\pi(1),\dots,\pi(s-1)\},\\
0,& i\notin \{\pi(1),\dots,\pi(s-1)\}.
\end{cases}
\]
Together with \(-\beta_n/\alpha_n\to 2\), this implies
\begin{equation}\label{eq:orig-limit-polished}
\widehat \sigma^{(s)}_{i,n}\to
\begin{cases}
\mathbf x_i^\top \mathbf x_q-2,& i\in \{\pi(1),\dots,\pi(s-1)\},\\[1mm]
\mathbf x_i^\top \mathbf x_q,& i\notin \{\pi(1),\dots,\pi(s-1)\}.
\end{cases}
\end{equation}
In particular, since \(\pi(s)\notin \{\pi(1),\dots,\pi(s-1)\}\),
\begin{equation}\label{eq:desired-limit-polished}
\widehat \sigma^{(s)}_{\pi(s),n}\to \mathbf x_{\pi(s)}^\top \mathbf x_q.
\end{equation}

For \(0\le m\le s-1\), using \eqref{eq:gen-score-polished},
\[
\widehat \sigma^{(s)}_{m,n}
=
1-\left(-\frac{\beta_n}{\alpha_n}\right)
\bigl\langle (\widehat{\mathbf q}^{(m)}_n)_p,(\widehat{\mathbf q}^{(s-1)}_n)_p\bigr\rangle.
\]
Again by the induction hypothesis,
\[
(\widehat{\mathbf q}^{(m)}_n)_p
\to
\mathbf p_{N+1}+\sum_{r=1}^{m}\mathbf p_{\pi(r)},
\qquad
(\widehat{\mathbf q}^{(s-1)}_n)_p
\to
\mathbf p_{N+1}+\sum_{r=1}^{s-1}\mathbf p_{\pi(r)}.
\]
Since
\[
\left\langle
\mathbf p_{N+1}+\sum_{r=1}^{m}\mathbf p_{\pi(r)},
\,
\mathbf p_{N+1}+\sum_{r=1}^{s-1}\mathbf p_{\pi(r)}
\right\rangle
=
m+1,
\]
we obtain
\begin{equation}\label{eq:gen-limit-polished}
\widehat \sigma^{(s)}_{m,n}\to 1-2(m+1)=-(2m+1).
\end{equation}

\medskip
\noindent
\textbf{Step 3: deterministic margin separation.}

We compare the limiting normalized score \(\mathbf x_{\pi(s)}^\top \mathbf x_q\) of the desired token with every competitor.

If \(j\in [N]\setminus\{\pi(1),\dots,\pi(s)\}\), then \(j\neq \pi(s)\), and by definition of \(\Delta_s\),
\[
\mathbf x_{\pi(s)}^\top \mathbf x_q-\mathbf x_j^\top \mathbf x_q\ge \Delta_s>0.
\]

If \(j\in \{\pi(1),\dots,\pi(s-1)\}\), then by \eqref{eq:orig-limit-polished},
\[
\mathbf x_{\pi(s)}^\top \mathbf x_q-(\mathbf x_j^\top \mathbf x_q-2)
=
2+\mathbf x_{\pi(s)}^\top \mathbf x_q-\mathbf x_j^\top \mathbf x_q
\ge
1+\mathbf x_{\pi(s)}^\top \mathbf x_q
>0.
\]

If \(0\le m\le s-1\), then by \eqref{eq:gen-limit-polished},
\[
\mathbf x_{\pi(s)}^\top \mathbf x_q-(-(2m+1))
=
\mathbf x_{\pi(s)}^\top \mathbf x_q+2m+1
\ge
1+\mathbf x_{\pi(s)}^\top \mathbf x_q
>0.
\]

Thus every competitor lies at least
\[
\min\Bigl\{\Delta_s,\ 1+\mathbf x_{\pi(s)}^\top \mathbf x_q\Bigr\}
\]
below the desired token at the level of limiting normalized scores. Since there are only finitely many competitors, \eqref{eq:orig-limit-polished} and \eqref{eq:gen-limit-polished} imply that there exists \(N_s\) such that for all \(n\ge N_s\),
\begin{equation}\label{eq:uniform-gap-polished}
\widehat \sigma^{(s)}_{\pi(s),n}-\widehat \sigma^{(s)}_{j,n}
\ge
\frac12\min\Bigl\{\Delta_s,\ 1+\mathbf x_{\pi(s)}^\top \mathbf x_q\Bigr\}
\qquad\text{for every competitor }j\neq \pi(s).
\end{equation}
Multiplying by \(\alpha_n\), we obtain
\begin{equation}\label{eq:raw-gap-polished}
\widehat s^{(s)}_{\pi(s),n}-\widehat s^{(s)}_{j,n}
\ge
\frac{\alpha_n}{2}\min\Bigl\{\Delta_s,\ 1+\mathbf x_{\pi(s)}^\top \mathbf x_q\Bigr\}
\qquad\text{for every competitor }j\neq \pi(s).
\end{equation}

\medskip
\noindent
\textbf{Step 4: softmax concentration.}

Let \(a^{(s)}_{j,n}\) denote the step-\(s\) attention weights on the context tokens of \(\widehat{\mathbf Z}^{(s-1)}_n\). By the softmax formula and \eqref{eq:raw-gap-polished},
\[
1-a^{(s)}_{\pi(s),n}
=
\frac{\sum_{j\neq\pi(s)} e^{\widehat s^{(s)}_{j,n}-\widehat s^{(s)}_{\pi(s),n}}}
{1+\sum_{j\neq\pi(s)} e^{\widehat s^{(s)}_{j,n}-\widehat s^{(s)}_{\pi(s),n}}}
\le
(N+s)\exp\!\left(
-\frac{\alpha_n}{2}\min\Bigl\{\Delta_s,\ 1+\mathbf x_{\pi(s)}^\top \mathbf x_q\Bigr\}
\right).
\]
In particular,
\begin{equation}\label{eq:competitor-weight-polished}
a^{(s)}_{j,n}
\le
\exp\!\left(
-\frac{\alpha_n}{2}\min\Bigl\{\Delta_s,\ 1+\mathbf x_{\pi(s)}^\top \mathbf x_q\Bigr\}
\right)
\qquad(j\neq\pi(s)).
\end{equation}
Hence
\begin{equation}\label{eq:weight-limit-polished}
a^{(s)}_{\pi(s),n}\to1,
\qquad
a^{(s)}_{j,n}\to0\quad(j\neq\pi(s)).
\end{equation}

\medskip
\noindent
\textbf{Step 5: convergence of the output token.}

The output token satisfies
\[
\widehat{\mathbf q}^{(s)}_n
=
\widehat{\mathbf q}^{(s-1)}_n
+
(\mathbf V\widehat{\mathbf Z}^{(s-1)}_n)a^{(s)}_n.
\]
The \(x\)-block is exact by Lemma~\ref{lem:expr_x_block_exact_polished}. We therefore only need to treat the \(y\)- and \(p\)-blocks.

For the \(y\)-block,
\[
(\widehat{\mathbf q}^{(s)}_n)_y
=
(\widehat{\mathbf q}^{(s-1)}_n)_y
+
a^{(s)}_{\pi(s),n}y_{\pi(s)}
+
\sum_{j\neq\pi(s)} a^{(s)}_{j,n}y^{\mathrm{ctx}}_{j,n}.
\]
By the induction hypothesis,
\[
(\widehat{\mathbf q}^{(s-1)}_n)_y\to \sum_{i=1}^{s-1} y_{\pi(i)},
\]
and by \eqref{eq:weight-limit-polished},
\[
a^{(s)}_{\pi(s),n}y_{\pi(s)}\to y_{\pi(s)}.
\]
By Lemma~\ref{lem:expr_pathwise_bounds_polished}, every competitor context token has \(y\)-block magnitude at most \(2^{s-1}\). Therefore, using \eqref{eq:competitor-weight-polished},
\[
\left|\sum_{j\neq\pi(s)} a^{(s)}_{j,n}y^{\mathrm{ctx}}_{j,n}\right|
\le
(N+s)2^{s-1}
\exp\!\left(
-\frac{\alpha_n}{2}\min\Bigl\{\Delta_s,\ 1+\mathbf x_{\pi(s)}^\top \mathbf x_q\Bigr\}
\right)\to 0.
\]
Thus
\[
(\widehat{\mathbf q}^{(s)}_n)_y\to \sum_{i=1}^{s-1} y_{\pi(i)}+y_{\pi(s)}=\sum_{i=1}^{s} y_{\pi(i)}.
\]

For the \(p\)-block,
\[
(\widehat{\mathbf q}^{(s)}_n)_p
=
(\widehat{\mathbf q}^{(s-1)}_n)_p
+
a^{(s)}_{\pi(s),n}\mathbf p_{\pi(s)}
+
\sum_{j\neq\pi(s)} a^{(s)}_{j,n}p^{\mathrm{ctx}}_{j,n}.
\]
By the induction hypothesis,
\[
(\widehat{\mathbf q}^{(s-1)}_n)_p
\to
\mathbf p_{N+1}+\sum_{r=1}^{s-1}\mathbf p_{\pi(r)},
\]
and by \eqref{eq:weight-limit-polished},
\[
a^{(s)}_{\pi(s),n}\mathbf p_{\pi(s)}\to \mathbf p_{\pi(s)}.
\]
Again by Lemma~\ref{lem:expr_pathwise_bounds_polished}, every competitor context token has \(p\)-block norm at most \(2^{s-1}\). Hence
\[
\left\|\sum_{j\neq\pi(s)} a^{(s)}_{j,n}p^{\mathrm{ctx}}_{j,n}\right\|
\le
(N+s)2^{s-1}
\exp\!\left(
-\frac{\alpha_n}{2}\min\Bigl\{\Delta_s,\ 1+\mathbf x_{\pi(s)}^\top \mathbf x_q\Bigr\}
\right)\to 0.
\]
Therefore
\[
(\widehat{\mathbf q}^{(s)}_n)_p
\to
\mathbf p_{N+1}+\sum_{r=1}^{s}\mathbf p_{\pi(r)}.
\]

Combining the \(x\)-, \(y\)-, and \(p\)-block conclusions gives
\[
\widehat{\mathbf q}^{(s)}_n\to \mathbf q^{(s)}.
\]
This proves the lemma.
\end{proof}

\begin{lemma}
\label{lem:as_convergence}
Under the assumptions of Theorem~\ref{thm:expressive power},
\[
\widehat{\mathbf q}^{(K)}_n-\mathbf q^{(K)}
\to 0 \quad \text{a.s.}.
\]
Equivalently,
\[
\mathrm{TF}(\widehat{\mathbf Z}^{(K-1)}_n,\boldsymbol\theta_n)_{:,-1}
-
\mathbf q^{(K)}
\to 0 \quad \text{a.s.}.
\]
\end{lemma}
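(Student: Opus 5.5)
The plan is a finite induction on the depth $s\in\{0,1,\dots,K\}$, run pointwise on the event $\Omega_*$ from Lemma~\ref{lem:expr_one_step_consistency_polished}. The only probabilistic input needed is that $\Omega_*$ has probability one under the assumptions of Theorem~\ref{thm:expressive power}.

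\textbf{Full-measure event.} By hypothesis, almost surely $\|\xb_q-\xb_{\pi(1)}\|<\cdots<\|\xb_q-\xb_{\pi(N)}\|<2$ with all points on $\mathbb S^{d-1}$. Since $\|\xb_q-\xb_j\|^2=2-2\xb_j^\top\xb_q$, the strict distance ordering is equivalent to $\xb_{\pi(1)}^\top\xb_q>\cdots>\xb_{\pi(N)}^\top\xb_q$. Likewise, $\|\xb_q-\xb_{\pi(N)}\|<2$ gives $\xb_{\pi(N)}^\top\xb_q>-1$, and hence $\xb_{\pi(r)}^\top\xb_q>-1$ for every $r\le K$. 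So $\PP(\Omega_*)=1$. The randomness $\omega$ of the positional encodings plays no role: Lemma~\ref{lem:expr_one_step_consistency_polished} holds for every realization, since it only uses orthonormality of the signed one-hot vectors.

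\textbf{Induction on $s$.} Fix an outcome in $\Omega_*$ and the sequence $(\alpha_n,\beta_n)$. The base case $s=0$ is trivial, because $\widehat{\qb}^{(0)}_n=\qb^{(0)}$ for all $n$. For the inductive step, assume $\widehat{\qb}^{(m)}_n\to\qb^{(m)}$ for all $0\le m\le s-1$, where $s\le K$. Lemma~\ref{lem:expr_one_step_consistency_polished} then gives $\widehat{\qb}^{(s)}_n\to\qb^{(s)}$ on $\Omega_*$. After $K$ steps we obtain $\widehat{\qb}^{(K)}_n\to\qb^{(K)}$ on $\Omega_*$, which is almost sure convergence. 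The equivalent formulation follows from the recursion \eqref{eq:append-recursion-final}, which gives $\widehat{\qb}^{(K)}_n=\mathrm{TF}(\widehat{\Zb}^{(K-1)}_n,\btheta_n)_{:,-1}$.

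\textbf{Main obstacle.} The only real subtlety is the order of quantifiers. The null set excluded must not depend on the sequence $(\alpha_n,\beta_n)$, and the event must cover all depths $s\le K$ simultaneously. Both hold here: $\Omega_*$ is defined purely in terms of the data and is the same for every $s$, and the thresholds $N_s$ in Lemma~\ref{lem:expr_one_step_consistency_polished} may depend on the outcome because the convergence is pointwise.

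By the sequential characterization of limits, the result for every such sequence yields the stated limit as $\alpha\to\infty$, $\beta\to-\infty$, $\beta/\alpha\to-2$. Taking the $(d+1)$-th coordinate of $\qb^{(K)}$ gives Lemma~\ref{lem:expressive power mechanism}.
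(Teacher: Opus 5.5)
Your proposal is correct and follows the paper's proof essentially step for step. You first establish $\mathbb P(\Omega_*)=1$ via $\|\xb_q-\xb_j\|^2=2-2\xb_j^\top\xb_q$, then apply Lemma~\ref{lem:expr_one_step_consistency_polished} successively for $s=1,\dots,K$ starting from the exact base case $\widehat{\mathbf q}^{(0)}_n=\mathbf q^{(0)}$, and read off the equivalent form from the rollout recursion; your remark on quantifier order (the exceptional null set does not depend on the sequence $(\alpha_n,\beta_n)$ or on $s$) is a correct clarification the paper leaves implicit.
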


\begin{proof}
Under the assumptions of Theorem~\ref{thm:expressive power}, we have
\[
\|\mathbf x_q-\mathbf x_{\pi(1)}\|<\cdots<\|\mathbf x_q-\mathbf x_{\pi(N)}\|<2
\qquad\text{a.s.},
\]
and, since \(\mathbf x_i,\mathbf x_q\in \SSS^{d-1}\), the identity
\[
\|\mathbf x_q-\mathbf x_i\|^2=2-2\mathbf x_i^\top \mathbf x_q
\]
implies
\[
\mathbf x_{\pi(1)}^\top \mathbf x_q>\cdots>\mathbf x_{\pi(N)}^\top \mathbf x_q>-1
\qquad\text{a.s.}
\]
Hence \(\mathbb P(\Omega_*)=1\). 
By definition,
\[
\widehat{\mathbf q}^{(0)}_n=\mathbf q^{(0)}
\qquad\text{for all }n.
\]
Applying Lemma~\ref{lem:expr_one_step_consistency_polished} successively for \(s=1,2,\dots,K\), we obtain
\[
\widehat{\mathbf q}^{(r)}_n\to \mathbf q^{(r)}
\qquad\text{for every }0\le r\le K
\]
on \(\Omega_*\). Since \(\mathbb P(\Omega_*)=1\), this holds almost surely. In particular,
\[
\widehat{\mathbf q}^{(K)}_n\to \mathbf q^{(K)}
\qquad\text{almost surely}.
\]
The equivalent statement follows from the definition
\[
\widehat{\mathbf q}^{(K)}_n
=
\mathrm{TF}(\widehat{\mathbf Z}^{(K-1)}_n,\boldsymbol\theta_n)_{:,-1}.
\]



\end{proof}

By Lemma~\ref{lem:as_convergence}, for every sequence $(\alpha_n,\beta_n)_{n\ge1}$ such that
\[
\alpha_n\to+\infty,\qquad \beta_n\to-\infty,\qquad -\beta_n/\alpha_n\to2,
\]
we have
\[
\mathrm{TF}(\widehat{\mathbf Z}^{(K-1)}_n,\boldsymbol\theta_n)_{:,-1}
-
\mathbf q^{(K)}
\to 0
\qquad\text{a.s.}
\]
Taking the \((d+1)\)-st coordinate yields
\[
\mathrm{TF}(\widehat{\mathbf Z}^{(K-1)}_n,\boldsymbol\theta_n)_{d+1,-1}
-
(\mathbf q^{(K)})_{d+1}
\to 0
\qquad\text{a.s.}
\]
By the definition of \(\mathbf q^{(K)}\),
\[
(\mathbf q^{(K)})_{d+1}=\sum_{k=1}^K y_{\pi(k)}.
\]
Hence
\[
\mathrm{TF}(\widehat{\mathbf Z}^{(K-1)}_n,\boldsymbol\theta_n)_{d+1,-1}
-
\sum_{k=1}^K y_{\pi(k)}
\to 0
\qquad\text{a.s.}
\]

Since
\[
\hat y_K=\sign\!\left(\sum_{k=1}^K y_{\pi(k)}\right),
\]
it follows that
\[
\mathbf 1\left\{
\mathrm{TF}(\widehat{\mathbf Z}^{(K-1)}_n,\boldsymbol\theta_n)_{d+1,-1}\cdot\hat y_K<0
\right\}
\to 0
\qquad\text{a.s.}
\]
Indeed, if \(\sum_{k=1}^K y_{\pi(k)}=0\), then \(\hat y_K=0\), and the indicator above is identically zero for every \(n\). If \(\sum_{k=1}^K y_{\pi(k)}>0\), then
\[
\mathrm{TF}(\widehat{\mathbf Z}^{(K-1)}_n,\boldsymbol\theta_n)_{d+1,-1}
\to
\sum_{k=1}^K y_{\pi(k)}
>0
\qquad\text{a.s.},
\]
so
\[
\mathrm{TF}(\widehat{\mathbf Z}^{(K-1)}_n,\boldsymbol\theta_n)_{d+1,-1}>0
\]
for all sufficiently large \(n\) almost surely, and therefore
\[
\mathrm{TF}(\widehat{\mathbf Z}^{(K-1)}_n,\boldsymbol\theta_n)_{d+1,-1}\cdot\hat y_K<0
\]
cannot occur for all sufficiently large \(n\) almost surely. The case
\(\sum_{k=1}^K y_{\pi(k)}<0\) is analogous.

Since the indicator is bounded by \(1\), the dominated convergence theorem yields
\[
\mathbb P\!\left(
\mathrm{TF}(\widehat{\mathbf Z}^{(K-1)}_n,\boldsymbol\theta_n)_{d+1,-1}\cdot\hat y_K<0
\right)\to 0.
\]
Because the sequence \((\alpha_n,\beta_n)\) was arbitrary, the sequential characterization of limits implies
\[
\mathbb P\!\left(
\mathrm{TF}(\widehat{\mathbf Z}^{(K-1)},\boldsymbol\theta)_{d+1,-1}\cdot\hat y_K<0
\right)\to 0
\]
as \(\alpha\to+\infty\), \(\beta\to-\infty\), and \(-\beta/\alpha\to2\). This is exactly the statement of
Theorem~\ref{thm:expressive power}.
\end{proof}

\section{Proof of Proposition \ref{prop:positive_margin}}\label{appendix: proof of positive_margin}
\begin{proof}
We use the same rollout notation as in the proof of
Theorem~\ref{thm:expressive power}: \(\widehat{\qb}^{(r)}\) denotes the
actual token generated at step \(r\), \(\widehat{\Zb}^{(r)}\) denotes the
corresponding appended sequence, and \(\qb^{(r)}\) denotes the ideal token,
\[
\qb^{(r)}
=
\begin{bmatrix}
\xb_q\\
\sum_{m=1}^r y_{\pi(m)}\\
\pb_{N+1}+\sum_{m=1}^r\pb_{\pi(m)}
\end{bmatrix},
\qquad r=0,\ldots,K .
\]
We write \((\cdot)_p\) for the positional block. 
Fix an arbitrary instance in \(\cX_\gamma\) with
\(\sum_{k=1}^K y_{\pi(k)}\neq 0\). We first
convert the distance margin into an inner-product margin. Since
\(\xb_i,\xb_q\in \mathbb S^{d-1}\),
\[
\xb_i^\top\xb_q
=
1-\frac{\|\xb_q-\xb_i\|^2}{2}.
\]
Let
\[
d_k:=\|\xb_q-\xb_{\pi(k)}\|,\qquad k=1,\ldots,N .
\]
For \(k=1,\ldots,N-1\), the assumption \(d_{k+1}-d_k\ge \gamma\) implies
\[
\begin{aligned}
\xb_{\pi(k)}^\top\xb_q-\xb_{\pi(k+1)}^\top\xb_q
&=
\frac{d_{k+1}^2-d_k^2}{2}  \\
&=
\frac{(d_{k+1}-d_k)(d_{k+1}+d_k)}{2}
\ge
\frac{\gamma^2}{2}.
\end{aligned}
\]
Moreover, \(2-d_N\ge \gamma\) implies \(d_N\le 2-\gamma\), and hence
\[
\xb_{\pi(N)}^\top\xb_q
=
1-\frac{d_N^2}{2}
\ge
1-\frac{(2-\gamma)^2}{2}
=
-1+2\gamma-\frac{\gamma^2}{2}.
\]
Define
\[
\eta_\gamma
:=
\min\left\{
\frac{\gamma^2}{2},
\,2\gamma-\frac{\gamma^2}{2}
\right\}.
\]
If \(\cX_\gamma\) is nonempty,
then \(0<\gamma\le 2\), so \(\eta_\gamma>0\). Therefore,
\[
\xb_{\pi(k)}^\top\xb_q-\xb_{\pi(k+1)}^\top\xb_q
\ge \eta_\gamma,
\qquad k=1,\ldots,N-1,
\]
and
\[
\xb_{\pi(N)}^\top\xb_q
\ge -1+\eta_\gamma .
\]

The arguments proving Lemma~\eqref{lem:expr_x_block_exact_polished} and Lemma~\eqref{lem:expr_pathwise_bounds_polished} are purely algebraic and apply
verbatim to every finite choice of \((\alpha,\beta)\). By Lemma~\eqref{lem:expr_x_block_exact_polished}, the
feature block of every actual generated token equals \(\xb_q\). By
Lemma~\eqref{lem:expr_pathwise_bounds_polished}, for all generated tokens up to depth \(K\), the label block and
positional block are bounded by constants depending only on \(K\). Hence there
exists \(B_{N,K}>0\), depending only on \((N,K)\), such that every token
appearing in \(\widehat{\Zb}^{(r)}\), \(0\le r\le K-1\), and every original
token has norm at most \(B_{N,K}\).

Set
\[
C'_{N,K}:=2B_{N,K}(N+K).
\]
We will prove by induction that
\[
\|\widehat{\qb}^{(r)}-\qb^{(r)}\|
\le
C_{N,K}e^{-\alpha\eta_\gamma/8},
\qquad r=0,\ldots,K,
\]
for a constant \(C_{N,K}>0\) depending only on \((N,K)\), to be chosen below.

Let
\[
E_\alpha:=e^{-\alpha\eta_\gamma/8}.
\]
Assume that, for some \(1\le r\le K\),
\[
\|\widehat{\qb}^{(\ell)}-\qb^{(\ell)}\|
\le
C_{N,K}E_\alpha,
\qquad \ell=0,\ldots,r-1 .
\]

Let the attention scores at step \(r\) be computed with respect to the last
token \(\widehat{\qb}^{(r-1)}\). For an original token \(\hb_i\),
\[
\widehat s_i^{(r)}
=
\alpha \xb_i^\top\xb_q
+
\beta \pb_i^\top(\widehat{\qb}^{(r-1)})_p,
\]
and for a previously generated token \(\widehat{\qb}^{(m)}\),
\(0\le m\le r-1\),
\[
\widehat s_m^{(r)}
=
\alpha
+
\beta
\left\langle
(\widehat{\qb}^{(m)})_p,
(\widehat{\qb}^{(r-1)})_p
\right\rangle .
\]

We compare these normalized scores with the limiting normalized scores obtained
from the ideal rollout and the limiting ratio \(\beta/\alpha=-2\). For original
tokens, the corresponding limiting normalized score is
\[
\begin{cases}
\xb_i^\top\xb_q-2,
& i\in\{\pi(1),\ldots,\pi(r-1)\},\\
\xb_i^\top\xb_q,
& i\notin\{\pi(1),\ldots,\pi(r-1)\}.
\end{cases}
\]
For a previously generated token \(\qb^{(m)}\), \(0\le m\le r-1\), the
corresponding limiting normalized score is
\[
1-2(m+1)=-(2m+1),
\]
because
\[
\left\langle
(\qb^{(m)})_p,
(\qb^{(r-1)})_p
\right\rangle
=
m+1 .
\]

Assume \(|\beta/\alpha+2|\le\delta_0\le1\). Then
\[
|\beta/\alpha|\le 3 .
\]
For an original token \(\hb_i\), using
\[
\pb_i^\top(\qb^{(r-1)})_p
=
\begin{cases}
1,& i\in\{\pi(1),\ldots,\pi(r-1)\},\\
0,& i\notin\{\pi(1),\ldots,\pi(r-1)\},
\end{cases}
\]
we have
\[
\begin{aligned}
&\left|
\frac{\widehat s_i^{(r)}}{\alpha}
-
\left(
\xb_i^\top\xb_q
-
2\pb_i^\top(\qb^{(r-1)})_p
\right)
\right|  =
\left|
\frac{\beta}{\alpha}\pb_i^\top(\widehat{\qb}^{(r-1)})_p
+
2\pb_i^\top(\qb^{(r-1)})_p
\right| \\
& \le
\left|\frac{\beta}{\alpha}\right|
\left|
\pb_i^\top
\big(
(\widehat{\qb}^{(r-1)})_p-(\qb^{(r-1)})_p
\big)
\right|
+
\left|\frac{\beta}{\alpha}+2\right|
\left|
\pb_i^\top(\qb^{(r-1)})_p
\right| \le
3C_{N,K}E_\alpha+\delta_0 .
\end{aligned}
\]

For a previously generated token \(\widehat{\qb}^{(m)}\), \(0\le m\le r-1\),
we similarly compare \(\widehat s_m^{(r)}/\alpha\) with \(-(2m+1)\). Since
\[
\left\langle
(\qb^{(m)})_p,
(\qb^{(r-1)})_p
\right\rangle
=
m+1,
\]
we have
\[
\begin{aligned}
&\left|
\frac{\widehat s_m^{(r)}}{\alpha}
-
\bigl(-(2m+1)\bigr)
\right|  =
\left|
\frac{\beta}{\alpha}
\left\langle
(\widehat{\qb}^{(m)})_p,
(\widehat{\qb}^{(r-1)})_p
\right\rangle
+
2
\left\langle
(\qb^{(m)})_p,
(\qb^{(r-1)})_p
\right\rangle
\right| \\
& \le
3
\left|
\left\langle
(\widehat{\qb}^{(m)})_p,
(\widehat{\qb}^{(r-1)})_p
\right\rangle
-
\left\langle
(\qb^{(m)})_p,
(\qb^{(r-1)})_p
\right\rangle
\right|
+
\left|\frac{\beta}{\alpha}+2\right|
\left|
\left\langle
(\qb^{(m)})_p,
(\qb^{(r-1)})_p
\right\rangle
\right| .
\end{aligned}
\]
The first term is controlled by the induction hypothesis and the boundedness
from Lemma~\ref{lem:expr_pathwise_bounds_polished}. Indeed,
\[
\begin{aligned}
&\left|
\left\langle
(\widehat{\qb}^{(m)})_p,
(\widehat{\qb}^{(r-1)})_p
\right\rangle
-
\left\langle
(\qb^{(m)})_p,
(\qb^{(r-1)})_p
\right\rangle
\right| \\
& \le
\|(\widehat{\qb}^{(m)})_p-(\qb^{(m)})_p\|
\|(\widehat{\qb}^{(r-1)})_p\|
+
\|(\qb^{(m)})_p\|
\|(\widehat{\qb}^{(r-1)})_p-(\qb^{(r-1)})_p\| \le
A_K C_{N,K}E_\alpha,
\end{aligned}
\]
for some constant \(A_K>0\) depending only on \(K\). Also,
\[
\left|
\left\langle
(\qb^{(m)})_p,
(\qb^{(r-1)})_p
\right\rangle
\right|
=
m+1
\le K .
\]
Consequently, there exist constants \(L^0_K,D_K>0\), depending only on \(K\),
such that every normalized score at step \(r\) differs from its corresponding
limiting normalized score by at most
\[
L^0_K C_{N,K}E_\alpha+D_K\delta_0 .
\]

We now choose the constants. First choose
\[
C_{N,K}\ge K C'_{N,K}.
\]
Next define
\[
\delta_0
:=
\min\left\{
1,\frac{\eta_\gamma}{16D_K}
\right\}.
\]
Finally choose \(\alpha_0>0\) sufficiently large so that, for all
\(\alpha\ge\alpha_0\),
\[
L^0_K C_{N,K}e^{-\alpha\eta_\gamma/8}
\le
\frac{\eta_\gamma}{16}.
\]
With these choices, whenever \(\alpha\ge\alpha_0\) and
\(|\beta/\alpha+2|\le\delta_0\), every actual normalized score differs from
its limiting normalized score by at most
\[
\frac{\eta_\gamma}{16}+\frac{\eta_\gamma}{16}
=
\frac{\eta_\gamma}{8}.
\]

We now compare the limiting scores. The correct token at step \(r\) is
\(\hb_{\pi(r)}\), whose limiting normalized score is
\[
\xb_{\pi(r)}^\top\xb_q .
\]
For any later unselected original token \(\hb_{\pi(j)}\), \(j>r\),
\[
\xb_{\pi(r)}^\top\xb_q-\xb_{\pi(j)}^\top\xb_q
\ge
\eta_\gamma .
\]
For any previously selected original token \(\hb_{\pi(j)}\), \(j<r\), the
limiting normalized score is
\[
\xb_{\pi(j)}^\top\xb_q-2\le -1,
\]
whereas
\[
\xb_{\pi(r)}^\top\xb_q
\ge
\xb_{\pi(N)}^\top\xb_q
\ge
-1+\eta_\gamma .
\]
Thus the correct token beats every previously selected original token by at
least \(\eta_\gamma\). Finally, every previously generated token has limiting
normalized score \(-(2m+1)\le -1\), and hence the correct token also beats
every previously generated token by at least \(\eta_\gamma\). Therefore, before
perturbation, the correct token has normalized score gap at least
\(\eta_\gamma\) over every competitor.

Since each normalized score is perturbed by at most \(\eta_\gamma/8\), the
actual normalized score of \(\hb_{\pi(r)}\) exceeds the actual normalized score
of every other column by at least
\[
\eta_\gamma-2\cdot\frac{\eta_\gamma}{8}
=
\frac{3\eta_\gamma}{4}.
\]
Equivalently, the corresponding unnormalized score gap is at least
\(3\alpha\eta_\gamma/4\). Therefore, if \(\ab^{(r)}\) denotes the softmax
attention vector at step \(r\), then
\[
\sum_{\ell\in [N+r]\setminus\{\pi(r)\}}a_\ell^{(r)}
\le
(N+K)e^{-3\alpha\eta_\gamma/4}.
\]

Using the token norm bound and \(\|\Vb\|\le1\), we get
\[
\begin{aligned}
\left\|
(\Vb\widehat{\Zb}^{(r-1)})\ab^{(r)}
-
\Vb\hb_{\pi(r)}
\right\|
&=
\left\|
\sum_{\ell\in [N+r]\setminus\{\pi(r)\}}
a_\ell^{(r)}
\Vb\left(
\widehat{\Zb}^{(r-1)}_{:,\ell}-\hb_{\pi(r)}
\right)
\right\| \\
&\le
2B_{N,K}
\sum_{\ell\in [N+r]\setminus\{\pi(r)\}}a_\ell^{(r)} \\
&\le
C'_{N,K}e^{-3\alpha\eta_\gamma/4}.
\end{aligned}
\]
By the residual update and the ideal identity
\[
\qb^{(r)}=\qb^{(r-1)}+\Vb\hb_{\pi(r)},
\]
we obtain
\[
\|\widehat{\qb}^{(r)}-\qb^{(r)}\|
\le
\|\widehat{\qb}^{(r-1)}-\qb^{(r-1)}\|
+
C'_{N,K}e^{-3\alpha\eta_\gamma/4}.
\]
Since \(C_{N,K}\ge K C'_{N,K}\), iterating from \(1\) to \(r\) gives
\[
\|\widehat{\qb}^{(r)}-\qb^{(r)}\|
\le
rC'_{N,K}e^{-3\alpha\eta_\gamma/4}
\le
C_{N,K}e^{-\alpha\eta_\gamma/8},
\qquad r=0,\ldots,K,
\]
where the case \(r=0\) is exact. This closes the induction.

Taking \(r=K\) and the \((d+1)\)-st coordinate gives
\[
\left|
\mathrm{TF}(\widehat{\Zb}^{(K-1)},\btheta)_{d+1,-1}
-
\sum_{k=1}^K y_{\pi(k)}
\right|
\le
C_{N,K}e^{-\alpha\eta_\gamma/8}.
\]
Increase \(\alpha_0\) once more, if necessary, so that
\[
C_{N,K}e^{-\alpha_0\eta_\gamma/8}<1.
\]
Then, for all \(\alpha\ge\alpha_0\),
\[
\left|
\mathrm{TF}(\widehat{\Zb}^{(K-1)},\btheta)_{d+1,-1}
-
\sum_{k=1}^K y_{\pi(k)}
\right|
<1.
\]
Since \(y_i\in\{\pm1\}\), the sum
\[
S_K:=\sum_{k=1}^K y_{\pi(k)}
\]
is an integer. Under the assumption \(S_K\neq0\), we have \(|S_K|\ge1\). The
strict error bound above therefore implies that
\[
\operatorname{sign}\!\left(
\mathrm{TF}(\widehat{\Zb}^{(K-1)},\btheta)_{d+1,-1}
\right)
=
\operatorname{sign}(S_K)
=
\operatorname{sign}\Big(\sum_{k=1}^K y_{\pi(k)}\Big).
\]
The constants \(\delta_0\) and \(\alpha_0\) depend only on
\((\gamma,N,K)\), and the argument was deterministic for an arbitrary instance
satisfying the margin condition. Hence the conclusion holds uniformly over all
instances satisfying the stated margin condition.
\end{proof}

\section{Proof of Theorem \ref{thm:training convergence}}\label{appendix: proof of gradient descent}

In this section, we elaborate on the proof of Theorem~\ref{thm:training convergence}. Our proof can be broken down into the following steps:

\begin{enumerate}
\item 
We prove in Sections \ref{subsec:gradient_derivation_reduced_k1} and \ref{subsec:training_dynamics_reduced_k1} that gradient descent preserves a two-dimensional invariant manifold utilizing the symmetry of the data distribution and the gradient structure. More precisely, from zero initialization, for every iteration \(t\),
\[
\Wb^{(t)}=\diag\bigl(\alpha^{(t)}\Ib_d,\;0,\;\beta^{(t)}\Ib_{N+1}\bigr),
\]
where $\alpha^{(t)}$ and $\beta^{(t)}$ are scalar parameters, so the matrix recursion reduces exactly to a scalar recursion for the pair \((\alpha^{(t)},\beta^{(t)})\).
\item 
We rewrite the population loss to a reduced objective \(\mathcal L(\alpha,\beta)\). By analyzing the update dynamics of $\alpha^{(t)}$ and $\beta^{(t)}$, we prove in Section \ref{sec:parameter divergence} that, under the regime $d\ge 2$, $N\ge 100\sqrt d$, $0<\eta\le {1}/{840(N+1)^2}$ the region \([0,\infty)\times(-\infty,0]\) is forward invariant, and the iterates satisfy
\[
\beta^{(t)}\to -\infty,\qquad \alpha^{(t)}\to +\infty.
\]
\item 
We further provide an analysis of the precise slope
$
-{\beta^{(t)}}/{\alpha^{(t)}}
$ in Sections \ref{sec:exact asymptotic representation}, \ref{sec:boundary-layer estimates for certain subsequence} and \ref{sec:convergence of the population loss}.
 We demonstrate that \[
\liminf_{t\to\infty}\left(-\frac{\beta^{(t)}}{\alpha^{(t)}}\right)\ge 2
\qquad\text{and}\qquad
\limsup_{t\to\infty}\left(-\frac{\beta^{(t)}}{\alpha^{(t)}}\right)\le 2,
\]
and therefore
$-{\beta^{(t)}}/{\alpha^{(t)}}\to 2$.
With the asymptotic slope identified, we then show that the self-token contribution vanishes and establish the convergence of the loss function.
\end{enumerate}
\subsection{Derivation of Gradients}
\label{subsec:gradient_derivation_reduced_k1}
In this subsection, we provide a derivation of the gradients of the population loss $\mathcal{L}(\Wb)$ with respect to the parameter matrix $\Wb$, based on the architecture defined in Section \ref{subsec:transformer_arch}.

 Under Assumption~\ref{ass:distribution}, the random variables \(\{\xb_i^\top \xb_q\}_{i=1}^N\) have a continuous joint law, hence ties occur with probability zero. Therefore \(\pi(1)\) is almost surely uniquely defined. For notational simplicity, we write
\[
\pi\coloneqq\pi(1),\quad \qb\coloneqq\qb^{(0)},\quad \Zb\coloneqq\Zb^{(0)}=[\Hb,\qb].
\]

Recall that for a single sample \((\mathcal D,\xb_q,\omega)\), the training loss is
\begin{equation}
\label{eq:reduced_k1_sample_loss}
\ell(\Wb)
=
\frac12\left\|
\mathrm{TF}(\Zb;\Wb)_{:,-1}-\qb^{(1)}
\right\|^2,
\end{equation}
where $\qb^{(1)}\coloneqq[\xb_q^\top, y_{\pi(1)},\pb_{N+1}^\top+\pb_{\pi(1)}^\top]^\top$, and the population objective is
\begin{equation}
\label{eq:reduced_k1_population_loss}
\mathcal L(\Wb)
=
\EE_{\mathcal D,\xb_q}\EE_{\omega}\big[\ell(\Wb)\big].
\end{equation}

\noindent\textbf{Intermediate variables.}
Define
\begin{equation}
\label{eq:reduced_k1_intermediate_variables}
\ub\coloneqq \Zb^\top \Wb \qb\in\RR^{N+1},
\qquad
\ab\coloneqq \mathrm{softmax}(\ub)\in\RR^{N+1},
\qquad
\hat{\qb}\coloneqq \mathrm{TF}(\Zb;\Wb)_{:,-1}
=
\qb+\Vb\Zb\,\ab,
\end{equation}
and the residual
\begin{equation}
\label{eq:reduced_k1_residual}
\rb\coloneqq \hat{\qb}-\qb^{(1)}.
\end{equation}
 Then \(\ell(\Wb)=\frac12\|\rb\|^2\).

\begin{lemma}
\label{lem:reduced_k1_gradient}
For the single-sample loss \(\ell(\Wb)\) defined in \eqref{eq:reduced_k1_sample_loss}, we have
\begin{equation}
\label{eq:reduced_k1_gradient_formula}
\nabla_{\Wb}\ell(\Wb)
=
\Zb
\Bigl(
(\diag(\ab)-\ab\ab^\top)\,\Zb^\top \Vb^\top \rb
\Bigr)
\qb^\top.
\end{equation}
Consequently,
\begin{equation}
\label{eq:reduced_k1_population_gradient_formula}
\nabla_{\Wb}\mathcal L(\Wb)
=
\EE_{\mathcal D,\xb_q}\EE_{\omega}
\left[
\Zb
\Bigl(
(\diag(\ab)-\ab\ab^\top)\,\Zb^\top \Vb^\top \rb
\Bigr)
\qb^\top
\right].
\end{equation}
\end{lemma}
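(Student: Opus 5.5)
The gradient formula comes from a direct chain-rule computation through the three intermediate variables in \eqref{eq:reduced_k1_intermediate_variables}. The key structural point is that $\Wb$ enters $\ell$ only through the logit vector $\ub=\Zb^\top\Wb\qb$, and this map is linear in $\Wb$. Here $\Zb$ and $\qb$ are the fixed initial embeddings $\Zb^{(0)}$ and $\qb^{(0)}$ and do not depend on $\Wb$. Likewise, the target $\qb^{(1)}$ and the value matrix $\Vb$ are independent of $\Wb$. I will therefore compute $\partial\ell/\partial\ub$ first and then pull it back to $\Wb$.

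\textbf{Step 1.} From $\ell=\tfrac12\|\rb\|^2$ and $\rb=\qb+\Vb\Zb\ab-\qb^{(1)}$, we get
\[
\frac{\partial\ell}{\partial\ab}=(\Vb\Zb)^\top\rb=\Zb^\top\Vb^\top\rb .
\]

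\textbf{Step 2.} The softmax Jacobian is the symmetric matrix $\partial\ab/\partial\ub=\diag(\ab)-\ab\ab^\top$. Hence
\[
\gb\coloneqq\frac{\partial\ell}{\partial\ub}=(\diag(\ab)-\ab\ab^\top)\Zb^\top\Vb^\top\rb .
\]

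\textbf{Step 3.} Write $u_j=\Zb_{:,j}^\top\Wb\qb$, so that $\partial u_j/\partial\Wb=\Zb_{:,j}\qb^\top$. Then
\[
\nabla_\Wb\ell=\sum_j g_j\,\Zb_{:,j}\qb^\top=\Zb\gb\,\qb^\top,
\]
which is exactly \eqref{eq:reduced_k1_gradient_formula}. Equivalently, the first-order variation is $d\ell=\gb^\top\Zb^\top(d\Wb)\qb=\mathrm{tr}\bigl((\Zb\gb\qb^\top)^\top d\Wb\bigr)$, which identifies the gradient under the Frobenius inner product.

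\textbf{Population gradient.} The formula \eqref{eq:reduced_k1_population_gradient_formula} follows by exchanging gradient and expectation. The only step needing justification is this interchange, and I expect it to be the main (mild) obstacle. It holds by dominated convergence because, almost surely, all columns of $\Zb$ and $\qb$ have norm bounded by a constant ($\|\xb_i\|=1$, $|y_i|\le1$, $\|\pb_i\|=1$). Softmax weights lie in $[0,1]$, so $\|\rb\|$ is bounded uniformly in $\Wb$. Consequently $\nabla_\Wb\ell$ is bounded uniformly in $\Wb$ and in the sample, and $\ell$ is smooth in $\Wb$ for each sample. The mean value theorem then gives an integrable dominating function for the difference quotients.

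The tie event defining $\pi(1)$ has probability zero, so the target $\qb^{(1)}$ is almost surely well defined. This null set does not affect the expectation.
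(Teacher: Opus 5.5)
Your proof is correct and follows the paper's argument essentially step for step: the gradient with respect to $\ab$, then the symmetric softmax Jacobian $\diag(\ab)-\ab\ab^\top$, then the pullback through the linear map $\Wb\mapsto\Zb^\top\Wb\qb$ via the differential. Your dominated-convergence argument for exchanging $\nabla_{\Wb}$ with the expectation, based on the uniform bound on $\nabla_{\Wb}\ell$, makes rigorous a step that the paper only justifies by appeal to linearity of expectation.
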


\begin{proof}
We apply the chain rule in three steps.

\noindent\textbf{Step 1: gradient with respect to the attention weights.}
From \eqref{eq:reduced_k1_residual},
\[
\rb
=
\qb+\Vb\Zb\,\ab-\qb^{(1)}.
\]
Hence
\[
\ell(\Wb)=\frac12\|\rb\|^2,
\qquad
\nabla_{\ab}\ell
=
(\Vb\Zb)^\top \rb
=
\Zb^\top \Vb^\top \rb.
\]

\noindent\textbf{Step 2: gradient with respect to the logits.}
Let \(\Jb_{\mathrm{softmax}}(\ub)\) denote the Jacobian of \(\mathrm{softmax}\) at \(\ub\). The standard softmax identity gives
\[
\Jb_{\mathrm{softmax}}(\ub)=\diag(\ab)-\ab\ab^\top.
\]
Therefore,
\[
\nabla_{\ub}\ell
=
\Jb_{\mathrm{softmax}}(\ub)\,\nabla_{\ab}\ell
=
(\diag(\ab)-\ab\ab^\top)\,\Zb^\top \Vb^\top \rb.
\]

\noindent\textbf{Step 3: gradient with respect to \(\Wb\).}
Since
\[
\ub=\Zb^\top \Wb \qb,
\]
its differential is
\[
d\ub=\Zb^\top (d\Wb)\,\qb.
\]
Thus, for any perturbation \(d\Wb\),
\begin{align*}
d\ell
&=
\langle \nabla_{\ub}\ell,\ d\ub\rangle \\
&=
\langle \nabla_{\ub}\ell,\ \Zb^\top (d\Wb)\,\qb\rangle \\
&=
\left\langle
\Zb(\nabla_{\ub}\ell)\qb^\top,\ d\Wb
\right\rangle.
\end{align*}
Hence
\[
\nabla_{\Wb}\ell(\Wb)
=
\Zb(\nabla_{\ub}\ell)\qb^\top
=
\Zb
\Bigl(
(\diag(\ab)-\ab\ab^\top)\,\Zb^\top \Vb^\top \rb
\Bigr)
\qb^\top,
\]
which proves \eqref{eq:reduced_k1_gradient_formula}. Taking expectation and using linearity of expectation gives \eqref{eq:reduced_k1_population_gradient_formula}.
\end{proof}

\begin{lemma}
\label{lem:reduced_k1_zero_y_column_gradient}
Let \(y\) denote the label coordinate in the \((x,y,p)\)-block decomposition of \(\RR^D\). Then, for every \(\Wb\in\RR^{D\times D}\),
\begin{equation}
\label{eq:reduced_k1_zero_y_column_sample_gradient}
(\nabla_{\Wb}\ell(\Wb))_{:,y}=\mathbf 0,
\end{equation}
and
\begin{equation}
\label{eq:reduced_k1_zero_y_column_population_gradient}
(\nabla_{\Wb}\mathcal L(\Wb))_{:,y}=\mathbf 0.
\end{equation}
Consequently, since gradient descent is initialized at \(\Wb_0=\mathbf 0\) and evolves according to
\begin{equation}
\label{eq:reduced_k1_gd}
\Wb^{(t+1)}
=
\Wb^{(t)}-\eta \nabla_{\Wb}\mathcal L(\Wb^{(t)}),
\qquad t\ge 0,
\end{equation}
we have
\begin{equation}
\label{eq:reduced_k1_zero_y_column_iterates}
(\Wb^{(t)})_{:,y}=\mathbf 0
\qquad\text{for all }t\ge 0.
\end{equation}
\end{lemma}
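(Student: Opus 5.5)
The plan is to read off the $y$-column directly from the gradient formula \eqref{eq:reduced_k1_gradient_formula} in Lemma~\ref{lem:reduced_k1_gradient}. That formula writes the gradient as an outer product $\Zb\,\mathbf g\,\qb^\top$, where
\[
\mathbf g\coloneqq(\diag(\ab)-\ab\ab^\top)\Zb^\top\Vb^\top\rb\in\RR^{N+1}.
\]
Hence the $y$-column is $(\nabla_{\Wb}\ell)_{:,y}=\Zb\,\mathbf g\,(\qb)_y$. Since $\qb=\qb^{(0)}=[\xb_q^\top,0,\pb_{N+1}^\top]^\top$ by the embedding construction \eqref{eq:Z_definition}, its label coordinate is exactly $0$. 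This holds for every $\Wb$ and every sample, so \eqref{eq:reduced_k1_zero_y_column_sample_gradient} follows immediately. The argument does not need any structure of $\Wb$, which matters because the lemma is stated for arbitrary $\Wb$.

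For the population version \eqref{eq:reduced_k1_zero_y_column_population_gradient}, I will take expectations in \eqref{eq:reduced_k1_population_gradient_formula}. The integrand has its $y$-column identically zero pointwise, so its expectation is zero as well. The only point to check is that the expectation is well defined, so that column extraction commutes with $\EE$. This holds because $\ab$ is a probability vector, the tokens are bounded (unit-norm features, $y_i\in\{\pm1\}$, one-hot positional vectors), and $\rb$ is bounded. Therefore the integrand is bounded for fixed $\Wb$ and linearity of expectation applies.

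Finally, I will prove \eqref{eq:reduced_k1_zero_y_column_iterates} by induction on $t$. The base case is $\Wb^{(0)}=\mathbf 0$. For the inductive step, the update \eqref{eq:reduced_k1_gd} acts column-wise: the $y$-column of $\Wb^{(t+1)}$ equals the $y$-column of $\Wb^{(t)}$ minus $\eta$ times the $y$-column of $\nabla\mathcal L(\Wb^{(t)})$. By the inductive hypothesis and \eqref{eq:reduced_k1_zero_y_column_population_gradient}, both terms are zero.

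I expect no real obstacle here. The only potentially subtle step is justifying the interchange of expectation and differentiation that is implicit in \eqref{eq:reduced_k1_population_gradient_formula}, and the boundedness noted above handles it via dominated convergence.
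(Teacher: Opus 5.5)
Your proposal is correct and matches the paper's proof essentially step for step: the $y$-column of the outer product $\Zb\,\mathbf g\,\qb^\top$ is scaled by $\qb_y=0$, taking expectations preserves this, and induction from $\Wb^{(0)}=\mathbf 0$ through the column-wise update covers all iterates. Your boundedness argument for interchanging expectation and differentiation is slightly more careful than the paper, which simply cites linearity of expectation.
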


\begin{proof}
From the definition of the initial query token,
\[
\qb=
\begin{bmatrix}
\xb_q\\
0\\
\pb_{N+1}
\end{bmatrix},
\]
so its \(y\)-coordinate is zero:
\[
\qb_y=0.
\]
By \eqref{eq:reduced_k1_gradient_formula},
\[
\nabla_{\Wb}\ell(\Wb)
=
\Zb
\Bigl(
(\diag(\ab)-\ab\ab^\top)\,\Zb^\top \Vb^\top \rb
\Bigr)
\qb^\top.
\]
The \(y\)-th column of the right-hand side is obtained by multiplying by the scalar \(\qb_y\), hence
\[
(\nabla_{\Wb}\ell(\Wb))_{:,y}=\mathbf 0.
\]
Taking expectation gives
\[
(\nabla_{\Wb}\mathcal L(\Wb))_{:,y}
=
\EE_{\mathcal D,\xb_q}\EE_\omega
\bigl[(\nabla_{\Wb}\ell(\Wb))_{:,y}\bigr]
=
\mathbf 0.
\]
Finally, \((\Wb^{(0)})_{:,y}=\mathbf 0\) because \(\Wb^{(0)}=\mathbf 0\). Then by \eqref{eq:reduced_k1_gd} and \eqref{eq:reduced_k1_zero_y_column_population_gradient},
\[
(\Wb^{(t+1)})_{:,y}
=
(\Wb^{(t)})_{:,y}-\eta (\nabla_{\Wb}\mathcal L(\Wb^{(t)}))_{:,y}
=
\mathbf 0-\eta \mathbf 0
=
\mathbf 0.
\]
This proves \eqref{eq:reduced_k1_zero_y_column_iterates} by induction.
\end{proof}

\subsection{A low-dimensional block-diagonal structure preserved by GD}
\label{subsec:training_dynamics_reduced_k1}

In this subsection, we show that gradient descent preserves a low-dimensional block-diagonal structure of \(\Wb\).

%
%

\paragraph{Symmetry group.}
We use the same \((x,y,p)\)-block decomposition
\[
\RR^D=\RR^d\oplus\RR\oplus\RR^{N+1}.
\]
Define the following orthogonal transformations:
\begin{enumerate}[leftmargin=*]
\item \textbf{Data rotations:} for any \(R\in O(d)\),
\[
G_R\coloneqq \operatorname{diag}(R,1,\Ib_{N+1})\in\RR^{D\times D}.
\]
\item \textbf{Label flips:} for any \(s\in\{\pm1\}\),
\[
G_s\coloneqq \operatorname{diag}(\Ib_d,s,\Ib_{N+1})\in\RR^{D\times D}.
\]
\item \textbf{Signed permutations on the position block:}
Let $\mathcal{H}_{N+1}$ be the set of all \emph{signed permutation matrices} in $\RR^{(N+1)\times(N+1)}$, i.e.
matrices of the form $S = DP$ where $P$ is a permutation matrix and $D$ is diagonal with $\pm1$ entries. For any \(S\in\mathcal H_{N+1}\), define
\[
G_S\coloneqq \operatorname{diag}(\Ib_d,1,S)\in\RR^{D\times D}.
\]
\end{enumerate}
Let \(\mathbb G\) be the group generated by all such \(G_R\), \(G_s\), and \(G_S\). The group acts on \(\Wb\) by conjugation:
\[
\Wb\mapsto G\Wb G^{-1},
\qquad G\in\mathbb G.
\]

\begin{lemma}
\label{lem:reduced_k1_tf_covariance}
For any orthogonal \(G\in\mathbb G\), any \(\Zb\in\RR^{D\times (N+1)}\), and any \(\Wb\in\RR^{D\times D}\),
\begin{equation}
\label{eq:reduced_k1_tf_covariance}
\mathrm{TF}(G\Zb;G\Wb G^{-1})=G\,\mathrm{TF}(\Zb;\Wb).
\end{equation}
In particular,
\begin{equation}
\label{eq:reduced_k1_last_token_covariance}
\mathrm{TF}(G\Zb;G\Wb G^{-1})_{:,-1}
=
G\,\mathrm{TF}(\Zb;\Wb)_{:,-1}.
\end{equation}
\end{lemma}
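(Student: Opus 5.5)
The plan is a direct computation from the definition \eqref{eq:TF_def}, using only that every \(G\in\mathbb G\) is orthogonal (\(G^{-1}=G^\top\)) and commutes with \(\Vb=\diag(\mathbf 0,1,\Ib_{N+1})\). First, I will check orthogonality on the generators. Each \(G_R\), \(G_s\), \(G_S\) is block diagonal with orthogonal blocks: \(R\in O(d)\), \(s\in\{\pm1\}\), and signed permutation matrices are orthogonal. Products of orthogonal matrices are orthogonal, so every element of the generated group \(\mathbb G\) is orthogonal.

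Second, I will verify commutation with \(\Vb\) on the generators, and hence on all of \(\mathbb G\). Every generator has the block form \(\diag(A,c,B)\) with respect to the \((x,y,p)\) decomposition, and so does \(\Vb\). Blockwise, \(A\cdot\mathbf 0=\mathbf 0\cdot A\), \(c\cdot1=1\cdot c\), and \(B\Ib=\Ib B\). Thus \(G\Vb=\Vb G\) for each generator, and this property is closed under products, so it holds for all \(G\in\mathbb G\).

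Third, I will substitute into the definition. The attention logits become
\[
(G\Zb)^\top G\Wb G^{-1}(G\Zb)=\Zb^\top G^\top G\Wb\Zb=\Zb^\top\Wb\Zb,
\]
so the column-wise softmax is unchanged. Then
\[
\mathrm{TF}(G\Zb;G\Wb G^{-1})=G\Zb+\Vb G\Zb\,\mathrm{softmax}(\Zb^\top\Wb\Zb)=G\bigl(\Zb+\Vb\Zb\,\mathrm{softmax}(\Zb^\top\Wb\Zb)\bigr),
\]
where the last equality uses \(\Vb G=G\Vb\). This is \eqref{eq:reduced_k1_tf_covariance}. Taking the last column gives \eqref{eq:reduced_k1_last_token_covariance}, because left multiplication by \(G\) acts columnwise.

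There is essentially no obstacle here. The only point requiring care is that the value matrix is the fixed \(\Vb\), not conjugated, so the commutation \(G\Vb=\Vb G\) must be checked explicitly; I do this via the block-diagonal structure of the generators as above. Orthogonality is also essential, since it is what makes \(G^\top G=\Ib\) cancel in the logits.
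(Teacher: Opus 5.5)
Your proposal is correct and matches the paper's proof: you use orthogonality of $G$ to cancel $G^\top G$ in the attention logits, and the commutation $G\Vb=\Vb G$ (checked blockwise on the generators) to factor $G$ out of the value term. Your extra verification that every element of $\mathbb G$ is orthogonal is harmless, since the statement already assumes it.
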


\begin{proof}
We first note that the fixed value matrix
\[
\Vb=\operatorname{diag}(0,1,\Ib_{N+1})
\]
commutes with every generator of \(\mathbb G\), and hence with every \(G\in\mathbb G\). Indeed:
\[
G_R\Vb=\Vb G_R,\qquad
G_s\Vb=\Vb G_s,\qquad
G_S\Vb=\Vb G_S.
\]
Now compute:
\[
\Vb(G\Zb)=G(\Vb\Zb),
\]
and
\[
(G\Zb)^\top (G\Wb G^{-1})(G\Zb)
=
\Zb^\top G^\top G\,\Wb\,G^{-1}G\,\Zb
=
\Zb^\top \Wb \Zb,
\]
because \(G\) is orthogonal. Therefore the arguments of the column-wise softmax coincide:
\[
\mathrm{softmax}\big((G\Zb)^\top (G\Wb G^{-1})(G\Zb)\big)
=
\mathrm{softmax}(\Zb^\top \Wb \Zb).
\]
Substituting into the definition of \(\mathrm{TF}\),
\begin{align*}
\mathrm{TF}(G\Zb;G\Wb G^{-1})
&=
G\Zb+(\Vb G\Zb)\mathrm{softmax}\big((G\Zb)^\top (G\Wb G^{-1})(G\Zb)\big)\\
&=
G\Zb+\bigl(G(\Vb\Zb)\bigr)\mathrm{softmax}(\Zb^\top \Wb \Zb)\\
&=
G\Bigl(\Zb+(\Vb\Zb)\mathrm{softmax}(\Zb^\top \Wb \Zb)\Bigr)\\
&=
G\,\mathrm{TF}(\Zb;\Wb).
\end{align*}
This proves \eqref{eq:reduced_k1_tf_covariance}, and \eqref{eq:reduced_k1_last_token_covariance} follows by taking the last column.
\end{proof}

\begin{lemma}
\label{lem:reduced_k1_rotation_invariance}
For every \(R\in O(d)\),
\begin{equation}
\label{eq:reduced_k1_rotation_invariance}
\mathcal L(\Wb)=\mathcal L(G_R\Wb G_R^{-1}).
\end{equation}
\end{lemma}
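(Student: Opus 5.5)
The plan is to combine the covariance identity of Lemma~\ref{lem:reduced_k1_tf_covariance} with a change of variables in the expectation defining $\mathcal L$. Fix $R\in O(d)$ and write $G=G_R$. Let $\Zb=[\Hb,\qb]$ be the embedding of a sample $(\mathcal D,\xb_q,\omega)$. Then $G\Zb$ is exactly the embedding of the rotated sample $(\{(R\xb_i,y_i)\}_{i=1}^N, R\xb_q,\omega)$: the $x$-block is rotated and the label and positional blocks are left untouched. The target of the rotated sample is
\[
\bigl[(R\xb_q)^\top, y_{\pi'(1)}, \pb_{N+1}^\top+\pb_{\pi'(1)}^\top\bigr]^\top,
\]
where $\pi'$ is the distance ordering for the rotated instance. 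Rotations preserve the distances $\|R\xb_q-R\xb_i\|=\|\xb_q-\xb_i\|$, so $\pi'=\pi$. Hence the rotated target equals $G\qb^{(1)}$.

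Next I apply Lemma~\ref{lem:reduced_k1_tf_covariance} with $G$ and $G^{-1}$ (also an element of $\mathbb G$, and orthogonal). This gives $\mathrm{TF}(G\Zb;\Wb)_{:,-1}=G\,\mathrm{TF}(\Zb;G^{-1}\Wb G)_{:,-1}$. Writing $\ell_{\Zb}$ for the single-sample loss at data $\Zb$, orthogonality of $G$ then yields
\[
\ell_{G\Zb}(\Wb)=\tfrac12\bigl\|G\bigl(\mathrm{TF}(\Zb;G^{-1}\Wb G)_{:,-1}-\qb^{(1)}\bigr)\bigr\|^2=\ell_{\Zb}(G^{-1}\Wb G).
\]
Equivalently, $\ell_{\Zb}(G\Wb G^{-1})=\ell_{G^{-1}\Zb}(\Wb)$.

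It remains to take expectations, and the distributional step is the only real obstacle. I need the joint law of $(\xb_{1:q},y_{1:N},\omega)$ to be invariant under $\xb_{1:q}\mapsto R^{-1}\xb_{1:q}$ with labels and $\omega$ fixed. This follows from three facts:
\begin{itemize}
\item By Assumption~\ref{ass:distribution}(i), the marginal of $\xb_{1:q}$ is uniform on a product of spheres, which is $O(d)$-invariant.
\item By Assumption~\ref{ass:distribution}(iii), the conditional law of $y_{1:N}$ given $\xb_{1:q}$ is invariant under rotations. One caveat: this is stated for rotation matrices, i.e.\ $SO(d)$. For a reflection $R\in O(d)\setminus SO(d)$ I would need either to read the assumption as covering all of $O(d)$, or to restrict the lemma to $SO(d)$. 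The latter suffices for the later symmetry characterization when $d\ge2$, since the commutant of $SO(d)$ acting on $\RR^d$ with $d\ge3$ is the scalars; $d=2$ needs a separate check.
\item The positional randomness $\omega$ is independent of the data.
\end{itemize}
With this invariance in hand, $\mathcal L(G\Wb G^{-1})=\EE[\ell_{G^{-1}\Zb}(\Wb)]=\EE[\ell_{\Zb}(\Wb)]=\mathcal L(\Wb)$.

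Finally, ties in the ordering have probability zero, so $\pi(1)$ is a.s.\ well defined both before and after rotation, and the preceding identities hold almost surely.
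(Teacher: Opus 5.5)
Your argument is correct and is essentially the paper's proof: the rotated sample has embedding $G_R\Zb$ and target $G_R\qb^{(1)}$ (the nearest neighbor is unchanged), Lemma~\ref{lem:reduced_k1_tf_covariance} together with orthogonality of $G_R$ matches the single-sample losses, and invariance of the data law under rotation finishes the argument. The only difference is cosmetic: you pull back by $G^{-1}$ where the paper pushes forward by $G$. Your explicit appeal to Assumption~\ref{ass:distribution}(iii) for the conditional label law, and your caveat that a reflection $R\in O(d)\setminus SO(d)$ requires reading that assumption over all of $O(d)$, are both legitimate points that the paper's proof passes over, since it cites only the rotational invariance of the uniform measure on the sphere.
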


\begin{proof}
Define rotated data
\[
\xb_i'\coloneqq R\xb_i,\qquad
\xb_q'\coloneqq R\xb_q,\qquad
y_i'\coloneqq y_i,
\]
and keep \(\omega\) unchanged. Since the uniform measure on \(\mathbb S^{d-1}\) is rotationally invariant,
\[
(\mathcal D',\xb_q',\omega)\overset{d}{=}(\mathcal D,\xb_q,\omega).
\]
The nearest-neighbor index is unchanged under rotations, because Euclidean distances are preserved. Moreover,
\[
\hb_i'=
\begin{bmatrix}
R\xb_i\\
y_i\\
\pb_i
\end{bmatrix}
=
G_R\hb_i,
\qquad
\qb'=
\begin{bmatrix}
R\xb_q\\
0\\
\pb_{N+1}
\end{bmatrix}
=
G_R\qb,
\]
and therefore
\[
\Zb'=G_R\Zb.
\]
Similarly,
\[
(\qb^{(1)})'
=
\begin{bmatrix}
R\xb_q\\
y_\pi\\
\pb_{N+1}+\pb_\pi
\end{bmatrix}
=
G_R\qb^{(1)}.
\]
Hence, by Lemma~\ref{lem:reduced_k1_tf_covariance},
\begin{align*}
&\ell(G_R\Wb G_R^{-1};\mathcal D',\xb_q',\omega)\\
&=
\frac12
\left\|
\mathrm{TF}(\Zb';G_R\Wb G_R^{-1})_{:,-1}-(\qb^{(1)})'
\right\|^2\\
&=
\frac12
\left\|
G_R\mathrm{TF}(\Zb;\Wb)_{:,-1}-G_R\qb^{(1)}
\right\|^2\\
&=
\frac12
\left\|
\mathrm{TF}(\Zb;\Wb)_{:,-1}-\qb^{(1)}
\right\|^2\\
&=
\ell(\Wb;\mathcal D,\xb_q,\omega),
\end{align*}
where we used the orthogonality of \(G_R\) in the third equality. Taking expectation over \((\mathcal D,\xb_q,\omega)\) gives \eqref{eq:reduced_k1_rotation_invariance}.
\end{proof}

\begin{lemma}
\label{lem:reduced_k1_label_invariance}
For every \(s\in\{\pm1\}\),
\begin{equation}
\label{eq:reduced_k1_label_invariance}
\mathcal L(\Wb)=\mathcal L(G_s\Wb G_s^{-1}).
\end{equation}
\end{lemma}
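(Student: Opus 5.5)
We will mirror the proof of Lemma~\ref{lem:reduced_k1_rotation_invariance}: build a measure-preserving transformation of the sample $(\mathcal D,\xb_q,\omega)$ under which the transformed embedding is $G_s\Zb$ and the transformed target is $G_s\qb^{(1)}$. Then we invoke the covariance identity of Lemma~\ref{lem:reduced_k1_tf_covariance} and use the orthogonality of $G_s$. The case $s=1$ is trivial, since $G_1=\Ib$, so we fix $s=-1$.

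\textbf{Step 1: the flipped sample.} Define $\xb_i'\coloneqq\xb_i$, $\xb_q'\coloneqq\xb_q$, $y_i'\coloneqq -y_i$, and keep $\omega$ unchanged. By Assumption~\ref{ass:distribution}(iii), $\mathbb P(y_{1:N}\mid\xb_{1:q})=\mathbb P(-y_{1:N}\mid\xb_{1:q})$. The marginal law of $\xb_{1:q}$ is untouched, and $\omega$ is independent of the data. Together these give $(\mathcal D',\xb_q',\omega)\overset{d}{=}(\mathcal D,\xb_q,\omega)$.

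\textbf{Step 2: covariance of embeddings and target.} The features are unchanged, so the ordering permutation $\pi$ is unchanged, and in particular $\pi(1)$ is the same. Each context column is $\hb_i'=[\xb_i^\top,-y_i,\pb_i^\top]^\top=G_s\hb_i$. The query column $\qb$ has label coordinate $0$, so $\qb'=\qb=G_s\qb$. Hence $\Zb'=G_s\Zb$. The target satisfies $(\qb^{(1)})'=[\xb_q^\top,-y_{\pi(1)},\pb_{N+1}^\top+\pb_{\pi(1)}^\top]^\top=G_s\qb^{(1)}$.

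\textbf{Step 3: conclude.} By Lemma~\ref{lem:reduced_k1_tf_covariance}, and since $G_s$ is orthogonal,
\[
\ell(G_s\Wb G_s^{-1};\mathcal D',\xb_q',\omega)=\tfrac12\bigl\|G_s\mathrm{TF}(\Zb;\Wb)_{:,-1}-G_s\qb^{(1)}\bigr\|^2=\ell(\Wb;\mathcal D,\xb_q,\omega).
\]
Taking expectations and using the equality in law from Step~1 gives $\mathcal L(G_s\Wb G_s^{-1})=\mathcal L(\Wb)$.

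The only step that requires any care is Step~1: we must state precisely that the joint law is preserved. This holds because the conditional law of the labels given $\xb_{1:q}$ is symmetric under the flip, while the law of $\xb_{1:q}$ and the independent positional randomness $\omega$ are left untouched. The remaining steps are routine.
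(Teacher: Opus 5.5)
Your proposal is correct and follows essentially the same route as the paper. You flip the labels while keeping the features and $\omega$ fixed, invoke the sign-symmetry in Assumption~\ref{ass:distribution} to preserve the joint law, and verify $\Zb'=G_s\Zb$ and $(\qb^{(1)})'=G_s\qb^{(1)}$. You then conclude via Lemma~\ref{lem:reduced_k1_tf_covariance} and the orthogonality of $G_s$.
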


\begin{proof}
Define transformed labels
\[
y_i'\coloneqq s\,y_i,
\]
and keep \(\{\xb_i\}_{i=1}^N\), \(\xb_q\), and \(\omega\) unchanged. By Assumption \ref{ass:distribution}, we have 
\[
(\mathcal D',\xb_q,\omega)\overset{d}{=}(\mathcal D,\xb_q,\omega).
\]
The nearest-neighbor index \(\pi\) depends only on \(\{\xb_i\}_{i=1}^N\) and \(\xb_q\), so it is unchanged. Moreover,
\[
\hb_i'=
\begin{bmatrix}
\xb_i\\
s\,y_i\\
\pb_i
\end{bmatrix}
=
G_s\hb_i,
\qquad
\qb'=
\begin{bmatrix}
\xb_q\\
0\\
\pb_{N+1}
\end{bmatrix}
=
G_s\qb,
\]
hence
\[
\Zb'=G_s\Zb.
\]
Also,
\[
(\qb^{(1)})'
=
\begin{bmatrix}
\xb_q\\
s\,y_\pi\\
\pb_{N+1}+\pb_\pi
\end{bmatrix}
=
G_s\qb^{(1)}.
\]
Applying Lemma~\ref{lem:reduced_k1_tf_covariance} and using orthogonality of \(G_s\), we obtain
\[
\ell(G_s\Wb G_s^{-1};\mathcal D',\xb_q,\omega)
=
\ell(\Wb;\mathcal D,\xb_q,\omega).
\]
Taking expectation gives \eqref{eq:reduced_k1_label_invariance}.
\end{proof}

\begin{lemma}
\label{lem:reduced_k1_pe_invariance}
Let \(S\in\mathcal H_{N+1}\) be any signed permutation matrix. Define transformed position vectors
\[
(\pb_1',\ldots,\pb_N',\pb_{N+1}')
\coloneqq 
(S\pb_1,\ldots,S\pb_N,S\pb_{N+1}).
\]
Then
\[
(\pb_1',\ldots,\pb_N',\pb_{N+1}')
\overset{d}{=}
(\pb_1,\ldots,\pb_N,\pb_{N+1}).
\]
Equivalently, there exists \(\omega'=(\tau',s_1',\ldots,s_{N+1}')\) with \(\omega'\overset{d}{=}\omega\) such that
\[
\pb_i'=s_i'e_{\tau'(i)}\quad (i\in[N]),
\qquad
\pb_{N+1}'=s_{N+1}'e_{\tau'(N+1)}.
\]
\end{lemma}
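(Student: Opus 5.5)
The plan is to construct $\omega'$ explicitly from $\omega$ and $S$, and then use the bijectivity of the construction to transfer the uniform law. Write $S=DP$, where $P$ is the permutation matrix of some $\sigma\in S_{N+1}$ (so $Pe_j=e_{\sigma(j)}$) and $D=\diag(d_1,\dots,d_{N+1})$ with $d_k\in\{\pm1\}$. For each $i\in[N+1]$,
\[
S\pb_i = s_i\,DPe_{\tau(i)} = s_i\,d_{\sigma(\tau(i))}\,e_{\sigma(\tau(i))}.
\]
This suggests defining
\[
\tau' \coloneqq \sigma\circ\tau, \qquad s_i' \coloneqq d_{\sigma(\tau(i))}\,s_i \quad (i\in[N+1]).
\]
With these choices, $\pb_i'=s_i'e_{\tau'(i)}$ holds identically, which gives the "equivalently" form once we know $\omega'\overset{d}{=}\omega$.

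It therefore remains to show that $\omega'=(\tau',s_1',\dots,s_{N+1}')$ has the same law as $\omega$, namely $\mathrm{Unif}(S_{N+1})\otimes\mathrm{Rad}^{\otimes(N+1)}$. The law of $\omega$ is uniform on the finite set $S_{N+1}\times\{\pm1\}^{N+1}$. The map $\Phi:(\tau,s)\mapsto(\sigma\circ\tau,\,(d_{\sigma(\tau(i))}s_i)_i)$ is a bijection of this set, with inverse $(\tau',s')\mapsto(\sigma^{-1}\circ\tau',\,(d_{\tau'(i)}s'_i)_i)$. This inverse uses only that $d_k^2=1$. Since a bijection pushes the uniform measure forward to the uniform measure, $\omega'\overset{d}{=}\omega$.

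Since $(\pb_1',\dots,\pb_{N+1}')$ is the image of $\omega'$ under the same deterministic map that sends $\omega$ to $(\pb_1,\dots,\pb_{N+1})$, equality in distribution of the position vectors follows.

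The only subtle point is that the sign $s_i'$ depends on $\tau$ through $d_{\sigma(\tau(i))}$. One might therefore worry that the $s_i'$ are no longer independent of $\tau'$. The bijection argument on the full product space avoids this issue. Conditionally on $\tau$, the map $s\mapsto s'$ is a coordinatewise sign flip and preserves the i.i.d. Rademacher law, and the conditional law is the same for every $\tau$. Together with $\tau'=\sigma\circ\tau$ being uniform, this gives the alternative verification that $\tau'$ is uniform and independent of the i.i.d. signs $s'$.
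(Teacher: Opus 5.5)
Your proof is correct and follows the paper's route: the same factorization $S=DP$ and the same explicit $\omega'$ with $\tau'=\sigma\circ\tau$ and $s_i'=d_{\tau'(i)}s_i$. The paper verifies $\omega'\overset{d}{=}\omega$ by conditioning on $\tau'$, which is your closing remark; your main argument, a bijection of $S_{N+1}\times\{\pm1\}^{N+1}$ preserving the uniform law, is an equivalent and slightly tidier way to establish the same fact.
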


\begin{proof}
Write
\[
S=DP,
\]
where \(P\) is a permutation matrix and \(D=\diag(\epsilon_1,\ldots,\epsilon_{N+1})\) with \(\epsilon_j\in\{\pm1\}\). Since
\[
\pb_i=s_i e_{\tau(i)},
\]
we have
\[
S\pb_i=(DP)(s_i e_{\tau(i)})=s_i D e_{P(\tau(i))}.
\]
Define
\[
\tau'\coloneqq P\circ \tau,
\qquad
s_i'\coloneqq s_i\,\epsilon_{\tau'(i)}
\quad (i=1,\ldots,N+1).
\]
Then
\[
S\pb_i=s_i' e_{\tau'(i)}\quad (i=1,\ldots,N),
\qquad
S\pb_{N+1}=s_{N+1}' e_{\tau'(N+1)}.
\]
Because \(\tau\) is uniform on \(S_{N+1}\) and \(P\) is fixed, \(\tau'=P\circ\tau\) is also uniform on \(S_{N+1}\).

It remains to verify that the transformed signs \((s_1',\ldots,s_{N+1}')\) are i.i.d.\ Rademacher and independent of \(\tau'\). Fix any \(\sigma\in S_{N+1}\). On the event \(\{\tau'=\sigma\}\), the quantities \(\epsilon_{\tau'(i)}=\epsilon_{\sigma(i)}\) are deterministic signs, and therefore
\[
s_i' = s_i\,\epsilon_{\sigma(i)}
\qquad (i=1,\ldots,N+1).
\]
Since each \(s_i\) is Rademacher, multiplication by a deterministic sign preserves the Rademacher law. Moreover, because this transformation acts coordinate-wise, the collection \((s_1',\ldots,s_{N+1}')\) remains independent under the conditioning \(\tau'=\sigma\). Hence
\[
(s_1',\ldots,s_{N+1}')\mid(\tau'=\sigma)
\overset{d}{=}
(s_1,\ldots,s_{N+1}),
\]
and this conditional law does not depend on \(\sigma\). Therefore \((s_1',\ldots,s_{N+1}')\) is a vector of i.i.d.\ Rademacher variables independent of \(\tau'\).

Consequently, \(\omega'=(\tau',s_1',\ldots,s_{N+1}')\) has the same distribution as \(\omega\), which proves the claim.
\end{proof}

\begin{lemma}
\label{lem:reduced_k1_signed_perm_invariance}
For every \(S\in\mathcal H_{N+1}\),
\begin{equation}
\label{eq:reduced_k1_signed_perm_invariance}
\mathcal L(\Wb)=\mathcal L(G_S\Wb G_S^{-1}).
\end{equation}
\end{lemma}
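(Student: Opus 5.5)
The plan mirrors the proofs of Lemma~\ref{lem:reduced_k1_rotation_invariance} and Lemma~\ref{lem:reduced_k1_label_invariance}. This time the transformation acts on the positional randomness $\omega$ rather than on the data.

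\emph{Step 1 (transformed sample).} Fix $S\in\mathcal H_{N+1}$ and a sample $(\mathcal D,\xb_q,\omega)$. By Lemma~\ref{lem:reduced_k1_pe_invariance} there is a measurable map $\omega\mapsto\omega'$ such that the positional vectors generated by $\omega'$ are exactly $\pb_i'=S\pb_i$ for all $i\in[N+1]$, and $\omega'\overset{d}{=}\omega$. Since $\omega$ is independent of $(\mathcal D,\xb_q)$ and the map does not involve $(\mathcal D,\xb_q)$, we get $(\mathcal D,\xb_q,\omega')\overset{d}{=}(\mathcal D,\xb_q,\omega)$. Keep $(\mathcal D,\xb_q)$ fixed, so $\pi=\pi(1)$ is unchanged.

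\emph{Step 2 (equivariance of embeddings and target).} With these choices,
\[
\hb_i'=\begin{bmatrix}\xb_i\\ y_i\\ S\pb_i\end{bmatrix}=G_S\hb_i,
\qquad
\qb'=G_S\qb,
\qquad
\Zb'=G_S\Zb.
\]
The target satisfies $(\qb^{(1)})'=[\xb_q^\top,y_\pi,(S\pb_{N+1}+S\pb_\pi)^\top]^\top=G_S\qb^{(1)}$. This uses linearity of $S$ and the fact that $\pi$ is unchanged.

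\emph{Step 3 (loss invariance).} $G_S$ is orthogonal because signed permutation matrices are orthogonal, and it lies in $\mathbb G$. Lemma~\ref{lem:reduced_k1_tf_covariance} then gives
\[
\ell(G_S\Wb G_S^{-1};\mathcal D,\xb_q,\omega')=\tfrac12\|G_S(\mathrm{TF}(\Zb;\Wb)_{:,-1}-\qb^{(1)})\|^2=\ell(\Wb;\mathcal D,\xb_q,\omega).
\]
Taking expectations and using the equality in law from Step 1 yields $\mathcal L(G_S\Wb G_S^{-1})=\mathcal L(\Wb)$.

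The only delicate point is Step 1. We need the joint law of $(\mathcal D,\xb_q,\omega')$ to match, not just the marginal law of $\omega'$. This follows from the independence of $\omega$ and the data together with the fact that $\omega'$ is a deterministic function of $\omega$ alone. Everything else is routine bookkeeping.
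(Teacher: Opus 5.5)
Your proposal is correct and follows the paper's proof essentially step for step: you take the $\omega'$ from Lemma~\ref{lem:reduced_k1_pe_invariance}, check that $\Zb'=G_S\Zb$ and $(\qb^{(1)})'=G_S\qb^{(1)}$ with $\pi$ unchanged, and then use Lemma~\ref{lem:reduced_k1_tf_covariance} together with the orthogonality of $G_S$. The only cosmetic difference is at the end: you justify joint equality in law $(\mathcal D,\mathbf{x}_q,\omega')\overset{d}{=}(\mathcal D,\mathbf{x}_q,\omega)$ via independence of $\omega$ from the data, whereas the paper fixes $(\mathcal D,\mathbf{x}_q)$, uses $\omega'\overset{d}{=}\omega$ inside the inner expectation $\mathbb{E}_\omega$ of the iterated definition of $\mathcal L$, and then integrates over the data.
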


\begin{proof}
Fix \((\mathcal D,\xb_q)\), and let \(\omega\) generate the position vectors \((\pb_1,\ldots,\pb_N,\pb_{N+1})\). Let \(\omega'\) be as in Lemma~\ref{lem:reduced_k1_pe_invariance}, so that
\[
(\pb_1',\ldots,\pb_N',\pb_{N+1}')=(S\pb_1,\ldots,S\pb_N,S\pb_{N+1})
\]
and \(\omega'\overset{d}{=}\omega\).

Because only the \(p\)-block changes, we have
\[
\hb_i'=
\begin{bmatrix}
\xb_i\\
y_i\\
\pb_i'
\end{bmatrix}
=
\begin{bmatrix}
\xb_i\\
y_i\\
S\pb_i
\end{bmatrix}
=
G_S\hb_i,
\qquad
\qb'=
\begin{bmatrix}
\xb_q\\
0\\
\pb_{N+1}'
\end{bmatrix}
=
G_S\qb,
\]
hence
\[
\Zb'=G_S\Zb.
\]
The nearest-neighbor index \(\pi\) depends only on the data vectors and the query, so it is unchanged. Therefore
\[
(\qb^{(1)})'
=
\begin{bmatrix}
\xb_q\\
y_\pi\\
\pb_{N+1}'+\pb_\pi'
\end{bmatrix}
=
\begin{bmatrix}
\xb_q\\
y_\pi\\
S\pb_{N+1}+S\pb_\pi
\end{bmatrix}
=
G_S\qb^{(1)}.
\]
Applying Lemma~\ref{lem:reduced_k1_tf_covariance} and orthogonality of \(G_S\), we obtain
\[
\ell(G_S\Wb G_S^{-1};\mathcal D,\xb_q,\omega')
=
\ell(\Wb;\mathcal D,\xb_q,\omega).
\]
Finally, because \(\omega'\overset{d}{=}\omega\),
\[
\EE_\omega\big[\ell(G_S\Wb G_S^{-1};\mathcal D,\xb_q,\omega)\big]
=
\EE_{\omega'}\big[\ell(G_S\Wb G_S^{-1};\mathcal D,\xb_q,\omega')\big]
=
\EE_\omega\big[\ell(\Wb;\mathcal D,\xb_q,\omega)\big].
\]
Taking expectation over \((\mathcal D,\xb_q)\) proves \eqref{eq:reduced_k1_signed_perm_invariance}.
\end{proof}

\begin{lemma}
\label{lem:reduced_k1_gradient_equivariance}
Let \(G\in\mathbb G\). If
\[
\mathcal L(\Wb)=\mathcal L(G\Wb G^{-1})
\qquad\text{for all }\Wb,
\]
then
\begin{equation}
\label{eq:reduced_k1_gradient_equivariance}
\nabla_{\Wb}\mathcal L(G\Wb G^{-1})
=
G\bigl(\nabla_{\Wb}\mathcal L(\Wb)\bigr)G^{-1}.
\end{equation}
\end{lemma}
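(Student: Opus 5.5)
The plan is to differentiate the invariance identity in the direction of an arbitrary perturbation and use that conjugation by an orthogonal matrix preserves the Frobenius inner product. Fix $G\in\mathbb G$ and $\Wb$. Since $G$ is a product of the orthogonal generators $G_R$, $G_s$, $G_S$, it is orthogonal, so $G^{-1}=G^\top$. Define the linear map $\Phi_G(\Mb)\coloneqq G\Mb G^{-1}$ on $\RR^{D\times D}$. By hypothesis, $\mathcal L=\mathcal L\circ\Phi_G$ holds as functions on all of $\RR^{D\times D}$. The loss $\mathcal L$ is differentiable: the integrand is smooth in $\Wb$, and its derivative is dominated locally uniformly because all tokens are bounded. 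This justifies differentiating under the expectation, and the paper already uses $\nabla_{\Wb}\mathcal L$ in \eqref{eq:reduced_k1_population_gradient_formula}.

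\textbf{Step 1.} Apply the chain rule to $\mathcal L(\Wb')=\mathcal L(\Phi_G(\Wb'))$. For any direction $\Eb$,
\[
\langle \nabla\mathcal L(\Wb'),\Eb\rangle=\langle \nabla\mathcal L(G\Wb' G^{-1}),\,G\Eb G^{-1}\rangle .
\]
Because $G$ is orthogonal, $\langle \Ab,G\Eb G^\top\rangle=\mathrm{tr}(\Ab^\top G\Eb G^\top)=\langle G^\top \Ab G,\Eb\rangle$. Since $\Eb$ is arbitrary, this gives $\nabla\mathcal L(\Wb')=G^{-1}\nabla\mathcal L(G\Wb'G^{-1})\,G$.

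\textbf{Step 2.} Conjugating both sides by $G$ yields $\nabla\mathcal L(G\Wb'G^{-1})=G\,\nabla\mathcal L(\Wb')\,G^{-1}$. Setting $\Wb'=\Wb$ gives \eqref{eq:reduced_k1_gradient_equivariance}.

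The only point needing care is the adjoint computation $\Phi_G^*=\Phi_{G^{-1}}$, which depends on orthogonality. This follows directly from the generators, since $R\in O(d)$, $s\in\{\pm1\}$, and signed permutation matrices are all orthogonal. There is no substantive obstacle beyond this.
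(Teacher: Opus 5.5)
Your proof is correct and follows the same argument as the paper. Both differentiate the invariance identity $\mathcal L=\mathcal L(G\,\cdot\,G^{-1})$ along an arbitrary direction, then move the conjugation across the Frobenius inner product by trace cyclicity. Your explicit use of $G^{-1}=G^\top$ (and of differentiability under the expectation) is exactly what the paper's trace manipulation relies on without stating it.
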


\begin{proof}
Let \(\Wb'\coloneqq G\Wb G^{-1}\), and let \(\Db\in\RR^{D\times D}\) be arbitrary. By invariance,
\[
\mathcal L(\Wb+\epsilon\Db)
=
\mathcal L\bigl(G(\Wb+\epsilon\Db)G^{-1}\bigr)
=
\mathcal L(\Wb'+\epsilon\,G\Db G^{-1})
\]
for every sufficiently small \(\epsilon\). Differentiating at \(\epsilon=0\) gives
\[
\langle \nabla_{\Wb}\mathcal L(\Wb),\Db\rangle_F
=
\left\langle
\nabla_{\Wb}\mathcal L(\Wb'),\ G\Db G^{-1}
\right\rangle_F.
\]
Using trace cyclicity,
\begin{align*}
\left\langle
\nabla_{\Wb}\mathcal L(\Wb'),\ G\Db G^{-1}
\right\rangle_F
&=
\tr\Bigl((\nabla_{\Wb}\mathcal L(\Wb'))^\top G\Db G^{-1}\Bigr)\\
&=
\tr\Bigl((G^{-1}\nabla_{\Wb}\mathcal L(\Wb')G)^\top \Db\Bigr)\\
&=
\left\langle
G^{-1}\nabla_{\Wb}\mathcal L(\Wb')G,\ \Db
\right\rangle_F.
\end{align*}
Since \(\Db\) is arbitrary,
\[
\nabla_{\Wb}\mathcal L(\Wb)
=
G^{-1}\nabla_{\Wb}\mathcal L(\Wb')G,
\]
which is equivalent to \eqref{eq:reduced_k1_gradient_equivariance}.
\end{proof}

\paragraph{Fixed-point set.}
Define
\[
\mathrm{Fix}(\mathbb G)
\coloneqq 
\{\Wb\in\RR^{D\times D}: \ G\Wb G^{-1}=\Wb,\ \forall G\in\mathbb G\}.
\]

\begin{lemma}
\label{lem:reduced_k1_gd_preserves_fix}
Let \((\Wb^{(t)})_{t\ge 0}\) be the gradient-descent iterates
\[
\Wb^{(t+1)}=\Wb^{(t)}-\eta \nabla_{\Wb}\mathcal L(\Wb^{(t)}),
\qquad
\Wb^{(0)}=\mathbf 0.
\]
Then
\[
\Wb^{(t)}\in\mathrm{Fix}(\mathbb G)
\qquad\text{for all }t\ge 0.
\]
\end{lemma}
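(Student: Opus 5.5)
The plan is induction on \(t\), using the invariance lemmas and the gradient equivariance lemma already established.

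\textbf{Base case.} \(\Wb^{(0)}=\mathbf 0\), and \(G\mathbf 0G^{-1}=\mathbf 0\) for every \(G\), so \(\Wb^{(0)}\in\mathrm{Fix}(\mathbb G)\).

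\textbf{Invariance under the whole group.} I first show that \(\mathcal L(\Wb)=\mathcal L(G\Wb G^{-1})\) holds for every \(G\in\mathbb G\), not only for the generators. Lemmas~\ref{lem:reduced_k1_rotation_invariance}, \ref{lem:reduced_k1_label_invariance} and~\ref{lem:reduced_k1_signed_perm_invariance} give this for each generator \(G_R\), \(G_s\), \(G_S\). Every element of \(\mathbb G\) is a finite product of generators; inverses are again generators, since \(G_R^{-1}=G_{R^\top}\), \(G_s^{-1}=G_s\) and \(G_S^{-1}=G_{S^{-1}}\) with \(S^{-1}\in\mathcal H_{N+1}\). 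If \(\mathcal L\) is invariant under conjugation by \(G_1\) and by \(G_2\), then
\[
\mathcal L(G_1G_2\Wb (G_1G_2)^{-1})=\mathcal L(G_1(G_2\Wb G_2^{-1})G_1^{-1})=\mathcal L(G_2\Wb G_2^{-1})=\mathcal L(\Wb).
\]
So invariance passes to products, and hence holds for all of \(\mathbb G\). Lemma~\ref{lem:reduced_k1_gradient_equivariance} then gives \(\nabla\mathcal L(G\Wb G^{-1})=G\,\nabla\mathcal L(\Wb)\,G^{-1}\) for every \(G\in\mathbb G\).

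\textbf{Inductive step.} Suppose \(\Wb^{(t)}\in\mathrm{Fix}(\mathbb G)\), and fix \(G\in\mathbb G\). By equivariance and \(G\Wb^{(t)}G^{-1}=\Wb^{(t)}\),
\[
G\,\nabla\mathcal L(\Wb^{(t)})\,G^{-1}=\nabla\mathcal L(G\Wb^{(t)}G^{-1})=\nabla\mathcal L(\Wb^{(t)}).
\]
Conjugation is linear, so
\[
G\Wb^{(t+1)}G^{-1}=G\Wb^{(t)}G^{-1}-\eta\, G\nabla\mathcal L(\Wb^{(t)})G^{-1}=\Wb^{(t)}-\eta\nabla\mathcal L(\Wb^{(t)})=\Wb^{(t+1)}.
\]
Since \(G\) was arbitrary, \(\Wb^{(t+1)}\in\mathrm{Fix}(\mathbb G)\), which closes the induction.

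\textbf{Main obstacle: differentiability.} Lemma~\ref{lem:reduced_k1_gradient_equivariance} assumes \(\mathcal L\) is differentiable, so this must be checked for the gradient to be well defined. For fixed data, \(\ell\) is smooth in \(\Wb\). The data are bounded: \(\|\xb_i\|=\|\xb_q\|=1\), \(|y_i|\le1\), and \(\|\pb_i\|=1\). The softmax weights lie in \([0,1]\). Together these give the uniform bound \(\|\nabla_{\Wb}\ell\|\le C(N)\,\|\Zb\|^2\|\qb\|\,\|\rb\|\), uniformly over \(\Wb\) in compact sets. Dominated convergence then justifies differentiating under the expectation, so the population gradient formula \eqref{eq:reduced_k1_population_gradient_formula} holds and \(\mathcal L\) is \(C^1\).
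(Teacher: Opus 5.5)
Your proof is correct and follows the paper's argument: induction starting from $\Wb^{(0)}=\mathbf 0$, using invariance of $\mathcal L$ under conjugation by $\mathbb G$ and the gradient equivariance lemma to show that $\nabla\mathcal L(\Wb^{(t)})$, and hence $\Wb^{(t+1)}$, is fixed by every $G\in\mathbb G$. Your two extra checks fill in details the paper leaves implicit: that invariance passes from the generators to their products, and that $\mathcal L$ is differentiable by dominated convergence because the data are bounded.
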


\begin{proof}
By Lemmas~\ref{lem:reduced_k1_rotation_invariance}, \ref{lem:reduced_k1_label_invariance}, and \ref{lem:reduced_k1_signed_perm_invariance}, the population objective is invariant under conjugation by every element of \(\mathbb G\). Since \(\Wb^{(0)}=\mathbf 0\), clearly \(\Wb^{(0)}\in\mathrm{Fix}(\mathbb G)\).

Assume inductively that \(\Wb^{(t)}\in\mathrm{Fix}(\mathbb G)\). Fix any \(G\in\mathbb G\). Then
\[
G\Wb^{(t)} G^{-1}=\Wb^{(t)}.
\]
Applying Lemma~\ref{lem:reduced_k1_gradient_equivariance},
\[
\nabla_{\Wb}\mathcal L(\Wb^{(t)})
=
\nabla_{\Wb}\mathcal L(G\Wb^{(t)} G^{-1})
=
G\bigl(\nabla_{\Wb}\mathcal L(\Wb^{(t)})\bigr)G^{-1}.
\]
Therefore the gradient is also fixed by \(G\). Consequently,
\[
G\Wb^{(t+1)}G^{-1}
=
G\Bigl(\Wb^{(t)}-\eta\nabla_{\Wb}\mathcal L(\Wb^{(t)})\Bigr)G^{-1}
=
\Wb^{(t)}-\eta\nabla_{\Wb}\mathcal L(\Wb^{(t)})
=
\Wb^{(t+1)}.
\]
Since \(G\in\mathbb G\) was arbitrary, \(\Wb^{(t+1)}\in\mathrm{Fix}(\mathbb G)\). The claim follows by induction.
\end{proof}

\begin{lemma}
\label{lem:reduced_k1_fix_characterization}
If \(\Wb\in\mathrm{Fix}(\mathbb G)\), then there exist scalars \(\alpha,\gamma,\beta\in\RR\) such that
\begin{equation}
\label{eq:reduced_k1_fix_characterization}
\Wb=\operatorname{diag}(\alpha \Ib_d,\gamma,\beta \Ib_{N+1}).
\end{equation}
\end{lemma}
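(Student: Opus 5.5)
Write $\Wb$ in the $(x,y,p)$-block decomposition $\RR^D=\RR^d\oplus\RR\oplus\RR^{N+1}$ as a $3\times 3$ block matrix with blocks $\Wb_{ab}$, $a,b\in\{x,y,p\}$. We will use one generator family at a time and impose $G\Wb G^{-1}=\Wb$, i.e.\ $G\Wb=\Wb G$, block by block. Each generator is block diagonal with one distinguished block, so the commutation splits into separate conditions on the diagonal and off-diagonal blocks.

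\emph{Step 1 (label flips).} Take $s=-1$, so $G_{-1}=\diag(\Ib_d,-1,\Ib_{N+1})$. Commutation multiplies $\Wb_{yx},\Wb_{yp},\Wb_{xy},\Wb_{py}$ by $-1$ and leaves them unchanged. Hence all four vanish, and the $y$-row and $y$-column reduce to the scalar $\gamma\coloneqq \Wb_{yy}$.

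\emph{Step 2 (rotations).} For $R\in O(d)$, commutation gives $R\Wb_{xx}=\Wb_{xx}R$ and $R\Wb_{xp}=\Wb_{xp}$. It also gives $\Wb_{px}R=\Wb_{px}$, and $\Wb_{yx}R=\Wb_{yx}$, which is already known to be zero.
\begin{itemize}
\item Choose $R=-\Ib_d$, which lies in $O(d)$. Then $\Wb_{xp}=-\Wb_{xp}$ and $\Wb_{px}=-\Wb_{px}$, so both off-diagonal blocks vanish.
\item Since $\Wb_{xx}$ commutes with all of $O(d)$, it is a scalar multiple $\alpha\Ib_d$. To see this directly, use reflections $R=\Ib-2e_ie_i^\top$: commuting with these forces $\Wb_{xx}$ to be diagonal. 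Then use the permutation matrices in $O(d)$, which force the diagonal entries to be equal.
\end{itemize}

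\emph{Step 3 (signed permutations).} Commutation gives $S\Wb_{pp}=\Wb_{pp}S$ for all $S\in\mathcal H_{N+1}$.
\begin{itemize}
\item Take $S=\diag(\epsilon_1,\ldots,\epsilon_{N+1})$ with a single $-1$ in position $j$. Then $(\Wb_{pp})_{jk}=-(\Wb_{pp})_{jk}$ for every $k\neq j$, so $\Wb_{pp}$ is diagonal.
\item Take $S$ to be a transposition matrix $P_{(jk)}$. Conjugation swaps the $j$-th and $k$-th diagonal entries, so they are equal, and $\Wb_{pp}=\beta\Ib_{N+1}$.
\end{itemize}

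Combining the three steps yields $\Wb=\diag(\alpha\Ib_d,\gamma,\beta\Ib_{N+1})$.

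The only mildly delicate point is the Schur-type claim that the commutant of $O(d)$ consists of scalar matrices. The elementary reflection-and-permutation argument in Step 2 avoids any representation theory, and it works for every $d\ge1$. No other obstacles arise, since all the other conditions follow by sign bookkeeping with well-chosen group elements.
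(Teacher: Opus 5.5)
Your proof is correct and takes essentially the same route as the paper. Conjugation by label flips, rotations and signed permutations kills the off-diagonal blocks, and the sign-flip and transposition arguments force the diagonal blocks to be scalar. The only cosmetic differences are the order of the steps and your use of the single element $R=-\Ib_d$ to kill $\Wb_{xp}$ and $\Wb_{px}$, where the paper uses the family of reflections $\Ib_d-2uu^\top$ over all unit vectors $u$.
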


\begin{proof}
Write \(\Wb\) in \((x,y,p)\)-block form:
\[
\Wb=
\begin{bmatrix}
\Wb_{xx} & \Wb_{xy} & \Wb_{xp}\\
\Wb_{yx} & \Wb_{yy} & \Wb_{yp}\\
\Wb_{px} & \Wb_{py} & \Wb_{pp}
\end{bmatrix}.
\]

\noindent\textbf{Step 1: constraints from data rotations.}
Since \(G_R\Wb G_R^{-1}=\Wb\) for all \(R\in O(d)\),
\[
G_R\Wb G_R^{-1}
=
\begin{bmatrix}
R\Wb_{xx}R^\top & R\Wb_{xy} & R\Wb_{xp}\\
\Wb_{yx}R^\top & \Wb_{yy} & \Wb_{yp}\\
\Wb_{px}R^\top & \Wb_{py} & \Wb_{pp}
\end{bmatrix}
=
\Wb.
\]
Thus,
\[
R\Wb_{xx}R^\top=\Wb_{xx},
\qquad
R\Wb_{xy}=\Wb_{xy},
\qquad
R\Wb_{xp}=\Wb_{xp},
\]
\[
\Wb_{yx}R^\top=\Wb_{yx},
\qquad
\Wb_{px}R^\top=\Wb_{px},
\qquad \forall R\in O(d).
\]
We now solve these constraints.

First, \(R\Wb_{xy}=\Wb_{xy}\) for every \(R\in O(d)\) implies \(\Wb_{xy}=\mathbf 0\). Indeed, fix any unit vector \(u\in\RR^d\), and let
\[
R_u\coloneqq \Ib_d-2uu^\top
\]
be the reflection across \(u^\perp\). Then \(R_u u=-u\). Applying \(R_u\Wb_{xy}=\Wb_{xy}\) and left-multiplying by \(u^\top\), we obtain
\[
-u^\top \Wb_{xy}=u^\top \Wb_{xy},
\]
hence \(u^\top \Wb_{xy}=0\). Since \(u\) is arbitrary, \(\Wb_{xy}=0\). The same reflection argument yields
\[
\Wb_{xp}=\mathbf 0,
\qquad
\Wb_{yx}=\mathbf 0,
\qquad
\Wb_{px}=\mathbf 0.
\]

It remains to determine \(\Wb_{xx}\). The relation \(R\Wb_{xx}R^\top=\Wb_{xx}\) for all \(R\in O(d)\) implies that \(\Wb_{xx}\) commutes with all orthogonal transformations. In particular:
\begin{itemize}[leftmargin=*]
\item if \(R\) swaps the basis vectors \(e_i\) and \(e_j\), then the diagonal entries \((\Wb_{xx})_{ii}\) and \((\Wb_{xx})_{jj}\) must be equal;
\item if \(R\) flips the sign of \(e_i\) and fixes the other basis vectors, then all off-diagonal entries in row/column \(i\) must vanish.
\end{itemize}
Hence \(\Wb_{xx}=\alpha \Ib_d\) for some \(\alpha\in\RR\).

\noindent\textbf{Step 2: constraints from label flips.}
Since \(G_{-1}\Wb G_{-1}^{-1}=\Wb\),
\[
G_{-1}\Wb G_{-1}^{-1}
=
\begin{bmatrix}
\Wb_{xx} & -\Wb_{xy} & \Wb_{xp}\\
-\Wb_{yx} & \Wb_{yy} & -\Wb_{yp}\\
\Wb_{px} & -\Wb_{py} & \Wb_{pp}
\end{bmatrix}
=
\Wb.
\]
Therefore
\[
\Wb_{xy}=\Wb_{yx}=\Wb_{yp}=\Wb_{py}=\mathbf 0.
\]
Combined with Step 1, this shows that all cross-couplings between \(x\), \(y\), and \(p\) vanish, except that \(\Wb_{yy}\) and \(\Wb_{pp}\) remain unconstrained at this stage.

\noindent\textbf{Step 3: constraints from signed permutations on the position block.}
For every \(S\in\mathcal H_{N+1}\), the relation \(G_S\Wb G_S^{-1}=\Wb\) implies
\[
S\Wb_{pp}S^{-1}=\Wb_{pp}.
\]
We claim that this forces \(\Wb_{pp}\) to be a scalar multiple of the identity.

First, take \(S=D^{(j)}\), the diagonal sign-flip matrix that changes only the sign of the \(j\)-th coordinate. Then
\[
D^{(j)}\Wb_{pp}D^{(j)}=\Wb_{pp}
\]
implies that every off-diagonal entry in row \(j\) and column \(j\) must vanish. Since \(j\) is arbitrary, all off-diagonal entries of \(\Wb_{pp}\) vanish, so \(\Wb_{pp}\) is diagonal.

Next, take \(S=P_{ij}\), the permutation matrix that swaps coordinates \(i\) and \(j\). Then
\[
P_{ij}\Wb_{pp}P_{ij}^{-1}=\Wb_{pp}
\]
implies that the \(i\)-th and \(j\)-th diagonal entries of \(\Wb_{pp}\) are equal. Since \(i,j\) are arbitrary, all diagonal entries coincide. Hence
\[
\Wb_{pp}=\beta \Ib_{N+1}
\]
for some \(\beta\in\RR\).

Putting together Steps 1--3, we conclude that
\[
\Wb=\operatorname{diag}(\alpha \Ib_d,\gamma,\beta \Ib_{N+1})
\]
for some scalars \(\alpha,\gamma,\beta\in\RR\).
\end{proof}

\begin{theorem}
\label{thm:reduced_k1_training_structure}
Let \((\Wb^{(t)})_{t\ge 0}\) be the gradient-descent iterates \eqref{eq:reduced_k1_gd} initialized at \(\Wb^{(0)}=\mathbf 0\). Then for every \(t\ge 0\), there exist scalars \(\alpha_t,\beta_t\in\RR\) such that
\begin{equation}
\label{eq:reduced_k1_training_structure}
\Wb^{(t)}=\operatorname{diag}(\alpha^{(t)} \Ib_d,0,\beta^{(t)} \Ib_{N+1}).
\end{equation}
Equivalently, the gradient-descent trajectory remains in the two-dimensional linear subspace
\[
\mathcal M
=
\left\{
\operatorname{diag}(\alpha \Ib_d,0,\beta \Ib_{N+1})
:\ \alpha,\beta\in\RR
\right\}.
\]
\end{theorem}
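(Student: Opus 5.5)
The proof combines the two structural facts already established. By Lemma~\ref{lem:reduced_k1_gd_preserves_fix}, every iterate satisfies $\Wb^{(t)}\in\mathrm{Fix}(\mathbb G)$. This lemma rests on three ingredients: the invariance of $\mathcal L$ under rotations, label flips and signed permutations (Lemmas~\ref{lem:reduced_k1_rotation_invariance}, \ref{lem:reduced_k1_label_invariance}, \ref{lem:reduced_k1_signed_perm_invariance}), the gradient equivariance of Lemma~\ref{lem:reduced_k1_gradient_equivariance}, and the zero initialization. Lemma~\ref{lem:reduced_k1_fix_characterization} then gives, for each $t$, scalars $\alpha^{(t)},\gamma^{(t)},\beta^{(t)}$ with
\[
\Wb^{(t)}=\operatorname{diag}\bigl(\alpha^{(t)}\Ib_d,\gamma^{(t)},\beta^{(t)}\Ib_{N+1}\bigr).
\]
It remains to show that $\gamma^{(t)}=0$ for all $t$.

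For this we use Lemma~\ref{lem:reduced_k1_zero_y_column_gradient}. Since the $y$-coordinate of the query token $\qb$ is zero, the $y$-th column of $\nabla_{\Wb}\ell$ equals $\Zb(\cdots)\qb_y=\mathbf 0$. Hence $(\Wb^{(t)})_{:,y}=\mathbf 0$ for every $t$ by induction from $\Wb^{(0)}=\mathbf 0$. The $(y,y)$ entry of $\Wb^{(t)}$ is exactly $\gamma^{(t)}$, and it lies in this column, so $\gamma^{(t)}=0$. This gives \eqref{eq:reduced_k1_training_structure}. The subspace formulation is immediate, because $\mathcal M$ is exactly the set of matrices of this form.

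The only point that needs care is that the conjugation-invariance lemmas hold for \emph{every} $\Wb$, not merely along the trajectory, since Lemma~\ref{lem:reduced_k1_gradient_equivariance} differentiates the identity $\mathcal L(\Wb)=\mathcal L(G\Wb G^{-1})$ in $\Wb$. The cited lemmas are stated for arbitrary $\Wb$, so this holds. We also need the population gradient to be well defined, i.e., that differentiation may be exchanged with expectation. This follows from dominated convergence: all tokens are bounded (unit-norm $\xb$, $|y|\le1$, one-hot $\pb$), and softmax weights lie in $[0,1]$. Hence the integrand in \eqref{eq:reduced_k1_population_gradient_formula} is bounded uniformly on compact sets of $\Wb$. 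Beyond this, the argument is a direct assembly of the earlier lemmas, and I expect no substantive obstacle.
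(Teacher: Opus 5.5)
Your proposal is correct and matches the paper's proof: every iterate lies in $\mathrm{Fix}(\mathbb G)$ by Lemma~\ref{lem:reduced_k1_gd_preserves_fix}, and Lemma~\ref{lem:reduced_k1_fix_characterization} then gives the block-diagonal form $\operatorname{diag}(\alpha^{(t)}\Ib_d,\gamma^{(t)},\beta^{(t)}\Ib_{N+1})$. The vanishing $y$-column from Lemma~\ref{lem:reduced_k1_zero_y_column_gradient} then forces $\gamma^{(t)}=0$. Your extra remark that differentiation and expectation can be exchanged, because the integrand is bounded, is a harmless supplement that the paper leaves implicit at this point.
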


\begin{proof}
By Lemma~\ref{lem:reduced_k1_gd_preserves_fix}, each iterate \(\Wb^{(t)}\) belongs to \(\mathrm{Fix}(\mathbb G)\). Therefore, by Lemma~\ref{lem:reduced_k1_fix_characterization}, there exist scalars \(\alpha^{(t)},\gamma^{(t)},\beta^{(t)}\) such that
\[
\Wb^{(t)}=\operatorname{diag}(\alpha^{(t)} \Ib_d,\gamma^{(t)},\beta^{(t)} \Ib_{N+1}).
\]
On the other hand, Lemma~\ref{lem:reduced_k1_zero_y_column_gradient} shows that the \(y\)-column of every iterate vanishes:
\[
(\Wb^{(t)})_{:,y}=\mathbf 0
\qquad\text{for all }t\ge 0.
\]
For a matrix of the form \(\operatorname{diag}(\alpha^{(t)} \Ib_d,\gamma^{(t)},\beta^{(t)} \Ib_{N+1})\), the only possibly nonzero entry in the \(y\)-th column is the diagonal entry \(\gamma^{(t)}\). Hence \((\Wb^{(t)})_{:,y}=\mathbf 0\) implies
\[
\gamma^{(t)}=0.
\]
Therefore
\[
\Wb^{(t)}=\operatorname{diag}(\alpha^{(t)} \Ib_d,0,\beta^{(t)} \Ib_{N+1})
\qquad\text{for all }t\ge 0,
\]
which proves the claim.
\end{proof}

\subsection{Parameter Divergence}
\label{sec:parameter divergence}

By Theorem~\ref{thm:reduced_k1_training_structure}, the population gradient-descent iterates remain in the two-dimensional manifold
\[
\mathcal M
=
\left\{
\diag(\alpha \Ib_d,0,\beta \Ib_{N+1})
:\ \alpha,\beta\in\RR
\right\}.
\]
Hence the matrix recursion for \(\Wb^{(t)}\) reduces exactly to a scalar recursion for the pair \((\alpha^{(t)},\beta^{(t)})\). In this subsection, we analyze this reduced two-parameter dynamical system and prove that
\[
\alpha^{(t)}\to+\infty,
\qquad
\beta^{(t)}\to-\infty.
\]
The analysis proceeds in three main steps:

\noindent\textbf{Step 1:}
We first rewrite the population loss as an exact expectation over the inner products
\[
U_i\coloneqq \xb_i^\top \xb_q,\qquad i\in[N],
\]
thereby integrating out the label randomness and the stochastic positional-encoding randomness. This yields a reduced objective \(\mathcal L(\alpha,\beta)\).

\noindent\textbf{Step 2:}
We show that the region
\[
[0,\infty)\times(-\infty,0]
\]
is forward invariant under gradient descent, and that throughout this region the \(\beta\)-gradient is strictly positive.

\noindent\textbf{Step 3:}
We then prove by contradiction that \(\beta^{(t)}\to -\infty\). Once \(\beta^{(t)}\to-\infty\) is known, we prove that \(\alpha^{(t)}\) cannot remain in any compact interval, and hence \(\alpha^{(t)}\to +\infty\).

Recall the notation
\[
\pi\coloneqq \pi(1),\qquad \Zb\coloneqq \Zb^{(0)},\qquad \qb\coloneqq \qb^{(0)},\qquad \hat{\qb}\coloneqq \hat{\qb}^{(1)}=\mathrm{TF}(\Zb;\Wb)_{:,-1}.
\]
Since all vectors lie on the unit sphere,
\[
\|\mathbf x_i-\mathbf x_q\|^2=2-2\,\mathbf x_i^\top \mathbf x_q,
\]
hence \(\pi\) is a nearest-neighbor index if and only if it maximizes \(U_i=\mathbf x_i^\top \mathbf x_q\). Conditional on \(\mathbf x_q\), the random variables \(U_i\) are i.i.d.\ with a continuous law on \((-1,1)\) when \(d\ge 2\); therefore \(\pi=\arg\max_{i\in[N]}U_i\) is unique almost surely. We work on this full-probability event from now on. 


\begin{lemma}
\label{lem:logits}
For every realization of \((\mathbf x_i,y_i,\mathbf p_i)_{i=1}^{N+1}\) and every \((\alpha,\beta)\in\mathbb R^2\), let
\[
\ab(\alpha,\beta)\coloneqq\mathrm{softmax}(\Zb^\top \Wb(\alpha,\beta)\qb)
=
(a_1(\alpha,\beta),\dots,a_N(\alpha,\beta),a_0(\alpha,\beta))^\top
\]
be the attention weights. Then
\[
a_i(\alpha,\beta)=\frac{e^{\alpha U_i}}{e^{\alpha+\beta}+\sum_{j=1}^N e^{\alpha U_j}},
\qquad
a_0(\alpha,\beta)=\frac{e^{\alpha+\beta}}{e^{\alpha+\beta}+\sum_{j=1}^N e^{\alpha U_j}}.
\]
Moreover,
\[
\hat{\qb}(\alpha,\beta)
=
\begin{bmatrix}
\mathbf x_q\\[2mm]
\sum_{i=1}^N a_i(\alpha,\beta)\,y_i\\[2mm]
\bigl(1+a_0(\alpha,\beta)\bigr)\mathbf p_{N+1}
+\sum_{i=1}^N a_i(\alpha,\beta)\,\mathbf p_i
\end{bmatrix}.
\]
\end{lemma}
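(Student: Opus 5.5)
The plan is a direct block computation, using the restricted form $\Wb=\diag(\alpha\Ib_d,0,\beta\Ib_{N+1})$ from Theorem~\ref{thm:reduced_k1_training_structure} and the fixed value matrix $\Vb=\diag(\mathbf 0,1,\Ib_{N+1})$. First compute the logit vector $\ub=\Zb^\top\Wb\qb$ column by column. For the query we have $\Wb\qb=[\alpha\xb_q^\top,\,0,\,\beta\pb_{N+1}^\top]^\top$, since the $y$-entry of $\qb$ is $0$ and, in any case, the $(y,y)$ entry of $\Wb$ vanishes. For a context column $\hb_i$ the logit is therefore $\alpha\xb_i^\top\xb_q+\beta\,\pb_i^\top\pb_{N+1}$. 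For the query column itself it is $\alpha\|\xb_q\|^2+\beta\|\pb_{N+1}\|^2$.

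The key fact is the orthonormality of the signed positional vectors. Because $\tau$ is a permutation, $\pb_i=s_ie_{\tau(i)}$ and $\pb_{N+1}=s_{N+1}e_{\tau(N+1)}$ have disjoint supports for $i\le N$. This gives $\pb_i^\top\pb_{N+1}=0$, while $\|\pb_{N+1}\|^2=s_{N+1}^2=1$. Together with $\|\xb_q\|=1$ on the sphere, the logits are $\alpha U_i$ for $i\in[N]$ and $\alpha+\beta$ for the query token. Applying the softmax formula then yields the stated expressions for $a_i$ and $a_0$.

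For $\hat\qb$, use $\hat\qb=\qb+\Vb\Zb\ab$. The $x$-rows of $\Vb$ are zero, so the $x$-block stays $\xb_q$. The $y$-row of $\Vb\Zb\ab$ is $\sum_i a_iy_i+a_0\cdot0$, and adding the zero $y$-entry of $\qb$ gives $\sum_ia_iy_i$. The $p$-block equals $\pb_{N+1}+\sum_{i=1}^N a_i\pb_i+a_0\pb_{N+1}=(1+a_0)\pb_{N+1}+\sum_ia_i\pb_i$.

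There is no real obstacle. The only point needing care is to make the orthogonality and the unit norms of the positional vectors explicit, and to note that these hold for every realization of $\omega$, not merely in expectation. This is what allows the lemma to be stated pointwise.
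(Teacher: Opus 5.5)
Your proposal is correct and follows essentially the same route as the paper. Both compute the logits blockwise from $\Wb=\diag(\alpha\Ib_d,0,\beta\Ib_{N+1})$, use orthonormality of the signed one-hot positional vectors and $\|\xb_q\|=1$ to get logits $\alpha U_i$ and $\alpha+\beta$, and then expand $\hat\qb=\qb+\Vb\Zb\ab$ blockwise via $\Vb\hb_i=[\mathbf 0^\top,y_i,\pb_i^\top]^\top$ and $\Vb\qb=[\mathbf 0^\top,0,\pb_{N+1}^\top]^\top$.
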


\begin{proof}
Since
\[
\mathbf W(\alpha,\beta)=\operatorname{diag}(\alpha \Ib_d,\,0,\,\beta \Ib_{N+1}),
\]
the label coordinate does not contribute to the score. For \(i\in[N]\),
\[
\mathbf h_i^\top \mathbf W(\alpha,\beta)\mathbf q^{(0)}
=
\alpha\,\mathbf x_i^\top \mathbf x_q+\beta\,\mathbf p_i^\top \mathbf p_{N+1}.
\]
Because \(\tau\) is a permutation, the vectors \(\mathbf p_1,\dots,\mathbf p_{N+1}\) are orthonormal up to signs, so \(\mathbf p_i^\top \mathbf p_{N+1}=0\) for \(i\le N\). Hence
\[
\mathbf h_i^\top \mathbf W(\alpha,\beta)\mathbf q^{(0)}=\alpha U_i.
\]
Similarly,
\[
\mathbf q^{(0)\top}\mathbf W(\alpha,\beta)\mathbf q^{(0)}
=
\alpha\|\mathbf x_q\|^2+\beta\|\mathbf p_{N+1}\|^2
=
\alpha+\beta.
\]
Substituting these scores into the softmax gives the stated formula for \(\ab(\alpha,\beta)\).

Next,
\[
\mathbf V\mathbf h_i=
\begin{bmatrix}
0\\ y_i\\ \mathbf p_i
\end{bmatrix},
\qquad
\mathbf V\mathbf q^{(0)}=
\begin{bmatrix}
0\\ 0\\ \mathbf p_{N+1}
\end{bmatrix}.
\]
Therefore
\[
\hat{\qb}(\alpha,\beta)
=
\mathbf q^{(0)}
+
\sum_{i=1}^N a_i(\alpha,\beta)\,\mathbf V\mathbf h_i
+
a_0(\alpha,\beta)\,\mathbf V\mathbf q^{(0)},
\]
which is exactly
\[
\hat{\qb}(\alpha,\beta)
=
\begin{bmatrix}
\mathbf x_q\\[2mm]
\sum_{i=1}^N a_i(\alpha,\beta)\,y_i\\[2mm]
\bigl(1+a_0(\alpha,\beta)\bigr)\mathbf p_{N+1}
+\sum_{i=1}^N a_i(\alpha,\beta)\,\mathbf p_i
\end{bmatrix}.
\]
\end{proof}

\begin{lemma}
\label{prop:population-reduction}
The sample loss satisfies
\[
\ell(\alpha,\beta)
=
\frac12\|\rb(\alpha,\beta)\|^2
=
\frac12\Bigl(\sum_{i=1}^N a_i y_i-y_\pi\Bigr)^2
+
\frac12\left\|
a_0\mathbf p_{N+1}+\sum_{i=1}^N a_i\mathbf p_i-\mathbf p_\pi
\right\|^2.
\]

Moreover, for every \((\alpha,\beta)\in\mathbb R^2\), let
\[
\mathcal G\coloneqq \sigma(\mathbf x_1,\dots,\mathbf x_N,\mathbf x_q,\pi),
\]
then
\[
\varphi(U_1,\dots,U_N;\alpha,\beta)\coloneqq\mathbb E\left[\ell(\alpha,\beta)\mid \mathcal G\right]
=
1+\sum_{i=1}^N a_i(\alpha,\beta)^2+\frac12\,a_0(\alpha,\beta)^2-2a_\pi(\alpha,\beta).
\]
Hence, the population loss is exactly
\[
\mathcal L(\alpha,\beta)=\mathbb E\bigl[\varphi(U_1,\dots,U_N;\alpha,\beta)\bigr].
\]
\end{lemma}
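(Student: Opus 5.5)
Start from Lemma~\ref{lem:logits}, which gives the explicit form of $\hat{\qb}(\alpha,\beta)$, and subtract the target $\qb^{(1)}$. The $x$-blocks cancel exactly. The $y$-coordinate of the residual is $\sum_{i}a_iy_i-y_\pi$. For the $p$-block, we have $(1+a_0)\pb_{N+1}+\sum_i a_i\pb_i-\pb_{N+1}-\pb_\pi=a_0\pb_{N+1}+\sum_i a_i\pb_i-\pb_\pi$. Taking half the squared norm then gives the first display.

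For the conditional expectation, we condition on $\mathcal G$. Given $\mathcal G$, the weights $a_i,a_0$ are deterministic, because they depend only on $U_1,\dots,U_N$ and $(\alpha,\beta)$; this uses $\pb_i^\top\pb_{N+1}=0$ and $\|\pb_{N+1}\|=1$, so the positional randomness does not enter the weights. We handle the two terms separately.

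\emph{Positional term.} The vectors $\pb_1,\dots,\pb_{N+1}$ are orthonormal for every realization of $\omega$, since $\pb_i=s_ie_{\tau(i)}$ with $\tau$ a permutation. Hence the squared norm equals $a_0^2+\sum_{i\ne\pi}a_i^2+(a_\pi-1)^2$ deterministically, with no expectation over $\omega$ needed.

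\emph{Label term.} Expand it as $\sum_{i,j}a_ia_jy_iy_j-2y_\pi\sum_ia_iy_i+y_\pi^2$. Use $y_i^2=1$. For the cross terms, use $\EE[y_iy_j\mid\mathcal G]=0$ for $i\ne j$. Assumption~\ref{ass:distribution} states this conditioning on $\xb_{1:q}$, and $\pi$ is a measurable function of $\xb_{1:q}$, so $\mathcal G=\sigma(\xb_{1:q})$ up to null sets and the identity transfers. We also use that $\omega$ is independent of the data. The label term then becomes $\sum_ia_i^2-2a_\pi+1$.

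Summing, $\tfrac12[(\sum_ia_i^2-2a_\pi+1)+(a_0^2+\sum_ia_i^2-2a_\pi+1)]=1+\sum_ia_i^2+\tfrac12a_0^2-2a_\pi$. This is the claimed $\varphi$; it is a function of $(U_1,\dots,U_N)$ because $\pi=\arg\max_iU_i$. The tower property then gives $\mathcal L(\alpha,\beta)=\EE[\varphi]$, since $\ell$ is bounded and the expectations are finite.

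No step is a real obstacle. The only points needing care are justifying that conditioning on $\pi$ in addition to $\xb_{1:q}$ does not change the label correlation identity, and noting that the label distribution is independent of $\omega$.
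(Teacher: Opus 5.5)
Your proposal is correct and follows the paper's proof essentially step for step. You read off the residual from Lemma~\ref{lem:logits}, evaluate the positional block for every realization of $\omega$ using exact orthonormality of the signed one-hot encodings, eliminate the label cross terms via $\mathbb E[y_iy_j\mid\mathcal G]=0$ for $i\neq j$ together with $y_i^2=1$, and conclude with the tower property. You also note that $\pi$ is a measurable function of $\xb_{1:q}$, so conditioning on $\mathcal G$ agrees with conditioning on $\xb_{1:q}$ and Assumption~\ref{ass:distribution} applies directly; the paper uses this point without comment.
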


\begin{proof}
By Lemma~\ref{lem:logits},
\[
\rb(\alpha,\beta)=\hat{\qb}(\alpha,\beta)-\qb^{(1)}
=
\begin{bmatrix}
0\\[1mm]
\sum_{i=1}^N a_i y_i-y_\pi\\[1mm]
a_0\mathbf p_{N+1}+\sum_{i=1}^N a_i\mathbf p_i-\mathbf p_\pi
\end{bmatrix}.
\]
The \(x\)-block is therefore identically zero.

For the positional block, write
\[
\mathbf r_p\coloneqq a_0\mathbf p_{N+1}+\sum_{i=1}^N a_i\mathbf p_i-\mathbf p_\pi.
\]
Since
\[
\mathbf p_i^\top \mathbf p_j
=
\delta_{ij}
\]
for every realization of \((\tau,s_1,\dots,s_{N+1})\), all cross terms vanish exactly. Hence
\[
\|\mathbf r_p\|^2
=
a_0^2+\sum_{i\neq\pi}a_i^2+(a_\pi-1)^2
=
a_0^2+\sum_{i=1}^N a_i^2+1-2a_\pi.
\]
For the label block, decompose
\[
\sum_{i=1}^N a_i y_i-y_\pi
=
\sum_{i\neq\pi} a_i y_i+(a_\pi-1)y_\pi.
\]
Therefore
\begin{align*}
\Bigl(\sum_{i=1}^N a_i y_i-y_\pi\Bigr)^2
=
\sum_{i\neq\pi} a_i^2+(a_\pi-1)^2 
+2\sum_{\substack{i<j\\ i,j\neq\pi}} a_i a_j y_i y_j
+2(a_\pi-1)\sum_{i\neq\pi} a_i y_i y_\pi.
\end{align*}
Conditional on \(\mathcal G\), by Assumption \ref{ass:distribution}, we have
\[
\mathbb E[y_i y_j\mid \mathcal G]=0\quad(i\neq j),\qquad
\mathbb E[y_i^2\mid \mathcal G]=1.
\]
Taking conditional expectation therefore yields
\[
\mathbb E\!\left[\Bigl(\sum_{i=1}^N a_i y_i-y_\pi\Bigr)^2\middle|\mathcal G\right]
=
\sum_{i\neq\pi} a_i^2+(a_\pi-1)^2
=
\sum_{i=1}^N a_i^2+1-2a_\pi.
\]
Combining the two block computations gives
\begin{align*}
\mathbb E[\ell(\alpha,\beta)\mid \mathcal G]
&=
\frac12\Bigl(\sum_{i=1}^N a_i^2+1-2a_\pi\Bigr)
+\frac12\Bigl(a_0^2+\sum_{i=1}^N a_i^2+1-2a_\pi\Bigr) \\
&=
1+\sum_{i=1}^N a_i^2+\frac12\,a_0^2-2a_\pi.
\end{align*}
The formula for \(\mathcal L\) now follows from the tower property:
\[
\mathcal L(\alpha,\beta)
=
\mathbb E\bigl[\mathbb E[\ell(\alpha,\beta)\mid \mathcal G]\bigr]
=
\mathbb E[\varphi(U_1,\dots,U_N;\alpha,\beta)].
\]
\end{proof}

\begin{lemma}
\label{lem:softmax}
Fix a realization of \(U_1,\dots,U_N\), and define
\[
\ub(\alpha,\beta)\coloneqq \bigl(\alpha U_1,\dots,\alpha U_N,\alpha+\beta\bigr)\in\mathbb R^{N+1},
\qquad
\ab(\alpha,\beta)\coloneqq (a_1,\dots,a_N,a_0)\coloneqq \mathrm{softmax}(\ub).
\]
Then for \(\theta\in\{\alpha,\beta\}\),
\[
\partial_\theta a_j
=
a_j\Bigl(\partial_\theta u_j-\sum_{k=0}^N a_k\,\partial_\theta u_k\Bigr).
\]
For \(\theta,\theta'\in\{\alpha,\beta\}\),
\[
\partial_{\theta'}\partial_\theta a_j
=
(\partial_{\theta'}a_j)\Bigl(\partial_\theta u_j-\sum_{k=0}^N a_k\,\partial_\theta u_k\Bigr)
-
a_j\sum_{k=0}^N (\partial_{\theta'}a_k)\,\partial_\theta u_k.
\]
Moreover, for every \(j\) and \(\theta\),
\[
|\partial_\theta a_j|\le 2a_j,
\qquad
|\partial_{\theta'}\partial_\theta a_j|\le 6a_j.
\]
Consequently, for the reduced sample loss \(\varphi\) of Lemma~\ref{prop:population-reduction},
\[
|\partial_\theta \varphi|\le 10,
\qquad
|\partial_{\theta'}\partial_\theta \varphi|\le 42
\qquad
(\theta,\theta'\in\{\alpha,\beta\}).
\]
\end{lemma}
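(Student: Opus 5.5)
The proof is a direct computation that uses only that the logit vector $\ub(\alpha,\beta)$ is \emph{linear} in $(\alpha,\beta)$ with bounded coefficients. Its partial derivatives are constants:
\[
\partial_\alpha u_i=U_i\in[-1,1],\quad \partial_\alpha u_0=1,\quad \partial_\beta u_i=0,\quad \partial_\beta u_0=1 .
\]
Hence $|\partial_\theta u_k|\le 1$ for all $k$ and $\theta$, and all second derivatives of $\ub$ vanish. The realization $U_1,\dots,U_N$ (and hence $\pi$) is held fixed throughout, so $\varphi$ is a smooth function of $(\alpha,\beta)$ alone.

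\textbf{Step 1 (first derivatives).} By the chain rule and the softmax Jacobian $\diag(\ab)-\ab\ab^\top$ used in Lemma~\ref{lem:reduced_k1_gradient}, we get $\partial_\theta a_j=\sum_k (\delta_{jk}a_j-a_ja_k)\partial_\theta u_k$. This is exactly the stated formula. Bounding the two terms separately gives
\[
|\partial_\theta a_j|\le a_j\Bigl(|\partial_\theta u_j|+\sum_k a_k|\partial_\theta u_k|\Bigr)\le a_j(1+1)=2a_j ,
\]
since $\sum_k a_k=1$.

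\textbf{Step 2 (second derivatives).} Differentiate the product $a_j\cdot(\partial_\theta u_j-\sum_k a_k\partial_\theta u_k)$ in $\theta'$. Since $\partial_{\theta'}\partial_\theta u_k=0$, only $a_j$ and the $a_k$ inside the sum get differentiated, which yields the displayed identity. For the bound, the first term is at most $2a_j\cdot 2=4a_j$. The second term is at most $a_j\sum_k|\partial_{\theta'}a_k|\le a_j\sum_k 2a_k=2a_j$. In total, $|\partial_{\theta'}\partial_\theta a_j|\le 6a_j$.

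\textbf{Step 3 (bounds on $\varphi$).} From Lemma~\ref{prop:population-reduction}, $\varphi=1+\sum_i a_i^2+\tfrac12 a_0^2-2a_\pi$. Therefore
\[
\partial_\theta\varphi=\sum_i 2a_i\partial_\theta a_i+a_0\partial_\theta a_0-2\partial_\theta a_\pi .
\]
Using Step 1 gives
\[
|\partial_\theta\varphi|\le 4\sum_i a_i^2+2a_0^2+4a_\pi\le 4+2+4=10 ,
\]
since $\sum_i a_i^2\le 1$ and $a_0,a_\pi\le 1$.

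Differentiating once more,
\[
\partial_{\theta'}\partial_\theta\varphi=\sum_i 2\bigl(\partial_{\theta'}a_i\,\partial_\theta a_i+a_i\,\partial_{\theta'}\partial_\theta a_i\bigr)+\partial_{\theta'}a_0\,\partial_\theta a_0+a_0\,\partial_{\theta'}\partial_\theta a_0-2\,\partial_{\theta'}\partial_\theta a_\pi .
\]
Steps 1--2 bound the three groups by $2(4+6)\sum_i a_i^2\le 20$, then $(4+6)a_0^2\le 10$, and finally $12a_\pi\le 12$. The total is $42$.

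No step is a genuine obstacle. The only point needing care is to use $\sum_k a_k=1$ and $\sum_i a_i^2\le 1$ rather than crude per-coordinate bounds, so that the constants come out independent of $N$.
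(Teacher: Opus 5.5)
Your proposal is correct and follows the paper's proof essentially step for step. You compute the constant logit derivatives with $|\partial_\theta u_k|\le 1$, apply the softmax Jacobian and the chain rule, and use affine-ness of $\ub$ for the second derivative. Then $\sum_k a_k=1$ gives the $2a_j$ and $6a_j$ bounds, and termwise bounding of $\partial_\theta\varphi$ and $\partial_{\theta'}\partial_\theta\varphi$ yields $4+2+4=10$ and $20+10+12=42$.
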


\begin{proof}
By the definition of \(\ub\),
\[
\partial_\alpha u_i=U_i,\quad i\in[N],\qquad \partial_\alpha u_0=1,
\]
and
\[
\partial_\beta u_i=0,\quad i\in[N],\qquad \partial_\beta u_0=1.
\]
Differentiating
\[
a_j=\frac{e^{u_j}}{\sum_{\ell=0}^N e^{u_\ell}}
\]
with respect to \(u_k\) gives
\[
\frac{\partial a_j}{\partial u_k}=a_j(\delta_{jk}-a_k).
\]
Applying the chain rule,
\[
\partial_\theta a_j
=
\sum_{k=0}^N \frac{\partial a_j}{\partial u_k}\,\partial_\theta u_k
=
\sum_{k=0}^N a_j(\delta_{jk}-a_k)\,\partial_\theta u_k
=
a_j\Bigl(\partial_\theta u_j-\sum_{k=0}^N a_k\,\partial_\theta u_k\Bigr).
\]

Differentiating once more and using that \(\partial_{\theta'}\partial_\theta u_j=0\) because \(\ub\) is affine in \((\alpha,\beta)\), we obtain
\[
\partial_{\theta'}\partial_\theta a_j
=
(\partial_{\theta'}a_j)\Bigl(\partial_\theta u_j-\sum_{k=0}^N a_k\,\partial_\theta u_k\Bigr)
-
a_j\sum_{k=0}^N (\partial_{\theta'}a_k)\,\partial_\theta u_k.
\]

Since \(|\partial_\theta u_j|\le 1\), we also have
\[
\left|\sum_{k=0}^N a_k\,\partial_\theta u_k\right|\le 1,
\]
hence
\[
\left|\partial_\theta u_j-\sum_{k=0}^N a_k\,\partial_\theta u_k\right|\le 2.
\]
Therefore
\[
|\partial_\theta a_j|
\le
a_j\cdot 2
=
2a_j.
\]
Using this in the second-derivative formula,
\[
|\partial_{\theta'}\partial_\theta a_j|
\le
|\partial_{\theta'}a_j|\cdot 2
+
a_j\sum_{k=0}^N |\partial_{\theta'}a_k|\,|\partial_\theta u_k|
\le
4a_j+a_j\sum_{k=0}^N 2a_k
=
6a_j.
\]

Now
\[
\varphi=1+\sum_{i=1}^N a_i^2+\frac12\,a_0^2-2a_\pi,
\]
so
\[
\partial_\theta \varphi
=
2\sum_{i=1}^N a_i\,\partial_\theta a_i
+
a_0\,\partial_\theta a_0
-
2\,\partial_\theta a_\pi.
\]
Using \(|\partial_\theta a_j|\le 2a_j\),
\[
|\partial_\theta \varphi|
\le
2\sum_{i=1}^N a_i\cdot 2a_i
+
a_0\cdot 2a_0
+
2\cdot 2a_\pi
\le
4\sum_{i=1}^N a_i^2+2a_0^2+4a_\pi
\le 10.
\]

Differentiating once more,
\[
\partial_{\theta'}\partial_\theta \varphi
=
2\sum_{i=1}^N\Bigl((\partial_{\theta'}a_i)(\partial_\theta a_i)
+a_i\,\partial_{\theta'}\partial_\theta a_i\Bigr)
+
(\partial_{\theta'}a_0)(\partial_\theta a_0)
+
a_0\,\partial_{\theta'}\partial_\theta a_0
-
2\,\partial_{\theta'}\partial_\theta a_\pi,
\]
hence
\begin{align*}
|\partial_{\theta'}\partial_\theta \varphi|
&\le
2\sum_{i=1}^N \bigl(4a_i^2+6a_i^2\bigr)
+
\bigl(4a_0^2+6a_0^2\bigr)
+
12a_\pi \\
&\le
20\sum_{i=1}^N a_i^2+10a_0^2+12a_\pi
\le 42.
\end{align*}
This proves the lemma.
\end{proof}

\begin{lemma}
\label{prop:gradients}
Define
\[
b_i(\alpha)\coloneqq \frac{e^{\alpha U_i}}{\sum_{j=1}^N e^{\alpha U_j}},
\qquad
Q(\alpha)\coloneqq \sum_{i=1}^N b_i(\alpha)^2,
\qquad
\mu(\alpha)\coloneqq \sum_{i=1}^N b_i(\alpha)U_i.
\]
Then
\[
\varphi(\alpha,\beta)
=
1+\bigl(1-a_0(\alpha,\beta)\bigr)^2Q(\alpha)
+\frac12\,a_0(\alpha,\beta)^2
-2\bigl(1-a_0(\alpha,\beta)\bigr)b_\pi(\alpha),
\]
\[
 \partial_\alpha b_\pi(\alpha)=b_\pi(\alpha)\bigl(U_\pi-\mu(\alpha)\bigr),
\qquad
Q'(\alpha)=2\sum_{i=1}^N b_i(\alpha)^2\bigl(U_i-\mu(\alpha)\bigr),
\]
and the reduced sample gradients are
\begin{equation}\label{eq:sample-beta-gradient}
\partial_\beta \varphi
=
a_0(\alpha,\beta)\bigl(1-a_0(\alpha,\beta)\bigr)
\Bigl(
a_0(\alpha,\beta)+2b_\pi(\alpha)-2\bigl(1-a_0(\alpha,\beta)\bigr)Q(\alpha)
\Bigr),
\end{equation}
\begin{align}\label{eq:sample-alpha-gradient}
\partial_\alpha \varphi
&=
a_0(\alpha,\beta)\bigl(1-a_0(\alpha,\beta)\bigr)\bigl(1-\mu(\alpha)\bigr)
\Bigl(
a_0(\alpha,\beta)+2b_\pi(\alpha)-2\bigl(1-a_0(\alpha,\beta)\bigr)Q(\alpha)
\Bigr)
\nonumber\\&+\bigl(1-a_0(\alpha,\beta)\bigr)^2Q'(\alpha)
-2\bigl(1-a_0(\alpha,\beta)\bigr)\partial_\alpha b_\pi(\alpha).
\end{align}
Moreover,
\[
\mathcal L\in C^2(\mathbb R^2),
\qquad
\partial_\theta \mathcal L=\mathbb E[\partial_\theta \varphi],
\qquad
\partial_{\theta'}\partial_\theta \mathcal L=\mathbb E[\partial_{\theta'}\partial_\theta \varphi]
\]
for \(\theta,\theta'\in\{\alpha,\beta\}\), and \(\nabla\mathcal L\) is globally \(84\)-Lipschitz on \(\mathbb R^2\).

Finally, along the invariant manifold, the full matrix gradient-descent recursion induces
\[
\alpha^{(t+1)}
=
\alpha^{(t)}-\frac{\eta}{d}\,
\mathbb E\!\left[\partial_\alpha \varphi\!\left(U_1,\dots,U_N;\alpha^{(t)},\beta^{(t)}\right)\right],
\]
\[
\beta^{(t+1)}
=
\beta^{(t)}-\frac{\eta}{N+1}\,
\mathbb E\!\left[\partial_\beta \varphi\!\left(U_1,\dots,U_N;\alpha^{(t)},\beta^{(t)}\right)\right].
\]
\end{lemma}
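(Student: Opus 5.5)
The plan is to treat the statement in four pieces: the rewriting of $\varphi$ in terms of $b_i$ and $a_0$, the sample gradient formulas, the regularity of $\mathcal L$ together with the Lipschitz bound, and the reduction of the matrix recursion to the scalar recursion.

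\textbf{Rewriting $\varphi$.} Set $S\coloneqq\sum_{j=1}^N e^{\alpha U_j}$. Then
\[
a_i=\frac{e^{\alpha U_i}}{e^{\alpha+\beta}+S}=(1-a_0)\,b_i(\alpha),
\]
since $1-a_0=S/(e^{\alpha+\beta}+S)$. Substituting this into Lemma~\ref{prop:population-reduction} gives $\sum_i a_i^2=(1-a_0)^2Q$ and $a_\pi=(1-a_0)b_\pi$, which is the stated form of $\varphi$.

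\textbf{Derivatives of the auxiliary quantities.} The identity $\partial_\alpha b_\pi=b_\pi(U_\pi-\mu)$ follows from the softmax-derivative formula of Lemma~\ref{lem:softmax}, applied to the softmax over the $N$ logits $\alpha U_i$. The same formula gives $Q'=\sum_i 2b_i\,b_i(U_i-\mu)$. For $a_0$, Lemma~\ref{lem:softmax} with $\partial_\beta u_0=1$ and $\partial_\beta u_i=0$ gives
\[
\partial_\beta a_0=a_0(1-a_0).
\]
With $\partial_\alpha u_0=1$ and $\partial_\alpha u_i=U_i$ it gives $\partial_\alpha a_0=a_0\bigl(1-a_0-\sum_i a_iU_i\bigr)=a_0(1-a_0)(1-\mu)$, using $\sum_i a_iU_i=(1-a_0)\mu$.

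\textbf{Sample gradients.} View $\varphi=F(a_0,\alpha)$ with $F(a,\alpha)\coloneqq 1+(1-a)^2Q+\tfrac12 a^2-2(1-a)b_\pi$. Then
\[
\partial_a F=-2(1-a)Q+a+2b_\pi.
\]
Hence $\partial_\beta\varphi=\partial_aF\cdot a_0(1-a_0)$, which is \eqref{eq:sample-beta-gradient}. Likewise
\[
\partial_\alpha\varphi=\partial_aF\cdot\partial_\alpha a_0+(1-a_0)^2Q'-2(1-a_0)\partial_\alpha b_\pi,
\]
which is \eqref{eq:sample-alpha-gradient}.

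\textbf{Regularity of $\mathcal L$.} Lemma~\ref{lem:softmax} bounds the first and second derivatives of $\varphi$ uniformly by $10$ and $42$, for every realization and every $(\alpha,\beta)$. Dominated convergence, together with the mean value theorem for the difference quotients, then justifies differentiating under $\mathbb E$ twice. Continuity of the second derivatives also follows by dominated convergence, so $\mathcal L\in C^2$. The Hessian entries are each bounded by $42$, so the Frobenius norm of the Hessian is at most $\sqrt{4\cdot 42^2}=84$. This gives the global $84$-Lipschitz bound on $\nabla\mathcal L$.

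\textbf{Reduction to the scalar recursion.} This is the step needing the most care. By Theorem~\ref{thm:reduced_k1_training_structure}, $\Wb^{(t)}=\diag(\alpha^{(t)}\Ib_d,0,\beta^{(t)}\Ib_{N+1})$. Define $g(\alpha,\beta)\coloneqq\mathcal L(\diag(\alpha\Ib_d,0,\beta\Ib_{N+1}))$. The chain rule gives
\[
\partial_\alpha g=\langle\nabla_{\Wb}\mathcal L,\diag(\Ib_d,0,0)\rangle_F.
\]
By Lemma~\ref{lem:reduced_k1_gd_preserves_fix}, the gradient at a fixed point is itself in $\mathrm{Fix}(\mathbb G)$. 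By Lemma~\ref{lem:reduced_k1_fix_characterization} it therefore has the form $\diag(c_\alpha\Ib_d,c_\gamma,c_\beta\Ib_{N+1})$, and $c_\gamma=0$ by Lemma~\ref{lem:reduced_k1_zero_y_column_gradient}. Taking the inner product gives $\partial_\alpha g=d\,c_\alpha$, and similarly $\partial_\beta g=(N+1)\,c_\beta$. Since $g$ is exactly the reduced $\mathcal L(\alpha,\beta)=\mathbb E[\varphi]$, the GD update $\alpha^{(t+1)}=\alpha^{(t)}-\eta c_\alpha$ becomes the stated recursion with factor $\eta/d$. The $\beta$ update follows in the same way with factor $\eta/(N+1)$.
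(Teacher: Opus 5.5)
Your proposal is correct and follows essentially the same route as the paper's proof: substitute $a_i=(1-a_0)b_i$, differentiate through $a_0$ and the context softmax, use the uniform bounds $10$ and $42$ from Lemma~\ref{lem:softmax} with dominated convergence to get $C^2$ regularity and the $84$-Lipschitz gradient, and apply the chain rule to the block-diagonal matrix gradient to obtain the factors $\eta/d$ and $\eta/(N+1)$. The differences are cosmetic. You compute $\partial_\alpha a_0$ and $\partial_\beta a_0$ from the softmax Jacobian, where the paper uses a log-odds identity. You also justify the block form of $\nabla_{\Wb}\mathcal L$ on $\mathcal M$, which the paper only asserts; note, though, that "the gradient at a $\mathbb G$-fixed point is itself fixed" comes from Lemma~\ref{lem:reduced_k1_gradient_equivariance}, not from the statement of Lemma~\ref{lem:reduced_k1_gd_preserves_fix}.
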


\begin{proof}
The identity
\[
a_i(\alpha,\beta)=\bigl(1-a_0(\alpha,\beta)\bigr)b_i(\alpha),
\qquad i\in[N],
\]
follows immediately from the definitions of \(a_i\), \(a_0\), and \(b_i\). Substituting this into the formula of Lemma~\ref{prop:population-reduction} gives
\[
\varphi(\alpha,\beta)
=
1+\bigl(1-a_0(\alpha,\beta)\bigr)^2Q(\alpha)
+\frac12\,a_0(\alpha,\beta)^2
-2\bigl(1-a_0(\alpha,\beta)\bigr)b_\pi(\alpha).
\]

Because
\[
\log\frac{a_0(\alpha,\beta)}{1-a_0(\alpha,\beta)}
=
(\alpha+\beta)-\log\!\left(\sum_{j=1}^N e^{\alpha U_j}\right),
\]
and
\[
\partial_\alpha \log\!\left(\sum_{j=1}^N e^{\alpha U_j}\right)
=
\frac{\sum_{j=1}^N U_j e^{\alpha U_j}}{\sum_{j=1}^N e^{\alpha U_j}}
=
\mu(\alpha),
\]
we obtain
\[
\partial_\beta \log\frac{a_0(\alpha,\beta)}{1-a_0(\alpha,\beta)}=1,
\qquad
\partial_\alpha \log\frac{a_0(\alpha,\beta)}{1-a_0(\alpha,\beta)}=1-\mu(\alpha).
\]
Since the derivative of the logistic map is \(u\mapsto u(1-u)\), this yields
\[
\partial_\beta a_0(\alpha,\beta)=a_0(\alpha,\beta)\bigl(1-a_0(\alpha,\beta)\bigr),
\]
\[
\partial_\alpha a_0(\alpha,\beta)=a_0(\alpha,\beta)\bigl(1-a_0(\alpha,\beta)\bigr)\bigl(1-\mu(\alpha)\bigr).
\]
Similarly,
\[
\partial_\alpha b_i(\alpha)
=
b_i(\alpha)\bigl(U_i-\mu(\alpha)\bigr).
\]
Therefore
\[
\partial_\alpha b_\pi(\alpha)=b_\pi(\alpha)\bigl(U_\pi-\mu(\alpha)\bigr),
\]
and
\[
Q'(\alpha)
=
2\sum_{i=1}^N b_i(\alpha)\,\partial_\alpha b_i(\alpha)
=
2\sum_{i=1}^N b_i(\alpha)^2\bigl(U_i-\mu(\alpha)\bigr).
\]
Now differentiate
\[
\varphi
=
1+\bigl(1-a_0(\alpha,\beta)\bigr)^2Q(\alpha)
+\frac12\,a_0(\alpha,\beta)^2
-2\bigl(1-a_0(\alpha,\beta)\bigr)b_\pi(\alpha).
\]
With respect to \(\beta\), only \(a_0(\alpha,\beta)\) varies, so
\[
\partial_\beta \varphi
=
\partial_\beta a_0(\alpha,\beta)
\Bigl(
a_0(\alpha,\beta)+2b_\pi(\alpha)-2\bigl(1-a_0(\alpha,\beta)\bigr)Q(\alpha)
\Bigr),
\]
which is exactly \eqref{eq:sample-beta-gradient}. With respect to \(\alpha\),
\[
\partial_\alpha \varphi
=
\partial_\alpha a_0(\alpha,\beta)
\Bigl(
a_0(\alpha,\beta)+2b_\pi(\alpha)-2\bigl(1-a_0(\alpha,\beta)\bigr)Q(\alpha)
\Bigr)
+\bigl(1-a_0(\alpha,\beta)\bigr)^2Q'(\alpha)
-2\bigl(1-a_0(\alpha,\beta)\bigr)\partial_\alpha b_\pi(\alpha),
\]
which is \eqref{eq:sample-alpha-gradient}.

By Lemma~\ref{lem:softmax}, \(|\partial_\theta\varphi|\le 10\) and \(|\partial_{\theta'}\partial_\theta \varphi|\le 42\) pointwise. These bounds are deterministic and integrable. Hence differentiation under the expectation is justified twice by dominated convergence, and \(\mathcal L\in C^2(\mathbb R^2)\) with
\[
\partial_\theta \mathcal L=\mathbb E[\partial_\theta\varphi],
\qquad
\partial_{\theta'}\partial_\theta\mathcal L=\mathbb E[\partial_{\theta'}\partial_\theta\varphi].
\]
Every Hessian entry of \(\mathcal L\) is bounded in absolute value by \(42\). Since the Hessian is a \(2\times 2\) matrix, its operator norm is bounded by \(84\); therefore \(\nabla\mathcal L\) is globally \(84\)-Lipschitz.

It remains to justify the displayed scalar recursion. Along the invariant manifold, the population matrix gradient has the block form
\[
\nabla_{\Wb} L(\Wb(\alpha,\beta))
=
\diag(g_\alpha \Ib_d,0,g_\beta \Ib_{N+1})
\]
for some scalars \(g_\alpha,g_\beta\). By the chain rule,
\[
\partial_\alpha\mathcal L(\alpha,\beta)
=
\left\langle
\nabla_{\Wb}L(\Wb(\alpha,\beta)),
\diag(\Ib_d,0,0)
\right\rangle_F
=
d\,g_\alpha,
\]
and
\[
\partial_\beta\mathcal L(\alpha,\beta)
=
\left\langle
\nabla_{\Wb}L(\Wb(\alpha,\beta)),
\diag(0,0,\Ib_{N+1})
\right\rangle_F
=
(N+1)\,g_\beta.
\]
Therefore the full matrix update
\[
\Wb^{(t+1)}
=
\Wb^{(t)}-\eta\nabla_{\Wb}L(\Wb^{(t)})
\]
induces
\[
\alpha^{(t+1)}
=
\alpha^{(t)}
-\frac{\eta}{d}\partial_\alpha\mathcal L(\alpha^{(t)},\beta^{(t)}),
\qquad
\beta^{(t+1)}
=
\beta^{(t)}
-\frac{\eta}{N+1}\partial_\beta\mathcal L(\alpha^{(t)},\beta^{(t)}).
\]
Combining this with the identities
\[
\partial_\theta \mathcal L
=
\mathbb E[\partial_\theta\varphi],
\qquad \theta\in\{\alpha,\beta\},
\]
proves the stated recursion.
\end{proof}

\begin{lemma}
\label{lem:M-lower-bound}
We have
\[
\mathbb E[U_\pi]\ge \frac{7e^{-2}}{18\sqrt{2\pi d}}>\frac{7}{432\sqrt d}.
\]
\end{lemma}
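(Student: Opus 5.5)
The plan is to reduce $\mathbb E[U_\pi]$ to a one-dimensional integral over the law of a single coordinate of a uniform point on $\mathbb S^{d-1}$, and then to bound that integral using ratios of Gamma functions.

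\textbf{Step 1: reduction to $\mathbb E[U^+]$.} Recall that $U_\pi=\max_{i\in[N]}U_i$. Conditional on $\xb_q$, the $U_i$ are i.i.d.; by rotational invariance their law does not depend on $\xb_q$ and is symmetric about $0$. I would use only $N\ge 2$ together with the pointwise bound
\[
\max_{i\in[N]}U_i\;\ge\;\max(U_1,U_2)\;\ge\;U_1\mathbf 1\{U_1>0\}+U_2\mathbf 1\{U_1\le 0\}.
\]
Taking expectations and using independence of $U_1,U_2$ and $\mathbb E[U_2]=0$ gives
\[
\mathbb E[U_\pi]\ge \mathbb E[U_1^+]+\mathbb P(U_1\le 0)\,\mathbb E[U_2]=\mathbb E[U_1^+].
\]
It therefore suffices to show $\mathbb E[U^+]\ge 7e^{-2}/(18\sqrt{2\pi d})$, where $U$ is distributed as the first coordinate of a uniform point on $\mathbb S^{d-1}$.

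\textbf{Step 2: exact computation.} For $d\ge 3$, $U$ has density
\[
c_d(1-u^2)^{(d-3)/2}\ \text{on } (-1,1),\qquad c_d=\frac{\Gamma(d/2)}{\sqrt\pi\,\Gamma((d-1)/2)}.
\]
The substitution $v=1-u^2$ gives $\int_0^1u(1-u^2)^{(d-3)/2}\,du=1/(d-1)$, hence $\mathbb E[U^+]=c_d/(d-1)$. For $d=2$, I would instead compute directly $\mathbb E[(\cos\theta)^+]=1/\pi$, which clearly exceeds the target bound.

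\textbf{Step 3: lower bound on $c_d$ (the main obstacle).} The delicate point is a lower bound of the form $\Gamma(x+\tfrac12)/\Gamma(x)\ge \kappa\sqrt{x}$, with $x=(d-1)/2$, valid for all $d\ge 3$ and with an explicit constant. I would first try Gautschi's inequality, $\Gamma(x+\tfrac12)/\Gamma(x)\ge \sqrt{x-\tfrac14}$ (or $\ge \sqrt{x}\cdot\sqrt{x/(x+1/2)}$ from log-convexity). This yields $c_d\ge \sqrt{(d-3/2)/(2\pi)}$, and hence
\[
\mathbb E[U^+]\;\ge\;\frac{\sqrt{d-3/2}}{\sqrt{2\pi}\,(d-1)}\;\ge\;\frac{\kappa'}{\sqrt{2\pi d}}
\]
for an absolute $\kappa'$ close to $1$ for large $d$.

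If Gautschi is not convenient, a fallback that matches the shape of the stated constant is to lower-bound the integral directly on $u\in[1/\sqrt d,2/\sqrt d]$. There one uses $(1-u^2)^{(d-3)/2}\ge e^{-2}$ (via $1-s\ge e^{-s-s^2}$ for small $s$, checking small $d$ separately), together with $c_d\ge \sqrt{d/(2\pi)}\cdot$(explicit factor). This produces a bound of the form $e^{-2}\cdot(\text{rational})/\sqrt{2\pi d}$ and presumably the factor $7/18$.

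In either route, the remaining work is checking that the constant is at least $7e^{-2}/18\approx 0.0526$ uniformly in $d\ge 2$. This has ample slack, since the true constant is near $1$. The final numeric inequality follows from $\sqrt{2\pi}<2.51$: indeed $7e^{-2}/(18\cdot 2.51)>7/432$.
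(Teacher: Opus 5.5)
Your proof is correct, and it takes a genuinely different route from the paper's.

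\textbf{The paper's route.} It writes $X_1=G_1/\|G\|$ and lower-bounds $\mathbb P(U_i\ge 1/(2\sqrt d))$ via the event $\{1\le G_1\le 2\}\cap\{S\le 4(d-1)\}$, using Chebyshev on the $\chi^2_{d-1}$ part. It then amplifies this to $U_\pi$ through $1-(1-p)^N\ge \tfrac32 p$. Finally it subtracts the negative contribution $\mathbb P(U_\pi<0)=2^{-N}$, which is the only place it needs $N\ge 100\sqrt d$. The constant $7e^{-2}/18$ is an artifact of this route.

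\textbf{Your route.} You dispose of the negative part exactly, via the pointwise bound $\max(U_1,U_2)\ge U_1\mathbf 1\{U_1>0\}+U_2\mathbf 1\{U_1\le 0\}$ together with $\mathbb E[U_2\mid \mathbf x_q,\mathbf x_1]=0$. This needs only $N\ge 2$. You then compute $\mathbb E[U^+]=c_d/(d-1)$ exactly through the Beta integral. The cost is a Gamma-ratio inequality in place of purely elementary probability. The gain is a constant about $19$ times larger and no dependence on the size of $N$.

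\textbf{On the Gamma-ratio step.} The bound $\Gamma(x+\tfrac12)/\Gamma(x)\ge\sqrt{x-\tfrac14}$ is a Kershaw/Watson-type refinement, not Gautschi's inequality itself. Gautschi applied after a shift gives $\sqrt{x-\tfrac12}$, which would also suffice. Your log-convexity alternative is the cleanest option. From $\Gamma(x+1)^2\le\Gamma(x+\tfrac12)\Gamma(x+\tfrac32)$ you get $\Gamma(x+\tfrac12)/\Gamma(x)\ge x/\sqrt{x+\tfrac12}$. With $x=(d-1)/2$ this gives $c_d\ge (d-1)/\sqrt{2\pi d}$, hence $\mathbb E[U^+]\ge 1/\sqrt{2\pi d}$ for every $d\ge 2$ at once. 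You therefore need neither the separate $d=2$ computation nor the fallback integral over $[1/\sqrt d,2/\sqrt d]$. This also makes explicit the uniform-in-$d$ constant that you left as ``close to $1$.''
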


\begin{proof}
Let \(G=(G_1,\dots,G_d)\sim N(0,\Ib_d)\) and \(X\coloneqq G/\|G\|\), so \(X\sim \operatorname{Unif}(\mathbb S^{d-1})\). By rotational invariance, the law of \(U_i=\mathbf x_i^\top \mathbf x_q\) is the same as the law of \(X_1\). Set
\[
S\coloneqq \sum_{j=2}^d G_j^2\sim \chi^2_{d-1},
\]
independent of \(G_1\). On the event
\[
E\coloneqq \{1\le G_1\le 2\}\cap\{S\le 4(d-1)\}
\]
we have
\[
X_1=\frac{G_1}{\sqrt{G_1^2+S}}
\ge
\frac{1}{\sqrt{4+4(d-1)}}
=
\frac{1}{2\sqrt d}.
\]
Therefore
\[
\mathbb P\!\left(U_i\ge \frac{1}{2\sqrt d}\right)
\ge \mathbb P(E).
\]
Now
\[
\mathbb P(1\le G_1\le 2)
=
\int_1^2 \frac{e^{-x^2/2}}{\sqrt{2\pi}}\,dx
\ge \frac{e^{-2}}{\sqrt{2\pi}},
\]
and Chebyshev's inequality gives
\[
\mathbb P\bigl(S>4(d-1)\bigr)
=
\mathbb P\bigl(S-(d-1)>3(d-1)\bigr)
\le
\frac{\operatorname{Var}(S)}{9(d-1)^2}
=
\frac{2(d-1)}{9(d-1)^2}
\le \frac29,
\]
since \(d\ge 2\). Hence
\[
\mathbb P(E)\ge \frac{e^{-2}}{\sqrt{2\pi}}\cdot \frac79.
\]
Thus
\[
\mathbb P\!\left(U_i\ge \frac{1}{2\sqrt d}\right)
\ge
\frac{7e^{-2}}{9\sqrt{2\pi}}.
\]
Since \(U_1,\ldots,U_N\) are conditionally independent given \(\mathbf x_q\), we further have
\[
\begin{aligned}
\mathbb P\!\left(U_\pi\ge \frac{1}{2\sqrt d}\right)
&=
1-
\mathbb P\!\left(
U_1<\frac{1}{2\sqrt d},\ldots,
U_N<\frac{1}{2\sqrt d}
\right) \\
&\ge
1-
\left(
1-\frac{7e^{-2}}{9\sqrt{2\pi}}
\right)^N .
\end{aligned}
\]
Since \(N\ge 100\sqrt d\ge 2\) and
\[
\frac{7e^{-2}}{9\sqrt{2\pi}}<\frac12,
\]
we get
\[
\begin{aligned}
\mathbb P\!\left(U_\pi\ge \frac{1}{2\sqrt d}\right)
&\ge
1-
\left(
1-\frac{7e^{-2}}{9\sqrt{2\pi}}
\right)^2  \\
&=
2\cdot \frac{7e^{-2}}{9\sqrt{2\pi}}
-
\left(
\frac{7e^{-2}}{9\sqrt{2\pi}}
\right)^2  \\
&\ge
\frac32\cdot \frac{7e^{-2}}{9\sqrt{2\pi}} .
\end{aligned}
\]

Now we lower bound the expectation while accounting for the possible negative contribution. Since \(U_\pi\in[-1,1]\),
\[
\begin{aligned}
\mathbb E[U_\pi]
&\ge
\frac{1}{2\sqrt d}
\mathbb P\!\left(U_\pi\ge \frac{1}{2\sqrt d}\right)
-
\mathbb P(U_\pi<0).
\end{aligned}
\]
Moreover, by symmetry and conditional independence,
\[
\mathbb P(U_\pi<0)
=
\mathbb P(U_1<0,\ldots,U_N<0)
=
2^{-N}.
\]
Using \(N\ge 100\sqrt d\), we have \(N\ge 4\), and hence \(2^N\ge N^2\). Therefore
\[
2^{-N}\le \frac1{N^2}\le \frac{1}{10000d}\le \frac{1}{10000\sqrt d}.
\]
On the other hand, since \(e^2<8\) and \(\sqrt{2\pi}<3\),
\[
\frac{7e^{-2}}{9\sqrt{2\pi}}
>
\frac{7}{9\cdot 8\cdot 3}
=
\frac{7}{216}
>
\frac1{100}.
\]
Thus
\[
2^{-N}
\le
\frac{1}{10000\sqrt d}
\le
\frac{1}{4\sqrt d}\cdot
\frac{7e^{-2}}{9\sqrt{2\pi}}.
\]
Combining the preceding estimates gives
\[
\begin{aligned}
\mathbb E[U_\pi]
&\ge
\frac{1}{2\sqrt d}
\cdot
\frac32\cdot
\frac{7e^{-2}}{9\sqrt{2\pi}}
-
\frac{1}{4\sqrt d}\cdot
\frac{7e^{-2}}{9\sqrt{2\pi}}  \\
&=
\frac{7e^{-2}}{18\sqrt{2\pi d}}.
\end{aligned}
\]
Finally, \(e^2<8\) and \(\sqrt{2\pi}<3\), so
\[
\frac{7e^{-2}}{18\sqrt{2\pi d}}
>
\frac{7}{18\cdot 8\cdot 3\sqrt d}
=
\frac{7}{432\sqrt d}.
\]
This proves the lemma.
\end{proof}

\begin{lemma}
\label{lem:alpha-boundary}
Assume \(N\ge 100\sqrt d\). Then for every \(\beta\le 0\),
\[
\partial_\alpha \mathcal L(0,\beta)\le -\frac{1}{(N+1)^2}.
\]
\end{lemma}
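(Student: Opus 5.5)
The plan is to evaluate the reduced sample gradient \eqref{eq:sample-alpha-gradient} of Lemma~\ref{prop:gradients} exactly at \(\alpha=0\), take expectations, and then use Lemma~\ref{lem:M-lower-bound} to show that the nearest-neighbor term dominates. The computation is explicit because at \(\alpha=0\) every quantity except \(U_\pi\) and \(\bar U:=\frac1N\sum_i U_i\) becomes deterministic.

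\textbf{Step 1: the quantities at \(\alpha=0\).} We have \(b_i(0)=1/N\) for all \(i\), so \(Q(0)=1/N\) and \(\mu(0)=\bar U\). Moreover
\[
Q'(0)=\frac{2}{N^2}\sum_i (U_i-\bar U)=0,
\qquad
\partial_\alpha b_\pi(0)=\frac1N(U_\pi-\bar U).
\]
The query weight \(a:=a_0(0,\beta)=e^\beta/(e^\beta+N)\) does not depend on the data. Since \(\beta\le 0\), it satisfies \(a\le 1/(N+1)\).

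\textbf{Step 2: the sample gradient.} Substituting Step 1 into \eqref{eq:sample-alpha-gradient}, the bracket becomes
\[
a+\frac2N-\frac{2(1-a)}{N}=a\Bigl(1+\frac2N\Bigr).
\]
Hence
\[
\partial_\alpha\varphi(0,\beta)=a^2(1-a)\Bigl(1+\frac2N\Bigr)(1-\bar U)-\frac{2(1-a)}{N}(U_\pi-\bar U).
\]

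\textbf{Step 3: take expectations.} Each \(U_i\) has the law of one coordinate of a uniform vector on \(\mathbb S^{d-1}\), which is symmetric, so \(\mathbb E[U_i]=0\) and therefore \(\mathbb E[\bar U]=0\). Using \(\partial_\alpha\mathcal L=\mathbb E[\partial_\alpha\varphi]\) from Lemma~\ref{prop:gradients}, we get
\[
\partial_\alpha\mathcal L(0,\beta)=(1-a)\Bigl[a^2\Bigl(1+\frac2N\Bigr)-\frac2N\,\mathbb E[U_\pi]\Bigr].
\]

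\textbf{Step 4: compare the two terms.} This is the only delicate step, and it is purely arithmetic.
\begin{itemize}
\item By Lemma~\ref{lem:M-lower-bound} and \(N\ge 100\sqrt d\), we have \(\mathbb E[U_\pi]>7/(432\sqrt d)\ge 700/(432N)\). Hence the negative term satisfies \(\frac2N\mathbb E[U_\pi]\ge 3.2/N^2\).
\item The positive term satisfies
\[
a^2\Bigl(1+\frac2N\Bigr)\le \frac{N+2}{N(N+1)^2}\le \frac1{N^2}.
\]
\item The bracket is therefore at most \(-2.2/N^2\).
\item The prefactor satisfies \(1-a\ge N/(N+1)\ge 1/2\).
\end{itemize}
Combining these gives \(\partial_\alpha\mathcal L(0,\beta)\le -1.1/N^2\le -1/(N+1)^2\). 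The point to double-check is that the constants leave enough slack; the estimates above suggest they leave a comfortable margin.
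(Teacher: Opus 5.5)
Your proposal is correct and follows the paper's proof essentially step for step: evaluate \eqref{eq:sample-alpha-gradient} at $\alpha=0$, use $\mathbb E[\bar U]=0$ to reduce to $a_0^2(1-a_0)(1+2/N)-\frac{2(1-a_0)}{N}\mathbb E[U_\pi]$, and beat the first term using Lemma~\ref{lem:M-lower-bound} together with $N\ge 100\sqrt d$. The only difference is how the constants are organized. You factor out $1-a\ge 1/2$ and compare against $N^{-2}$, whereas the paper bounds the two terms by $2/(N+1)^2$ and $3/(N+1)^2$ directly. Your arithmetic checks out.
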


\begin{proof}
At \(\alpha=0\), we have
\[
b_i(0)=\frac1N,\qquad b_\pi(0)=\frac1N,\qquad Q(0)=\frac1N,
\]
and
\[
\mu(0)=\frac1N\sum_{i=1}^N U_i=: \overline U.
\]
Also
\[
Q'(0)=2\sum_{i=1}^N \frac1{N^2}(U_i-\overline U)=0,
\qquad
\partial_\alpha b_\pi(0)=\frac1N(U_\pi-\overline U).
\]
Since \(\beta\le 0\),
\[
a_0(0,\beta)=\frac{1}{1+Ne^{-\beta}}\le \frac1{N+1}.
\]
Substituting these identities into \eqref{eq:sample-alpha-gradient} yields
\[
\partial_\alpha \varphi(0,\beta)
=
a_0(0,\beta)\bigl(1-a_0(0,\beta)\bigr)(1-\overline U)
\Bigl(
a_0(0,\beta)+\frac{2}{N}-\frac{2(1-a_0(0,\beta))}{N}
\Bigr)
-\frac{2(1-a_0(0,\beta))}{N}(U_\pi-\overline U).
\]
The bracket simplifies to \(a_0(0,\beta)(1+2/N)\), hence
\[
\partial_\alpha \varphi(0,\beta)
=
a_0(0,\beta)^2\bigl(1-a_0(0,\beta)\bigr)\Bigl(1+\frac2N\Bigr)(1-\overline U)
-\frac{2(1-a_0(0,\beta))}{N}(U_\pi-\overline U).
\]
Taking expectation and using \(\mathbb E[\overline U]=0\) gives
\[
\partial_\alpha \mathcal L(0,\beta)
=
a_0(0,\beta)^2\bigl(1-a_0(0,\beta)\bigr)\Bigl(1+\frac2N\Bigr)
-\frac{2(1-a_0(0,\beta))}{N}\,\mathbb E[U_\pi].
\]
Since \(a_0(0,\beta)\le \frac1{N+1}\),
\[
a_0(0,\beta)^2\bigl(1-a_0(0,\beta)\bigr)\Bigl(1+\frac2N\Bigr)
\le
\frac{1}{(N+1)^2}\Bigl(1+\frac2N\Bigr)
<\frac{2}{(N+1)^2},
\]
and
\[
\frac{2(1-a_0(0,\beta))}{N}\ge \frac{2}{N+1}.
\]
Therefore
\[
\partial_\alpha \mathcal L(0,\beta)
\le
\frac{2}{(N+1)^2}
-
\frac{2}{N+1}\,\mathbb E[U_\pi].
\]
By Lemma~\ref{lem:M-lower-bound},
\[
\mathbb E[U_\pi]>\frac{7}{432\sqrt d}.
\]
Since \(N\ge 100\sqrt d\), we have \(N+1\ge 100\sqrt d\), so
\[
\frac{2}{N+1}\,\mathbb E[U_\pi]
>
\frac{14}{432\sqrt d\,(N+1)}
\ge
\frac{1400}{432(N+1)^2}
>
\frac{3}{(N+1)^2}.
\]
Hence
\[
\partial_\alpha \mathcal L(0,\beta)
<
\frac{2}{(N+1)^2}-\frac{3}{(N+1)^2}
=
-\frac{1}{(N+1)^2}.
\]
\end{proof}

\begin{lemma}\label{prop:invariant}
Assume \(d\ge 2\), \(N\ge 100\sqrt d\), and
\[
0<\eta\le \frac{1}{840(N+1)^2}.
\]
Then, along the gradient-descent trajectory issued from \((0,0)\),
\[
\alpha^{(t)}\ge 0,\qquad \beta^{(t)}\le 0\qquad\text{for all }t\ge 0.
\]
Moreover, for every \((\alpha,\beta)\) with \(\alpha\ge 0\),
\[
\partial_\beta \mathcal L(\alpha,\beta)>0.
\]
Consequently \((\beta^{(t)})_{t\ge0}\) is strictly decreasing.
\end{lemma}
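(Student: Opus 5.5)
The plan is to prove the gradient sign claim first, pointwise in the sample, and then use it together with Lemma~\ref{lem:alpha-boundary} and the Lipschitz bound of Lemma~\ref{prop:gradients} to run an induction on $t$.

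\textbf{Step 1: the $\beta$-gradient is positive.} By \eqref{eq:sample-beta-gradient}, $\partial_\beta\varphi=a_0(1-a_0)\,\Psi$ with
\[
\Psi\coloneqq a_0+2b_\pi(\alpha)-2(1-a_0)Q(\alpha),
\]
and $a_0(1-a_0)>0$ for all finite $(\alpha,\beta)$. The key observation is that for $\alpha\ge0$ the index $\pi=\arg\max_i U_i$ also maximizes $b_i(\alpha)\propto e^{\alpha U_i}$, so $b_i(\alpha)\le b_\pi(\alpha)$ for every $i$. Since $\sum_i b_i=1$,
\[
Q(\alpha)=\sum_{i=1}^N b_i^2\le b_\pi\sum_{i=1}^N b_i=b_\pi .
\]
Therefore
\[
\Psi\ge a_0+2b_\pi-2(1-a_0)b_\pi=a_0(1+2b_\pi)>0 .
\]
So $\partial_\beta\varphi>0$ pointwise on the full-probability event where $\pi$ is unique. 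Taking expectations through $\partial_\beta\mathcal L=\mathbb E[\partial_\beta\varphi]$ (Lemma~\ref{prop:gradients}) gives $\partial_\beta\mathcal L(\alpha,\beta)>0$ for every $\alpha\ge0$ and every $\beta\in\RR$.

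\textbf{Step 2: induction for the invariant region.} The base case $(\alpha^{(0)},\beta^{(0)})=(0,0)$ lies in the region. Assume $\alpha^{(t)}\ge0$ and $\beta^{(t)}\le0$.
\begin{itemize}
\item For $\beta$: Step 1 gives $\partial_\beta\mathcal L(\alpha^{(t)},\beta^{(t)})>0$. By the scalar recursion in Lemma~\ref{prop:gradients}, $\beta^{(t+1)}<\beta^{(t)}\le0$. This also yields strict monotonicity of $(\beta^{(t)})$ once the induction closes.
\item For $\alpha$, split at the threshold $\alpha_*\coloneqq 1/(84(N+1)^2)$.
\begin{itemize}
\item If $0\le\alpha^{(t)}\le\alpha_*$: the global $84$-Lipschitz bound on $\nabla\mathcal L$ and Lemma~\ref{lem:alpha-boundary} (valid since $\beta^{(t)}\le0$) give
\[
\partial_\alpha\mathcal L(\alpha^{(t)},\beta^{(t)})\le-\frac{1}{(N+1)^2}+84\,\alpha^{(t)}\le0 .
\]
Hence $\alpha^{(t+1)}\ge\alpha^{(t)}\ge0$.
\item If $\alpha^{(t)}>\alpha_*$: use the crude bound $|\partial_\alpha\mathcal L|\le10$ from Lemma~\ref{lem:softmax}. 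With $d\ge2$ and $\eta\le 1/(840(N+1)^2)$,
\[
\alpha^{(t+1)}\ge\alpha^{(t)}-\frac{10\eta}{d}\ge\alpha_*-\frac{10}{1680(N+1)^2}=\frac{1}{84(N+1)^2}-\frac{1}{168(N+1)^2}>0 .
\]
\end{itemize}
\end{itemize}
This closes the induction.

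\textbf{Main obstacle.} The only genuinely nontrivial point is the sign of $\Psi$ in Step 1. At first sight $\Psi$ competes a positive term $a_0+2b_\pi$ against a negative term $-2(1-a_0)Q$. The comparison $Q\le b_\pi$, which uses $\alpha\ge0$ so that the argmax of the softmax coincides with the nearest neighbor, resolves this cleanly and pointwise, so no distributional estimate is needed there. The second step is then routine bookkeeping of constants. The stepsize condition $\eta\le 1/(840(N+1)^2)$ is exactly what makes the overshoot bound $10\eta/d$ smaller than the width $\alpha_*$ of the safe strip near $\alpha=0$.
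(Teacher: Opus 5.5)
Your proof is correct and follows the paper's argument essentially step for step: you use the pointwise bound $Q\le b_\pi$ (valid for $\alpha\ge 0$) to sign $\partial_\beta\varphi$, and then run an induction split at $\alpha=1/(84(N+1)^2)$, using Lemma~\ref{lem:alpha-boundary} near the boundary and the crude bound $|\partial_\alpha\mathcal L|\le 10$ away from it. The only difference is cosmetic: you use the global $84$-Lipschitz constant of $\nabla\mathcal L$, which gives $\partial_\alpha\mathcal L\le 0$, while the paper uses $|\partial_{\alpha\alpha}\mathcal L|\le 42$, which gives $\partial_\alpha\mathcal L\le -1/(2(N+1)^2)$; either bound suffices for $\alpha^{(t+1)}\ge\alpha^{(t)}$.
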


\begin{proof}
We first prove positivity of \(\partial_\beta\mathcal L\) on \(\{\alpha\ge 0\}\). Fix a realization of \(U_1,\dots,U_N\) and \(\alpha\ge 0\). Since \(\pi\) indexes the largest \(U_i\), the map \(u\mapsto e^{\alpha u}\) is nondecreasing, and therefore
\[
b_\pi(\alpha)=\max_{1\le i\le N} b_i(\alpha).
\]
Hence
\[
Q(\alpha)=\sum_{i=1}^N b_i(\alpha)^2
\le
b_\pi(\alpha)\sum_{i=1}^N b_i(\alpha)
=
b_\pi(\alpha).
\]
Using \eqref{eq:sample-beta-gradient},
\[
\partial_\beta \varphi
=
a_0(\alpha,\beta)\bigl(1-a_0(\alpha,\beta)\bigr)
\Bigl(
a_0(\alpha,\beta)+2b_\pi(\alpha)-2\bigl(1-a_0(\alpha,\beta)\bigr)Q(\alpha)
\Bigr)
\]
\[
=
a_0(\alpha,\beta)\bigl(1-a_0(\alpha,\beta)\bigr)
\Bigl(
a_0(\alpha,\beta)+2\bigl(b_\pi(\alpha)-Q(\alpha)\bigr)+2a_0(\alpha,\beta)Q(\alpha)
\Bigr).
\]
All terms inside the last parentheses are nonnegative, and \(a_0(\alpha,\beta)\in(0,1)\), so
\[
\partial_\beta \varphi>0.
\]
Taking expectations gives \(\partial_\beta \mathcal L(\alpha,\beta)>0\) for every \(\alpha\ge 0\).

We now prove forward invariance. Set
\[
\delta\coloneqq \frac{1}{84(N+1)^2}.
\]
By Lemma~\ref{prop:gradients}, \(|\partial_\alpha \mathcal L|\le 10\) and \(|\partial_{\alpha\alpha}\mathcal L|\le 42\) everywhere. We argue by induction on \(t\).

At \(t=0\) we have \(\alpha^{(0)}=0\) and \(\beta^{(0)}=0\). Assume that
\[
\alpha^{(t)}\ge 0,\qquad \beta^{(t)}\le 0.
\]
If \(0\le \alpha^{(t)}\le \delta\), then by the mean-value theorem and Lemma~\ref{lem:alpha-boundary},
\[
\partial_\alpha \mathcal L(\alpha^{(t)},\beta^{(t)})
\le
\partial_\alpha \mathcal L(0,\beta^{(t)})+42\,\alpha^{(t)}
\le
-\frac{1}{(N+1)^2}+\frac{42}{84(N+1)^2}
=
-\frac{1}{2(N+1)^2}
<0.
\]
Hence
\[
\alpha^{(t+1)}
=
\alpha^{(t)}-\frac{\eta}{d}\,\partial_\alpha \mathcal L(\alpha^{(t)},\beta^{(t)})
>
\alpha^{(t)}
\ge 0.
\]

If \(\alpha^{(t)}\ge \delta\), then
\[
\alpha^{(t+1)}
\ge
\alpha^{(t)}-\frac{\eta}{d}\,|\partial_\alpha \mathcal L(\alpha^{(t)},\beta^{(t)})|
\ge
\alpha^{(t)}-\frac{10\eta}{d}
\ge
\delta-\frac{10}{840d(N+1)^2}
\ge
0.
\]
Thus \(\alpha^{(t+1)}\ge 0\) in all cases.

Since \(\alpha^{(t)}\ge 0\), the already-proved positivity of \(\partial_\beta \mathcal L\) applies, and therefore
\[
\beta^{(t+1)}
=
\beta^{(t)}-\frac{\eta}{N+1}\,\partial_\beta \mathcal L(\alpha^{(t)},\beta^{(t)})
<
\beta^{(t)}
\le 0.
\]
This closes the induction and proves
\[
\alpha^{(t)}\ge 0,\qquad \beta^{(t)}\le 0
\]
for every \(t\), together with the strict monotonic decrease of \(\beta^{(t)}\).
\end{proof}

\begin{lemma}
\label{prop:descent}
Under the assumptions of Theorem~\ref{thm:training convergence}, if
\[
\theta_t\coloneqq (\alpha^{(t)},\beta^{(t)}),
\qquad
L_t\coloneqq \mathcal L(\theta_t),
\]
then
\[
L_{t+1}
\le
L_t
-\frac{\eta}{2}
\left(
\frac{1}{d}\bigl(\partial_\alpha\mathcal L(\theta_t)\bigr)^2
+
\frac{1}{N+1}\bigl(\partial_\beta\mathcal L(\theta_t)\bigr)^2
\right)
\qquad\text{for all }t\ge 0.
\]
Consequently,
\[
\sum_{t=0}^\infty
\left(
\frac{1}{d}\bigl(\partial_\alpha\mathcal L(\theta_t)\bigr)^2
+
\frac{1}{N+1}\bigl(\partial_\beta\mathcal L(\theta_t)\bigr)^2
\right)
<\infty,
\qquad
\|\nabla\mathcal L(\theta_t)\|\to 0,
\]
and
\[
|\alpha^{(t+1)}-\alpha^{(t)}|\le \frac{10\eta}{d}\le 10\eta<1
\qquad\text{for all }t.
\]
Moreover, the trajectory \((\theta_t)_{t\ge0}\) is unbounded in \(\mathbb R^2\).
\end{lemma}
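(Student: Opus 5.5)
The plan is to treat the scalar recursion of Lemma~\ref{prop:gradients} as gradient descent on \(\mathcal L\) with respect to a weighted (preconditioned) step. Write \(\theta_{t+1}=\theta_t-\eta P\nabla\mathcal L(\theta_t)\) with \(P=\diag(1/d,1/(N+1))\). Since \(\nabla\mathcal L\) is globally \(84\)-Lipschitz (Lemma~\ref{prop:gradients}), the descent lemma gives
\[
L_{t+1}\le L_t-\eta\langle\nabla\mathcal L(\theta_t),P\nabla\mathcal L(\theta_t)\rangle+42\eta^2\|P\nabla\mathcal L(\theta_t)\|^2 .
\]
Because every entry of \(P\) is at most \(1/2\), we have \(\|P g\|^2\le \tfrac12\langle g,Pg\rangle\). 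Therefore the quadratic term is at most \(21\eta^2\langle g,Pg\rangle\). With \(\eta\le 1/(840(N+1)^2)\) this is far below \(\tfrac{\eta}{2}\langle g,Pg\rangle\), and the stated inequality follows.

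Summing this inequality telescopically, and using \(L_t\ge 0\) (it is an expected squared norm), gives
\[
\sum_t\langle\nabla\mathcal L(\theta_t),P\nabla\mathcal L(\theta_t)\rangle\le 2L_0/\eta<\infty .
\]
Hence this summand tends to zero. Since \(P\) has entries bounded below by \(1/\max(d,N+1)>0\), we get \(\|\nabla\mathcal L(\theta_t)\|\to0\). The step bound \(|\alpha^{(t+1)}-\alpha^{(t)}|\le 10\eta/d\) follows directly from \(|\partial_\alpha\varphi|\le10\) (Lemma~\ref{lem:softmax}) after taking expectations. Finally \(10\eta<1\) follows from the bound on \(\eta\).

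For unboundedness I argue by contradiction. Suppose \((\theta_t)\) stays in a compact set. By Lemma~\ref{prop:invariant}, \(\beta^{(t)}\) is strictly decreasing, so it converges to some finite \(\beta_\ast\le0\). Also \(\alpha^{(t)}\in[0,A]\) for some \(A\). Take any limit point \((\alpha_\ast,\beta_\ast)\) of the trajectory. Continuity of \(\nabla\mathcal L\) together with \(\|\nabla\mathcal L(\theta_t)\|\to0\) gives \(\nabla\mathcal L(\alpha_\ast,\beta_\ast)=0\). But \(\alpha_\ast\ge0\), and Lemma~\ref{prop:invariant} shows \(\partial_\beta\mathcal L(\alpha,\beta)>0\) for every \(\alpha\ge0\). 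This contradicts \(\partial_\beta\mathcal L(\alpha_\ast,\beta_\ast)=0\), so the trajectory is unbounded.

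The main point requiring care is that this positivity is strict pointwise and not merely in expectation. It holds because \(a_0\in(0,1)\) and the bracket is at least \(a_0>0\) for every finite \((\alpha,\beta)\). Hence its expectation is strictly positive at the limit point, which is exactly what the contradiction needs.
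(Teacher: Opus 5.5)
Your proposal is correct and follows essentially the same route as the paper. Both arguments combine four steps: the descent lemma with the global $84$-Lipschitz bound on $\nabla\mathcal L$, telescoping using $\mathcal L\ge 0$, the uniform bound $|\partial_\alpha\mathcal L|\le 10$ for the step size, and a compactness argument for unboundedness that contradicts $\partial_\beta\mathcal L>0$ on $\{\alpha\ge 0\}$. The differences are cosmetic: your preconditioner $P$ with the sharper estimate $\|Pg\|^2\le\tfrac12\langle g,Pg\rangle$ (the paper simply uses $1/d^2\le 1/d$ and $42\eta\le\tfrac12$), and your appeal to the monotonicity of $\beta^{(t)}$, which is harmless but unnecessary since any subsequential limit of the bounded trajectory suffices.
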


\begin{proof}
By Lemma~\ref{prop:gradients}, \(\nabla\mathcal L\) is globally \(84\)-Lipschitz. The induced scalar recursion gives
\[
\theta_{t+1}
=
\left(
\alpha^{(t)}-\frac{\eta}{d}\partial_\alpha\mathcal L(\theta_t),
\,
\beta^{(t)}-\frac{\eta}{N+1}\partial_\beta\mathcal L(\theta_t)
\right).
\]
The descent lemma therefore gives
\begin{align*}
L_{t+1}
&\le
L_t
-\eta
\left(
\frac{1}{d}\bigl(\partial_\alpha\mathcal L(\theta_t)\bigr)^2
+
\frac{1}{N+1}\bigl(\partial_\beta\mathcal L(\theta_t)\bigr)^2
\right) \\
&\quad
+\frac{84\eta^2}{2}
\left(
\frac{1}{d^2}\bigl(\partial_\alpha\mathcal L(\theta_t)\bigr)^2
+
\frac{1}{(N+1)^2}\bigl(\partial_\beta\mathcal L(\theta_t)\bigr)^2
\right).
\end{align*}
Since \(d\ge 1\) and \(N+1\ge 1\),
\[
\frac{1}{d^2}\bigl(\partial_\alpha\mathcal L(\theta_t)\bigr)^2
+
\frac{1}{(N+1)^2}\bigl(\partial_\beta\mathcal L(\theta_t)\bigr)^2
\le
\frac{1}{d}\bigl(\partial_\alpha\mathcal L(\theta_t)\bigr)^2
+
\frac{1}{N+1}\bigl(\partial_\beta\mathcal L(\theta_t)\bigr)^2.
\]
Also \(\eta\le \frac{1}{840(N+1)^2}<\frac1{84}\), so
\[
1-\frac{84\eta}{2}\ge \frac12.
\]
Hence
\[
L_{t+1}
\le
L_t
-\frac{\eta}{2}
\left(
\frac{1}{d}\bigl(\partial_\alpha\mathcal L(\theta_t)\bigr)^2
+
\frac{1}{N+1}\bigl(\partial_\beta\mathcal L(\theta_t)\bigr)^2
\right).
\]
Because \(\mathcal L\ge 0\), summing over \(t\) yields
\[
\sum_{t=0}^\infty
\left(
\frac{1}{d}\bigl(\partial_\alpha\mathcal L(\theta_t)\bigr)^2
+
\frac{1}{N+1}\bigl(\partial_\beta\mathcal L(\theta_t)\bigr)^2
\right)
\le
\frac{2L_0}{\eta}
<\infty.
\]
Since \(d\) and \(N\) are fixed finite constants, this implies
\[
\|\nabla\mathcal L(\theta_t)\|\to0.
\]

The bound \(|\partial_\alpha \mathcal L|\le 10\) from Lemma~\ref{prop:gradients} gives
\[
|\alpha^{(t+1)}-\alpha^{(t)}|
=
\frac{\eta}{d}\,|\partial_\alpha \mathcal L(\theta_t)|
\le \frac{10\eta}{d}
\le 10\eta<1.
\]

It remains to exclude bounded trajectories. Suppose that \((\theta_t)\) were bounded. By Lemma~\ref{prop:invariant}, every \(\theta_t\) belongs to the closed half-strip
\[
[0,\infty)\times(-\infty,0].
\]
A bounded sequence in \(\mathbb R^2\) has a convergent subsequence, so there exist \(t_k\to\infty\) and \(\bar\theta=(\bar\alpha,\bar\beta)\in [0,\infty)\times(-\infty,0]\) such that
\[
\theta_{t_k}\to \bar\theta.
\]
Since \(\|\nabla\mathcal L(\theta_t)\|\to 0\) and \(\nabla\mathcal L\) is continuous,
\[
\nabla\mathcal L(\bar\theta)=0.
\]
But Lemma~\ref{prop:invariant} states that \(\partial_\beta\mathcal L(\alpha,\beta)>0\) for every \(\alpha\ge 0\), which contradicts \(\partial_\beta\mathcal L(\bar\theta)=0\). Hence the trajectory is unbounded.
\end{proof}

\begin{lemma}
\label{prop:beta-divergence}
Under the assumptions of Theorem~\ref{thm:training convergence},
\[
\beta^{(t)}\to -\infty\qquad\text{as }t\to\infty.
\]
\end{lemma}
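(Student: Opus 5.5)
Since $\beta^{(t)}$ is strictly decreasing by Lemma~\ref{prop:invariant}, it has a limit $\beta^\ast\in[-\infty,0)$. We argue by contradiction and suppose $\beta^\ast>-\infty$. Then $\beta^{(t)}\in[\beta^\ast,0]$ for all $t$. By Lemma~\ref{prop:descent} the trajectory is unbounded, so necessarily $\limsup_t\alpha^{(t)}=+\infty$. Since the $\alpha$-increments are at most $10\eta<1$, the sequence $\alpha^{(t)}$ passes through every large value. The plan is to show that, for $\alpha$ large and $\beta$ in the compact interval $[\beta^\ast,0]$, the gradient $\partial_\beta\mathcal L$ is bounded below by a summable-incompatible quantity along the times the trajectory spends there. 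This will contradict $\sum_t(\beta^{(t)}-\beta^{(t+1)})=\beta^{(0)}-\beta^\ast<\infty$.

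The key estimate comes from \eqref{eq:sample-beta-gradient}, which for $\alpha\ge0$ gives
\[
\partial_\beta\varphi\ge a_0(1-a_0)\,a_0=a_0^2(1-a_0).
\]
For $\beta\in[\beta^\ast,0]$ we have $a_0=e^{\alpha+\beta}/(e^{\alpha+\beta}+\sum_je^{\alpha U_j})$. On the event $\{U_\pi\le 1-\epsilon\}$, which has probability $p_\epsilon>0$ independent of $\alpha$, the query logit $\alpha+\beta$ exceeds every context logit by at least $\alpha\epsilon+\beta^\ast$. Hence $a_0\to1$ as $\alpha\to\infty$, so $a_0^2(1-a_0)$ is small, which is not what we want. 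The lower bound must therefore be extracted from the intermediate regime. I will instead look at an intermediate event where $U_\pi$ is close to $1$: on $\{1-U_\pi\in[c/\alpha,2c/\alpha]\}$ the ratio $a_0/a_\pi$ is bounded above and below uniformly in $\alpha$ and in $\beta\in[\beta^\ast,0]$. There, $a_0$ stays in a fixed compact subset of $(0,1)$ (up to the presence of the other $N-1$ terms, which only shrink $a_0$ by a bounded factor). The probability of this event is of order $\alpha^{-(d-1)/2}$, using the density of $U_\pi$ near $1$, which behaves like $(1-u)^{(d-3)/2}$. This gives $\partial_\beta\mathcal L(\alpha,\beta)\ge c_0\,\alpha^{-(d-1)/2}$ for large $\alpha$ and $\beta\in[\beta^\ast,0]$.

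Next I need that $\alpha^{(t)}$ does not grow too fast, so that $\sum_t(\alpha^{(t)})^{-(d-1)/2}=\infty$. Since $|\alpha^{(t+1)}-\alpha^{(t)}|\le 10\eta$, we have $\alpha^{(t)}\le 10\eta t$. For $d\le 3$ this gives divergence immediately, since $\sum_t t^{-(d-1)/2}=\infty$. For larger $d$ the crude bound fails, and this is the main obstacle. There I would refine the argument by comparing the $\alpha$-gradient with the $\beta$-gradient. When $\beta$ is bounded and $\alpha$ is large, the dominant loss contribution is the $\frac12a_0^2$ term, because $a_0\to1$ on most of the sample space. The loss is then close to $\frac12$, while the $1$-NN configuration has loss near $0$. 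More usefully, $\partial_\alpha\varphi$ is governed by the same boundary-layer factor $a_0(1-a_0)$, so $|\partial_\alpha\mathcal L|\lesssim C\,\alpha^{-(d-1)/2}\cdot$(bounded) plus exponentially small terms. Hence $\alpha^{(t+1)}-\alpha^{(t)}\le C'(\eta/d)\,\alpha^{-(d-1)/2}$, and so $\alpha^{(t)}\lesssim t^{2/(d+1)}$. Then $\sum_t(\alpha^{(t)})^{-(d-1)/2}\gtrsim\sum_t t^{-(d-1)/(d+1)}=\infty$.

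Combining these, once $\alpha^{(t)}$ is large (for all large $t$, using the upper bound on the $\alpha$-drift and unboundedness), we get
\[
\beta^{(t)}-\beta^{(t+1)}\ge\frac{\eta}{N+1}c_0(\alpha^{(t)})^{-(d-1)/2},
\]
whose sum diverges. This contradicts $\beta^\ast>-\infty$. A simpler alternative, if the boundary-layer bookkeeping is messy, is a compactness argument. Consider the limit $\alpha\to\infty$ with $\beta\in[\beta^\ast,0]$, observe that $\alpha\,\partial_\alpha\mathcal L$ and $\alpha\,\partial_\beta\mathcal L$ are comparable there, and deduce that $\beta$ decreases by an amount comparable to the increase of $\log\alpha$ or a power of $\alpha$. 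In either route the critical step is the matched two-sided estimate of $\partial_\alpha\mathcal L$ and $\partial_\beta\mathcal L$ through the near-$U_\pi=1$ boundary layer.
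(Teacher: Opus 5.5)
Your argument works after one repair, but it takes a genuinely different and much heavier route than the paper's.

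\textbf{The paper's route.} The paper argues through the loss rather than the gradient. Suppose $\beta^{(t)}\to\beta_\infty>-\infty$. Unboundedness of the trajectory (Lemma~\ref{prop:descent}) and $\alpha^{(t)}\ge0$ give a subsequence with $\alpha^{(t_k)}\to+\infty$. Since $U_i<1$ a.s., the query logit beats each context logit by $\alpha(1-U_i)+\beta\to+\infty$. So $a_0\to1$ and $a_i\to0$ a.s., hence $\varphi\to1+\tfrac12=\tfrac32$. By dominated convergence $\mathcal L(\alpha^{(t_k)},\beta^{(t_k)})\to\tfrac32$. This contradicts $\mathcal L(\alpha^{(t)},\beta^{(t)})\le L_0=1-(N+\tfrac32)/(N+1)^2<1$ from the descent inequality.

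You had exactly this observation (``$a_0\to1$ on most of the sample space'') but set it aside, partly because you evaluated the limiting loss as $\tfrac12$. You dropped the constant $1$ in $\varphi=1+\sum_i a_i^2+\tfrac12a_0^2-2a_\pi$. With $\tfrac12$ there would indeed be no contradiction with $L_0<1$; with $\tfrac32$ there is.

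\textbf{Your route.} Your quantitative argument is sound.
\begin{itemize}
\item On $\{1-U_\pi\in[c/\alpha,2c/\alpha]\}$ one has $e^{c+\beta^\ast}/N\le a_0/(1-a_0)\le e^{2c}$ uniformly in $\beta\in[\beta^\ast,0]$. This gives $\partial_\beta\mathcal L\gtrsim\alpha^{-(d-1)/2}$.
\item For the $\alpha$-bound, the right justification is not that every term carries $a_0(1-a_0)$. It is that every term of $\partial_\alpha\varphi$ carries a factor $1-a_0\le Ne^{-\beta^\ast}e^{-\alpha(1-U_\pi)}$. Then $|\partial_\alpha\mathcal L|\lesssim\alpha^{-(d-1)/2}$, so $\alpha^{(t)}\lesssim t^{2/(d+1)}$ and $\sum_t(\alpha^{(t)})^{-(d-1)/2}=\infty$.
\end{itemize}

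\textbf{The repair.} Your decrement bound is proved only when $\alpha^{(t)}$ is large. Unboundedness gives only $\limsup_t\alpha^{(t)}=\infty$, not that $\alpha^{(t)}$ is large for all large $t$. Use instead the uniform bound $\partial_\beta\mathcal L(\alpha,\beta)\ge c_0(1+\alpha)^{-(d-1)/2}$ on all of $[0,\infty)\times[\beta^\ast,0]$. On compact $\alpha$-ranges this follows from continuity and the strict positivity in Lemma~\ref{prop:invariant}. Then the sum of decrements diverges along the whole trajectory, and unboundedness is not even needed.

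\textbf{Trade-off.} Your route avoids loss monotonicity and the value of $L_0$, and yields explicit growth control on $\alpha$. It pays for this with the density asymptotics of $1-U_\pi$ near $0$ and two matched boundary-layer estimates. The paper needs only $U_i<1$ a.s.\ and the descent inequality.
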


\begin{proof}
By Lemma~\ref{prop:invariant}, \((\beta^{(t)})_{t\ge0}\) is strictly decreasing and bounded above by \(0\). Suppose, toward a contradiction, that it is bounded below. Then monotone convergence yields a finite limit
\[
\beta^{(t)}\to \beta_\infty\in\mathbb R.
\]
By Lemma~\ref{prop:descent}, the full trajectory is unbounded. Since \(\alpha^{(t)}\ge 0\) for all \(t\), this forces \((\alpha^{(t)})_{t\ge0}\) to be unbounded above. Hence there exists a subsequence \(t_k\to\infty\) such that
\[
\alpha^{(t_k)}\to +\infty.
\]

Fix a realization of \(U_1,\dots,U_N\) such that \(U_i<1\) for every \(i\), which holds almost surely because \(U_i=1\) is equivalent to \(\mathbf x_i=\mathbf x_q\), an event of probability zero. Then, for every \(i\in[N]\),
\[
a_i(\alpha,\beta)
=
\frac{e^{\alpha U_i}}{e^{\alpha+\beta}+\sum_{j=1}^N e^{\alpha U_j}}
=
\frac{e^{-\beta-\alpha(1-U_i)}}{1+\sum_{j=1}^N e^{-\beta-\alpha(1-U_j)}}.
\]
If \(\alpha\to+\infty\) and \(\beta\to\beta_\infty\in\mathbb R\), then \(1-U_i>0\), hence
\[
a_i(\alpha,\beta)\to 0,\qquad a_0(\alpha,\beta)=1-\sum_{i=1}^N a_i(\alpha,\beta)\to 1.
\]
Therefore, along the subsequence \(t_k\),
\[
\varphi(U_1,\dots,U_N;\alpha^{(t_k)},\beta^{(t_k)})\to 1+0+\frac12-0=\frac32
\]
almost surely. Since \(\varphi=\mathbb E[\ell(\alpha,\beta)\mid\mathcal G]\), we have \(\varphi\ge 0\). Also,
\[
\varphi
=
1+\sum_{i=1}^N a_i^2+\frac12 a_0^2-2a_\pi
\le 1+\sum_{i=1}^N a_i^2+\frac12 a_0^2
\le \frac52.
\]
Hence dominated convergence gives
\[
L_{t_k}=\mathcal L(\alpha^{(t_k)},\beta^{(t_k)})\to \frac32.
\]

On the other hand, Lemma~\ref{prop:descent} implies \(L_t\le L_0\) for all \(t\). At \((\alpha,\beta)=(0,0)\), all \(N+1\) attention weights equal \((N+1)^{-1}\), so Lemma~\ref{prop:population-reduction} yields
\[
L_0
=
1+\frac{N}{(N+1)^2}+\frac{1}{2(N+1)^2}-\frac{2}{N+1}
=
1-\frac{N+\frac32}{(N+1)^2}
<1.
\]
Thus \(L_t<1\) for all \(t\), which contradicts \(L_{t_k}\to \frac32\). The contradiction proves
\[
\beta^{(t)}\to -\infty.
\]
\end{proof}

\begin{lemma}
\label{prop:alpha-divergence}
Define the data-only reduced loss
\[
\psi(U_1,\dots,U_N;\alpha)\coloneqq 1+Q(\alpha)-2b_\pi(\alpha),
\]
and its population version
\[
\Psi(\alpha)\coloneqq \mathbb E\bigl[\psi(U_1,\dots,U_N;\alpha)\bigr],\qquad \alpha\ge 0.
\]
Then \(\Psi\in C^1([0,\infty))\) and
\[
\Psi'(\alpha)<0\qquad\text{for every }\alpha\ge 0.
\]
Moreover,
\[
\alpha^{(t)}\to +\infty\qquad\text{as }t\to\infty.
\]
\end{lemma}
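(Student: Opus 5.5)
The plan is to prove the two claims separately: first a pointwise sign argument for \(\Psi'\), then a perturbation argument that transfers negativity of \(\Psi'\) to \(\partial_\alpha\mathcal L\) once \(\beta^{(t)}\to-\infty\) (Lemma~\ref{prop:beta-divergence}).

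\textbf{Step 1: \(\Psi\in C^1\).} By Lemma~\ref{prop:gradients}, \(\partial_\alpha b_\pi=b_\pi(U_\pi-\mu)\) and \(Q'=2\sum_i b_i^2(U_i-\mu)\). Since \(|U_i|\le1\), these are bounded by \(2\) and \(4\) respectively. Dominated convergence then gives \(\Psi\in C^1([0,\infty))\) with
\[
\Psi'(\alpha)=2\,\mathbb E\Bigl[\sum_{i=1}^N b_i^2 c_i-b_\pi c_\pi\Bigr],\qquad c_i\coloneqq U_i-\mu(\alpha).
\]

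\textbf{Step 2: the bracket is negative almost surely.} Fix a realization and \(\alpha\ge0\). The key identity is \(\sum_i b_ic_i=0\), which yields
\[
b_\pi c_\pi-\sum_i b_i^2c_i=b_\pi c_\pi+\sum_i b_ic_i(b_\pi-b_i).
\]
Because \(\alpha\ge0\), the map \(U\mapsto b\) is nondecreasing, so \(b_\pi\ge b_i\) for every \(i\). Terms with \(c_i\ge0\) are therefore nonnegative. For \(c_i<0\), each term is at least \(-b_\pi b_i|c_i|\). Also
\[
\sum_{c_i<0}b_i|c_i|=\sum_{c_i>0}b_ic_i\le c_\pi\sum_{c_i>0}b_i,
\]
since \(c_\pi=\max_i c_i\). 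Combining these bounds, the expression is at least \(b_\pi c_\pi\bigl(1-\sum_{c_i>0}b_i\bigr)\).

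This lower bound is strictly positive whenever the \(U_i\) are not all equal. In that case \(c_\pi>0\), and some \(c_j<0\) carries positive weight \(b_j\), so \(\sum_{c_i>0}b_i<1\). The \(U_i\) are almost surely distinct, so taking expectations gives \(\Psi'(\alpha)<0\) for every \(\alpha\ge0\).

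\textbf{Step 3: comparing \(\partial_\alpha\mathcal L\) with \(\Psi'\) on compacts.} From \eqref{eq:sample-alpha-gradient}, \(\partial_\alpha\varphi\) equals \(Q'-2\partial_\alpha b_\pi\) plus terms each bounded by a constant times \(a_0\). Since \(U_j\ge-1\), we have
\[
a_0\le e^{\alpha+\beta}\big/\textstyle\sum_j e^{\alpha U_j}\le e^{2\alpha+\beta}/N.
\]
Hence \(\sup_{\alpha\in[0,A]}|\partial_\alpha\mathcal L(\alpha,\beta)-\Psi'(\alpha)|\le C e^{2A+\beta}\) for an absolute constant \(C\). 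By Step 1, \(\Psi'\) is continuous and negative on \([0,A]\), so it is at most \(-c_A\) there for some \(c_A>0\). Since \(\beta^{(t)}\to-\infty\), there is \(T_A\) such that for all \(t\ge T_A\), if \(\alpha^{(t)}\le A\) then \(\partial_\alpha\mathcal L(\theta_t)\le -c_A/2\).

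\textbf{Step 4: conclusion.} Fix any \(A\) and work with \(A+1\) in Step 3. For \(t\ge T_{A+1}\), whenever \(\alpha^{(t)}\le A+1\), the increment \(\alpha^{(t+1)}-\alpha^{(t)}\) is at least \(\eta c_{A+1}/(2d)\). Therefore \(\alpha^{(t)}\) exceeds \(A+1\) in finitely many steps.

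It cannot then fall to \(A\) or below. From above \(A+1\), a single step moves \(\alpha\) by less than \(1\) (Lemma~\ref{prop:descent}), so the next iterate is above \(A\). From within \((A,A+1]\), the next iterate strictly increases. Hence \(\liminf_t\alpha^{(t)}\ge A\) for every \(A\), i.e. \(\alpha^{(t)}\to+\infty\).

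The main obstacle is the pointwise inequality in Step 2. It hinges on the identity \(\sum_i b_ic_i=0\) and on monotonicity of \(b\) in \(U\), which requires \(\alpha\ge0\); that condition is guaranteed by Lemma~\ref{prop:invariant}.
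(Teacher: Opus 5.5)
Your proposal is correct and follows the paper's proof in structure. Both use dominated convergence for $\Psi\in C^1$, then the comparison $|\partial_\alpha\varphi-\partial_\alpha\psi|\le C a_0\le C e^{2\alpha+\beta}/N$ on a compact $\alpha$-interval combined with $\beta^{(t)}\to-\infty$, and finally the trapping argument based on $|\alpha^{(t+1)}-\alpha^{(t)}|<1$.

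The one real difference is your Step 2. Your identity $\sum_i b_i c_i=0$ gives the lower bound $b_\pi c_\pi\bigl(1-\sum_{c_i>0}b_i\bigr)>0$ uniformly for all $\alpha\ge 0$. The paper instead uses $U_i\le U_\pi$ to obtain the shorter bound $Q'-2\partial_\alpha b_\pi\le -2(b_\pi-Q)(U_\pi-\mu)$. That bound is strict only for $\alpha>0$, since $b_\pi=Q=1/N$ at $\alpha=0$, so the paper handles $\alpha=0$ by a separate direct computation.
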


\begin{proof}
We first prove \(\Psi'(\alpha)<0\). For fixed \(U_1,\dots,U_N\), the derivative of \(\psi\) is
\[
\partial_\alpha \psi = Q' - 2\,\partial_\alpha b_\pi.
\]

If \(\alpha=0\), then \(b_i(0)=1/N\), so
\[
Q(0)=\frac1N,\qquad b_\pi(0)=\frac1N,\qquad Q'(0)=0,\qquad
\partial_\alpha b_\pi(0)=\frac1N\Bigl(U_\pi-\frac1N\sum_{i=1}^N U_i\Bigr).
\]
Since \(N\ge 2\) and \(U_\pi\) is a strict maximum almost surely, we have
\[
U_\pi>\frac1N\sum_{i=1}^N U_i
\]
almost surely, hence
\[
\partial_\alpha \psi(0)
=
-\frac{2}{N}\Bigl(U_\pi-\frac1N\sum_{i=1}^N U_i\Bigr)
<0
\qquad\text{a.s.}
\]

If \(\alpha>0\), then \(b_\pi(\alpha)\) is the unique largest \(b_i(\alpha)\), because \(U_\pi>U_i\) for all \(i\neq\pi\) and \(u\mapsto e^{\alpha u}\) is strictly increasing. Therefore
\[
Q(\alpha)=\sum_{i=1}^N b_i(\alpha)^2 < b_\pi(\alpha)\sum_{i=1}^N b_i(\alpha)=b_\pi(\alpha).
\]
Also \(\mu(\alpha)=\sum_{i=1}^N b_i(\alpha)U_i\) is a strict convex combination of the \(U_i\), so
\[
\mu(\alpha)<U_\pi.
\]
Using the formula for \(Q'\),
\[
Q'
=
2\sum_{i=1}^N b_i(\alpha)^2\bigl(U_i-\mu(\alpha)\bigr)
\le
2U_\pi\sum_{i=1}^N b_i(\alpha)^2-2\mu(\alpha)\sum_{i=1}^N b_i(\alpha)^2
=
2Q(\alpha)\bigl(U_\pi-\mu(\alpha)\bigr).
\]
Therefore
\[
Q'-2\,\partial_\alpha b_\pi
\le
2Q(\alpha)\bigl(U_\pi-\mu(\alpha)\bigr)-2b_\pi(\alpha)\bigl(U_\pi-\mu(\alpha)\bigr)
=
-2\bigl(b_\pi(\alpha)-Q(\alpha)\bigr)\bigl(U_\pi-\mu(\alpha)\bigr)
<0.
\]
Hence \(\partial_\alpha\psi(\alpha)<0\) almost surely for every \(\alpha>0\).

Since \(|Q'|\le 4\) and \(|\partial_\alpha b_\pi|\le 2\), we have \(|\partial_\alpha\psi|\le 8\), and dominated convergence gives \(\Psi\in C^1([0,\infty))\) with
\[
\Psi'(\alpha)=\mathbb E[\partial_\alpha\psi(\alpha)]<0
\qquad (\alpha\ge 0).
\]

We now prove \(\alpha^{(t)}\to+\infty\). Fix \(A>0\). Since \(\Psi'\) is continuous and strictly negative on the compact interval \([0,A+1]\), the quantity
\[
\rho_A\coloneqq  -\frac12\max_{0\le \alpha\le A+1}\Psi'(\alpha)
\]
is well defined and strictly positive.

We next compare \(\partial_\alpha\mathcal L\) with \(\Psi'\) for large negative \(\beta\). For fixed \(U_1,\dots,U_N\) and \(0\le \alpha\le A+1\), Lemma~\ref{prop:gradients} gives
\begin{align*}
\partial_\alpha\varphi-\partial_\alpha\psi
&=
a_0(\alpha,\beta)\bigl(1-a_0(\alpha,\beta)\bigr)\bigl(1-\mu(\alpha)\bigr)
\Bigl(
a_0(\alpha,\beta)+2b_\pi(\alpha)-2\bigl(1-a_0(\alpha,\beta)\bigr)Q(\alpha)
\Bigr) \\
&\quad
+\Bigl(\bigl(1-a_0(\alpha,\beta)\bigr)^2-1\Bigr)Q'(\alpha)
+2a_0(\alpha,\beta)\,\partial_\alpha b_\pi(\alpha).
\end{align*}
Because \(0\le a_0(\alpha,\beta)\le 1\), \(|1-\mu(\alpha)|\le 2\), \(0\le b_\pi(\alpha),Q(\alpha)\le 1\), \(|Q'(\alpha)|\le 4\), and \(|\partial_\alpha b_\pi(\alpha)|\le 2\), this implies
\[
|\partial_\alpha\varphi-\partial_\alpha\psi|
\le 10a_0(\alpha,\beta)+8a_0(\alpha,\beta)+4a_0(\alpha,\beta)
=22a_0(\alpha,\beta).
\]
Moreover,
\[
a_0(\alpha,\beta)
=
\frac{e^{\alpha+\beta}}{e^{\alpha+\beta}+\sum_{j=1}^N e^{\alpha U_j}}
\le
\frac{e^{\alpha+\beta}}{\sum_{j=1}^N e^{\alpha U_j}}
\le
\frac{e^{\alpha+\beta}}{N e^{-\alpha}}
=
\frac{e^{\beta+2\alpha}}{N}
\le
\frac{e^{\beta+2A+2}}{N}.
\]
Hence
\[
\sup_{0\le \alpha\le A+1}
\left|
\partial_\alpha\mathcal L(\alpha,\beta)-\Psi'(\alpha)
\right|
\le
\frac{22e^{\beta+2A+2}}{N}.
\]
Choose \(\bar\beta_A<0\) so negative that
\[
\frac{22e^{\bar\beta_A+2A+2}}{N}\le \rho_A.
\]
Then for every \(\beta\le \bar\beta_A\) and every \(0\le \alpha\le A+1\),
\[
\partial_\alpha\mathcal L(\alpha,\beta)
\le
\Psi'(\alpha)+\rho_A
\le
-\rho_A.
\]
By Lemma~\ref{prop:beta-divergence}, \(\beta^{(t)}\to-\infty\), so there exists \(T_A\) such that
\[
\beta^{(t)}\le \bar\beta_A\qquad\text{for all }t\ge T_A.
\]
Also Lemma~\ref{prop:descent} gives
\[
|\alpha^{(t+1)}-\alpha^{(t)}|
\le \frac{10\eta}{d}
\le 10\eta<1
\qquad\text{for all }t.
\]
If \(\alpha^{(T_A)}>A+1\), set \(S_A\coloneqq T_A\). Otherwise, while \(\alpha^{(t)}\le A+1\) and \(t\ge T_A\), the previous estimate yields
\[
\alpha^{(t+1)}
=
\alpha^{(t)}-\frac{\eta}{d}\,\partial_\alpha\mathcal L(\alpha^{(t)},\beta^{(t)})
\ge
\alpha^{(t)}+\frac{\eta\rho_A}{d}.
\]
Therefore after finitely many steps one reaches some \(S_A\ge T_A\) such that
\[
\alpha^{(S_A)}>A+1.
\]
We claim that
\[
\alpha^{(t)}>A\qquad\text{for all }t\ge S_A.
\]
Suppose the contrary. Then there exists \(t\ge S_A\) such that \(\alpha^{(t+1)}\le A\) and \(\alpha^{(t)}>A\). If \(A<\alpha^{(t)}\le A+1\), then \(\beta^{(t)}\le \bar\beta_A\) and the estimate above gives
\[
\alpha^{(t+1)}>\alpha^{(t)}>A,
\]
a contradiction. If \(\alpha^{(t)}>A+1\), then
\[
\alpha^{(t)}-\alpha^{(t+1)}>1,
\]
which contradicts \(|\alpha^{(t+1)}-\alpha^{(t)}|<1\). This proves the claim.

Since \(A>0\) was arbitrary, for every finite \(A\) there exists \(S_A\) such that \(\alpha^{(t)}>A\) for all \(t\ge S_A\). Therefore
\[
\alpha^{(t)}\to +\infty.
\]
\end{proof}

\begin{theorem}\label{thm:main}
Under Assumption \ref{ass:distribution}, suppose that
\[
d\ge 2,\qquad N\ge 100\sqrt d,\qquad 0<\eta\le \frac{1}{840(N+1)^2}.
\]
Let
\[
(\alpha^{(t)},\beta^{(t)})_{t\ge 0}
\]
be the scalar iterates induced by the full matrix gradient-descent updates on the invariant manifold, initialized at
\[
\alpha^{(0)}=0,\qquad \beta^{(0)}=0.
\]
Equivalently, they satisfy
\[
\alpha^{(t+1)}
=
\alpha^{(t)}
-\frac{\eta}{d}\partial_\alpha\mathcal L(\alpha^{(t)},\beta^{(t)}),
\]
\[
\beta^{(t+1)}
=
\beta^{(t)}
-\frac{\eta}{N+1}\partial_\beta\mathcal L(\alpha^{(t)},\beta^{(t)}).
\]
Then the recursion is well defined for every \(t\ge 0\), and
\[
\alpha^{(t)}\to +\infty,\qquad \beta^{(t)}\to -\infty\qquad\text{as }t\to\infty.
\]
Moreover,
\[
\alpha^{(t)}\ge 0,\qquad \beta^{(t)}\le 0\qquad\text{for all }t\ge 0,
\]
and \((\beta^{(t)})_{t\ge0}\) is strictly decreasing.
\end{theorem}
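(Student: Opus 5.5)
This theorem is essentially an assembly of the lemmas already established in this subsection, so the plan is to chain them in order and check that the hypotheses line up.

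\textbf{Well-definedness and the scalar recursion.} By Theorem~\ref{thm:reduced_k1_training_structure}, every iterate from $\Wb^{(0)}=\mathbf 0$ has the form $\diag(\alpha^{(t)}\Ib_d,0,\beta^{(t)}\Ib_{N+1})$. Lemma~\ref{prop:gradients} then shows that $\mathcal L\in C^2(\mathbb R^2)$, with derivatives obtained by differentiating under the expectation; this is justified by the deterministic bounds of Lemma~\ref{lem:softmax}. The same lemma shows that the matrix update induces exactly the displayed scalar recursion, with step sizes $\eta/d$ and $\eta/(N+1)$. Since $\nabla\mathcal L$ is finite everywhere, the recursion is defined for all $t$.

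\textbf{Sign constraints and monotonicity.} Next I would invoke Lemma~\ref{prop:invariant}, whose hypotheses $d\ge2$, $N\ge100\sqrt d$ and $\eta\le 1/(840(N+1)^2)$ match those of the theorem. It gives $\alpha^{(t)}\ge0$ and $\beta^{(t)}\le0$ for all $t$, together with $\partial_\beta\mathcal L>0$ on $\{\alpha\ge0\}$, and hence strict decrease of $\beta^{(t)}$. Its proof relies on Lemma~\ref{lem:alpha-boundary}, which uses Lemma~\ref{lem:M-lower-bound}. This part is where the quantitative conditions on $N$ and $\eta$ actually enter, and it is the step I would double-check most carefully. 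In particular:
\begin{itemize}
\item the case split at $\delta=1/(84(N+1)^2)$ should close the induction;
\item the mean-value step should use the Hessian bound $42$.
\end{itemize}

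\textbf{Divergence.} Lemma~\ref{prop:descent} supplies the descent inequality, the bound $\|\nabla\mathcal L(\theta_t)\|\to0$, the per-step bound $|\alpha^{(t+1)}-\alpha^{(t)}|<1$, and unboundedness of the trajectory. Lemma~\ref{prop:beta-divergence} then gives $\beta^{(t)}\to-\infty$, by comparing the limit $3/2$ of the loss along $\alpha\to\infty$ with $L_0<1$. Finally, Lemma~\ref{prop:alpha-divergence} gives $\alpha^{(t)}\to+\infty$, because $\Psi'<0$ on compacts and $\partial_\alpha\mathcal L$ approaches $\Psi'$ uniformly once $\beta$ is sufficiently negative.

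The main subtlety is therefore not new estimates. It is ordering these results so that each one is used only after its hypotheses have been verified: invariance first, then descent and unboundedness, then $\beta$-divergence, then $\alpha$-divergence.
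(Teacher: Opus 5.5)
Your proposal is correct and follows the paper's proof essentially verbatim. The paper likewise gets well-definedness from Lemma~\ref{prop:gradients}, the sign constraints and strict decrease of $\beta^{(t)}$ from Lemma~\ref{prop:invariant}, and the two divergences from Lemmas~\ref{prop:beta-divergence} and~\ref{prop:alpha-divergence}, with Lemma~\ref{prop:descent} entering only inside those proofs, in exactly the dependency order you describe.
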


\begin{proof}[Proof of Theorem~\ref{thm:main}]
Lemma~\ref{prop:gradients} shows that \(\mathcal L\in C^2(\mathbb R^2)\), so the induced scalar recursion is well defined for every \(t\ge 0\). Lemma~\ref{prop:invariant} proves
\[
\alpha^{(t)}\ge 0,\qquad \beta^{(t)}\le 0
\qquad (t\ge 0),
\]
and shows that \((\beta^{(t)})_{t\ge0}\) is strictly decreasing. Lemma~\ref{prop:beta-divergence} yields
\[
\beta^{(t)}\to -\infty,
\]
and Lemma~\ref{prop:alpha-divergence} yields
\[
\alpha^{(t)}\to +\infty.
\]
This is exactly the asserted conclusion.
\end{proof}

\subsection{Exact asymptotic representation of the population loss}
\label{sec:exact asymptotic representation}

We now continue from Theorem~\ref{thm:main}, which established the parameter divergence
\[
\alpha^{(t)}\to+\infty,
\qquad
\beta^{(t)}\to-\infty .
\]
The next goal is to identify the precise quantity that controls the population loss along the actual gradient-descent trajectory. In this subsection, we show that once \(\alpha^{(t)}\) is large, the attention over the context tokens concentrates on the nearest neighbor, and the entire population loss can be reduced asymptotically to the contribution of the self-token weight. The analysis proceeds in two main steps:

\noindent\textbf{Step 1.}
We first prove that the normalized context-token softmax concentrates on the nearest-neighbor index, so that the data-only part of the loss becomes asymptotically negligible.

\noindent\textbf{Step 2.}
Using the exact reduced loss formula derived earlier, we then decompose the population loss into a vanishing context term, lower-order interaction terms, and a leading self-token term. Combining these estimates, we obtain an exact asymptotic representation of the form
\[
\mathcal L(\alpha^{(t)},\beta^{(t)})
=
\frac32\,\EE\!\left[a_0(\alpha^{(t)},\beta^{(t)})^2\right]+o(1),
\]
which shows that the asymptotic behavior of the population loss is determined entirely by the squared self-token weight \(a_0(\alpha^{(t)},\beta^{(t)})^2\).

Retain the notation introduced earlier. In particular,
\[
U_i\coloneqq \mathbf x_i^\top \mathbf x_q,\qquad
\pi\coloneqq \arg\max_{1\le i\le N}U_i,
\]
and for each \(t\ge 0\),
\[
b_i^{(t)}\coloneqq \frac{e^{\alpha^{(t)}U_i}}{\sum_{j=1}^N e^{\alpha^{(t)}U_j}},
\qquad
Q_t\coloneqq \sum_{i=1}^N \bigl(b_i^{(t)}\bigr)^2.
\]
\begin{lemma}
\label{lem:context-concentration-loss}
For almost every realization of \((\mathbf x_1,\dots,\mathbf x_N,\mathbf x_q)\), we have
\[
b_\pi^{(t)}\to 1,\qquad
b_j^{(t)}\to 0\quad (j\in[N]\setminus\{\pi\}),
\qquad
Q_t\to 1
\]
as \(t\to\infty\). If
\[
\psi_t\coloneqq 1+Q_t-2b_\pi^{(t)},
\]
then
\[
\psi_t\to 0\qquad\text{almost surely},
\qquad
\mathbb E[\psi_t]\to 0.
\]
\end{lemma}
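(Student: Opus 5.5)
The proof works pointwise on a full-probability event, using only that \(\alpha^{(t)}\to+\infty\), which is Lemma~\ref{prop:alpha-divergence}. The expectation statement then follows by bounded convergence.

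\textbf{Step 1: the full-probability event.} Let \(E\) be the event on which the \(U_i\) are pairwise distinct. Under Assumption~\ref{ass:distribution} with \(d\ge2\), the joint law of \((U_1,\dots,U_N)\) is continuous, so \(\mathbb P(E)=1\). On \(E\), the index \(\pi\) is the unique maximizer of the \(U_i\). Define the gap
\[
\Delta\coloneqq U_\pi-\max_{j\neq\pi}U_j>0.
\]

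\textbf{Step 2: concentration of \(b^{(t)}\).} For \(j\neq\pi\), the definition of \(b^{(t)}\) gives
\[
b_j^{(t)}=\frac{e^{\alpha^{(t)}U_j}}{\sum_{k=1}^N e^{\alpha^{(t)}U_k}}\le e^{-\alpha^{(t)}(U_\pi-U_j)}\le e^{-\alpha^{(t)}\Delta}.
\]
Since \(\alpha^{(t)}\to+\infty\), this yields \(b_j^{(t)}\to0\) for every \(j\neq\pi\). Because the \(b_i^{(t)}\) sum to one, it follows that \(b_\pi^{(t)}=1-\sum_{j\neq\pi}b_j^{(t)}\to1\).

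For \(Q_t\), note that \((b_\pi^{(t)})^2\le Q_t\le 1\), since \(Q_t\) is a sum of squares of probability weights. As \((b_\pi^{(t)})^2\to1\), we get \(Q_t\to1\). Consequently
\[
\psi_t=1+Q_t-2b_\pi^{(t)}\to 1+1-2=0
\]
on \(E\), that is, almost surely.

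\textbf{Step 3: convergence in expectation.} Pointwise, \(\psi_t\) is bounded: \(0\le Q_t\le1\) and \(0\le b_\pi^{(t)}\le1\) give \(|\psi_t|\le 3\). Alternatively, \(\psi_t=(1-b_\pi^{(t)})^2+\sum_{j\neq\pi}(b_j^{(t)})^2\ge0\), so \(\psi_t\in[0,2]\). Since \(\alpha^{(t)}\) is a deterministic sequence, the dominated convergence theorem applies directly and gives \(\mathbb E[\psi_t]\to0\).

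\textbf{Main obstacle.} The only point needing care is that the gap \(\Delta\) depends on the realization and is not uniformly bounded away from zero. This is why the expectation claim cannot be proved by a uniform exponential bound. Pointwise convergence combined with boundedness handles it through dominated convergence.
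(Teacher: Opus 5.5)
Your proof is correct and follows the paper's argument essentially step for step. Both use $\alpha^{(t)}\to+\infty$ to concentrate the context softmax on the almost surely unique maximizer, the sandwich $(b_\pi^{(t)})^2\le Q_t\le 1$, and the identity $\psi_t=(1-b_\pi^{(t)})^2+\sum_{j\neq\pi}(b_j^{(t)})^2\in[0,2]$, followed by dominated convergence; the only cosmetic difference is that you bound $b_j^{(t)}\le e^{-\alpha^{(t)}(U_\pi-U_j)}$ directly, whereas the paper works with the ratio $b_j^{(t)}/b_\pi^{(t)}$.
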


\begin{proof}
By Theorem~\ref{thm:main},
\[
\alpha^{(t)}\to+\infty.
\]
Outside the null tie event, \(\pi\) is the unique maximizer of \(U_1,\dots,U_N\), hence
\[
U_\pi-U_j>0\qquad\text{for every }j\in[N]\setminus\{\pi\}.
\]
Therefore
\[
\frac{b_j^{(t)}}{b_\pi^{(t)}}=\exp\bigl(-\alpha^{(t)}(U_\pi-U_j)\bigr)\to 0
\qquad (j\neq\pi).
\]
Since \(\sum_{i=1}^N b_i^{(t)}=1\), it follows that
\[
b_\pi^{(t)}
=
\frac{1}{1+\sum_{j\neq\pi} b_j^{(t)}/b_\pi^{(t)}}
\to 1,
\]
and hence \(b_j^{(t)}\to 0\) for every \(j\neq\pi\). Consequently,
\[
Q_t=\sum_{i=1}^N (b_i^{(t)})^2\to 1
\]
almost surely.

Next,
\[
\psi_t
=
1+Q_t-2b_\pi^{(t)}
=
\bigl(1-b_\pi^{(t)}\bigr)^2+\sum_{j\neq\pi}(b_j^{(t)})^2.
\]
Hence \(\psi_t\ge 0\) and \(\psi_t\to 0\) almost surely. Also \(0\le b_\pi^{(t)},Q_t\le 1\), so
\[
0\le \psi_t\le 2.
\]
Dominated convergence therefore yields
\[
\mathbb E[\psi_t]\to 0.
\]
\end{proof}

\begin{lemma}
\label{prop:loss-reduction-c}
For every \(t\ge 0\),
\[
\mathcal L(\alpha^{(t)},\beta^{(t)})
=
\mathbb E[\psi_t]
+
2\,\mathbb E\!\left[a_0(\alpha^{(t)},\beta^{(t)})\bigl(b_\pi^{(t)}-Q_t\bigr)\right]
+
\mathbb E\!\left[a_0(\alpha^{(t)},\beta^{(t)})^2\left(Q_t+\tfrac12\right)\right].
\]
Moreover,
\[
\mathbb E\!\left[a_0(\alpha^{(t)},\beta^{(t)})\bigl(b_\pi^{(t)}-Q_t\bigr)\right]\to 0,
\qquad
\mathbb E\!\left[a_0(\alpha^{(t)},\beta^{(t)})^2(Q_t-1)\right]\to 0,
\]
and therefore
\begin{equation}\label{eq:L-c2-asymptotic}
\mathcal L(\alpha^{(t)},\beta^{(t)})
=
\frac32\,\mathbb E\!\left[a_0(\alpha^{(t)},\beta^{(t)})^2\right]+o(1)
\qquad\text{as }t\to\infty.
\end{equation}
\end{lemma}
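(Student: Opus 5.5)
The plan has three parts: derive the exact identity algebraically from the expression for \(\varphi\) in Lemma~\ref{prop:gradients}, then control the two cross terms by dominated convergence using Lemma~\ref{lem:context-concentration-loss}, and finally combine the pieces. Write \(a_0=a_0(\alpha^{(t)},\beta^{(t)})\), \(Q=Q_t\), \(b=b_\pi^{(t)}\).

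\textbf{The exact identity.} By Lemma~\ref{prop:gradients},
\[
\varphi=1+(1-a_0)^2Q+\tfrac12a_0^2-2(1-a_0)b .
\]
Expanding \((1-a_0)^2Q=Q-2a_0Q+a_0^2Q\) and \(-2(1-a_0)b=-2b+2a_0b\) and regrouping gives
\[
\varphi=(1+Q-2b)+2a_0(b-Q)+a_0^2\bigl(Q+\tfrac12\bigr),
\]
which is pointwise \(\psi_t+2a_0(b-Q)+a_0^2(Q+\tfrac12)\). Taking expectations and using \(\mathcal L=\mathbb E[\varphi]\) (Lemma~\ref{prop:population-reduction}) yields the first display of the statement. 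All terms are bounded, so the expectations exist.

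\textbf{The two vanishing terms.} Here we use \(0\le a_0\le1\) and \(0\le Q\le b\le1\); the inequality \(Q\le b\) holds because \(\alpha^{(t)}\ge0\) by Theorem~\ref{thm:main}, exactly as in Lemma~\ref{prop:invariant}. These give
\[
|a_0(b-Q)|\le b-Q\le 1, \qquad |a_0^2(Q-1)|\le 1-Q\le 1 .
\]
By Lemma~\ref{lem:context-concentration-loss}, \(b\to1\) and \(Q\to1\) almost surely, so \(b-Q\to0\) and \(1-Q\to0\) almost surely. Dominated convergence with the constant bound \(1\) then gives both limits. A quicker route for the first term is \(b-Q\le 1-Q\le 2(1-b)\le2\psi_t\) together with \(\mathbb E[\psi_t]\to0\), but dominated convergence alone suffices.

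\textbf{Combining.} Write
\[
\mathbb E\bigl[a_0^2(Q+\tfrac12)\bigr]=\tfrac32\,\mathbb E[a_0^2]+\mathbb E[a_0^2(Q-1)] .
\]
Then \(\mathcal L=\tfrac32\mathbb E[a_0^2]+\mathbb E[\psi_t]+2\,\mathbb E[a_0(b-Q)]+\mathbb E[a_0^2(Q-1)]\). Each of the last three terms is \(o(1)\): the first by Lemma~\ref{lem:context-concentration-loss} and the other two by the previous step. This gives \eqref{eq:L-c2-asymptotic}.

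There is no real obstacle. The only care needed is the expansion bookkeeping and checking that the dominating bounds are uniform in \(t\), which they are since every quantity lies in \([0,1]\).
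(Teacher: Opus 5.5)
Your argument is correct and is essentially the paper's proof: the same regrouping of $\varphi$ into $\psi_t+2a_0(b-Q)+a_0^2(Q+\tfrac12)$, the same bounds from $0\le Q\le b\le 1$, and dominated convergence driven by Lemma~\ref{lem:context-concentration-loss}; you are also slightly more careful than the paper in noting that $Q\le b$ relies on $\alpha^{(t)}\ge 0$. One caveat concerns your optional ``quicker route'': its last step $2(1-b)\le 2\psi_t$ is false, since $\psi_t=(1-b)^2+\sum_{j\neq\pi}(b_j^{(t)})^2\le 2(1-b)^2<1-b$ whenever $1/2<b<1$; it can be repaired via $\mathbb E[1-b]\le(\mathbb E[\psi_t])^{1/2}$, but, as you say, the main dominated-convergence argument does not need it.
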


\begin{proof}
From Lemma~\ref{prop:gradients} in the preceding section,
\begin{align*}
\varphi(\alpha,\beta)
&=
1+\bigl(1-a_0(\alpha,\beta)\bigr)^2\sum_{i=1}^N b_i(\alpha)^2
+\frac12\,a_0(\alpha,\beta)^2
-2\bigl(1-a_0(\alpha,\beta)\bigr)b_\pi(\alpha).
\end{align*}
Evaluating this at \((\alpha^{(t)},\beta^{(t)})\) gives
\begin{align*}
\varphi(\alpha^{(t)},\beta^{(t)})
&=
1+\bigl(1-a_0(\alpha^{(t)},\beta^{(t)})\bigr)^2Q_t
+\frac12\,a_0(\alpha^{(t)},\beta^{(t)})^2
-2\bigl(1-a_0(\alpha^{(t)},\beta^{(t)})\bigr)b_\pi^{(t)}.
\end{align*}
Expanding the right-hand side yields
\begin{align*}
\varphi(\alpha^{(t)},\beta^{(t)})
&=
\bigl(1+Q_t-2b_\pi^{(t)}\bigr)
+2a_0(\alpha^{(t)},\beta^{(t)})\bigl(b_\pi^{(t)}-Q_t\bigr)
+a_0(\alpha^{(t)},\beta^{(t)})^2\left(Q_t+\tfrac12\right) \\
&=
\psi_t
+2a_0(\alpha^{(t)},\beta^{(t)})\bigl(b_\pi^{(t)}-Q_t\bigr)
+a_0(\alpha^{(t)},\beta^{(t)})^2\left(Q_t+\tfrac12\right).
\end{align*}
Taking expectation gives the displayed identity for \(\mathcal L(\alpha^{(t)},\beta^{(t)})\).

Next, since \(b_\pi^{(t)}=\max_i b_i^{(t)}\) and \(\sum_i b_i^{(t)}=1\),
\[
Q_t=\sum_{i=1}^N (b_i^{(t)})^2
\le
b_\pi^{(t)}\sum_{i=1}^N b_i^{(t)}
=
b_\pi^{(t)}.
\]
Thus \(b_\pi^{(t)}-Q_t\ge 0\). Also \(Q_t\ge (b_\pi^{(t)})^2\), so
\[
0\le b_\pi^{(t)}-Q_t
\le
b_\pi^{(t)}-(b_\pi^{(t)})^2
=
b_\pi^{(t)}\bigl(1-b_\pi^{(t)}\bigr)
\le
1-b_\pi^{(t)}.
\]
Since \(0\le a_0(\alpha^{(t)},\beta^{(t)})\le 1\),
\[
0\le a_0(\alpha^{(t)},\beta^{(t)})\bigl(b_\pi^{(t)}-Q_t\bigr)\le 1-b_\pi^{(t)}.
\]
By Lemma~\ref{lem:context-concentration-loss}, \(b_\pi^{(t)}\to 1\) almost surely, and \(0\le 1-b_\pi^{(t)}\le 1\), so dominated convergence gives
\[
\mathbb E\!\left[a_0(\alpha^{(t)},\beta^{(t)})\bigl(b_\pi^{(t)}-Q_t\bigr)\right]\to 0.
\]

For the quadratic term,
\[
\mathbb E\!\left[a_0(\alpha^{(t)},\beta^{(t)})^2\left(Q_t+\tfrac12\right)\right]
=
\frac32\,\mathbb E\!\left[a_0(\alpha^{(t)},\beta^{(t)})^2\right]
+
\mathbb E\!\left[a_0(\alpha^{(t)},\beta^{(t)})^2(Q_t-1)\right].
\]
Because \(0\le Q_t\le 1\),
\[
\left|a_0(\alpha^{(t)},\beta^{(t)})^2(Q_t-1)\right|
=
a_0(\alpha^{(t)},\beta^{(t)})^2(1-Q_t)
\le 1-Q_t.
\]
By Lemma~\ref{lem:context-concentration-loss}, \(Q_t\to 1\) almost surely, and \(0\le 1-Q_t\le 1\), hence dominated convergence yields
\[
\mathbb E\!\left[a_0(\alpha^{(t)},\beta^{(t)})^2(Q_t-1)\right]\to 0.
\]

Finally, \(\mathbb E[\psi_t]\to 0\) by Lemma~\ref{lem:context-concentration-loss}. Substituting these three convergences into the exact decomposition of \(\mathcal L(\alpha^{(t)},\beta^{(t)})\) proves \eqref{eq:L-c2-asymptotic}.
\end{proof}

\subsection{Interior subsequential gradient asymptotics}
\label{sec:boundary-layer estimates for certain subsequence}

Having reduced the population loss asymptotically to the self-token contribution, we next study the gradient field along subsequences for which the slope parameter
\[
\lambda_t\coloneqq -\frac{\beta^{(t)}}{\alpha^{(t)}},\quad t\geq 1,
\]
converges to a limit in the interior regime \((0,2)\). The purpose of this subsection is to derive sharp boundary-layer asymptotics for the gradients at the scale \(1/\alpha_t\), which will later be used to control the global dynamics of \(\lambda_t\). The analysis proceeds in three main steps:

\noindent\textbf{Step 1.}
Fix an arbitrary subsequence \(t_k\to\infty\) such that \(\lambda_{t_k}\to\lambda\in(0,2)\). We analyze the logistic kernels centered at the boundary layer
\[
1-U_\pi=\lambda,
\]
and show that these kernels form approximate identities.

\noindent\textbf{Step 2.}
We then estimate the contributions of the non-maximal context tokens. In particular, we prove that all error terms produced by the residual tail and by the exact softmax profile are negligible after multiplication by \(\alpha^{(t_k)}\).

\noindent\textbf{Step 3.}
Finally, we replace the exact self-token profile by its ideal logistic approximation and compute the limiting boundary-layer profiles of the gradients. This yields explicit asymptotic formulas for
\[
\alpha^{(t_k)}\,\partial_\alpha \mathcal L(\alpha^{(t_k)},\beta^{(t_k)})
\quad\text{and}\quad
\alpha^{(t_k)}\Bigl(\partial_\beta \mathcal L(\alpha^{(t_k)},\beta^{(t_k)})+\lambda_{t_k}\partial_\alpha \mathcal L(\alpha^{(t_k)},\beta^{(t_k)})\Bigr),
\]
with strictly positive limits throughout the interior regime \(\lambda\in(0,2)\).

We retain the notation introduced earlier. In particular,
\[
U_i\coloneqq \mathbf x_i^\top \mathbf x_q,\qquad
\pi\coloneqq \arg\max_{1\le i\le N}U_i,
\]
and the exact loss reduction already proved:
\begin{equation}\label{eq:exact-loss-reduction-final}
\mathcal L(\alpha^{(t)},\beta^{(t)})
=
\frac32\,\mathbb E\!\left[a_0(\alpha^{(t)},\beta^{(t)})^2\right]+o(1).
\end{equation}
We also retain the already established divergence
\[
\alpha^{(t)}\to+\infty,
\qquad
\beta^{(t)}\to-\infty,
\qquad
\lambda_t\coloneqq -\frac{\beta^{(t)}}{\alpha^{(t)}}\quad(t\geq1).
\]

To analyze the interior regime \(\lambda_t\in(0,2)\), fix throughout this subsection an arbitrary subsequence \((t_k)_{k\ge1}\) such that
\[
\alpha_k\coloneqq \alpha^{(t_k)}\to+\infty,
\qquad
\beta_k\coloneqq \beta^{(t_k)}\to-\infty,
\qquad
\lambda_k\coloneqq \lambda_{t_k}\to\lambda\in(0,2).
\]
Let
\[
f_d(u)=\kappa_d(1-u^2)^{(d-3)/2}\mathbf 1_{(-1,1)}(u),
\qquad
F_d(u)=\int_{-1}^u f_d(v)\,dv,
\]
so that the density of \(1-U_\pi\) is
\[
h(t)=N\,F_d(1-t)^{N-1}f_d(1-t),\qquad t\in(0,2).
\]

\begin{lemma}
\label{lem:kernel-limits-rigorous-final}
Let \(\psi:[0,2]\to\mathbb R\) be bounded and continuous. Then
\[
\alpha_k\,\mathbb E\!\left[
\frac{e^{\alpha_k(\lambda_k-(1-U_\pi))}}{\bigl(1+e^{\alpha_k(\lambda_k-(1-U_\pi))}\bigr)^2}\,
\psi(1-U_\pi)
\right]
\longrightarrow
h(\lambda)\psi(\lambda),
\]
and
\[
\alpha_k\,\mathbb E\!\left[
\frac{e^{\alpha_k(\lambda_k-(1-U_\pi))}}{\bigl(1+e^{\alpha_k(\lambda_k-(1-U_\pi))}\bigr)^3}\,
\psi(1-U_\pi)
\right]
\longrightarrow
\frac12\,h(\lambda)\psi(\lambda).
\]
\end{lemma}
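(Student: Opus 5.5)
Since the integrands depend on the data only through $T\coloneqq 1-U_\pi$, I would first rewrite both expectations as one-dimensional integrals against the density $h$ of $T$ on $(0,2)$:
\[
\alpha_k\int_0^2 \frac{e^{\alpha_k(\lambda_k-s)}}{(1+e^{\alpha_k(\lambda_k-s)})^m}\,\psi(s)\,h(s)\,ds,\qquad m\in\{2,3\}.
\]
Substituting $z=\alpha_k(\lambda_k-s)$, i.e.\ $s=\lambda_k-z/\alpha_k$, turns this into
\[
\int_{\alpha_k(\lambda_k-2)}^{\alpha_k\lambda_k} g_m(z)\,\psi(\lambda_k-z/\alpha_k)\,h(\lambda_k-z/\alpha_k)\,dz,
\qquad g_m(z)=\frac{e^{z}}{(1+e^{z})^m}.
\]
The kernels $g_m$ are integrable on $\mathbb R$, with $\int_{\mathbb R} g_2=1$ (it is the derivative of the logistic function) and $\int_{\mathbb R} g_3=\tfrac12$. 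The latter follows from the substitution $v=(1+e^z)^{-1}$, which gives $\int_0^1 v\,dv=\tfrac12$. So the claim is that $\alpha_k g_m(\alpha_k(\lambda_k-\cdot))$ is an approximate identity concentrating at $\lambda$, with total masses $1$ and $\tfrac12$.

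The limit passage is dominated convergence in $z$. Since $\lambda_k\to\lambda\in(0,2)$ and $\alpha_k\to\infty$, the integration limits $\alpha_k(\lambda_k-2)\to-\infty$ and $\alpha_k\lambda_k\to+\infty$, so the indicator of the range tends to $1$ pointwise. For fixed $z$ we have $\lambda_k-z/\alpha_k\to\lambda$. Continuity of $\psi$ on $[0,2]$, together with continuity of $h$ on $(0,2)$, gives pointwise convergence of the integrand to $g_m(z)\psi(\lambda)h(\lambda)$. This needs $f_d$ and $F_d$ to be continuous at interior points $1-\lambda\in(-1,1)$, which holds for every $d\ge2$.

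The main obstacle is domination, because $h$ need not be bounded near the endpoints. For $d=2$, $f_2(u)\propto(1-u^2)^{-1/2}$ blows up at $u=\pm1$, i.e.\ at $s\to0$ or $s\to2$. I would therefore split the $s$-range into a window $|s-\lambda|\le\delta$ and its complement, with $\delta<\min(\lambda,2-\lambda)/2$.

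On the window, $h$ is bounded by a constant $C_\delta$. Hence the integrand is bounded by $\|\psi\|_\infty C_\delta\, g_m(z)$, which is integrable, and dominated convergence applies there.

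On the complement, for large $k$ we have $|\lambda_k-s|\ge\delta/2$. This gives the pointwise bound $\alpha_k g_m(\alpha_k(\lambda_k-s))\le\alpha_k e^{-\alpha_k\delta/2}$, using $g_m(z)\le e^{-|z|}$. The contribution of the complement, in the original variable $s$, is then at most
\[
\|\psi\|_\infty\,\alpha_k e^{-\alpha_k\delta/2}\int_0^2 h\le\|\psi\|_\infty\,\alpha_k e^{-\alpha_k\delta/2}\to0,
\]
since $h$ is a probability density.

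On the window, the $z$-range $[-\alpha_k(\delta-|\lambda_k-\lambda|),\,\alpha_k(\delta+\dots)]$ expands to all of $\mathbb R$, so the window integral converges to $\psi(\lambda)h(\lambda)\int_{\mathbb R} g_m$. Combining the two pieces gives $h(\lambda)\psi(\lambda)$ for $m=2$ and $\tfrac12 h(\lambda)\psi(\lambda)$ for $m=3$.
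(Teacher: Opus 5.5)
Your proposal is correct and follows essentially the same route as the paper. Both arguments split the range into a compact window around $\lambda$, where $h$ is bounded and dominated convergence applies after the rescaling $z=\pm\alpha_k(\lambda_k-s)$, and bound the complement by $\alpha_k e^{-c\alpha_k}\|\psi\|_\infty$ using $g_m(z)\le e^{-|z|}$ and $\int h\le 1$. The masses $\int g_2=1$ and $\int g_3=\tfrac12$ are then evaluated by the same logistic substitution.
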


\begin{proof}
We prove the first limit; the second is identical, with the denominator power \(2\) replaced by \(3\).

Choose \(\delta>0\) such that
\[
[\lambda-2\delta,\lambda+2\delta]\subset(0,2).
\]
Since \(\lambda_k\to\lambda\), after discarding finitely many terms we may assume
\begin{equation}\label{eq:kernel-limits-rigorous-final-lambda-close}
|\lambda_k-\lambda|\le \delta
\qquad\text{for every }k.
\end{equation}
Set
\[
J\coloneqq [\lambda-2\delta,\lambda+2\delta]\subset(0,2).
\]
Decompose
\[
\alpha_k\,\mathbb E\!\left[
\frac{e^{\alpha_k(\lambda_k-(1-U_\pi))}}{\bigl(1+e^{\alpha_k(\lambda_k-(1-U_\pi))}\bigr)^2}\,
\psi(1-U_\pi)
\right]
=
A_k+B_k,
\]
where
\[
A_k\coloneqq
\alpha_k\int_J
\frac{e^{\alpha_k(\lambda_k-t)}}{\bigl(1+e^{\alpha_k(\lambda_k-t)}\bigr)^2}\,
\psi(t)h(t)\,dt,
\]
and
\[
B_k\coloneqq
\alpha_k\int_{[0,2]\setminus J}
\frac{e^{\alpha_k(\lambda_k-t)}}{\bigl(1+e^{\alpha_k(\lambda_k-t)}\bigr)^2}\,
\psi(t)h(t)\,dt.
\]

If \(t\in[0,2]\setminus J\), then by \eqref{eq:kernel-limits-rigorous-final-lambda-close},
\[
|t-\lambda_k|\ge |t-\lambda|-|\lambda_k-\lambda|\ge 2\delta-\delta=\delta.
\]
For every \(x\in\mathbb R\),
\[
0\le \frac{e^x}{(1+e^x)^2}\le e^{-|x|}.
\]
Therefore
\[
\frac{e^{\alpha_k(\lambda_k-t)}}{\bigl(1+e^{\alpha_k(\lambda_k-t)}\bigr)^2}
\le
e^{-\alpha_k|t-\lambda_k|}
\le
e^{-\alpha_k\delta},
\]
and hence
\begin{equation}\label{eq:kernel-limits-rigorous-final-Bk-zero}
|B_k|
\le
\alpha_k e^{-\alpha_k\delta}\|\psi\|_\infty\int_{[0,2]\setminus J} h(t)\,dt
\le
\alpha_k e^{-\alpha_k\delta}\|\psi\|_\infty
\longrightarrow 0.
\end{equation}
On \(J\), the function \(h\psi\) is bounded and continuous. Using the change of variables
\[
z=\alpha_k(t-\lambda_k),
\qquad
t=\lambda_k+\frac{z}{\alpha_k},
\qquad
dt=\frac{dz}{\alpha_k},
\]
we obtain
\[
A_k
=
\int_{\alpha_k(\lambda-2\delta-\lambda_k)}^{\alpha_k(\lambda+2\delta-\lambda_k)}
\frac{e^{-z}}{(1+e^{-z})^2}
\,
\psi\!\left(\lambda_k+\frac{z}{\alpha_k}\right)
\,
h\!\left(\lambda_k+\frac{z}{\alpha_k}\right)
\,dz.
\]
By \eqref{eq:kernel-limits-rigorous-final-lambda-close},
\[
\alpha_k(\lambda-2\delta-\lambda_k)\le -\alpha_k\delta\to-\infty,
\qquad
\alpha_k(\lambda+2\delta-\lambda_k)\ge \alpha_k\delta\to+\infty.
\]
Extend the integrand by zero outside the interval of integration. Since
\[
0\le \frac{e^{-z}}{(1+e^{-z})^2}\le e^{-|z|}
\qquad\text{for all }z\in\mathbb R,
\]
and \(h\psi\) is bounded on \(J\), the extended integrand is dominated by
\[
C_J e^{-|z|}
\]
for some constant \(C_J<\infty\), and \(e^{-|z|}\) is integrable on \(\mathbb R\).

For every fixed \(z\in\mathbb R\), we have
\[
\lambda_k+\frac{z}{\alpha_k}\longrightarrow \lambda,
\]
and for all sufficiently large \(k\) this point lies in \(J\). Hence the extended integrand converges pointwise to
\[
\frac{e^{-z}}{(1+e^{-z})^2}\,\psi(\lambda)\,h(\lambda).
\]
Dominated convergence yields
\[
A_k
\longrightarrow
\psi(\lambda)h(\lambda)\int_{\mathbb R}\frac{e^{-z}}{(1+e^{-z})^2}\,dz.
\]
With the substitution \(u=(1+e^{-z})^{-1}\), we have
\[
du=\frac{e^{-z}}{(1+e^{-z})^2}\,dz,
\]
so
\[
\int_{\mathbb R}\frac{e^{-z}}{(1+e^{-z})^2}\,dz
=
\int_0^1 du
=
1.
\]
Thus
\begin{equation}\label{eq:kernel-limits-rigorous-final-Ak-limit}
A_k\to h(\lambda)\psi(\lambda).
\end{equation}
Combining \eqref{eq:kernel-limits-rigorous-final-Bk-zero} and \eqref{eq:kernel-limits-rigorous-final-Ak-limit} proves the first limit.

For the second limit, write
\[
\alpha_k\,\mathbb E\!\left[
\frac{e^{\alpha_k(\lambda_k-(1-U_\pi))}}{\bigl(1+e^{\alpha_k(\lambda_k-(1-U_\pi))}\bigr)^3}\,
\psi(1-U_\pi)
\right]
=
A_k'+B_k',
\]
where
\[
A_k'\coloneqq
\alpha_k\int_J
\frac{e^{\alpha_k(\lambda_k-t)}}{\bigl(1+e^{\alpha_k(\lambda_k-t)}\bigr)^3}\,
\psi(t)h(t)\,dt,
\]
and
\[
B_k'\coloneqq
\alpha_k\int_{[0,2]\setminus J}
\frac{e^{\alpha_k(\lambda_k-t)}}{\bigl(1+e^{\alpha_k(\lambda_k-t)}\bigr)^3}\,
\psi(t)h(t)\,dt.
\]
Since
\[
0\le \frac{e^{\alpha_k(\lambda_k-t)}}{\bigl(1+e^{\alpha_k(\lambda_k-t)}\bigr)^3}
\le
\frac{e^{\alpha_k(\lambda_k-t)}}{\bigl(1+e^{\alpha_k(\lambda_k-t)}\bigr)^2},
\]
the estimate for \(B_k\) gives \(B_k'\to0\). On \(J\), the same change of variables yields
\[
A_k'
=
\int_{\alpha_k(\lambda-2\delta-\lambda_k)}^{\alpha_k(\lambda+2\delta-\lambda_k)}
\frac{e^{-z}}{(1+e^{-z})^3}
\,
\psi\!\left(\lambda_k+\frac{z}{\alpha_k}\right)
\,
h\!\left(\lambda_k+\frac{z}{\alpha_k}\right)
\,dz.
\]
The same dominated convergence argument gives
\[
A_k'
\longrightarrow
\psi(\lambda)h(\lambda)\int_{\mathbb R}\frac{e^{-z}}{(1+e^{-z})^3}\,dz.
\]
Again with \(u=(1+e^{-z})^{-1}\),
\[
\frac{e^{-z}}{(1+e^{-z})^3}\,dz=u\,du,
\]
so
\[
\int_{\mathbb R}\frac{e^{-z}}{(1+e^{-z})^3}\,dz
=
\int_0^1 u\,du
=
\frac12.
\]
Hence
\[
\alpha_k\,\mathbb E\!\left[
\frac{e^{\alpha_k(\lambda_k-(1-U_\pi))}}{\bigl(1+e^{\alpha_k(\lambda_k-(1-U_\pi))}\bigr)^3}\,
\psi(1-U_\pi)
\right]
\longrightarrow
\frac12\,h(\lambda)\psi(\lambda).
\]
\end{proof}

\begin{lemma}
\label{lem:conditional-R-formula-final}
Let \(g:[0,2]\to[0,\infty)\) be Borel measurable. Then, for almost every \(u\in(-1,1)\),
\[
\mathbb E\!\left[\sum_{j\neq \pi} g(U_\pi-U_j)\,\middle|\, U_\pi=u\right]
=
\frac{N-1}{F_d(u)}\int_{-1}^u g(u-v)f_d(v)\,dv.
\]
In particular, for almost every \(u\in(-1,1)\),
\[
\mathbb E\!\left[\sum_{j\neq \pi} e^{-\alpha_k(U_\pi-U_j)}\,\middle|\, U_\pi=u\right]
=
\frac{N-1}{F_d(u)}\int_{-1}^u e^{-\alpha_k(u-v)}f_d(v)\,dv.
\]
\end{lemma}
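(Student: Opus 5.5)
We reduce the statement to a computation with the joint law of the i.i.d.\ variables \(U_1,\dots,U_N\). Conditional on \(\mathbf x_q\), these variables are i.i.d.\ with density \(f_d\) on \((-1,1)\) and distribution function \(F_d\), by rotational invariance of the uniform law on \(\mathbb S^{d-1}\). Because \(f_d\) does not depend on \(\mathbf x_q\), they are also unconditionally i.i.d.\ with this law. Ties occur with probability zero, so \(\pi\) is almost surely well defined. The plan is to prove, for an arbitrary bounded Borel test function \(\phi\) on \((-1,1)\), the identity
\[
\mathbb E\Bigl[\phi(U_\pi)\sum_{j\neq\pi} g(U_\pi-U_j)\Bigr]
=
\int_{-1}^1 \phi(u)\,N F_d(u)^{N-1}f_d(u)\,\frac{N-1}{F_d(u)}\int_{-1}^u g(u-v)f_d(v)\,dv\,du .
\]
Since \(NF_d(u)^{N-1}f_d(u)\) is the density of \(U_\pi=\max_i U_i\), this identifies the conditional expectation for almost every \(u\) by the defining property of conditional expectation.

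To prove the identity, we decompose on the event \(\{\pi=i\}\) and use exchangeability. First write
\[
\mathbb E\Bigl[\phi(U_\pi)\sum_{j\neq\pi}g(U_\pi-U_j)\Bigr]
=
\sum_{i=1}^N\sum_{j\neq i}\mathbb E\bigl[\phi(U_i)g(U_i-U_j)\mathbf 1\{U_k<U_i\ \forall k\neq i\}\bigr].
\]
By symmetry, each of the \(N(N-1)\) terms equals the term with \(i=1\), \(j=2\). We then integrate out \(U_3,\dots,U_N\) by independence, which gives a factor \(F_d(U_1)^{N-2}\). The remaining two-dimensional integral is
\[
\int_{-1}^1\phi(u)f_d(u)F_d(u)^{N-2}\int_{-1}^u g(u-v)f_d(v)\,dv\,du ,
\]
and multiplying by \(N(N-1)\) and rearranging yields the target expression. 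Tonelli's theorem justifies every interchange, since \(g\ge0\); we may take \(\phi\ge0\) first and extend to general \(\phi\) by linearity. The factor \(1/F_d(u)\) is harmless because \(F_d(u)>0\) for \(u>-1\).

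The "in particular" statement then follows by taking \(g(s)=e^{-\alpha_k s}\), which is nonnegative and Borel. The only delicate point is the measure-theoretic bookkeeping, namely the passage from the integrated identity to an almost-everywhere statement about the conditional expectation. This step is standard once the density of \(U_\pi\) is recognized as \(NF_d^{N-1}f_d\), which is consistent with the expression for \(h\) given earlier in this subsection.
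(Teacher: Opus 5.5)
Your proposal is correct and follows essentially the same route as the paper: decompose on $\{\pi=i\}$, use exchangeability to reduce to $N(N-1)$ copies of the $(i,j)=(1,2)$ term, integrate out the remaining coordinates to get the factor $F_d(u)^{N-2}$, and identify the conditional expectation against the density $NF_d^{N-1}f_d$ of $U_\pi$. The only difference is cosmetic: you test against bounded Borel $\phi$, while the paper uses indicators $\mathbf 1_{\{U_\pi\in B\}}$. Nonnegative indicators are in fact all you need, and using them avoids a possible $\infty-\infty$ issue in your linearity extension when $g$ is unbounded.
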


\begin{proof}
Fix a Borel set \(B\subset(-1,1)\). Since ties occur with probability zero, on the event \(\{\pi=i\}\) we have \(U_\pi=U_i\) and \(U_j<U_i\) for every \(j\neq i\). By symmetry of the i.i.d. variables \(U_1,\dots,U_N\),
\begin{align*}
&\mathbb E\!\left[\sum_{j\neq \pi} g(U_\pi-U_j)\,\mathbf 1_{\{U_\pi\in B\}}\right] \\
&=
\sum_{i=1}^N\sum_{j\neq i}
\mathbb E\!\left[g(U_i-U_j)\mathbf 1_{\{U_i\in B,\ U_j<U_i,\ U_\ell<U_i\ \forall \ell\neq i\}}\right] \\
&=
N(N-1)
\int_B\int_{-1}^u
g(u-v)\,f_d(v)\,F_d(u)^{N-2}\,f_d(u)\,dv\,du.
\end{align*}
The density of \(U_\pi\) is
\[
u\mapsto N\,F_d(u)^{N-1}f_d(u),\qquad u\in(-1,1).
\]
Hence
\[
\mathbb E\!\left[\sum_{j\neq \pi} g(U_\pi-U_j)\,\mathbf 1_{\{U_\pi\in B\}}\right]
=
\int_B
\left[
\frac{N-1}{F_d(u)}\int_{-1}^u g(u-v)f_d(v)\,dv
\right]
N\,F_d(u)^{N-1}f_d(u)\,du.
\]
By the uniqueness of conditional expectation with respect to the distribution of \(U_\pi\), this proves the claim.
\end{proof}

\begin{lemma}
\label{lem:R-local-rigorous-final}
Choose \(\delta>0\) so that
\[
[\lambda-2\delta,\lambda+2\delta]\subset(0,2).
\]
Then there exists a constant \(C_\delta<\infty\) such that, for all sufficiently large \(k\),
\[
\mathbb E\!\left[\sum_{j\neq \pi} e^{-\alpha_k(U_\pi-U_j)}\,\middle|\, 1-U_\pi\right]\le \frac{C_\delta}{\alpha_k}
\qquad\text{on the event }\{|1-U_\pi-\lambda|\le \delta\}.
\]
Moreover,
\[
\alpha_k\,\mathbb E\!\left[
\left(\sum_{j\neq \pi} e^{-\alpha_k(U_\pi-U_j)}\right)
\frac{e^{\alpha_k(\lambda_k-(1-U_\pi))}}{\bigl(1+e^{\alpha_k(\lambda_k-(1-U_\pi))}\bigr)^2}
\right]\to0,
\]
and
\[
\alpha_k\,\mathbb E\!\left[
\left(\sum_{j\neq \pi} e^{-\alpha_k(U_\pi-U_j)}\right)
\frac{e^{\alpha_k(\lambda_k-(1-U_\pi))}}{\bigl(1+e^{\alpha_k(\lambda_k-(1-U_\pi))}\bigr)^3}
\right]\to0.
\]
\end{lemma}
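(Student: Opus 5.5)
The first claim follows from Lemma~\ref{lem:conditional-R-formula-final} with $g(s)=e^{-\alpha_k s}$. On $\{|1-U_\pi-\lambda|\le\delta\}$ the variable $u=U_\pi$ lies in the compact set $[1-\lambda-\delta,\,1-\lambda+\delta]\subset(-1,1)$, because $[\lambda-2\delta,\lambda+2\delta]\subset(0,2)$. This gives
\[
\mathbb E\Bigl[\sum_{j\neq\pi}e^{-\alpha_k(U_\pi-U_j)}\,\Big|\,U_\pi=u\Bigr]=\frac{N-1}{F_d(u)}\int_{-1}^u e^{-\alpha_k(u-v)}f_d(v)\,dv .
\]
On this compact set $F_d(u)\ge F_d(1-\lambda-\delta)>0$. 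We split the integral at $v=u-\eta_0$ for a small fixed $\eta_0>0$ such that $[u-\eta_0,u]\subset[-1+\delta/2,1-\delta/2]$; for instance $\eta_0=\delta/2$ works, since $u\ge -1+\lambda-\delta\ge -1+\delta$.

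On $[u-\eta_0,u]$ the density $f_d$ is bounded by a constant $M_\delta$. This holds even for $d=2$, where $f_d$ blows up only at $\pm1$. Hence that piece is at most $M_\delta\int_0^\infty e^{-\alpha_k s}\,ds=M_\delta/\alpha_k$. On $[-1,u-\eta_0]$ the integrand is at most $e^{-\alpha_k\eta_0}f_d(v)$, so that piece is at most $e^{-\alpha_k\eta_0}\le 1/\alpha_k$ for large $k$. Together these give $C_\delta=(N-1)(M_\delta+1)/F_d(1-\lambda-\delta)$. Conditioning on $1-U_\pi$ is the same as conditioning on $U_\pi$, so the bound transfers.

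For the two limits, write the kernels as $K_2(x)=e^x/(1+e^x)^2$ and $K_3(x)=e^x/(1+e^x)^3\le K_2(x)$. It therefore suffices to treat $K_2$, since the $K_3$ case then follows by domination. We use the tower property, conditioning on $U_\pi$, since the kernel factor is $\sigma(U_\pi)$-measurable. We split into the event $\{|1-U_\pi-\lambda|\le\delta\}$ and its complement.

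On the event, the conditional bound just proved gives at most
\[
\alpha_k\cdot\frac{C_\delta}{\alpha_k}\,\mathbb E\bigl[K_2(\alpha_k(\lambda_k-(1-U_\pi)))\bigr].
\]
By Lemma~\ref{lem:kernel-limits-rigorous-final} with $\psi\equiv1$, the expectation equals $(h(\lambda)+o(1))/\alpha_k$, so this part is $O(1/\alpha_k)\to0$.

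On the complement, for $k$ large we have $|\lambda_k-\lambda|\le\delta/2$, so $|1-U_\pi-\lambda_k|\ge\delta/2$ and $K_2\le e^{-\alpha_k\delta/2}$. For the sum we only use the crude pathwise bound $\sum_{j\neq\pi}e^{-\alpha_k(U_\pi-U_j)}\le N-1$, which holds since $U_\pi\ge U_j$. This part is thus at most $\alpha_k(N-1)e^{-\alpha_k\delta/2}\to0$.

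The only delicate point is the uniform local bound on $f_d$ near $u$ together with the positive lower bound on $F_d(u)$. Both come from keeping $u$ in a compact subset of $(-1,1)$, which is why $\delta$ is chosen with $[\lambda-2\delta,\lambda+2\delta]\subset(0,2)$ rather than merely $\lambda\in(0,2)$. Everything else is routine.
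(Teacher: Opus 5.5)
Your proof is correct and follows the paper's argument essentially step for step. You use the conditional formula from Lemma~\ref{lem:conditional-R-formula-final} with a near/far split of the integral, then the tower-property split into $\{|1-U_\pi-\lambda|\le\delta\}$ and its complement via Lemma~\ref{lem:kernel-limits-rigorous-final} with $\psi\equiv1$, the crude bound $N-1$, and $K_3\le K_2$; one small slip is that the lower bound should read $u\ge 1-\lambda-\delta$ rather than $-1+\lambda-\delta$, and your conclusion $u\ge -1+\delta$ then follows correctly from $\lambda+2\delta<2$.
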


\begin{proof}
Set
\[
J_\delta\coloneqq [1-\lambda-\delta,\ 1-\lambda+\delta]\subset(-1,1).
\]
On the event \(\{|1-U_\pi-\lambda|\le\delta\}\), we have
\[
U_\pi\in J_\delta.
\]
Because \(J_\delta\Subset(-1,1)\), the continuous function \(F_d\) satisfies
\begin{equation}\label{eq:R-local-rigorous-final-F-lower}
\inf_{u\in J_\delta}F_d(u)\ge c_\delta>0.
\end{equation}
Also, since \(J_\delta\) is compact and contained in \((-1,1)\), there exists \(\eta_\delta>0\) such that
\[
[u-\eta_\delta,u]\subset K_\delta
\qquad\text{for every }u\in J_\delta,
\]
where
\[
K_\delta\coloneqq [\,1-\lambda-\delta-\eta_\delta,\ 1-\lambda+\delta\,]\Subset(-1,1).
\]
Hence
\begin{equation}\label{eq:R-local-rigorous-final-f-upper}
\sup_{v\in K_\delta}f_d(v)\le C_\delta'<\infty.
\end{equation}
By Lemma~\ref{lem:conditional-R-formula-final}, for almost every \(u\in J_\delta\),
\[
\mathbb E\!\left[\sum_{j\neq \pi} e^{-\alpha_k(U_\pi-U_j)}\,\middle|\, U_\pi=u\right]
=
\frac{N-1}{F_d(u)}\int_{-1}^u e^{-\alpha_k(u-v)}f_d(v)\,dv.
\]
Split the integral at \(u-\eta_\delta\):
\[
\int_{-1}^u e^{-\alpha_k(u-v)}f_d(v)\,dv
=
\int_{-1}^{u-\eta_\delta} e^{-\alpha_k(u-v)}f_d(v)\,dv
+
\int_{u-\eta_\delta}^{u} e^{-\alpha_k(u-v)}f_d(v)\,dv.
\]
For the first term, \(u-v\ge \eta_\delta\), so
\begin{equation}\label{eq:R-local-rigorous-final-far-part}
\int_{-1}^{u-\eta_\delta} e^{-\alpha_k(u-v)}f_d(v)\,dv
\le
e^{-\alpha_k\eta_\delta}\int_{-1}^{u-\eta_\delta} f_d(v)\,dv
\le
e^{-\alpha_k\eta_\delta}.
\end{equation}
For the second term, \([u-\eta_\delta,u]\subset K_\delta\), so by \eqref{eq:R-local-rigorous-final-f-upper},
\begin{equation}\label{eq:R-local-rigorous-final-near-part}
\int_{u-\eta_\delta}^{u} e^{-\alpha_k(u-v)}f_d(v)\,dv
\le
C_\delta'\int_{0}^{\eta_\delta} e^{-\alpha_k s}\,ds
\le
\frac{C_\delta'}{\alpha_k}.
\end{equation}
Combining \eqref{eq:R-local-rigorous-final-F-lower}, \eqref{eq:R-local-rigorous-final-far-part}, and \eqref{eq:R-local-rigorous-final-near-part}, for almost every \(u\in J_\delta\),
\[
\mathbb E\!\left[\sum_{j\neq \pi} e^{-\alpha_k(U_\pi-U_j)}\,\middle|\, U_\pi=u\right]
\le
\frac{N-1}{c_\delta}
\left(
e^{-\alpha_k\eta_\delta}+\frac{C_\delta'}{\alpha_k}
\right).
\]
Since \(\alpha_k\to\infty\), the exponential term is eventually bounded by \(1/\alpha_k\). Hence there exists \(C_\delta<\infty\) such that, for all sufficiently large \(k\),
\[
\mathbb E\!\left[\sum_{j\neq \pi} e^{-\alpha_k(U_\pi-U_j)}\,\middle|\, U_\pi=u\right]\le \frac{C_\delta}{\alpha_k}
\qquad\text{for almost every }u\in J_\delta.
\]
Because \(1-U_\pi\) is a measurable function of \(U_\pi\), this proves
\begin{equation}\label{eq:R-local-rigorous-final-cond-bound}
\mathbb E\!\left[\sum_{j\neq \pi} e^{-\alpha_k(U_\pi-U_j)}\,\middle|\, 1-U_\pi\right]\le \frac{C_\delta}{\alpha_k}
\qquad\text{on }\{|1-U_\pi-\lambda|\le\delta\}.
\end{equation}
Let
\[
A_\delta\coloneqq \{|1-U_\pi-\lambda|\le\delta\},
\qquad
A_\delta^c\coloneqq \{|1-U_\pi-\lambda|>\delta\}.
\]
On \(A_\delta\), using \eqref{eq:R-local-rigorous-final-cond-bound},
\begin{align*}
&\alpha_k\,\mathbb E\!\left[
\left(\sum_{j\neq \pi} e^{-\alpha_k(U_\pi-U_j)}\right)
\frac{e^{\alpha_k(\lambda_k-(1-U_\pi))}}{\bigl(1+e^{\alpha_k(\lambda_k-(1-U_\pi))}\bigr)^2}
\mathbf 1_{A_\delta}
\right] \\
&=
\alpha_k\,\mathbb E\!\Biggl[
\mathbb E\!\left[\sum_{j\neq \pi} e^{-\alpha_k(U_\pi-U_j)}\,\middle|\, 1-U_\pi\right]
\frac{e^{\alpha_k(\lambda_k-(1-U_\pi))}}{\bigl(1+e^{\alpha_k(\lambda_k-(1-U_\pi))}\bigr)^2}
\mathbf 1_{A_\delta}
\Biggr] \\
&\le
C_\delta\,
\mathbb E\!\left[
\frac{e^{\alpha_k(\lambda_k-(1-U_\pi))}}{\bigl(1+e^{\alpha_k(\lambda_k-(1-U_\pi))}\bigr)^2}
\right].
\end{align*}
By Lemma~\ref{lem:kernel-limits-rigorous-final} with \(\psi\equiv1\),
\[
\alpha_k\,\mathbb E\!\left[
\frac{e^{\alpha_k(\lambda_k-(1-U_\pi))}}{\bigl(1+e^{\alpha_k(\lambda_k-(1-U_\pi))}\bigr)^2}
\right]\to h(\lambda),
\]
so the expectation is \(O(\alpha_k^{-1})\). Therefore
\begin{equation}\label{eq:R-local-rigorous-final-RL-local}
\alpha_k\,\mathbb E\!\left[
\left(\sum_{j\neq \pi} e^{-\alpha_k(U_\pi-U_j)}\right)
\frac{e^{\alpha_k(\lambda_k-(1-U_\pi))}}{\bigl(1+e^{\alpha_k(\lambda_k-(1-U_\pi))}\bigr)^2}
\mathbf 1_{A_\delta}
\right]\to0.
\end{equation}
Similarly, since
\[
0\le
\frac{e^{\alpha_k(\lambda_k-(1-U_\pi))}}{\bigl(1+e^{\alpha_k(\lambda_k-(1-U_\pi))}\bigr)^3}
\le
\frac{e^{\alpha_k(\lambda_k-(1-U_\pi))}}{\bigl(1+e^{\alpha_k(\lambda_k-(1-U_\pi))}\bigr)^2},
\]
we obtain
\begin{equation}\label{eq:R-local-rigorous-final-RK-local}
\alpha_k\,\mathbb E\!\left[
\left(\sum_{j\neq \pi} e^{-\alpha_k(U_\pi-U_j)}\right)
\frac{e^{\alpha_k(\lambda_k-(1-U_\pi))}}{\bigl(1+e^{\alpha_k(\lambda_k-(1-U_\pi))}\bigr)^3}
\mathbf 1_{A_\delta}
\right]\to0.
\end{equation}
Since \(\lambda_k\to\lambda\), for all sufficiently large \(k\),
\begin{equation}\label{eq:R-local-rigorous-final-lambdak-close}
|\lambda_k-\lambda|\le \frac{\delta}{2}.
\end{equation}
If \(|1-U_\pi-\lambda|>\delta\), then by \eqref{eq:R-local-rigorous-final-lambdak-close},
\[
|1-U_\pi-\lambda_k|
\ge
|1-U_\pi-\lambda|-|\lambda_k-\lambda|
>
\frac{\delta}{2}.
\]
For every \(x\in\mathbb R\),
\[
0\le \frac{e^x}{(1+e^x)^2}\le e^{-|x|},
\qquad
0\le \frac{e^x}{(1+e^x)^3}\le e^{-|x|}.
\]
Thus, on \(A_\delta^c\),
\[
\frac{e^{\alpha_k(\lambda_k-(1-U_\pi))}}{\bigl(1+e^{\alpha_k(\lambda_k-(1-U_\pi))}\bigr)^2}
\le e^{-\alpha_k\delta/2},
\qquad
\frac{e^{\alpha_k(\lambda_k-(1-U_\pi))}}{\bigl(1+e^{\alpha_k(\lambda_k-(1-U_\pi))}\bigr)^3}
\le e^{-\alpha_k\delta/2}
\]
for all large \(k\). Since
\[
0\le \sum_{j\neq \pi} e^{-\alpha_k(U_\pi-U_j)}\le N-1,
\]
we have
\begin{equation}\label{eq:R-local-rigorous-final-RL-tail}
\alpha_k\,\mathbb E\!\left[
\left(\sum_{j\neq \pi} e^{-\alpha_k(U_\pi-U_j)}\right)
\frac{e^{\alpha_k(\lambda_k-(1-U_\pi))}}{\bigl(1+e^{\alpha_k(\lambda_k-(1-U_\pi))}\bigr)^2}
\mathbf 1_{A_\delta^c}
\right]
\le
\alpha_k(N-1)e^{-\alpha_k\delta/2}\to0,
\end{equation}
and likewise
\begin{equation}\label{eq:R-local-rigorous-final-RK-tail}
\alpha_k\,\mathbb E\!\left[
\left(\sum_{j\neq \pi} e^{-\alpha_k(U_\pi-U_j)}\right)
\frac{e^{\alpha_k(\lambda_k-(1-U_\pi))}}{\bigl(1+e^{\alpha_k(\lambda_k-(1-U_\pi))}\bigr)^3}
\mathbf 1_{A_\delta^c}
\right]
\le
\alpha_k(N-1)e^{-\alpha_k\delta/2}\to0.
\end{equation}
Combining \eqref{eq:R-local-rigorous-final-RL-local} with \eqref{eq:R-local-rigorous-final-RL-tail}, and \eqref{eq:R-local-rigorous-final-RK-local} with \eqref{eq:R-local-rigorous-final-RK-tail}, proves the claim.
\end{proof}

\begin{lemma}
\label{lem:S-global-rigorous-final}
Then
\[
\alpha_k\,\mathbb E\!\left[\sum_{j\neq\pi}(U_\pi-U_j)e^{-\alpha_k(U_\pi-U_j)}\right]\longrightarrow 0.
\]
\end{lemma}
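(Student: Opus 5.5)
The plan is to use the function $g(s)=\alpha_k s e^{-\alpha_k s}$ and Lemma~\ref{lem:conditional-R-formula-final}. Conditioning on $U_\pi=u$ gives
\[
\alpha_k\,\mathbb E\!\left[\sum_{j\neq\pi}(U_\pi-U_j)e^{-\alpha_k(U_\pi-U_j)}\right]
=
\int_{-1}^{1}\frac{N-1}{F_d(u)}\left(\int_{-1}^u \alpha_k(u-v)e^{-\alpha_k(u-v)}f_d(v)\,dv\right)N F_d(u)^{N-1}f_d(u)\,du .
\]
The factor $F_d(u)^{-1}$ cancels against $F_d(u)^{N-1}$ because $N\ge 2$. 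This leaves
\[
N(N-1)\int_{-1}^1 F_d(u)^{N-2}f_d(u)\,I_k(u)\,du,
\qquad
I_k(u)\coloneqq\int_{-1}^u \alpha_k(u-v)e^{-\alpha_k(u-v)}f_d(v)\,dv .
\]
Since $F_d^{N-2}\le 1$, it suffices to show $\int_{-1}^1 f_d(u)I_k(u)\,du\to 0$.

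First I would note that $\alpha_k s e^{-\alpha_k s}\le e^{-1}$ for every $s\ge0$, so $0\le I_k(u)\le e^{-1}$ for every $u$. In addition, $\alpha_k s e^{-\alpha_k s}\to0$ pointwise for each fixed $s>0$. I then apply Fubini to the double integral
\[
\int\!\!\int_{v<u}\alpha_k(u-v)e^{-\alpha_k(u-v)}f_d(v)f_d(u)\,dv\,du .
\]
The integrand is bounded by $e^{-1}f_d(u)f_d(v)$, which is integrable on $(-1,1)^2$ because $f_d$ is a probability density. The integrand also converges to $0$ at every point with $u>v$, and the diagonal has measure zero. Dominated convergence therefore gives a limit of $0$.

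The only step that needs care is the case $d=2$. There $f_d$ is unbounded near $\pm1$, so one cannot bound $I_k$ by $C/\alpha_k$ uniformly as in Lemma~\ref{lem:R-local-rigorous-final}. The dominated convergence argument avoids this issue: it needs only integrability of $f_d\otimes f_d$ and the uniform bound $\sup_{s\ge0}\alpha_k s e^{-\alpha_k s}=e^{-1}$, with no pointwise bound on $f_d$.

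A direct route is also available. Bound the sum pathwise by $(N-1)\sup_{s\in[0,2]} \alpha_k s e^{-\alpha_k s}\le (N-1)e^{-1}$. Each summand $\alpha_k(U_\pi-U_j)e^{-\alpha_k(U_\pi-U_j)}$ tends to $0$ almost surely because $U_\pi>U_j$ almost surely. Dominated convergence on the original probability space then gives the claim immediately. I would present this shorter argument as the main proof and mention the conditional formula only as a consistency check.
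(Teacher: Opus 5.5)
Your proof is correct, and your ``direct route'' is a genuinely different and simpler argument than the paper's.

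\textbf{The paper's route.} It first dominates $\sum_{j\neq\pi}(U_\pi-U_j)e^{-\alpha_k(U_\pi-U_j)}$ by the sum over all ordered pairs $(i,j)$. Exchangeability then reduces this to $N(N-1)I_d(\alpha_k)$, where $I_d(\alpha)=\mathbb E[(U_1-U_2)_+e^{-\alpha(U_1-U_2)_+}]=\int_0^2 s e^{-\alpha s}J_d(s)\,ds$. It then estimates the gap density $J_d$ explicitly. For $d\ge 3$, $J_d$ is bounded, which gives $\alpha I_d(\alpha)=O(\alpha^{-1})$. For $d=2$, a change of variables gives $J_2(s)\le C(1+\log(1/s))$, hence $\alpha I_2(\alpha)=O(\alpha^{-1}\log\alpha)$.

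\textbf{Your route.} You use only two facts. First, $\alpha s e^{-\alpha s}\le e^{-1}$ uniformly, so the whole random variable is bounded by $(N-1)/e$. Second, each summand tends to $0$ on the full-measure event that there are no ties. Bounded convergence then finishes the proof.

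\textbf{Comparison.} The paper's computation buys an explicit rate, and it handles the unbounded arcsine density at $d=2$ by hand. Your argument needs nothing about $f_d$ beyond the almost-sure absence of ties, so the $d=2$ case requires no separate treatment. The lemma is only invoked, in Theorem~\ref{thm:boundary-layer-lambda-final} and Lemma~\ref{prop:alpha-boundary-layer-final}, to show that certain error terms multiplied by $\alpha_k$ are $o(1)$. The rate is therefore never used, and your argument fully suffices.

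Your conditional-formula version is also correct. It is the same dominated-convergence argument carried out on $(-1,1)^2$ with dominating function $e^{-1}f_d(u)f_d(v)$. Presenting the pathwise version as the main proof, as you propose, is the better choice.
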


\begin{proof}
For \(j\neq \pi\),
\[
(U_\pi-U_j)e^{-\alpha_k(U_\pi-U_j)}
=
(U_\pi-U_j)_+e^{-\alpha_k(U_\pi-U_j)_+}.
\]
Since \(\pi\in[N]\),
\[
(U_\pi-U_j)_+e^{-\alpha_k(U_\pi-U_j)_+}
\le
\sum_{i\neq j}(U_i-U_j)_+e^{-\alpha_k(U_i-U_j)_+}.
\]
Summing over \(j\neq\pi\),
\[
\sum_{j\neq\pi}(U_\pi-U_j)e^{-\alpha_k(U_\pi-U_j)}
\le
\sum_{j=1}^N\sum_{i\neq j}(U_i-U_j)_+e^{-\alpha_k(U_i-U_j)_+}.
\]
Taking expectation and using exchangeability,
\begin{equation}\label{eq:S-global-rigorous-final-ES-bound}
\mathbb E\!\left[\sum_{j\neq\pi}(U_\pi-U_j)e^{-\alpha_k(U_\pi-U_j)}\right]
\le
N(N-1)\,I_d(\alpha_k),
\end{equation}
where
\[
I_d(\alpha)\coloneqq 
\mathbb E\!\left[(U_1-U_2)_+e^{-\alpha(U_1-U_2)_+}\right].
\]
Since \(U_1,U_2\) are i.i.d. with density \(f_d\),
\begin{align*}
I_d(\alpha)
&=
\int_{-1}^1\int_{-1}^1 (u-v)_+e^{-\alpha(u-v)_+}f_d(u)f_d(v)\,du\,dv \\
&=
\int_0^2 s e^{-\alpha s}
\left(\int_{-1}^{1-s} f_d(u+s)f_d(u)\,du\right)\,ds.
\end{align*}
Define
\[
J_d(s)\coloneqq \int_{-1}^{1-s} f_d(u+s)f_d(u)\,du,
\qquad 0<s<2.
\]
Then
\begin{equation}\label{eq:S-global-rigorous-final-Id-repr}
I_d(\alpha)=\int_0^2 s e^{-\alpha s}J_d(s)\,ds.
\end{equation}
If \(d\ge3\), then \(f_d\) is bounded on \((-1,1)\), hence
\[
J_d(s)\le 2\|f_d\|_\infty^2
\qquad (0<s<2),
\]
and therefore
\[
I_d(\alpha)\le C\int_0^2 s e^{-\alpha s}\,ds\le \frac{C}{\alpha^2}.
\]
Thus
\begin{equation}\label{eq:S-global-rigorous-final-dge3}
\alpha I_d(\alpha)\to0
\qquad (d\ge3).
\end{equation}
It remains to treat \(d=2\). Then
\[
f_2(u)=\frac{1}{\pi\sqrt{1-u^2}}\mathbf 1_{(-1,1)}(u).
\]
For \(0<s\le1\),
\[
J_2(s)
=
\frac{1}{\pi^2}\int_{-1}^{1-s}\frac{du}{\sqrt{1-u^2}\sqrt{1-(u+s)^2}}.
\]
Set \(v=1+u\). Then \(v\in[0,2-s]\), and
\[
1-u^2=v(2-v),
\qquad
1-(u+s)^2=(v+s)(2-s-v).
\]
Hence
\begin{equation}\label{eq:S-global-rigorous-final-J2-repr}
J_2(s)
=
\frac{1}{\pi^2}\int_0^{2-s}
\frac{dv}{\sqrt{v(v+s)(2-v)(2-s-v)}}.
\end{equation}
Let
\[
m_s\coloneqq 1-\frac{s}{2}\in\left[\frac12,1\right].
\]
Split the integral in \eqref{eq:S-global-rigorous-final-J2-repr} into \([0,m_s]\) and \([m_s,2-s]\).
If \(0\le v\le m_s\), then
\[
2-v\ge 1,
\qquad
2-s-v\ge 1-\frac{s}{2}\ge \frac12,
\]
so
\[
\frac{1}{\sqrt{v(v+s)(2-v)(2-s-v)}}
\le
\frac{\sqrt2}{\sqrt{v(v+s)}}.
\]
Hence
\begin{equation}\label{eq:S-global-rigorous-final-J2-left}
\int_0^{m_s}
\frac{dv}{\sqrt{v(v+s)(2-v)(2-s-v)}}
\le
\sqrt2\int_0^1 \frac{dv}{\sqrt{v(v+s)}}.
\end{equation}
For the right half, set \(w=2-s-v\). Then \(w\in[0,m_s]\), \(v=2-s-w\), and
\[
\frac{1}{\sqrt{v(v+s)(2-v)(2-s-v)}}
=
\frac{1}{\sqrt{(2-s-w)(2-w)(w+s)w}}.
\]
Since \(w\le m_s\),
\[
2-s-w\ge 1-\frac{s}{2}\ge \frac12,
\qquad
2-w\ge 1,
\]
thus
\[
\frac{1}{\sqrt{(2-s-w)(2-w)(w+s)w}}
\le
\frac{\sqrt2}{\sqrt{w(w+s)}}.
\]
Therefore
\begin{equation}\label{eq:S-global-rigorous-final-J2-right}
\int_{m_s}^{2-s}
\frac{dv}{\sqrt{v(v+s)(2-v)(2-s-v)}}
\le
\sqrt2\int_0^1 \frac{dw}{\sqrt{w(w+s)}}.
\end{equation}
Combining \eqref{eq:S-global-rigorous-final-J2-repr}, \eqref{eq:S-global-rigorous-final-J2-left}, and \eqref{eq:S-global-rigorous-final-J2-right},
\[
J_2(s)\le C\int_0^1 \frac{dy}{\sqrt{y(y+s)}}
\qquad (0<s\le1).
\]
The antiderivative is explicit:
\[
\int_0^1 \frac{dy}{\sqrt{y(y+s)}}
=
2\,\operatorname{arsinh}\!\left(\frac{1}{\sqrt s}\right).
\]
Since \(\operatorname{arsinh}(x)\le C(1+\log x)\) for \(x\ge1\),
\begin{equation}\label{eq:S-global-rigorous-final-J2-log}
J_2(s)\le C\bigl(1+\log(1/s)\bigr),
\qquad 0<s\le1.
\end{equation}
For every \(d\ge2\), \(J_d\) is integrable on \((0,2)\), because by Fubini
\begin{equation}\label{eq:S-global-rigorous-final-Jd-int}
\int_0^2 J_d(s)\,ds
=
\int_0^2\int_{-1}^{1-s} f_d(u+s)f_d(u)\,du\,ds
=
\int_{-1}^1\int_u^1 f_d(v)f_d(u)\,dv\,du
\le 1.
\end{equation}
Therefore
\begin{equation}\label{eq:S-global-rigorous-final-tail}
\int_1^2 s e^{-\alpha s}J_d(s)\,ds
\le
2e^{-\alpha}\int_1^2 J_d(s)\,ds
\le
2e^{-\alpha}.
\end{equation}

Using \eqref{eq:S-global-rigorous-final-J2-log} and \eqref{eq:S-global-rigorous-final-tail},
\[
I_2(\alpha)
\le
C\int_0^1 s e^{-\alpha s}\bigl(1+\log(1/s)\bigr)\,ds
+
2e^{-\alpha}.
\]
Hence
\[
\alpha I_2(\alpha)
\le
C\alpha\int_0^1 s e^{-\alpha s}\bigl(1+\log(1/s)\bigr)\,ds
+
2\alpha e^{-\alpha}.
\]
With the substitution \(y=\alpha s\),
\begin{align*}
\alpha\int_0^1 s e^{-\alpha s}\bigl(1+\log(1/s)\bigr)\,ds
&=
\frac1\alpha\int_0^\alpha y e^{-y}\bigl(1+\log(\alpha/y)\bigr)\,dy \\
&\le
\frac1\alpha\int_0^\infty y e^{-y}\bigl(1+\log\alpha+|\log y|\bigr)\,dy.
\end{align*}
Since
\[
\int_0^\infty y e^{-y}\,dy<\infty,
\qquad
\int_0^\infty y e^{-y}|\log y|\,dy<\infty,
\]
it follows that
\begin{equation}\label{eq:S-global-rigorous-final-d2}
\alpha I_2(\alpha)\le C\frac{1+\log\alpha}{\alpha}+2\alpha e^{-\alpha}\to0.
\end{equation}
Combining \eqref{eq:S-global-rigorous-final-dge3} and \eqref{eq:S-global-rigorous-final-d2},
\[
\alpha I_d(\alpha)\to0
\qquad\text{for every }d\ge2.
\]
Returning to \eqref{eq:S-global-rigorous-final-ES-bound},
\[
\alpha_k\,\mathbb E\!\left[\sum_{j\neq\pi}(U_\pi-U_j)e^{-\alpha_k(U_\pi-U_j)}\right]
\le
N(N-1)\,\alpha_k I_d(\alpha_k)\to0.
\]
\end{proof}

\begin{lemma}
\label{lem:error-bounds-final}
Define
\[
b_{i,k}\coloneqq \frac{e^{\alpha_kU_i}}{\sum_{j=1}^N e^{\alpha_kU_j}},
\qquad
Q_k\coloneqq \sum_{i=1}^N b_{i,k}^2,
\qquad
\mu_k\coloneqq \sum_{i=1}^N b_{i,k}U_i.
\]
Then
\[
1-b_{\pi,k}\le \sum_{j\neq\pi} e^{-\alpha_k(U_\pi-U_j)},
\]
\[
0\le b_{\pi,k}-Q_k\le 1-b_{\pi,k},
\]
\[
1-Q_k\le 2(1-b_{\pi,k})\le 2\sum_{j\neq\pi} e^{-\alpha_k(U_\pi-U_j)},
\]
\[
0\le U_\pi-\mu_k\le \sum_{j\neq\pi}(U_\pi-U_j)e^{-\alpha_k(U_\pi-U_j)},
\]
\[
\bigl|\partial_\alpha b_\pi(\alpha_k)\bigr|\le \sum_{j\neq\pi}(U_\pi-U_j)e^{-\alpha_k(U_\pi-U_j)},
\]
\[
|Q_k'|\le 4\sum_{j\neq\pi}(U_\pi-U_j)e^{-\alpha_k(U_\pi-U_j)},
\]
where
\[
\partial_\alpha b_\pi(\alpha_k)=b_{\pi,k}(U_\pi-\mu_k),
\qquad
Q_k'=2\sum_{i=1}^N b_{i,k}^2(U_i-\mu_k).
\]
Moreover,
\[
\left|\,a_0(\alpha_k,\beta_k)+2b_{\pi,k}-2\bigl(1-a_0(\alpha_k,\beta_k)\bigr)Q_k-3a_0(\alpha_k,\beta_k)\,\right|
\le
6\sum_{j\neq\pi} e^{-\alpha_k(U_\pi-U_j)}.
\]
\end{lemma}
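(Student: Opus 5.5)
The bounds in Lemma~\ref{lem:error-bounds-final} are deterministic, so we fix a realization with a unique maximizer $\pi$ (almost sure) and write $r_j\coloneqq e^{-\alpha_k(U_\pi-U_j)}\in(0,1]$ for $j\neq\pi$ and $R\coloneqq\sum_{j\neq\pi}r_j$. Since $b_{j,k}/b_{\pi,k}=r_j$, we have $b_{\pi,k}=1/(1+R)$ and $b_{j,k}=r_j b_{\pi,k}\le r_j$. The first bound then follows from $1-b_{\pi,k}=R/(1+R)\le R$.

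For the bounds on $Q_k$ we reuse the argument already given in Lemma~\ref{prop:loss-reduction-c}. Because $b_{\pi,k}$ is the largest weight, $Q_k\le b_{\pi,k}\sum_i b_{i,k}=b_{\pi,k}$. Also $Q_k\ge b_{\pi,k}^2$, so $b_{\pi,k}-Q_k\le b_{\pi,k}(1-b_{\pi,k})\le 1-b_{\pi,k}$. Writing $1-Q_k=(1-b_{\pi,k})+(b_{\pi,k}-Q_k)$ gives $1-Q_k\le 2(1-b_{\pi,k})\le 2R$.

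Next come the first-moment quantities. We have $U_\pi-\mu_k=\sum_{j\neq\pi}b_{j,k}(U_\pi-U_j)$, which is nonnegative and at most $\sum_{j\neq\pi}(U_\pi-U_j)r_j\eqqcolon S$. Multiplying by $b_{\pi,k}\le 1$ yields $|\partial_\alpha b_\pi(\alpha_k)|\le S$. For $Q_k'$ we split off the $\pi$ term:
\[
Q_k'=2b_{\pi,k}^2(U_\pi-\mu_k)+2\sum_{j\neq\pi}b_{j,k}^2(U_j-\mu_k).
\]
The first term is at most $2S$. For the second, $|U_j-\mu_k|\le (U_\pi-U_j)+(U_\pi-\mu_k)$, and $b_{j,k}^2\le b_{j,k}$ with $\sum_{j\neq\pi}b_{j,k}\le 1$. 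This gives
\[
\sum_{j\neq\pi}b_{j,k}^2|U_j-\mu_k|\le \sum_{j\neq\pi} b_{j,k}(U_\pi-U_j)+(U_\pi-\mu_k)\le 2S,
\]
so $|Q_k'|\le 2S+4S=6S$. This is where I expect the main obstacle: the crude split gives a constant $6$, not the claimed $4$. To get $4$, I would instead write
\[
Q_k'=2\sum_i b_{i,k}^2(U_i-U_\pi)+2Q_k(U_\pi-\mu_k),
\]
then bound the first sum by $2\sum_{j\neq\pi}b_{j,k}(U_\pi-U_j)\le 2S$ and the second by $2S$, which gives exactly $4S$. I would use this rewriting in place of the crude split.

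Finally, with $a_0\coloneqq a_0(\alpha_k,\beta_k)\in[0,1]$, the bracket minus $3a_0$ equals
\[
2b_{\pi,k}-2Q_k+2a_0Q_k-2a_0=2(b_{\pi,k}-Q_k)-2a_0(1-Q_k).
\]
Its absolute value is at most $2(1-b_{\pi,k})+2(1-Q_k)\le 2R+4R=6R$, using the bounds already established.
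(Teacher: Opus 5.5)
Your proof is correct and follows the paper's argument essentially step for step: the same representation $b_{\pi,k}=1/(1+R)$ with $b_{j,k}\le r_j$, the same sandwich $b_{\pi,k}^2\le Q_k\le b_{\pi,k}$, and the same identity $2(b_{\pi,k}-Q_k)-2a_0(1-Q_k)$ for the final bracket. For $Q_k'$, your regrouping $Q_k'=2\sum_i b_{i,k}^2(U_i-U_\pi)+2Q_k(U_\pi-\mu_k)$ is an exact version of the paper's step, which instead uses the triangle inequality $|U_i-\mu_k|\le (U_\pi-U_i)+(U_\pi-\mu_k)$ together with $b_{i,k}^2\le b_{i,k}$; both routes give $|Q_k'|\le 4(U_\pi-\mu_k)\le 4S$.
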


\begin{proof}
Since
\[
b_{\pi,k}=\frac{1}{1+\sum_{j\neq\pi} e^{-\alpha_k(U_\pi-U_j)}},
\]
we have
\[
1-b_{\pi,k}
=
\frac{\sum_{j\neq\pi} e^{-\alpha_k(U_\pi-U_j)}}{1+\sum_{j\neq\pi} e^{-\alpha_k(U_\pi-U_j)}}
\le
\sum_{j\neq\pi} e^{-\alpha_k(U_\pi-U_j)}.
\]
Also \(Q_k\ge b_{\pi,k}^2\), so
\[
0\le b_{\pi,k}-Q_k\le b_{\pi,k}-b_{\pi,k}^2=b_{\pi,k}(1-b_{\pi,k})\le 1-b_{\pi,k}.
\]
Therefore
\[
1-Q_k=(1-b_{\pi,k}^2)-\sum_{j\neq\pi}b_{j,k}^2\le 1-b_{\pi,k}^2\le 2(1-b_{\pi,k}),
\]
which gives the third inequality.

Next,
\[
U_\pi-\mu_k=\sum_{j\neq\pi}b_{j,k}(U_\pi-U_j).
\]
For \(j\neq\pi\),
\[
b_{j,k}
=
\frac{e^{-\alpha_k(U_\pi-U_j)}}{1+\sum_{\ell\neq\pi}e^{-\alpha_k(U_\pi-U_\ell)}}
\le
e^{-\alpha_k(U_\pi-U_j)},
\]
hence
\[
0\le U_\pi-\mu_k\le \sum_{j\neq\pi}(U_\pi-U_j)e^{-\alpha_k(U_\pi-U_j)}.
\]
Using
\[
\partial_\alpha b_\pi(\alpha_k)=b_{\pi,k}(U_\pi-\mu_k),
\]
we get
\[
\bigl|\partial_\alpha b_\pi(\alpha_k)\bigr|
\le
\sum_{j\neq\pi}(U_\pi-U_j)e^{-\alpha_k(U_\pi-U_j)}.
\]
For \(Q_k'\),
\[
Q_k'=2\sum_{i=1}^N b_{i,k}^2(U_i-\mu_k),
\]
hence
\[
|Q_k'|
\le
2\sum_{i=1}^N b_{i,k}^2|U_i-\mu_k|
\le
2\sum_{i=1}^N b_{i,k}|U_i-\mu_k|.
\]
Since \(U_i\le U_\pi\),
\[
|U_i-\mu_k|\le (U_\pi-U_i)+(U_\pi-\mu_k).
\]
Therefore
\[
\sum_{i=1}^N b_{i,k}|U_i-\mu_k|
\le
\sum_{i=1}^N b_{i,k}(U_\pi-U_i)+\sum_{i=1}^N b_{i,k}(U_\pi-\mu_k)
=
2(U_\pi-\mu_k),
\]
and thus
\[
|Q_k'|\le 4(U_\pi-\mu_k)\le 4\sum_{j\neq\pi}(U_\pi-U_j)e^{-\alpha_k(U_\pi-U_j)}.
\]

Finally,
\[
a_0(\alpha_k,\beta_k)+2b_{\pi,k}-2\bigl(1-a_0(\alpha_k,\beta_k)\bigr)Q_k-3a_0(\alpha_k,\beta_k)
=
2(b_{\pi,k}-Q_k)+2a_0(\alpha_k,\beta_k)(Q_k-1).
\]
Hence
\begin{align*}
&\left|\,a_0(\alpha_k,\beta_k)+2b_{\pi,k}-2\bigl(1-a_0(\alpha_k,\beta_k)\bigr)Q_k-3a_0(\alpha_k,\beta_k)\,\right| \\
&\le
2|b_{\pi,k}-Q_k|+2a_0(\alpha_k,\beta_k)|1-Q_k|
\le
2(1-b_{\pi,k})+4(1-b_{\pi,k}) \\
&=
6(1-b_{\pi,k})
\le
6\sum_{j\neq\pi} e^{-\alpha_k(U_\pi-U_j)}.
\end{align*}
\end{proof}

\begin{lemma}
\label{lem:c-replacement-final}
Let
\[
a_0(\alpha_k,\beta_k)
=
\left(
1+e^{\alpha_k(\lambda_k-(1-U_\pi))}\left(1+\sum_{j\neq\pi} e^{-\alpha_k(U_\pi-U_j)}\right)
\right)^{-1}.
\]
Then
\[
\alpha_k\,\mathbb E\!\left[
\left|
a_0(\alpha_k,\beta_k)\bigl(1-a_0(\alpha_k,\beta_k)\bigr)
-
\frac{e^{\alpha_k(\lambda_k-(1-U_\pi))}}{\bigl(1+e^{\alpha_k(\lambda_k-(1-U_\pi))}\bigr)^2}
\right|
\right]\to0,
\]
and
\[
\alpha_k\,\mathbb E\!\left[
\left|
a_0(\alpha_k,\beta_k)^2\bigl(1-a_0(\alpha_k,\beta_k)\bigr)
-
\frac{e^{\alpha_k(\lambda_k-(1-U_\pi))}}{\bigl(1+e^{\alpha_k(\lambda_k-(1-U_\pi))}\bigr)^3}
\right|
\right]\to0.
\]
Moreover, the pointwise bounds
\[
\left|
a_0(\alpha_k,\beta_k)\bigl(1-a_0(\alpha_k,\beta_k)\bigr)
-
\frac{e^{\alpha_k(\lambda_k-(1-U_\pi))}}{\bigl(1+e^{\alpha_k(\lambda_k-(1-U_\pi))}\bigr)^2}
\right|
\le
\left(\sum_{j\neq\pi} e^{-\alpha_k(U_\pi-U_j)}\right)
\frac{e^{\alpha_k(\lambda_k-(1-U_\pi))}}{\bigl(1+e^{\alpha_k(\lambda_k-(1-U_\pi))}\bigr)^2},
\]
and
\[
\left|
a_0(\alpha_k,\beta_k)^2\bigl(1-a_0(\alpha_k,\beta_k)\bigr)
-
\frac{e^{\alpha_k(\lambda_k-(1-U_\pi))}}{\bigl(1+e^{\alpha_k(\lambda_k-(1-U_\pi))}\bigr)^3}
\right|
\le
2\left(\sum_{j\neq\pi} e^{-\alpha_k(U_\pi-U_j)}\right)
\frac{e^{\alpha_k(\lambda_k-(1-U_\pi))}}{\bigl(1+e^{\alpha_k(\lambda_k-(1-U_\pi))}\bigr)^3}
\]
hold almost surely.
\end{lemma}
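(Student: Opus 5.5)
The plan is to first establish the two pointwise bounds by elementary algebra, and then deduce the two $\alpha_k$-weighted expectation limits from Lemma~\ref{lem:R-local-rigorous-final}. Write $E\coloneqq e^{\alpha_k(\lambda_k-(1-U_\pi))}$ and $R\coloneqq\sum_{j\neq\pi}e^{-\alpha_k(U_\pi-U_j)}\ge0$.

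I first verify the displayed formula for $a_0$. Divide numerator and denominator of $a_0$ from Lemma~\ref{lem:logits} by $e^{\alpha_k+\beta_k}$ and use $\beta_k=-\lambda_k\alpha_k$. This gives
\[
\frac{e^{\alpha_kU_\pi}}{e^{\alpha_k+\beta_k}}=e^{\alpha_k(\lambda_k-(1-U_\pi))}=E,
\]
so $a_0=(1+E(1+R))^{-1}$. The ideal profile is $c\coloneqq(1+E)^{-1}$, for which
\[
c(1-c)=\frac{E}{(1+E)^2},\qquad c^2(1-c)=\frac{E}{(1+E)^3}.
\]

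Next I view $g(x)=(1+x)^{-1}$ as a function of $x=E(1+R)\in[E,E(1+R)]$. Then $a_0(1-a_0)=\phi(x)\coloneqq x/(1+x)^2$ and $a_0^2(1-a_0)=\chi(x)\coloneqq x/(1+x)^3$. I will compare $\phi(E(1+R))$ with $\phi(E)$ through the multiplicative form of the ratio:
\[
\frac{\phi(E(1+R))}{\phi(E)}=(1+R)\Bigl(\frac{1+E}{1+E+ER}\Bigr)^2.
\]
This ratio is at most $1+R$, since the bracket is at most $1$. It is at least $(1+R)/(1+R)^2=1/(1+R)\ge1-R$, since $(1+E+ER)/(1+E)\le1+R$. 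Hence $|\phi(E(1+R))-\phi(E)|\le R\,\phi(E)$, which is the first pointwise bound.

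The same argument applies to $\chi$:
\[
\frac{\chi(E(1+R))}{\chi(E)}=(1+R)\Bigl(\frac{1+E}{1+E+ER}\Bigr)^3\in\bigl[(1+R)^{-2},\,1+R\bigr].
\]
Since $1-(1+R)^{-2}\le 2R$, this yields the factor $2$ in the second pointwise bound. Both bounds are deterministic identities in $(E,R)$, so they hold almost surely.

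Finally, I multiply each pointwise bound by $\alpha_k$ and take expectations. The right-hand sides become exactly
\[
\alpha_k\,\mathbb E\Bigl[R\,\frac{E}{(1+E)^2}\Bigr]\quad\text{and}\quad 2\alpha_k\,\mathbb E\Bigl[R\,\frac{E}{(1+E)^3}\Bigr].
\]
Both tend to zero by the last two conclusions of Lemma~\ref{lem:R-local-rigorous-final}, which proves the two limit statements. The only delicate step is the two-sided multiplicative comparison, in particular the lower bound on the ratio. I expect it to follow from the monotonicity facts above, so no real obstacle remains.
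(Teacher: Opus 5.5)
Your proof is correct and has the same structure as the paper's: derive the two pointwise bounds, then multiply by $\alpha_k$, take expectations, and invoke Lemma~\ref{lem:R-local-rigorous-final}. The only difference is in the elementary step. You sandwich the ratio $\phi(E(1+R))/\phi(E)$ (resp.\ the corresponding ratio for $\chi$) between $(1+R)^{-1}$ (resp.\ $(1+R)^{-2}$) and $1+R$. The paper instead applies the mean value theorem in $x=1+R$, using $|\partial_x\ell(A,x)|\le A/(1+A)^2$ and $|\partial_x\kappa(A,x)|\le 2A/(1+A)^3$ for $x\ge1$; both routes give the same constants.
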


\begin{proof}
Set
\[
A\coloneqq e^{\alpha_k(\lambda_k-(1-U_\pi))},
\qquad
x\coloneqq 1+\sum_{j\neq\pi} e^{-\alpha_k(U_\pi-U_j)}\ge1.
\]
Then
\[
a_0(\alpha_k,\beta_k)\bigl(1-a_0(\alpha_k,\beta_k)\bigr)=\ell(A,x),
\qquad
\frac{e^{\alpha_k(\lambda_k-(1-U_\pi))}}{\bigl(1+e^{\alpha_k(\lambda_k-(1-U_\pi))}\bigr)^2}=\ell(A,1),
\qquad
\ell(A,x)\coloneqq \frac{Ax}{(1+Ax)^2},
\]
and
\[
a_0(\alpha_k,\beta_k)^2\bigl(1-a_0(\alpha_k,\beta_k)\bigr)=\kappa(A,x),
\qquad
\frac{e^{\alpha_k(\lambda_k-(1-U_\pi))}}{\bigl(1+e^{\alpha_k(\lambda_k-(1-U_\pi))}\bigr)^3}=\kappa(A,1),
\qquad
\kappa(A,x)\coloneqq \frac{Ax}{(1+Ax)^3}.
\]
Differentiate with respect to \(x\):
\[
\partial_x\ell(A,x)=\frac{A(1-Ax)}{(1+Ax)^3},
\qquad
\partial_x\kappa(A,x)=\frac{A(1-2Ax)}{(1+Ax)^4}.
\]
For \(x\ge1\),
\[
|\partial_x\ell(A,x)|
\le
\frac{A(1+Ax)}{(1+Ax)^3}
=
\frac{A}{(1+Ax)^2}
\le
\frac{A}{(1+A)^2}
=
\frac{e^{\alpha_k(\lambda_k-(1-U_\pi))}}{\bigl(1+e^{\alpha_k(\lambda_k-(1-U_\pi))}\bigr)^2},
\]
and similarly
\[
|\partial_x\kappa(A,x)|
\le
\frac{A(1+2Ax)}{(1+Ax)^4}
\le
\frac{2A}{(1+Ax)^3}
\le
\frac{2A}{(1+A)^3}
=
2\frac{e^{\alpha_k(\lambda_k-(1-U_\pi))}}{\bigl(1+e^{\alpha_k(\lambda_k-(1-U_\pi))}\bigr)^3}.
\]
Hence, by the mean-value theorem,
\begin{align*}
&\left|
a_0(\alpha_k,\beta_k)\bigl(1-a_0(\alpha_k,\beta_k)\bigr)
-
\frac{e^{\alpha_k(\lambda_k-(1-U_\pi))}}{\bigl(1+e^{\alpha_k(\lambda_k-(1-U_\pi))}\bigr)^2}
\right| \\
&=
|\ell(A,x)-\ell(A,1)| \\
&\le
\left(\sum_{j\neq\pi} e^{-\alpha_k(U_\pi-U_j)}\right)
\frac{e^{\alpha_k(\lambda_k-(1-U_\pi))}}{\bigl(1+e^{\alpha_k(\lambda_k-(1-U_\pi))}\bigr)^2},
\end{align*}
and
\begin{align*}
&\left|
a_0(\alpha_k,\beta_k)^2\bigl(1-a_0(\alpha_k,\beta_k)\bigr)
-
\frac{e^{\alpha_k(\lambda_k-(1-U_\pi))}}{\bigl(1+e^{\alpha_k(\lambda_k-(1-U_\pi))}\bigr)^3}
\right| \\
&=
|\kappa(A,x)-\kappa(A,1)| \\
&\le
2\left(\sum_{j\neq\pi} e^{-\alpha_k(U_\pi-U_j)}\right)
\frac{e^{\alpha_k(\lambda_k-(1-U_\pi))}}{\bigl(1+e^{\alpha_k(\lambda_k-(1-U_\pi))}\bigr)^3}.
\end{align*}
Taking expectations and using Lemma~\ref{lem:R-local-rigorous-final},
\begin{align*}
&\alpha_k\,\mathbb E\!\left[
\left|
a_0(\alpha_k,\beta_k)\bigl(1-a_0(\alpha_k,\beta_k)\bigr)
-
\frac{e^{\alpha_k(\lambda_k-(1-U_\pi))}}{\bigl(1+e^{\alpha_k(\lambda_k-(1-U_\pi))}\bigr)^2}
\right|
\right] \\
&\le
\alpha_k\,\mathbb E\!\left[
\left(\sum_{j\neq\pi} e^{-\alpha_k(U_\pi-U_j)}\right)
\frac{e^{\alpha_k(\lambda_k-(1-U_\pi))}}{\bigl(1+e^{\alpha_k(\lambda_k-(1-U_\pi))}\bigr)^2}
\right]\to0,
\end{align*}
and
\begin{align*}
&\alpha_k\,\mathbb E\!\left[
\left|
a_0(\alpha_k,\beta_k)^2\bigl(1-a_0(\alpha_k,\beta_k)\bigr)
-
\frac{e^{\alpha_k(\lambda_k-(1-U_\pi))}}{\bigl(1+e^{\alpha_k(\lambda_k-(1-U_\pi))}\bigr)^3}
\right|
\right] \\
&\le
2\alpha_k\,\mathbb E\!\left[
\left(\sum_{j\neq\pi} e^{-\alpha_k(U_\pi-U_j)}\right)
\frac{e^{\alpha_k(\lambda_k-(1-U_\pi))}}{\bigl(1+e^{\alpha_k(\lambda_k-(1-U_\pi))}\bigr)^3}
\right]\to0.
\end{align*}
\end{proof}

Let \(\varphi(U_1,\dots,U_N;\alpha,\beta)\) denote the reduced sample loss. The exact sample-level gradient formulas proved earlier are
\[
\partial_\beta\varphi
=
a_0(\alpha,\beta)\bigl(1-a_0(\alpha,\beta)\bigr)
\Bigl(
a_0(\alpha,\beta)+2b_\pi(\alpha)-2\bigl(1-a_0(\alpha,\beta)\bigr)Q(\alpha)
\Bigr),
\]
and
\begin{align*}
\partial_\alpha\varphi
&=
a_0(\alpha,\beta)\bigl(1-a_0(\alpha,\beta)\bigr)\bigl(1-\mu(\alpha)\bigr)
\Bigl(
a_0(\alpha,\beta)+2b_\pi(\alpha)-2\bigl(1-a_0(\alpha,\beta)\bigr)Q(\alpha)
\Bigr) \\
&\quad
+\bigl(1-a_0(\alpha,\beta)\bigr)^2Q'(\alpha)
-2\bigl(1-a_0(\alpha,\beta)\bigr)\partial_\alpha b_\pi(\alpha),
\end{align*}
where
\[
Q(\alpha)=\sum_{i=1}^N b_i(\alpha)^2,~
\mu(\alpha)=\sum_{i=1}^N b_i(\alpha)U_i,
~
\partial_\alpha b_\pi(\alpha)=b_\pi(\alpha)\bigl(U_\pi-\mu(\alpha)\bigr),
~
Q'(\alpha)=2\sum_{i=1}^N b_i(\alpha)^2\bigl(U_i-\mu(\alpha)\bigr).
\]

\begin{theorem}\label{thm:boundary-layer-lambda-final}
Then
\[
\alpha_k\,\mathbb E\!\left[
\frac{1}{N+1}\partial_\beta\varphi(\alpha_k,\beta_k)
+
\frac{\lambda_k}{d}\,\partial_\alpha\varphi(\alpha_k,\beta_k)
\right]
\longrightarrow
\Gamma_\lambda(\lambda)
\coloneqq 
\frac32\,h(\lambda)\left(\frac{1}{N+1}+\frac{\lambda^2}{d}\right).
\]
In particular, \(\Gamma_\lambda(\lambda)>0\) for every \(\lambda\in(0,2)\).
\end{theorem}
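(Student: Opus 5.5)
We combine the exact sample gradient formulas \eqref{eq:sample-beta-gradient} and \eqref{eq:sample-alpha-gradient} with the error bounds of Lemma~\ref{lem:error-bounds-final}. This reduces both gradients, up to $o(1/\alpha_k)$ errors in expectation, to multiples of the logistic kernels
\[
K_2\coloneqq \frac{e^{\alpha_k(\lambda_k-(1-U_\pi))}}{\bigl(1+e^{\alpha_k(\lambda_k-(1-U_\pi))}\bigr)^{2}},\qquad K_3\coloneqq \frac{e^{\alpha_k(\lambda_k-(1-U_\pi))}}{\bigl(1+e^{\alpha_k(\lambda_k-(1-U_\pi))}\bigr)^{3}}.
\]
We then evaluate the limits of these kernels with Lemma~\ref{lem:kernel-limits-rigorous-final}. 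Write $R_k\coloneqq\sum_{j\neq\pi}e^{-\alpha_k(U_\pi-U_j)}$ and $S_k\coloneqq\sum_{j\neq\pi}(U_\pi-U_j)e^{-\alpha_k(U_\pi-U_j)}$.

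\textbf{The $\beta$-part.} Write $\partial_\beta\varphi=a_0(1-a_0)\cdot B$, where $B$ is the bracket $a_0+2b_\pi-2(1-a_0)Q$. By the last estimate of Lemma~\ref{lem:error-bounds-final}, $|B-3a_0|\le 6R_k$. Since $a_0(1-a_0)\le$ a quantity controlled via Lemma~\ref{lem:c-replacement-final}, specifically $a_0(1-a_0)\le K_2(1+R_k)$, the error term $\alpha_k\mathbb E[a_0(1-a_0)R_k]$ is bounded by $\alpha_k\mathbb E[R_kK_2]$ plus $\alpha_k\mathbb E[R_k^2K_2]$. Both vanish by Lemma~\ref{lem:R-local-rigorous-final}; the second uses $R_k\le N-1$. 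The main term is $3\alpha_k\mathbb E[a_0^2(1-a_0)]$. By Lemma~\ref{lem:c-replacement-final} we may replace $a_0^2(1-a_0)$ by $K_3$ at cost $o(1/\alpha_k)$, and Lemma~\ref{lem:kernel-limits-rigorous-final} with $\psi\equiv1$ then gives the limit $\tfrac32h(\lambda)$. Hence $\alpha_k\mathbb E[\partial_\beta\varphi]\to\tfrac32h(\lambda)$.

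\textbf{The $\alpha$-part.} The terms $(1-a_0)^2Q'$ and $-2(1-a_0)\partial_\alpha b_\pi$ are bounded in absolute value by $4S_k$ and $S_k$ respectively, by Lemma~\ref{lem:error-bounds-final}. After multiplication by $\alpha_k$ their expectations vanish by Lemma~\ref{lem:S-global-rigorous-final}. The remaining term is $a_0(1-a_0)(1-\mu_k)B$. We write $1-\mu_k=(1-U_\pi)+(U_\pi-\mu_k)$, and the correction $U_\pi-\mu_k\le S_k$ is again negligible because $a_0(1-a_0)B$ is bounded. Repeating the $\beta$-argument with the bounded continuous weight $\psi(s)=s$, the main term becomes $3\alpha_k\mathbb E[K_3(1-U_\pi)]$, which converges to $\tfrac32h(\lambda)\lambda$ by Lemma~\ref{lem:kernel-limits-rigorous-final}. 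Thus $\alpha_k\mathbb E[\partial_\alpha\varphi]\to\tfrac32\lambda h(\lambda)$. Multiplying by $\lambda_k/d\to\lambda/d$ and adding the $\beta$-part divided by $N+1$ gives
\[
\Gamma_\lambda(\lambda)=\tfrac32h(\lambda)\Bigl(\frac1{N+1}+\frac{\lambda^2}{d}\Bigr).
\]

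\textbf{Positivity and the main obstacle.} For $\lambda\in(0,2)$ the point $1-\lambda$ lies in $(-1,1)$, where $F_d>0$ and $f_d>0$. Hence $h(\lambda)>0$, and therefore $\Gamma_\lambda(\lambda)>0$. The delicate step is the error control. Each error must be shown to be $o(1/\alpha_k)$ rather than merely $o(1)$, and terms such as $a_0(1-a_0)R_k$ must be dominated by products of $R_k$ with the kernels $K_2,K_3$, so that the localized conditional estimate of Lemma~\ref{lem:R-local-rigorous-final} applies. The pointwise kernel comparison in Lemma~\ref{lem:c-replacement-final} supplies exactly this domination; the remaining work is routine bookkeeping of bounded factors.
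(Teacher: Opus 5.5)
Your proposal is correct and follows the paper's proof: the same reduction of the bracket to $3a_0$ via Lemma~\ref{lem:error-bounds-final}, the same domination $a_0(1-a_0)\le K_2(1+R_k)$ to invoke Lemma~\ref{lem:R-local-rigorous-final}, the same replacement of $a_0^2(1-a_0)$ by $K_3$ via Lemma~\ref{lem:c-replacement-final}, and the same kernel limit from Lemma~\ref{lem:kernel-limits-rigorous-final}. The only difference is bookkeeping. You compute the $\beta$- and $\alpha$-limits separately (the latter is exactly Lemma~\ref{prop:alpha-boundary-layer-final}) and combine them using the deterministic convergence $\lambda_k\to\lambda$, whereas the paper uses the single weight $\psi(t)=3\bigl(\frac{1}{N+1}+\frac{\lambda}{d}t\bigr)$ and absorbs $|\lambda_k-\lambda|$ as an extra error term; also, your bound on $2(1-a_0)|\partial_\alpha b_\pi|$ should be $2S_k$ rather than $S_k$, which is harmless.
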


\begin{proof}
Write
\[
\frac{1}{N+1}\partial_\beta\varphi(\alpha_k,\beta_k)
+
\frac{\lambda_k}{d}\,\partial_\alpha\varphi(\alpha_k,\beta_k)
=
A_k+B_k,
\]
where
\begin{align*}
A_k
&=
a_0(\alpha_k,\beta_k)\bigl(1-a_0(\alpha_k,\beta_k)\bigr)
\left(\frac{1}{N+1}+\frac{\lambda_k}{d}(1-\mu_k)\right) \\
&\qquad\qquad\times
\Bigl(
a_0(\alpha_k,\beta_k)+2b_{\pi,k}-2\bigl(1-a_0(\alpha_k,\beta_k)\bigr)Q_k
\Bigr),
\end{align*}
and
\[
B_k
=
\frac{\lambda_k}{d}\Bigl(\bigl(1-a_0(\alpha_k,\beta_k)\bigr)^2Q_k'-2\bigl(1-a_0(\alpha_k,\beta_k)\bigr)\partial_\alpha b_\pi(\alpha_k)\Bigr).
\]
Since \(\lambda_k\to\lambda\in(0,2)\), the sequence \((\lambda_k)\) is bounded. By Lemma~\ref{lem:error-bounds-final},
\[
|Q_k'|+2\bigl|\partial_\alpha b_\pi(\alpha_k)\bigr|
\le
6\sum_{j\neq\pi}(U_\pi-U_j)e^{-\alpha_k(U_\pi-U_j)}.
\]
Hence
\[
|B_k|
\le
C\sum_{j\neq\pi}(U_\pi-U_j)e^{-\alpha_k(U_\pi-U_j)}
\]
for some deterministic constant \(C\), and Lemma~\ref{lem:S-global-rigorous-final} yields
\begin{equation}\label{eq:boundary-layer-lambda-final-Bk}
\alpha_k\,\mathbb E[|B_k|]\to0.
\end{equation}
We next compare \(A_k\) with
\[
3\left(\frac{1}{N+1}+\frac{\lambda}{d}(1-U_\pi)\right)
a_0(\alpha_k,\beta_k)^2\bigl(1-a_0(\alpha_k,\beta_k)\bigr).
\]
By Lemma~\ref{lem:error-bounds-final},
\begin{equation}\label{eq:boundary-layer-lambda-final-bracket}
\left|a_0(\alpha_k,\beta_k)+2b_{\pi,k}-2\bigl(1-a_0(\alpha_k,\beta_k)\bigr)Q_k-3a_0(\alpha_k,\beta_k)\right|
\le
6\sum_{j\neq\pi} e^{-\alpha_k(U_\pi-U_j)}.
\end{equation}
Also,
\[
1-\mu_k=(1-U_\pi)+(U_\pi-\mu_k),
\]
and Lemma~\ref{lem:error-bounds-final} gives
\begin{equation}\label{eq:boundary-layer-lambda-final-mu}
0\le U_\pi-\mu_k\le \sum_{j\neq\pi}(U_\pi-U_j)e^{-\alpha_k(U_\pi-U_j)}.
\end{equation}
Therefore
\begin{align}
\label{eq:boundary-layer-lambda-final-factor}
&\left|
\frac{1}{N+1}+\frac{\lambda_k}{d}(1-\mu_k)
-
\left(\frac{1}{N+1}+\frac{\lambda}{d}(1-U_\pi)\right)
\right| \nonumber\\
&\le
\frac{|\lambda_k-\lambda|}{d}\,|1-\mu_k|
+\frac{\lambda}{d}(U_\pi-\mu_k) \nonumber\\
&\le
2|\lambda_k-\lambda|
+
\lambda\sum_{j\neq\pi}(U_\pi-U_j)e^{-\alpha_k(U_\pi-U_j)},
\end{align}
because \(d\ge1\) and \(|1-\mu_k|\le2\).

Using \eqref{eq:boundary-layer-lambda-final-bracket}, \eqref{eq:boundary-layer-lambda-final-mu}, \eqref{eq:boundary-layer-lambda-final-factor}, and the boundedness of \(a_0(\alpha_k,\beta_k)(1-a_0(\alpha_k,\beta_k))\) and \(a_0(\alpha_k,\beta_k)^2(1-a_0(\alpha_k,\beta_k))\), we obtain
\begin{align*}
&\left|A_k-3\left(\frac{1}{N+1}+\frac{\lambda}{d}(1-U_\pi)\right)
a_0(\alpha_k,\beta_k)^2(1-a_0(\alpha_k,\beta_k))\right| \\
&\le
C\,|\lambda_k-\lambda|\,a_0(\alpha_k,\beta_k)^2(1-a_0(\alpha_k,\beta_k)) \\
&\quad
+ C\sum_{j\neq\pi}(U_\pi-U_j)e^{-\alpha_k(U_\pi-U_j)} \\
&\quad
+ C\left(\sum_{j\neq\pi} e^{-\alpha_k(U_\pi-U_j)}\right)
a_0(\alpha_k,\beta_k)(1-a_0(\alpha_k,\beta_k))
\end{align*}
for some constant \(C\).

Multiplying by \(\alpha_k\) and taking expectations, the middle term vanishes by Lemma~\ref{lem:S-global-rigorous-final}. For the last term, Lemma~\ref{lem:c-replacement-final} gives
\begin{align*}
a_0(\alpha_k,\beta_k)(1-a_0(\alpha_k,\beta_k))
&\le
\frac{e^{\alpha_k(\lambda_k-(1-U_\pi))}}{\bigl(1+e^{\alpha_k(\lambda_k-(1-U_\pi))}\bigr)^2} \\
&\quad
+
\left|
a_0(\alpha_k,\beta_k)(1-a_0(\alpha_k,\beta_k))
-
\frac{e^{\alpha_k(\lambda_k-(1-U_\pi))}}{\bigl(1+e^{\alpha_k(\lambda_k-(1-U_\pi))}\bigr)^2}
\right| \\
&\le
\frac{e^{\alpha_k(\lambda_k-(1-U_\pi))}}{\bigl(1+e^{\alpha_k(\lambda_k-(1-U_\pi))}\bigr)^2}
+
\left(\sum_{j\neq\pi} e^{-\alpha_k(U_\pi-U_j)}\right)
\frac{e^{\alpha_k(\lambda_k-(1-U_\pi))}}{\bigl(1+e^{\alpha_k(\lambda_k-(1-U_\pi))}\bigr)^2},
\end{align*}
hence, using \(\sum_{j\neq\pi} e^{-\alpha_k(U_\pi-U_j)}\le N-1\),
\begin{align*}
&\alpha_k\,\mathbb E\!\left[
\left(\sum_{j\neq\pi} e^{-\alpha_k(U_\pi-U_j)}\right)
a_0(\alpha_k,\beta_k)(1-a_0(\alpha_k,\beta_k))
\right] \\
&\le
\alpha_k\,\mathbb E\!\left[
\left(\sum_{j\neq\pi} e^{-\alpha_k(U_\pi-U_j)}\right)
\frac{e^{\alpha_k(\lambda_k-(1-U_\pi))}}{\bigl(1+e^{\alpha_k(\lambda_k-(1-U_\pi))}\bigr)^2}
\right] \\
&\quad
+
(N-1)\alpha_k\,\mathbb E\!\left[
\left(\sum_{j\neq\pi} e^{-\alpha_k(U_\pi-U_j)}\right)
\frac{e^{\alpha_k(\lambda_k-(1-U_\pi))}}{\bigl(1+e^{\alpha_k(\lambda_k-(1-U_\pi))}\bigr)^2}
\right]\to0
\end{align*}
by Lemma~\ref{lem:R-local-rigorous-final}. For the \(|\lambda_k-\lambda|\)-term,
\[
\alpha_k\,\mathbb E\!\left[a_0(\alpha_k,\beta_k)^2(1-a_0(\alpha_k,\beta_k))\right]
=
O(1)
\]
by Lemma~\ref{lem:c-replacement-final} and Lemma~\ref{lem:kernel-limits-rigorous-final} with \(\psi\equiv1\). Therefore
\[
\alpha_k\,|\lambda_k-\lambda|\,\mathbb E\!\left[a_0(\alpha_k,\beta_k)^2(1-a_0(\alpha_k,\beta_k))\right]\to0.
\]
Consequently,
\begin{align}
\label{eq:boundary-layer-lambda-final-Ak-main}
&\alpha_k\,\mathbb E\!\left[
\left|A_k-3\left(\frac{1}{N+1}+\frac{\lambda}{d}(1-U_\pi)\right)
a_0(\alpha_k,\beta_k)^2(1-a_0(\alpha_k,\beta_k))\right|
\right]\to0.
\end{align}
Next, Lemma~\ref{lem:c-replacement-final} yields
\begin{align}
\label{eq:boundary-layer-lambda-final-replace-K}
&\alpha_k\,\mathbb E\!\left[
\left|
3\left(\frac{1}{N+1}+\frac{\lambda}{d}(1-U_\pi)\right)
\left(
a_0(\alpha_k,\beta_k)^2(1-a_0(\alpha_k,\beta_k))
-
\frac{e^{\alpha_k(\lambda_k-(1-U_\pi))}}{\bigl(1+e^{\alpha_k(\lambda_k-(1-U_\pi))}\bigr)^3}
\right)
\right|
\right]\to0.
\end{align}
Combining \eqref{eq:boundary-layer-lambda-final-Ak-main} and \eqref{eq:boundary-layer-lambda-final-replace-K},
\begin{equation}\label{eq:boundary-layer-lambda-final-Ak-reduction}
\alpha_k\,\mathbb E[A_k]
-
3\alpha_k\,\mathbb E\!\left[
\left(\frac{1}{N+1}+\frac{\lambda}{d}(1-U_\pi)\right)
\frac{e^{\alpha_k(\lambda_k-(1-U_\pi))}}{\bigl(1+e^{\alpha_k(\lambda_k-(1-U_\pi))}\bigr)^3}
\right]
\to0.
\end{equation}
Applying Lemma~\ref{lem:kernel-limits-rigorous-final} with
\[
\psi(t)=3\left(\frac{1}{N+1}+\frac{\lambda}{d}t\right),
\]
we obtain
\begin{equation}\label{eq:boundary-layer-lambda-final-kernel-limit}
3\alpha_k\,\mathbb E\!\left[
\left(\frac{1}{N+1}+\frac{\lambda}{d}(1-U_\pi)\right)
\frac{e^{\alpha_k(\lambda_k-(1-U_\pi))}}{\bigl(1+e^{\alpha_k(\lambda_k-(1-U_\pi))}\bigr)^3}
\right]
\longrightarrow
\frac32\,h(\lambda)\left(\frac{1}{N+1}+\frac{\lambda^2}{d}\right).
\end{equation}
Combining \eqref{eq:boundary-layer-lambda-final-Bk}, \eqref{eq:boundary-layer-lambda-final-Ak-reduction}, and \eqref{eq:boundary-layer-lambda-final-kernel-limit} proves
\[
\alpha_k\,\mathbb E\!\left[
\frac{1}{N+1}\partial_\beta\varphi(\alpha_k,\beta_k)
+
\frac{\lambda_k}{d}\partial_\alpha\varphi(\alpha_k,\beta_k)
\right]
\longrightarrow
\frac32\,h(\lambda)\left(\frac{1}{N+1}+\frac{\lambda^2}{d}\right).
\]
Since \(h(\lambda)>0\) on \((0,2)\), the positivity is immediate.
\end{proof}

\begin{lemma}
\label{prop:alpha-boundary-layer-final}
We have
\[
\alpha_k\,\partial_\alpha\mathcal L(\alpha_k,\beta_k)
\longrightarrow
\Gamma_\alpha(\lambda)
\coloneqq 
\frac32\,\lambda\,h(\lambda).
\]
In particular, \(\Gamma_\alpha(\lambda)>0\) for every \(\lambda\in(0,2)\).
\end{lemma}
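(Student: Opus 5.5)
The argument runs parallel to Theorem~\ref{thm:boundary-layer-lambda-final}, but it keeps only the $\alpha$-part of the gradient. By Lemma~\ref{prop:gradients}, $\partial_\alpha\mathcal L=\mathbb E[\partial_\alpha\varphi]$. We split $\partial_\alpha\varphi(\alpha_k,\beta_k)=A_k^\alpha+B_k^\alpha$ with
\[
A_k^\alpha=a_0(1-a_0)(1-\mu_k)\bigl(a_0+2b_{\pi,k}-2(1-a_0)Q_k\bigr),
\]
\[
B_k^\alpha=(1-a_0)^2Q_k'-2(1-a_0)\partial_\alpha b_\pi(\alpha_k),
\]
where $a_0=a_0(\alpha_k,\beta_k)$. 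By Lemma~\ref{lem:error-bounds-final}, $|B_k^\alpha|\le 6\sum_{j\neq\pi}(U_\pi-U_j)e^{-\alpha_k(U_\pi-U_j)}$. Lemma~\ref{lem:S-global-rigorous-final} then gives $\alpha_k\mathbb E|B_k^\alpha|\to0$.

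For the main term, we replace the bracket by $3a_0$ and $1-\mu_k$ by $1-U_\pi$, exactly as in the proof of Theorem~\ref{thm:boundary-layer-lambda-final}. This uses the bracket estimate and the bound $0\le U_\pi-\mu_k\le\sum_{j\neq\pi}(U_\pi-U_j)e^{-\alpha_k(U_\pi-U_j)}$ from Lemma~\ref{lem:error-bounds-final}. The errors split into two kinds:
\begin{itemize}
\item a term $C\sum_{j\neq\pi}(U_\pi-U_j)e^{-\alpha_k(U_\pi-U_j)}$, which vanishes after multiplication by $\alpha_k$ by Lemma~\ref{lem:S-global-rigorous-final};
\item a term $C\bigl(\sum_{j\neq\pi}e^{-\alpha_k(U_\pi-U_j)}\bigr)a_0(1-a_0)$, which vanishes after multiplication by $\alpha_k$ by Lemmas~\ref{lem:c-replacement-final} and~\ref{lem:R-local-rigorous-final}.
\end{itemize}
Unlike in Theorem~\ref{thm:boundary-layer-lambda-final}, no $\lambda_k-\lambda$ factor appears, since the weight here is just $1-U_\pi$. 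We obtain
\[
\alpha_k\mathbb E[A_k^\alpha]-3\alpha_k\mathbb E\bigl[(1-U_\pi)a_0^2(1-a_0)\bigr]\to0 .
\]

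Next, Lemma~\ref{lem:c-replacement-final} lets us replace $a_0^2(1-a_0)$ by the ideal kernel $e^{\alpha_k(\lambda_k-(1-U_\pi))}/(1+e^{\alpha_k(\lambda_k-(1-U_\pi))})^3$, at a cost of $o(1/\alpha_k)$ since $1-U_\pi\le2$ is bounded. Lemma~\ref{lem:kernel-limits-rigorous-final} with $\psi(t)=3t$ then gives the limit $\tfrac12 h(\lambda)\cdot3\lambda=\tfrac32\lambda h(\lambda)$. Positivity follows from $\lambda>0$ and $h>0$ on $(0,2)$.

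The only step needing care is the bracket-and-$\mu_k$ replacement. The cross term $a_0(1-a_0)(U_\pi-\mu_k)\cdot O(1)$ must be bounded by the $S$-type sum rather than the $R$-type sum, and that is exactly what Lemma~\ref{lem:S-global-rigorous-final} handles. So this is a bookkeeping subset of the already completed proof rather than a genuine obstacle.
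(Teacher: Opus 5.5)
Your proposal is correct and follows essentially the same route as the paper: the same split $\partial_\alpha\varphi=A_k^\alpha+B_k^\alpha$, the same $S$-type and $R$-type error bookkeeping via Lemmas~\ref{lem:error-bounds-final}, \ref{lem:S-global-rigorous-final}, \ref{lem:R-local-rigorous-final} and~\ref{lem:c-replacement-final}, and the final kernel limit from Lemma~\ref{lem:kernel-limits-rigorous-final} with $\psi(t)=3t$. Your remark that no $\lambda_k-\lambda$ error term arises here, unlike in Theorem~\ref{thm:boundary-layer-lambda-final}, also matches the paper's proof.
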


\begin{proof}
Write
\[
\partial_\alpha\varphi(\alpha_k,\beta_k)=A_k^\alpha+B_k^\alpha,
\]
where
\begin{align*}
A_k^\alpha
&=
a_0(\alpha_k,\beta_k)\bigl(1-a_0(\alpha_k,\beta_k)\bigr)(1-\mu_k) \\
&\qquad\qquad\times
\Bigl(
a_0(\alpha_k,\beta_k)+2b_{\pi,k}-2\bigl(1-a_0(\alpha_k,\beta_k)\bigr)Q_k
\Bigr),
\end{align*}
and
\[
B_k^\alpha
=
\bigl(1-a_0(\alpha_k,\beta_k)\bigr)^2Q_k'-2\bigl(1-a_0(\alpha_k,\beta_k)\bigr)\partial_\alpha b_\pi(\alpha_k).
\]
By Lemma~\ref{lem:error-bounds-final},
\[
|Q_k'|+2\bigl|\partial_\alpha b_\pi(\alpha_k)\bigr|
\le
6\sum_{j\neq\pi}(U_\pi-U_j)e^{-\alpha_k(U_\pi-U_j)},
\]
so
\[
|B_k^\alpha|
\le
6\sum_{j\neq\pi}(U_\pi-U_j)e^{-\alpha_k(U_\pi-U_j)}.
\]
Lemma~\ref{lem:S-global-rigorous-final} then gives
\begin{equation}\label{eq:alpha-boundary-layer-final-B}
\alpha_k\,\mathbb E[|B_k^\alpha|]\to0.
\end{equation}
We compare \(A_k^\alpha\) with
\[
3(1-U_\pi)\,a_0(\alpha_k,\beta_k)^2(1-a_0(\alpha_k,\beta_k)).
\]
Since
\[
1-\mu_k=(1-U_\pi)+(U_\pi-\mu_k),
\]
Lemma~\ref{lem:error-bounds-final} yields
\[
|1-\mu_k-(1-U_\pi)|=U_\pi-\mu_k\le \sum_{j\neq\pi}(U_\pi-U_j)e^{-\alpha_k(U_\pi-U_j)},
\]
and again
\[
\left|a_0(\alpha_k,\beta_k)+2b_{\pi,k}-2\bigl(1-a_0(\alpha_k,\beta_k)\bigr)Q_k-3a_0(\alpha_k,\beta_k)\right|
\le
6\sum_{j\neq\pi} e^{-\alpha_k(U_\pi-U_j)}.
\]
Hence
\begin{align*}
&\bigl|A_k^\alpha-3(1-U_\pi)\,a_0(\alpha_k,\beta_k)^2(1-a_0(\alpha_k,\beta_k))\bigr| \\
&\le
C\sum_{j\neq\pi}(U_\pi-U_j)e^{-\alpha_k(U_\pi-U_j)}
+
C\left(\sum_{j\neq\pi} e^{-\alpha_k(U_\pi-U_j)}\right)
a_0(\alpha_k,\beta_k)(1-a_0(\alpha_k,\beta_k))
\end{align*}
for some constant \(C\), since \(0\le 1-U_\pi\le2\). Using Lemma~\ref{lem:S-global-rigorous-final} and the same estimate
\[
\alpha_k\,\mathbb E\!\left[
\left(\sum_{j\neq\pi} e^{-\alpha_k(U_\pi-U_j)}\right)
a_0(\alpha_k,\beta_k)(1-a_0(\alpha_k,\beta_k))
\right]\to0
\]
proved in the preceding theorem, we obtain
\begin{equation}\label{eq:alpha-boundary-layer-final-A}
\alpha_k\,\mathbb E\!\left[
\bigl|A_k^\alpha-3(1-U_\pi)\,a_0(\alpha_k,\beta_k)^2(1-a_0(\alpha_k,\beta_k))\bigr|
\right]\to0.
\end{equation}

By Lemma~\ref{lem:c-replacement-final},
\begin{align}
\label{eq:alpha-boundary-layer-final-replace-K}
&\alpha_k\,\mathbb E\!\left[
\left|
3(1-U_\pi)
\left(
a_0(\alpha_k,\beta_k)^2(1-a_0(\alpha_k,\beta_k))
-
\frac{e^{\alpha_k(\lambda_k-(1-U_\pi))}}{\bigl(1+e^{\alpha_k(\lambda_k-(1-U_\pi))}\bigr)^3}
\right)
\right|
\right]\to0.
\end{align}
Combining \eqref{eq:alpha-boundary-layer-final-A} and \eqref{eq:alpha-boundary-layer-final-replace-K},
\begin{equation}\label{eq:alpha-boundary-layer-final-A-reduction}
\alpha_k\,\mathbb E[A_k^\alpha]
-
3\alpha_k\,\mathbb E\!\left[
(1-U_\pi)\frac{e^{\alpha_k(\lambda_k-(1-U_\pi))}}{\bigl(1+e^{\alpha_k(\lambda_k-(1-U_\pi))}\bigr)^3}
\right]
\to0.
\end{equation}
Applying Lemma~\ref{lem:kernel-limits-rigorous-final} with \(\psi(t)=3t\),
\begin{equation}\label{eq:alpha-boundary-layer-final-kernel-limit}
3\alpha_k\,\mathbb E\!\left[
(1-U_\pi)\frac{e^{\alpha_k(\lambda_k-(1-U_\pi))}}{\bigl(1+e^{\alpha_k(\lambda_k-(1-U_\pi))}\bigr)^3}
\right]
\longrightarrow
\frac32\,\lambda h(\lambda).
\end{equation}
Finally, \eqref{eq:alpha-boundary-layer-final-B}, \eqref{eq:alpha-boundary-layer-final-A-reduction}, and \eqref{eq:alpha-boundary-layer-final-kernel-limit} imply
\[
\alpha_k\,\partial_\alpha\mathcal L(\alpha_k,\beta_k)
=
\alpha_k\,\mathbb E[\partial_\alpha\varphi(\alpha_k,\beta_k)]
\longrightarrow
\frac32\,\lambda h(\lambda).
\]
\end{proof}
\subsection{Convergence of the population loss}
\label{sec:convergence of the population loss}

We now return from the subsequential boundary-layer analysis to the full gradient-descent trajectory. The results of the previous subsection provide the key interior asymptotics needed to control the global dynamics of
\[
\lambda_t\coloneqq -\frac{\beta^{(t)}}{\alpha^{(t)}}.
\]
In this subsection, we first combine the interior estimates with a subsequential analysis of the self-token weight to rule out persistent visits of \(\lambda_t\) to compact subsets of \((0,2)\), which yields the lower bound
\[
\liminf_{t\to\infty}\lambda_t\ge 2.
\]
We then complement this with an analysis to rule out persistent excursions to the region \((2,\infty)\), thereby proving
\[
\limsup_{t\to\infty}\lambda_t\le 2.
\]
Hence
\[
\lambda_t\to 2.
\]
Finally, we use this limiting slope behavior together with the asymptotic representation of the population loss to prove that the self-token contribution vanishes and therefore
\[
\mathcal L(\alpha^{(t)},\beta^{(t)})\to 0.
\]

\begin{lemma}
\label{lem:subseq-ct-limit-final}
Let \((s_k)_{k\ge1}\) be any subsequence such that
\[
s_k\to\infty,
\qquad
\lambda_{s_k}\coloneqq -\frac{\beta^{(s_k)}}{\alpha^{(s_k)}}\to \lambda_\ast\in[0,\infty].
\]
Then
\[
a_0(\alpha^{(s_k)},\beta^{(s_k)})\to \mathbf 1_{\{1-U_\pi>\lambda_\ast\}}
\qquad\text{almost surely}.
\]
In particular, since \(U_\pi<1\) almost surely, \(1-U_\pi>0\) almost surely.
\end{lemma}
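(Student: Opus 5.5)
The plan is a direct pointwise computation on the full-probability event where $\pi$ is the unique maximizer and $U_\pi<1$. On that event we rewrite the self-token weight in logistic form relative to the top context logit. The logits are $\alpha U_i$ for context tokens and $\alpha+\beta$ for the self token. Dividing numerator and denominator of $a_0$ by $e^{\alpha+\beta}$ gives
\[
a_0(\alpha,\beta)=\Bigl(1+e^{\alpha(\lambda-(1-U_\pi))}\bigl(1+R(\alpha)\bigr)\Bigr)^{-1},
\qquad
R(\alpha)\coloneqq\sum_{j\neq\pi}e^{-\alpha(U_\pi-U_j)}\in[0,N-1],
\]
where $\lambda=-\beta/\alpha$ and $\alpha>0$. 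This is exactly the representation already used in Lemma~\ref{lem:c-replacement-final}. The proof then reduces to controlling the sign and size of the exponent $E_k\coloneqq\alpha^{(s_k)}\bigl(\lambda_{s_k}-(1-U_\pi)\bigr)$, since the factor $1+R$ is always trapped in $[1,N]$.

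By Theorem~\ref{thm:main}, $\alpha^{(s_k)}\to+\infty$; we split into cases according to the position of $1-U_\pi$ relative to $\lambda_\ast$.
\begin{enumerate}
\item If $1-U_\pi>\lambda_\ast$, which forces $\lambda_\ast<\infty$, then $\lambda_{s_k}-(1-U_\pi)\to\lambda_\ast-(1-U_\pi)<0$. Hence $E_k\to-\infty$, and
\[
1\ge a_0\ge\bigl(1+N e^{E_k}\bigr)^{-1}\to1 .
\]
\item If $1-U_\pi<\lambda_\ast$, with $\lambda_\ast$ possibly $+\infty$, then eventually $\lambda_{s_k}-(1-U_\pi)\ge c>0$ for some $c$ depending on the realization. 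In the case $\lambda_\ast=\infty$ this holds because $\lambda_{s_k}\to\infty$ while $1-U_\pi\le2$. Hence $E_k\to+\infty$, and
\[
0\le a_0\le\bigl(1+e^{E_k}\bigr)^{-1}\to0 .
\]
\end{enumerate}
In both cases the limit is $\mathbf 1_{\{1-U_\pi>\lambda_\ast\}}$.

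It remains to show that the boundary event $\{1-U_\pi=\lambda_\ast\}$ is null, so that the two cases cover almost every realization.
\begin{enumerate}
\item For $\lambda_\ast=\infty$ the event is empty.
\item For $\lambda_\ast=0$ it is $\{U_\pi=1\}$, which is null because $U_\pi<1$ almost surely. This is the remark recorded in the statement, and it places every realization in the first case, so $a_0\to1$.
\item For $\lambda_\ast\in(0,\infty)$ the event is null because $1-U_\pi$ has the density $h$ on $(0,2)$, i.e.\ it is absolutely continuous.
\end{enumerate}
This yields almost sure convergence.

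The only delicate point I anticipate is the uniformity needed when $\lambda_\ast$ is an endpoint ($0$ or $\infty$) and $\lambda_{s_k}$ converges slowly. For $\lambda_\ast=0$ the margin $1-U_\pi-\lambda_{s_k}$ is eventually at least $(1-U_\pi)/2>0$. For $\lambda_\ast=\infty$ the margin grows without bound. So in every case the exponent is eventually bounded away from zero by a realization-dependent constant times $\alpha^{(s_k)}$, and no further estimate is needed.
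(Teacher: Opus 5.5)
Your proposal is correct and is essentially the paper's argument. Both proofs write $a_0$ in logistic form; you factor out the $\pi$-term with $1+R\in[1,N]$, while the paper bounds each of the $N$ summands via $1-U_j\ge 1-U_\pi$. Both then split on the sign of $\lambda_\ast-(1-U_\pi)$ using $\alpha^{(s_k)}\to+\infty$, and discard the boundary event $\{1-U_\pi=\lambda_\ast\}$ as null because $U_\pi$ has a continuous law.
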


\begin{proof}
By the exact score formula,
\begin{equation}\label{eq:subseq-ct-limit-final-ct}
a_0(\alpha^{(t)},\beta^{(t)})
=
\left(
1+\sum_{j=1}^N e^{-\beta^{(t)}-\alpha^{(t)}(1-U_j)}
\right)^{-1}
=
\left(
1+\sum_{j=1}^N e^{\alpha^{(t)}(\lambda_t-(1-U_j))}
\right)^{-1}.
\end{equation}
Since \(U_j=1\) has probability zero for each fixed \(j\), and \(N\) is finite,
\[
\mathbb P(U_\pi=1)\le \sum_{j=1}^N \mathbb P(U_j=1)=0.
\]
Hence \(1-U_\pi>0\) almost surely.

Assume first that \(\lambda_\ast<\infty\). Since \(1-U_\pi\) has a continuous law, \(\mathbb P(1-U_\pi=\lambda_\ast)=0\). On the event \(\{1-U_\pi>\lambda_\ast\}\), set \(\varepsilon=((1-U_\pi)-\lambda_\ast)/2>0\). For all sufficiently large \(k\),
\[
\lambda_{s_k}\le \lambda_\ast+\varepsilon=(1-U_\pi)-\varepsilon\le 1-U_j-\varepsilon
\qquad\text{for all }j\in[N],
\]
because \(1-U_\pi\le 1-U_j\). Therefore
\[
0\le \sum_{j=1}^N e^{\alpha^{(s_k)}(\lambda_{s_k}-(1-U_j))}
\le N e^{-\alpha^{(s_k)}\varepsilon}\to0,
\]
so by \eqref{eq:subseq-ct-limit-final-ct},
\[
a_0(\alpha^{(s_k)},\beta^{(s_k)})\to1.
\]
On the event \(\{1-U_\pi<\lambda_\ast\}\), set \(\varepsilon=(\lambda_\ast-(1-U_\pi))/2>0\). For all sufficiently large \(k\),
\[
\lambda_{s_k}\ge \lambda_\ast-\varepsilon=(1-U_\pi)+\varepsilon.
\]
Since \(1-U_\pi\) is the \(j=\pi\) case, we have
\[
\lambda_{s_k}-(1-U_\pi)\ge \varepsilon,
\]
and hence the \(\pi\)-term in \eqref{eq:subseq-ct-limit-final-ct} satisfies
\[
e^{\alpha^{(s_k)}(\lambda_{s_k}-(1-U_\pi))}
\ge e^{\alpha^{(s_k)}\varepsilon}\to+\infty.
\]
Thus \(a_0(\alpha^{(s_k)},\beta^{(s_k)})\to0\). Since \(\mathbb P(1-U_\pi=\lambda_\ast)=0\), this proves
\[
a_0(\alpha^{(s_k)},\beta^{(s_k)})\to \mathbf 1_{\{1-U_\pi>\lambda_\ast\}}
\qquad\text{almost surely}
\]
when \(\lambda_\ast<\infty\).

If \(\lambda_\ast=\infty\), then for almost every outcome \(1-U_\pi<\infty\), hence for all sufficiently large \(k\),
\[
\lambda_{s_k}\ge (1-U_\pi)+1.
\]
Therefore
\[
\lambda_{s_k}-(1-U_\pi)\ge1,
\]
so again the \(\pi\)-term in \eqref{eq:subseq-ct-limit-final-ct} diverges to \(+\infty\), and therefore
\[
a_0(\alpha^{(s_k)},\beta^{(s_k)})\to0.
\]
This is exactly
\[
a_0(\alpha^{(s_k)},\beta^{(s_k)})\to \mathbf 1_{\{1-U_\pi>\infty\}}
\qquad\text{almost surely.}
\]
\end{proof}

\begin{lemma}
\label{prop:uniform-interior-bounds-final}
Let
\[
I=[a,b]\subset(0,2)
\]
be compact. Then there exist constants
\[
0<\underline A_I\le \overline A_I<\infty,
\qquad
0<\underline G_I\le \overline G_I<\infty,
\]
and an integer \(T_I\) such that for every \(t\ge T_I\),
\begin{equation}\label{eq:uniform-alpha-bound-final}
\lambda_t\in I
\quad\Longrightarrow\quad
\underline A_I
\le
\alpha^{(t)}\,\partial_\alpha\mathcal L(\alpha^{(t)},\beta^{(t)})
\le
\overline A_I,
\end{equation}
and
\begin{equation}\label{eq:uniform-g-bound-final}
\lambda_t\in I
\quad\Longrightarrow\quad
\underline G_I
\le
\alpha^{(t)}\left(
\frac{1}{N+1}\partial_\beta\mathcal L(\alpha^{(t)},\beta^{(t)})
+
\frac{\lambda_t}{d}\partial_\alpha\mathcal L(\alpha^{(t)},\beta^{(t)})
\right)
\le
\overline G_I.
\end{equation}
\end{lemma}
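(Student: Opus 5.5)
We will prove Lemma~\ref{prop:uniform-interior-bounds-final} by a compactness-and-contradiction argument, upgrading the subsequential limits of Theorem~\ref{thm:boundary-layer-lambda-final} and Lemma~\ref{prop:alpha-boundary-layer-final} to bounds that hold uniformly over all late times with \(\lambda_t\in I\).

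\textbf{Choice of constants.} Define the limit profiles
\[
\Gamma_\alpha(\lambda)=\tfrac32\lambda h(\lambda),
\qquad
\Gamma_\lambda(\lambda)=\tfrac32 h(\lambda)\bigl(\tfrac1{N+1}+\tfrac{\lambda^2}{d}\bigr).
\]
The density \(h(t)=NF_d(1-t)^{N-1}f_d(1-t)\) is continuous and strictly positive on \((0,2)\). Hence both profiles are continuous and strictly positive on the compact interval \(I=[a,b]\), and we set
\[
\underline A_I=\tfrac12\min_I\Gamma_\alpha,\quad \overline A_I=2\max_I\Gamma_\alpha,\quad
\underline G_I=\tfrac12\min_I\Gamma_\lambda,\quad \overline G_I=2\max_I\Gamma_\lambda .
\]

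\textbf{Contradiction argument.} Suppose no \(T_I\) works for \eqref{eq:uniform-alpha-bound-final}. Then there are times \(t_k\to\infty\) with \(\lambda_{t_k}\in I\) and \(\alpha^{(t_k)}\partial_\alpha\mathcal L(\alpha^{(t_k)},\beta^{(t_k)})\notin[\underline A_I,\overline A_I]\). Since \(I\) is compact, we pass to a further subsequence with \(\lambda_{t_k}\to\lambda\in I\subset(0,2)\). By Theorem~\ref{thm:main}, \(\alpha^{(t_k)}\to+\infty\) and \(\beta^{(t_k)}\to-\infty\). The subsequence therefore satisfies exactly the standing hypotheses of Section~\ref{sec:boundary-layer estimates for certain subsequence}, so Lemma~\ref{prop:alpha-boundary-layer-final} gives \(\alpha^{(t_k)}\partial_\alpha\mathcal L\to\Gamma_\alpha(\lambda)\). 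This limit lies strictly inside \((\underline A_I,\overline A_I)\), because \(\tfrac12\min_I\Gamma_\alpha<\Gamma_\alpha(\lambda)<2\max_I\Gamma_\alpha\) by positivity. This contradicts the choice of \(t_k\).

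The same argument, with Theorem~\ref{thm:boundary-layer-lambda-final} in place of Lemma~\ref{prop:alpha-boundary-layer-final}, gives \eqref{eq:uniform-g-bound-final}. That theorem is stated for \(\mathbb E\) of the combination \(\frac1{N+1}\partial_\beta\varphi+\frac{\lambda_k}{d}\partial_\alpha\varphi\), and by Lemma~\ref{prop:gradients} this expectation equals \(\frac1{N+1}\partial_\beta\mathcal L+\frac{\lambda_k}{d}\partial_\alpha\mathcal L\). Taking \(T_I\) as the maximum of the two thresholds finishes the proof.

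\textbf{Point to check.} The proofs of Lemmas~\ref{lem:kernel-limits-rigorous-final}--\ref{lem:c-replacement-final}, and hence the two limit theorems, use only three facts: \(\alpha_k\to\infty\), \(\lambda_k\to\lambda\in(0,2)\), and the distribution of \(U_1,\dots,U_N\). They do not use that the subsequence was fixed in advance or that it has any special dynamical property. So they apply to any subsequence extracted in the contradiction step, including the particular one produced there. Once this is confirmed, the rest is routine.
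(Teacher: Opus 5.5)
Your proposal is correct and is essentially the paper's own proof. It uses the same constants $\tfrac12\min_I\Gamma$ and $2\max_I\Gamma$, and the same contradiction: pass to a subsequence with $\lambda_{t_k}\to\lambda_\ast\in I$ and invoke Lemma~\ref{prop:alpha-boundary-layer-final} and Theorem~\ref{thm:boundary-layer-lambda-final}. Your check that those results apply to an arbitrary subsequence with $\alpha_k\to\infty$ and $\lambda_k\to\lambda\in(0,2)$ matches how the paper sets up that section, where the subsequence is fixed arbitrarily.
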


\begin{proof}
Recall the definition in Lemma~\ref{prop:alpha-boundary-layer-final} and Theorem~\ref{thm:boundary-layer-lambda-final} that
\[
\Gamma_\alpha(\lambda)\coloneqq \frac32\,\lambda\,h(\lambda),
\qquad
\Gamma_\lambda(\lambda)\coloneqq \frac32\,h(\lambda)\left(\frac{1}{N+1}+\frac{\lambda^2}{d}\right),
\qquad \lambda\in(0,2).
\]
Since \(h\) is continuous and strictly positive on \((0,2)\), the functions \(\Gamma_\alpha\) and \(\Gamma_\lambda\) are continuous and strictly positive on \((0,2)\). On the compact interval \(I\), set
\[
m_\alpha\coloneqq \min_{\lambda\in I}\Gamma_\alpha(\lambda),
\quad
M_\alpha\coloneqq \max_{\lambda\in I}\Gamma_\alpha(\lambda),
\]
and
\[
m_\lambda\coloneqq \min_{\lambda\in I}\Gamma_\lambda(\lambda),
\quad
M_\lambda\coloneqq \max_{\lambda\in I}\Gamma_\lambda(\lambda).
\]
Then
\[
0<m_\alpha\le M_\alpha<\infty,
\qquad
0<m_\lambda\le M_\lambda<\infty.
\]
Define
\[
\underline A_I\coloneqq \frac{m_\alpha}{2},
\qquad
\overline A_I\coloneqq 2M_\alpha,
\qquad
\underline G_I\coloneqq \frac{m_\lambda}{2},
\qquad
\overline G_I\coloneqq 2M_\lambda.
\]
Suppose \eqref{eq:uniform-alpha-bound-final} fails. Then there exists a sequence \(t_n\to\infty\) with \(\lambda_{t_n}\in I\) such that either
\[
\alpha^{(t_n)}\partial_\alpha\mathcal L(\alpha^{(t_n)},\beta^{(t_n)})<\underline A_I
\]
or
\[
\alpha^{(t_n)}\partial_\alpha\mathcal L(\alpha^{(t_n)},\beta^{(t_n)})>\overline A_I
\]
for every \(n\). Passing to a subsequence, \(\lambda_{t_n}\to\lambda_\ast\in I\). Lemma~\ref{prop:alpha-boundary-layer-final} then gives
\[
\alpha^{(t_n)}\partial_\alpha\mathcal L(\alpha^{(t_n)},\beta^{(t_n)})\to \Gamma_\alpha(\lambda_\ast)\in[m_\alpha,M_\alpha],
\]
contradicting the construction of \(t_n\). Thus \eqref{eq:uniform-alpha-bound-final} holds for all large \(t\).

The proof of \eqref{eq:uniform-g-bound-final} is identical, using Theorem~\ref{thm:boundary-layer-lambda-final} instead of Lemma~\ref{prop:alpha-boundary-layer-final}. Taking \(T_I\) larger if necessary, both bounds hold for all \(t\ge T_I\).
\end{proof}

\begin{lemma}
\label{lem:uniform-interior-step-final}
Let
\[
I=[a,b]\subset(0,2)
\]
be compact. Then there exist constants
\[
0<\underline d_I\le \overline d_I<\infty,
\qquad
\delta_I>0,
\]
and an integer \(T_I'\ge T_I\) such that for every \(t\ge T_I'\),
\begin{equation}\label{eq:uniform-lambda-step-final}
\lambda_t\in I
\quad\Longrightarrow\quad
\frac{\underline d_I}{(\alpha^{(t)})^2}
\le
\lambda_{t+1}-\lambda_t
\le
\frac{\overline d_I}{(\alpha^{(t)})^2},
\end{equation}
and
\begin{equation}\label{eq:uniform-loss-drop-final}
\lambda_t\in I
\quad\Longrightarrow\quad
\mathcal L(\alpha^{(t)},\beta^{(t)})-\mathcal L(\alpha^{(t+1)},\beta^{(t+1)})
\ge
\frac{\delta_I}{(\alpha^{(t)})^2}.
\end{equation}
\end{lemma}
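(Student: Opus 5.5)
We expand the update of \(\lambda_t=-\beta^{(t)}/\alpha^{(t)}\) directly and then invoke Lemma~\ref{prop:uniform-interior-bounds-final}. Write \(g_\alpha\coloneqq\partial_\alpha\mathcal L(\alpha^{(t)},\beta^{(t)})\) and \(g_\beta\coloneqq\partial_\beta\mathcal L(\alpha^{(t)},\beta^{(t)})\). The scalar recursion of Lemma~\ref{prop:gradients} gives \(\alpha^{(t+1)}=\alpha^{(t)}-\tfrac{\eta}{d}g_\alpha\) and \(\beta^{(t+1)}=\beta^{(t)}-\tfrac{\eta}{N+1}g_\beta\). An exact computation then yields
\[
\lambda_{t+1}-\lambda_t
=\frac{-\beta^{(t+1)}\alpha^{(t)}+\beta^{(t)}\alpha^{(t+1)}}{\alpha^{(t)}\alpha^{(t+1)}}
=\frac{\eta}{\alpha^{(t+1)}}\left(\frac{1}{N+1}g_\beta+\frac{\lambda_t}{d}g_\alpha\right).
\]
For \(t\ge T_I\) and \(\lambda_t\in I\), the bracket equals \(G_t/\alpha^{(t)}\) with \(G_t\in[\underline G_I,\overline G_I]\) by \eqref{eq:uniform-g-bound-final}. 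It remains to compare \(\alpha^{(t+1)}\) with \(\alpha^{(t)}\). By \eqref{eq:uniform-alpha-bound-final}, \(0<g_\alpha\le \overline A_I/\alpha^{(t)}\), so \(\alpha^{(t+1)}\in[\alpha^{(t)}-\eta\overline A_I/(d\alpha^{(t)}),\,\alpha^{(t)}]\). Since \(\alpha^{(t)}\to\infty\) by Theorem~\ref{thm:main}, we may choose \(T_I'\) so that \(\alpha^{(t)}/2\le\alpha^{(t+1)}\le\alpha^{(t)}\) for \(t\ge T_I'\). This gives \eqref{eq:uniform-lambda-step-final} with \(\underline d_I=\eta\underline G_I\) and \(\overline d_I=2\eta\overline G_I\).

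For the loss drop we use the descent inequality of Lemma~\ref{prop:descent}:
\[
\mathcal L_t-\mathcal L_{t+1}\ge\frac{\eta}{2}\left(\frac{g_\alpha^2}{d}+\frac{g_\beta^2}{N+1}\right).
\]
Keeping only the \(\alpha\)-term and using the lower bound in \eqref{eq:uniform-alpha-bound-final}, \(g_\alpha\ge\underline A_I/\alpha^{(t)}\), we get \(\mathcal L_t-\mathcal L_{t+1}\ge \eta\underline A_I^2/(2d(\alpha^{(t)})^2)\). So we set \(\delta_I\coloneqq\eta\underline A_I^2/(2d)\).

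The substantive content sits in the boundary-layer limits already proved, namely Theorem~\ref{thm:boundary-layer-lambda-final}, Lemma~\ref{prop:alpha-boundary-layer-final}, and their uniformization in Lemma~\ref{prop:uniform-interior-bounds-final}. Given those, the only delicate point is the algebraic identity for \(\lambda_{t+1}-\lambda_t\), specifically that the denominator involves \(\alpha^{(t+1)}\) rather than \(\alpha^{(t)}\). This is handled by the ratio control \(\alpha^{(t+1)}/\alpha^{(t)}\to1\), which is uniform on \(\{\lambda_t\in I\}\). As an alternative to the descent inequality for the loss step, one could Taylor expand \(\mathcal L\) using the global \(84\)-Lipschitz gradient. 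That route gives the same bound once the second-order term \(O(\eta^2(g_\alpha^2+g_\beta^2))\) is absorbed, which is possible because \(\eta\le 1/(840(N+1)^2)\).
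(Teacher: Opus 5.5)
Your proposal is correct and follows essentially the same argument as the paper: the exact $\lambda$-recursion with denominator $\alpha^{(t+1)}=\alpha^{(t)}-\frac{\eta}{d}\partial_\alpha\mathcal L$, the uniform bounds of Lemma~\ref{prop:uniform-interior-bounds-final} to sandwich the numerator and the denominator, and the descent inequality restricted to its $\alpha$-term to get the loss drop. The only cosmetic difference is that you use $\partial_\alpha\mathcal L>0$ to obtain $\alpha^{(t+1)}\le\alpha^{(t)}$, which gives the slightly sharper constant $\underline d_I=\eta\underline G_I$ instead of the paper's $\eta\underline G_I/2$.
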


\begin{proof}
The exact recursion for \(\lambda_t\) is
\begin{equation}\label{eq:uniform-interior-step-final-lambda-recursion}
\lambda_{t+1}-\lambda_t
=
\frac{\eta\left(
\frac{1}{N+1}\partial_\beta \mathcal L(\alpha^{(t)},\beta^{(t)})
+\frac{\lambda_t}{d}\partial_\alpha \mathcal L(\alpha^{(t)},\beta^{(t)})
\right)}
{\alpha^{(t)}-\frac{\eta}{d}\,\partial_\alpha\mathcal L(\alpha^{(t)},\beta^{(t)})}.
\end{equation}
If \(\lambda_t\in I\), then by \eqref{eq:uniform-alpha-bound-final},
\[
0<\underline A_I
\le
\alpha^{(t)}\partial_\alpha\mathcal L(\alpha^{(t)},\beta^{(t)})
\le
\overline A_I.
\]
Hence
\[
0\le \frac{\eta}{d}\,\partial_\alpha\mathcal L(\alpha^{(t)},\beta^{(t)})
\le \frac{\eta\overline A_I}{d\,\alpha^{(t)}}.
\]
Since \(\alpha^{(t)}\to+\infty\), after enlarging \(T_I\) if necessary we may assume that for all \(t\ge T_I'\),
\begin{equation}\label{eq:uniform-interior-step-final-denominator}
\frac12\,\alpha^{(t)}
\le
\alpha^{(t)}-\frac{\eta}{d}\,\partial_\alpha\mathcal L(\alpha^{(t)},\beta^{(t)})
\le
2\,\alpha^{(t)}.
\end{equation}
If \(\lambda_t\in I\), then by \eqref{eq:uniform-g-bound-final},
\[
\underline G_I
\le
\alpha^{(t)}\left(
\frac{1}{N+1}\partial_\beta \mathcal L(\alpha^{(t)},\beta^{(t)})
+\frac{\lambda_t}{d}\partial_\alpha \mathcal L(\alpha^{(t)},\beta^{(t)})
\right)
\le
\overline G_I.
\]
Combining this with \eqref{eq:uniform-interior-step-final-lambda-recursion} and \eqref{eq:uniform-interior-step-final-denominator},
\[
\frac{\eta\,\underline G_I}{2(\alpha^{(t)})^2}
\le
\lambda_{t+1}-\lambda_t
\le
\frac{2\eta\,\overline G_I}{(\alpha^{(t)})^2}.
\]
Thus \eqref{eq:uniform-lambda-step-final} holds with
\[
\underline d_I\coloneqq \frac{\eta\,\underline G_I}{2},
\qquad
\overline d_I\coloneqq 2\eta\,\overline G_I.
\]
For the loss, the previously proved descent inequality gives
\[
\mathcal L(\alpha^{(t)},\beta^{(t)})-\mathcal L(\alpha^{(t+1)},\beta^{(t+1)})
\ge
\frac{\eta}{2}\left(
\frac{1}{d}\bigl(\partial_\alpha\mathcal L(\alpha^{(t)},\beta^{(t)})\bigr)^2
+
\frac{1}{N+1}\bigl(\partial_\beta\mathcal L(\alpha^{(t)},\beta^{(t)})\bigr)^2
\right).
\]
If \(\lambda_t\in I\), then by \eqref{eq:uniform-alpha-bound-final},
\[
\bigl(\partial_\alpha\mathcal L(\alpha^{(t)},\beta^{(t)})\bigr)^2
\ge
\frac{\underline A_I^2}{(\alpha^{(t)})^2}.
\]
Therefore
\[
\mathcal L(\alpha^{(t)},\beta^{(t)})-\mathcal L(\alpha^{(t+1)},\beta^{(t+1)})
\ge
\frac{\eta\,\underline A_I^2}{2d(\alpha^{(t)})^2}.
\]
Thus \eqref{eq:uniform-loss-drop-final} holds with
\[
\delta_I\coloneqq \frac{\eta\,\underline A_I^2}{2d}.
\]
\end{proof}

\begin{lemma}
\label{lem:no-tail-trap-final}
Let
\[
I=[a,b]\subset(0,2)
\]
be compact. There does not exist an integer \(T_0\) such that
\[
\lambda_t\in I
\qquad\text{for all }t\ge T_0.
\]
\end{lemma}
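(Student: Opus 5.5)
We argue by contradiction: suppose $\lambda_t\in I=[a,b]$ for all $t\ge T_0$. Enlarging $T_0$ if necessary, we may assume $T_0\ge T_I'$. Then Lemma~\ref{lem:uniform-interior-step-final} applies at every $t\ge T_0$. The two conclusions of that lemma pull in opposite directions, and the aim is to show they are incompatible once we know how fast $\alpha^{(t)}$ grows on this tail.

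\textbf{Step 1 (growth of $\alpha^{(t)}$).} By Lemma~\ref{prop:uniform-interior-bounds-final}, for $t\ge T_0$ we have $\underline A_I/\alpha^{(t)}\le \partial_\alpha\mathcal L\le \overline A_I/\alpha^{(t)}$. The recursion $\alpha^{(t+1)}=\alpha^{(t)}-\frac{\eta}{d}\partial_\alpha\mathcal L$ then gives
\[
\alpha^{(t)}-\frac{\eta\overline A_I}{d\,\alpha^{(t)}}\le \alpha^{(t+1)}\le \alpha^{(t)}-\frac{\eta\underline A_I}{d\,\alpha^{(t)}}.
\]
Hence $\alpha^{(t)}$ is eventually decreasing on this tail. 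This contradicts $\alpha^{(t)}\to+\infty$ from Theorem~\ref{thm:main}.

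This sign argument is the whole contradiction. Lemma~\ref{prop:uniform-interior-bounds-final} gives $\partial_\alpha\mathcal L>0$ for every $t\ge T_0$, so GD strictly decreases $\alpha^{(t)}$ along the entire tail. But $\alpha^{(t)}$ must diverge to $+\infty$, so the trajectory cannot stay in $I$ forever. The only check needed is that the positivity bound $\underline A_I>0$ holds uniformly for all $t\ge T_I$ with $\lambda_t\in I$, which is exactly how Lemma~\ref{prop:uniform-interior-bounds-final} is stated.

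\textbf{Alternative route, as a consistency check.} One can also reach the contradiction through $\lambda_t$ and the loss. The upper bound on $\alpha^{(t)}$ is bounded by its value at $T_0$, so $\sum_{t\ge T_0}(\alpha^{(t)})^{-2}=\infty$. By the lower drift bound in Lemma~\ref{lem:uniform-interior-step-final}, $\lambda_t-\lambda_{T_0}\ge \underline d_I\sum(\alpha^{(s)})^{-2}\to\infty$, which contradicts $\lambda_t\le b$. The loss-drop bound gives the same conclusion, since $\mathcal L\ge0$ cannot absorb a divergent sum $\sum \delta_I(\alpha^{(t)})^{-2}$.

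\textbf{Main obstacle.} There is no real technical difficulty; the only care needed is bookkeeping of the threshold times. The burden lies in the earlier lemmas, specifically in showing that $\Gamma_\alpha>0$ uniformly on $I$.
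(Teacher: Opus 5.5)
Your proof is correct and matches the paper's argument: on the tail (with $T_0\ge T_I$), Lemma~\ref{prop:uniform-interior-bounds-final} gives $\partial_\alpha\mathcal L(\alpha^{(t)},\beta^{(t)})\ge \underline A_I/\alpha^{(t)}>0$, so gradient descent makes $\alpha^{(t)}$ strictly decreasing, which contradicts $\alpha^{(t)}\to+\infty$. Your alternative route through the $\lambda$-drift is valid but not independent, because it relies on the boundedness of $\alpha^{(t)}$ from Step 1, and that step already gives the contradiction on its own.
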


\begin{proof}
Assume, toward a contradiction, that \(\lambda_t\in I\) for all \(t\ge T_0\). Let \(T_\ast\coloneqq \max\{T_0,T_I\}\). Then by \eqref{eq:uniform-alpha-bound-final},
\[
\partial_\alpha\mathcal L(\alpha^{(t)},\beta^{(t)})\ge \frac{\underline A_I}{\alpha^{(t)}}>0
\qquad\text{for all }t\ge T_\ast.
\]
Therefore the gradient-descent update gives
\[
\alpha^{(t+1)}
=
\alpha^{(t)}-\frac{\eta}{d}\,\partial_\alpha\mathcal L(\alpha^{(t)},\beta^{(t)})
<
\alpha^{(t)}
\qquad (t\ge T_\ast).
\]
Thus \((\alpha^{(t)})_{t\ge T_\ast}\) is strictly decreasing, so
\[
\alpha^{(t)}\le \alpha^{(T_\ast)}
\qquad\text{for all }t\ge T_\ast.
\]
This contradicts the already proved divergence \(\alpha^{(t)}\to+\infty\).
\end{proof}

\begin{theorem}\label{thm:finite-upcrossings-final}
Fix any
\[
0<a<b<2.
\]
Then the sequence \((\lambda_t)_{t\ge0}\) has only finitely many upcrossings of \([a,b]\).

More precisely, define recursively
\[
u_1\coloneqq \inf\{t\ge0:\lambda_t\le a\},
\qquad
v_1\coloneqq \inf\{t>u_1:\lambda_t\ge b\},
\]
and, whenever \(v_n<\infty\),
\[
u_{n+1}\coloneqq \inf\{t>v_n:\lambda_t\le a\},
\qquad
v_{n+1}\coloneqq \inf\{t>u_{n+1}:\lambda_t\ge b\}.
\]
Then only finitely many indices \(n\) satisfy \(v_n<\infty\).
\end{theorem}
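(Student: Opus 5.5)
The argument is a loss-budget one. Every late upcrossing of $[a,b]$ must spend a fixed positive amount of loss. The loss is nonnegative and nonincreasing (Lemma~\ref{prop:descent}), so only finitely many upcrossings can occur.

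Fix $I=[a,b]$ and take $T_I'$, $\underline d_I,\overline d_I,\delta_I$ from Lemma~\ref{lem:uniform-interior-step-final}. Since $\alpha^{(t)}\to+\infty$, we may also assume $\alpha^{(t)}\ge 1$ for $t\ge T_I'$. Suppose, for contradiction, that infinitely many $v_n<\infty$. Consider any upcrossing with $u_n\ge T_I'$.

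Let $\sigma_n$ be the last time in $[u_n,v_n)$ with $\lambda_{\sigma_n}\le a$. Then $\lambda_t\in(a,b)$ for $\sigma_n<t<v_n$, and $\lambda_{v_n}\ge b$. We first handle the entrance step from $\sigma_n$. Lemma~\ref{lem:subseq-ct-limit-final}-type reasoning is not needed here. Instead, the part~2 style bound $|\lambda_{t+1}-\lambda_t|\to0$ holds because both increments of $\alpha,\beta$ are bounded by $10\eta$ while $\alpha^{(t)}\to\infty$:
\[
\lambda_{t+1}-\lambda_t=\frac{-\beta^{(t+1)}\alpha^{(t)}+\beta^{(t)}\alpha^{(t+1)}}{\alpha^{(t)}\alpha^{(t+1)}}.
\]
The numerator is $O(|\beta^{(t)}|+\alpha^{(t)})=O(\alpha^{(t)}(1+\lambda_t))$, so the increment is $O((1+\lambda_t)/\alpha^{(t)})$. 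This bound is small whenever $\lambda_t$ is bounded. For $\lambda_t\le a$ it is clearly small. Hence for large $t$ one step cannot jump more than $(b-a)/4$.

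It follows that the path from $\sigma_n+1$ to $v_n$ travels at least $(b-a)/2$ (say) while staying in $I$ at every step except possibly the first and last. Apply the bounds of Lemma~\ref{lem:uniform-interior-step-final} on these steps: each has $\lambda_{t+1}-\lambda_t\le \overline d_I/(\alpha^{(t)})^2$ and loss drop $\ge \delta_I/(\alpha^{(t)})^2$. Summing gives
\[
\mathcal L_{\sigma_n+1}-\mathcal L_{v_n}\ \ge\ \frac{\delta_I}{\overline d_I}\sum(\lambda_{t+1}-\lambda_t)\ \ge\ \frac{\delta_I}{\overline d_I}\cdot\frac{b-a}{4}.
\]
To be safe, I would shrink to $I'=[a',b']$ with $a<a'<b'<b$ and apply the lemma to $I'$. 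The upcrossing of $[a,b]$ contains a stretch fully inside $I'$ crossing from $a'$ to $b'$, with entry and exit steps negligible by the small-jump bound.

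Since $\mathcal L$ is monotone nonincreasing, disjoint upcrossings contribute additively. Infinitely many of them would force $\mathcal L_t\to-\infty$, contradicting $\mathcal L\ge0$.

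The main obstacle is rigorously isolating a window lying entirely inside a compact $I'\subset(0,2)$. Lemma~\ref{lem:uniform-interior-step-final} applies only when $\lambda_t\in I'$, so this window is needed to get the telescoping bound. That requires the uniform small-jump estimate for $\lambda$ on bounded $\lambda$, which in turn uses only $|\Delta\alpha|,|\Delta\beta|\le10\eta$ and $\alpha^{(t)}\to\infty$.
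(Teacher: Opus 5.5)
Your proposal is correct and is essentially the paper's proof. It has the same three ingredients: a uniform small-jump bound for $\lambda_t$ on $\{\lambda_t\le b\}$ at late times; a window of consecutive times lying in a compact subinterval of $(0,2)$; and a telescoping comparison of $\lambda_{t+1}-\lambda_t\le \overline d_I/(\alpha^{(t)})^2$ with the loss drop $\ge \delta_I/(\alpha^{(t)})^2$ from Lemma~\ref{lem:uniform-interior-step-final}, so that each late upcrossing costs a fixed amount of the nonnegative, nonincreasing loss. The paper shrinks to $[(2a+b)/3,(a+2b)/3]$ exactly as in your ``to be safe'' step, but this is not needed: your window $\{\sigma_n+1,\dots,v_n-1\}$ already has $\lambda_t\in(a,b)$, so the argument works with $I=[a,b]$ itself.
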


\begin{proof}
Define
\[
I\coloneqq [a_I,b_I]
\coloneqq 
\left[\frac{2a+b}{3},\,\frac{a+2b}{3}\right]
\subset(a,b)\subset(0,2),
\]
so that
\[
b_I-a_I=\frac{b-a}{3}.
\]
Let \(T_I'\) and \(\overline d_I,\delta_I\) be given by Lemma~\ref{lem:uniform-interior-step-final}.

We also need a global small-step bound before the level \(b\) is reached. Since \(|\partial_\alpha\mathcal L|,|\partial_\beta\mathcal L|\le 10\),
\[
|\lambda_{t+1}-\lambda_t|
=
\left|
\frac{\eta\left(
\frac{1}{N+1}\partial_\beta \mathcal L(\alpha^{(t)},\beta^{(t)})
+\frac{\lambda_t}{d}\partial_\alpha \mathcal L(\alpha^{(t)},\beta^{(t)})
\right)}
{\alpha^{(t)}-\frac{\eta}{d}\,\partial_\alpha \mathcal L(\alpha^{(t)},\beta^{(t)})}
\right|
\le
\frac{\eta\left(\frac{10}{N+1}+\frac{10b}{d}\right)}
{\alpha^{(t)}-\frac{10\eta}{d}}
\]
whenever \(\lambda_t\le b\) and \(\alpha^{(t)}>10\eta/d\). Since \(\alpha^{(t)}\to+\infty\), there exists \(T_b\) such that, for all \(t\ge T_b\),
\begin{equation}\label{eq:finite-upcrossings-final-small-step}
\lambda_t\le b
\quad\Longrightarrow\quad
|\lambda_{t+1}-\lambda_t|<\frac{b-a}{9}.
\end{equation}
Set
\[
T_\ast\coloneqq \max\{T_I',T_b\}.
\]
Consider any upcrossing \((u_n,v_n)\) with \(u_n\ge T_\ast\) and \(v_n<\infty\). Since
\[
\lambda_{u_n}\le a<a_I,
\qquad
\lambda_{v_n}\ge b>b_I,
\]
and \(\lambda_t<b\) for every \(u_n\le t<v_n\), define
\[
r_n\coloneqq \max\{t<v_n:\lambda_t<a_I\}.
\]
This set is nonempty because \(u_n\in\{t<v_n:\lambda_t<a_I\}\). Set
\[
s_n\coloneqq r_n+1.
\]
Then \(\lambda_{s_n}\ge a_I\) by the maximality of \(r_n\), and since \(\lambda_{r_n}<a_I\) and \(\lambda_{r_n}<b\), \eqref{eq:finite-upcrossings-final-small-step} gives
\begin{equation}\label{eq:finite-upcrossings-final-sn-upper}
\lambda_{s_n}
=
\lambda_{r_n+1}
\le
a_I+\frac{b-a}{9}.
\end{equation}
Now define
\[
e_n\coloneqq \min\{t\ge s_n:\lambda_t>b_I\}.
\]
This index exists because \(v_n\) belongs to the set. By the minimality of \(e_n\),
\[
\lambda_t\le b_I
\qquad\text{for all }t=s_n,\dots,e_n-1.
\]
By the maximality of \(r_n\),
\[
\lambda_t\ge a_I
\qquad\text{for all }t=s_n,\dots,v_n-1.
\]
Hence
\begin{equation}\label{eq:finite-upcrossings-final-in-I}
\lambda_t\in I
\qquad\text{for all }t=s_n,\dots,e_n-1.
\end{equation}
Since \(\lambda_{e_n}>b_I\), \eqref{eq:finite-upcrossings-final-sn-upper} yields
\begin{equation}\label{eq:finite-upcrossings-final-gap}
\lambda_{e_n}-\lambda_{s_n}
>
b_I-\left(a_I+\frac{b-a}{9}\right)
=
\frac{2(b-a)}{9}.
\end{equation}
On the other hand, by \eqref{eq:finite-upcrossings-final-in-I} and \eqref{eq:uniform-lambda-step-final},
\[
\lambda_{e_n}-\lambda_{s_n}
=
\sum_{t=s_n}^{e_n-1}(\lambda_{t+1}-\lambda_t)
\le
\overline d_I\sum_{t=s_n}^{e_n-1}\frac{1}{(\alpha^{(t)})^2}.
\]
Combining with \eqref{eq:finite-upcrossings-final-gap},
\begin{equation}\label{eq:finite-upcrossings-final-sum-lower}
\sum_{t=s_n}^{e_n-1}\frac{1}{(\alpha^{(t)})^2}
\ge
\frac{2(b-a)}{9\,\overline d_I}.
\end{equation}
Using \eqref{eq:uniform-loss-drop-final},
\[
\mathcal L(\alpha^{(s_n)},\beta^{(s_n)})-\mathcal L(\alpha^{(e_n)},\beta^{(e_n)})
=
\sum_{t=s_n}^{e_n-1}\bigl(\mathcal L(\alpha^{(t)},\beta^{(t)})-\mathcal L(\alpha^{(t+1)},\beta^{(t+1)})\bigr)
\ge
\delta_I\sum_{t=s_n}^{e_n-1}\frac{1}{(\alpha^{(t)})^2}.
\]
By \eqref{eq:finite-upcrossings-final-sum-lower},
\begin{equation}\label{eq:finite-upcrossings-final-loss-drop}
\mathcal L(\alpha^{(s_n)},\beta^{(s_n)})-\mathcal L(\alpha^{(e_n)},\beta^{(e_n)})
\ge
\frac{2\delta_I(b-a)}{9\,\overline d_I}
=:\varepsilon_{a,b}>0.
\end{equation}

Thus every sufficiently late upcrossing of \([a,b]\) decreases the loss by at least \(\varepsilon_{a,b}\). Since the intervals \([s_n,e_n)\) corresponding to distinct upcrossings are disjoint and \(\mathcal L(\alpha^{(t)},\beta^{(t)})\ge0\) for all \(t\), there can be only finitely many such upcrossings.
\end{proof}

\begin{lemma}
\label{lem:zero-not-subseq-final}
There is no subsequence \(t_k\to\infty\) such that
\[
\lambda_{t_k}\to0.
\]
\end{lemma}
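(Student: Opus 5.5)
We argue by contradiction, using Lemma~\ref{lem:subseq-ct-limit-final} together with the loss monotonicity from Lemma~\ref{prop:descent}. Suppose $t_k\to\infty$ and $\lambda_{t_k}\to 0$.

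\textbf{Step 1: the loss along the subsequence.} By Lemma~\ref{lem:subseq-ct-limit-final} with $\lambda_\ast=0$, we get $a_0(\alpha^{(t_k)},\beta^{(t_k)})\to\mathbf 1_{\{1-U_\pi>0\}}=1$ almost surely, since $U_\pi<1$ a.s. Lemma~\ref{prop:loss-reduction-c} gives $\mathcal L=\frac32\mathbb E[a_0^2]+o(1)$, and $0\le a_0\le 1$ permits dominated convergence. Hence $\mathcal L(\alpha^{(t_k)},\beta^{(t_k)})\to \frac32$.

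\textbf{Step 2: contradiction with monotone loss.} By Lemma~\ref{prop:descent}, $\mathcal L(\alpha^{(t)},\beta^{(t)})\le L_0$ for all $t$. The proof of Lemma~\ref{prop:beta-divergence} computed $L_0=1-\frac{N+3/2}{(N+1)^2}<1$. Therefore $\limsup_t \mathcal L(\alpha^{(t)},\beta^{(t)})<1<\frac32$, which contradicts Step~1.

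The argument thus reduces to previously established facts. The only points needing care are that Lemma~\ref{lem:subseq-ct-limit-final} covers the boundary value $\lambda_\ast=0$ and that the $o(1)$ in Lemma~\ref{prop:loss-reduction-c} holds along the full sequence. The latter holds because it rests only on $\alpha^{(t)}\to\infty$.

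As a check, the same argument could be run directly without Lemma~\ref{prop:loss-reduction-c}. Since $a_0\to1$ forces every $a_i\to0$, the reduced loss $\varphi=1+\sum_i a_i^2+\frac12a_0^2-2a_\pi$ tends to $\frac32$ pointwise, and $\varphi\le\frac52$ gives domination. The main obstacle is essentially absent; the only subtle step is verifying that $\mathbb P(1-U_\pi=0)=0$, so that the indicator limit equals $1$ almost surely.
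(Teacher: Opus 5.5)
Your proposal is correct and matches the paper's proof. Both apply Lemma~\ref{lem:subseq-ct-limit-final} with $\lambda_\ast=0$ to get $a_0\to1$ almost surely, then use the asymptotic loss reduction and bounded convergence to get $\mathcal L\to\frac32$ along the subsequence, and contradict this with the monotonicity of the loss and $L_0<1$. Your alternative direct check via $\varphi\to\frac32$ is the same computation the paper carries out in Lemma~\ref{prop:beta-divergence}.
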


\begin{proof}
Assume, toward a contradiction, that \(\lambda_{t_k}\to0\). Since \(1-U_\pi>0\) almost surely, Lemma~\ref{lem:subseq-ct-limit-final} gives
\[
a_0(\alpha^{(t_k)},\beta^{(t_k)})\to1
\qquad\text{almost surely}.
\]
Because
\[
0\le a_0(\alpha^{(t_k)},\beta^{(t_k)})^2\le1,
\]
bounded convergence yields
\[
\mathbb E\!\left[a_0(\alpha^{(t_k)},\beta^{(t_k)})^2\right]\to1.
\]
By \eqref{eq:exact-loss-reduction-final},
\[
\mathcal L(\alpha^{(t_k)},\beta^{(t_k)})
=
\frac32\,\mathbb E\!\left[a_0(\alpha^{(t_k)},\beta^{(t_k)})^2\right]+o(1)
\to
\frac32.
\]
On the other hand, \(\mathcal L(\alpha^{(t)},\beta^{(t)})\) is nonincreasing along gradient descent, and the previously computed initial value satisfies \(\mathcal L(\alpha^{(0)},\beta^{(0)})<1\). Hence \(\mathcal L(\alpha^{(t)},\beta^{(t)})\le \mathcal L(\alpha^{(0)},\beta^{(0)})<1\) for every \(t\), a contradiction.
\end{proof}

\begin{theorem}\label{thm:liminf-lambda-ge-2-final}
The gradient-descent trajectory satisfies
\[
\liminf_{t\to\infty}\lambda_t\ge 2.
\]
\end{theorem}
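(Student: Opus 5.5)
We argue by contradiction. Suppose $\ell\coloneqq\liminf_{t\to\infty}\lambda_t<2$. Note that $\lambda_t\ge0$ for all $t\ge1$, because $\alpha^{(t)}\ge0$ and $\beta^{(t)}\le0$ by Lemma~\ref{prop:invariant}, and $\alpha^{(t)}>0$ for $t\ge1$ since $\alpha$ strictly increases from $0$ at the first step. Lemma~\ref{lem:zero-not-subseq-final} excludes $\ell=0$, so $\ell\in(0,2)$. We consider two cases, according to whether $\limsup_t\lambda_t$ exceeds $\ell$.

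\emph{Case 1: $\limsup_{t}\lambda_t>\ell$.} Choose $a<b$ with $\ell<a<b<\min\{2,\limsup_t\lambda_t\}$; when $\limsup_t\lambda_t=\infty$ any $b<2$ works. Then $\lambda_t\le a$ for infinitely many $t$, because $\liminf\lambda_t=\ell<a$. Also $\lambda_t\ge b$ for infinitely many $t$, because $\limsup\lambda_t>b$. Alternating between these two sets of times produces infinitely many upcrossings of $[a,b]$ in the sense of Theorem~\ref{thm:finite-upcrossings-final}, which contradicts that theorem since $0<a<b<2$.

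\emph{Case 2: $\limsup_t\lambda_t=\ell$.} Then $\lambda_t\to\ell\in(0,2)$. Pick a compact interval $I=[\ell/2,(\ell+2)/2]\subset(0,2)$, which contains $\ell$ in its interior. Eventually $\lambda_t\in I$ for all large $t$, which contradicts Lemma~\ref{lem:no-tail-trap-final}.

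Both cases are impossible, so $\liminf_t\lambda_t\ge2$.

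The only delicate point is the upcrossing extraction in Case 1 when the limsup is infinite or lies above $2$. In that situation one must take $b<2$ so that the theorem applies, and check that the times with $\lambda_t\le a$ and with $\lambda_t\ge b$ both occur infinitely often, so that every $v_n$ is finite. Lemma~\ref{lem:no-tail-trap-final} (and, through it, the divergence $\alpha^{(t)}\to\infty$ of Lemma~\ref{prop:alpha-divergence}) already supplies everything needed for Case 2.
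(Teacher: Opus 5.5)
Your proof is correct and uses the same three ingredients as the paper: Lemma~\ref{lem:zero-not-subseq-final} to get $\liminf_t\lambda_t>0$, Lemma~\ref{lem:no-tail-trap-final} to rule out an eventual stay in a compact subinterval of $(0,2)$, and Theorem~\ref{thm:finite-upcrossings-final} to rule out infinitely many upcrossings. The only difference is bookkeeping. The paper fixes $b<2$ first and uses the no-tail-trap lemma to force infinitely many visits to $[b,\infty)$, whereas you pick $b$ below $\limsup_t\lambda_t$ and reserve that lemma for the case where $\lambda_t$ converges; both arrangements are valid.
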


\begin{proof}
By Lemma~\ref{lem:zero-not-subseq-final}, \(0\) is not a subsequential limit of \((\lambda_t)\), hence
\[
\liminf_{t\to\infty}\lambda_t>0.
\]
Assume, toward a contradiction, that
\[
\liminf_{t\to\infty}\lambda_t<2.
\]
Choose numbers \(a,b\) such that
\begin{equation}\label{eq:liminf-lambda-ge-2-final-ab}
\liminf_{t\to\infty}\lambda_t<a<b<2.
\end{equation}
By the definition of \(\liminf\), there are infinitely many indices \(t\) such that
\begin{equation}\label{eq:liminf-lambda-ge-2-final-below-a}
\lambda_t\le a.
\end{equation}
We claim that there are also infinitely many indices \(t\) such that
\begin{equation}\label{eq:liminf-lambda-ge-2-final-above-b}
\lambda_t\ge b.
\end{equation}
If not, then there exists \(T_1\) such that
\begin{equation}\label{eq:liminf-lambda-ge-2-final-below-b-eventually}
\lambda_t<b
\qquad\text{for all }t\ge T_1.
\end{equation}
By Lemma \ref{lem:zero-not-subseq-final} and $\lambda_t\geq0$ we have \(\liminf_{t\to\infty}\lambda_t>0\), so there exists \(T_2\) such that
\begin{equation}\label{eq:liminf-lambda-ge-2-final-away-from-zero}
\lambda_t>\frac12\,\liminf_{s\to\infty}\lambda_s
\qquad\text{for all }t\ge T_2.
\end{equation}
Combining \eqref{eq:liminf-lambda-ge-2-final-below-b-eventually} and \eqref{eq:liminf-lambda-ge-2-final-away-from-zero},
\[
\lambda_t\in\left[\frac12\,\liminf_{s\to\infty}\lambda_s,\,b\right]\subset(0,2)
\qquad\text{for all }t\ge \max\{T_1,T_2\},
\]
contradicting Lemma~\ref{lem:no-tail-trap-final}. Thus \eqref{eq:liminf-lambda-ge-2-final-above-b} holds.

We now show that \eqref{eq:liminf-lambda-ge-2-final-below-a} and \eqref{eq:liminf-lambda-ge-2-final-above-b} imply infinitely many upcrossings of \([a,b]\). Since there are infinitely many times with \(\lambda_t\le a\), set
\[
u_1\coloneqq \inf\{t\ge0:\lambda_t\le a\}.
\]
Because \eqref{eq:liminf-lambda-ge-2-final-above-b} gives arbitrarily late times with \(\lambda_t\ge b\), the set \(\{t>u_1:\lambda_t\ge b\}\) is nonempty, so
\[
v_1\coloneqq \inf\{t>u_1:\lambda_t\ge b\}
\]
is finite. Inductively, if \(v_n\) is finite, then \eqref{eq:liminf-lambda-ge-2-final-below-a} implies
\[
u_{n+1}\coloneqq \inf\{t>v_n:\lambda_t\le a\}<\infty,
\]
and \eqref{eq:liminf-lambda-ge-2-final-above-b} implies
\[
v_{n+1}\coloneqq \inf\{t>u_{n+1}:\lambda_t\ge b\}<\infty.
\]
Hence \((\lambda_t)\) has infinitely many upcrossings of \([a,b]\), contradicting Theorem~\ref{thm:finite-upcrossings-final}. Therefore
\[
\liminf_{t\to\infty}\lambda_t\ge2.
\]
\end{proof}

We next complement Theorem~\ref{thm:liminf-lambda-ge-2-final} with the matching upper bound
\[
\limsup_{t\to\infty}\lambda_t\le 2.
\]
To this end, we analyze the exterior regime \(\lambda_t>2\). The key point is that in this regime the self-token contribution is exponentially small, whereas the context-softmax sharpening term in \(\partial_\alpha\mathcal L\) remains polynomially large and strictly negative, which yields a genuine downward drift for \(\lambda_t\).

Fix a sample \((\mathcal D,\mathbf x_q,\omega)\), and recall the notation
\[
U_i\coloneqq \mathbf x_i^\top \mathbf x_q,\qquad
\pi\coloneqq \arg\max_{1\le i\le N}U_i.
\]
\[
b_i(\alpha)\coloneqq \frac{e^{\alpha U_i}}{\sum_{j=1}^N e^{\alpha U_j}},
\qquad
Q(\alpha)\coloneqq \sum_{i=1}^N b_i(\alpha)^2,\qquad
\mu(\alpha)\coloneqq \sum_{i=1}^N b_i(\alpha)U_i.
\]

\begin{lemma}
\label{lem:context-softmax-identities}
With the notation above, we have
\[
U_\pi-\mu(\alpha)
=
b_\pi(\alpha)\sum_{j\neq\pi}(U_\pi-U_j)e^{-\alpha(U_\pi-U_j)},
\]
\[
\partial_\alpha b_\pi(\alpha)
=
b_\pi(\alpha)^2\sum_{j\neq\pi}(U_\pi-U_j)e^{-\alpha(U_\pi-U_j)},
\]
\[
Q(\alpha)
=
b_\pi(\alpha)^2\left(1+\sum_{j\neq\pi}e^{-2\alpha(U_\pi-U_j)}\right),
\]
and
\begin{align*}
Q'(\alpha)-2\partial_\alpha b_\pi(\alpha)
&=
-2b_\pi(\alpha)^2\Biggl(
\sum_{j\neq\pi}(U_\pi-U_j)e^{-2\alpha(U_\pi-U_j)} \\
&\qquad\qquad
+b_\pi(\alpha)
\left(\sum_{j\neq\pi}\bigl(e^{-\alpha(U_\pi-U_j)}-e^{-2\alpha(U_\pi-U_j)}\bigr)\right)
\left(\sum_{j\neq\pi}(U_\pi-U_j)e^{-\alpha(U_\pi-U_j)}\right)
\Biggr).
\end{align*}
In particular,
\begin{equation}\label{eq:Qminus2m-negative}
Q'(\alpha)-2\partial_\alpha b_\pi(\alpha)
\le
-2b_\pi(\alpha)^2\sum_{j\neq\pi}(U_\pi-U_j)e^{-2\alpha(U_\pi-U_j)}
\le
-\frac{2}{N^2}\sum_{j\neq\pi}(U_\pi-U_j)e^{-2\alpha(U_\pi-U_j)}.
\end{equation}
\end{lemma}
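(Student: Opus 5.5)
The whole lemma reduces to algebra in the normalized variables
\[
E_j\coloneqq e^{-\alpha(U_\pi-U_j)},\qquad \Delta_j\coloneqq U_\pi-U_j\ge 0\quad (j\neq\pi).
\]
Dividing numerator and denominator of \(b_j(\alpha)\) by \(e^{\alpha U_\pi}\) gives
\[
b_\pi=\Bigl(1+\sum_{j\neq\pi}E_j\Bigr)^{-1},\qquad b_j=b_\pi E_j .
\]
Write \(S_1\coloneqq \sum_{j\neq\pi}\Delta_jE_j\), \(S_2\coloneqq \sum_{j\neq\pi}\Delta_jE_j^2\) and \(R\coloneqq \sum_{j\neq\pi}(E_j-E_j^2)\).

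The first three identities follow directly from this representation.
\begin{itemize}
\item Since \(\sum_i b_i=1\), we have \(U_\pi-\mu=\sum_i b_i(U_\pi-U_i)=b_\pi S_1\).
\item Lemma~\ref{prop:gradients} gives \(\partial_\alpha b_\pi=b_\pi(U_\pi-\mu)\), which by the previous item equals \(b_\pi^2S_1\).
\item Squaring \(b_j=b_\pi E_j\) gives \(Q=b_\pi^2\bigl(1+\sum_{j\neq\pi}E_j^2\bigr)\).
\end{itemize}

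For the fourth identity, start from \(Q'=2\sum_i b_i^2(U_i-\mu)\) and substitute \(U_i-\mu=(U_\pi-\mu)-\Delta_i\), with \(\Delta_\pi=0\). This gives
\[
Q'=2Q(U_\pi-\mu)-2b_\pi^2S_2=2b_\pi^3S_1\Bigl(1+\sum_{j\neq\pi}E_j^2\Bigr)-2b_\pi^2S_2 .
\]
Subtracting \(2\partial_\alpha b_\pi=2b_\pi^2S_1\) leaves
\[
Q'-2\partial_\alpha b_\pi=-2b_\pi^2S_2-2b_\pi^2S_1\Bigl(1-b_\pi\bigl(1+\textstyle\sum_{j\neq\pi}E_j^2\bigr)\Bigr).
\]
The bracket simplifies using \(1=b_\pi\bigl(1+\sum_{j\neq\pi}E_j\bigr)\):
\[
1-b_\pi\Bigl(1+\sum_{j\neq\pi}E_j^2\Bigr)=b_\pi\sum_{j\neq\pi}(E_j-E_j^2)=b_\pi R .
\]
Therefore \(Q'-2\partial_\alpha b_\pi=-2b_\pi^2\bigl(S_2+b_\pi R\,S_1\bigr)\), which is the stated identity.

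For the inequality \eqref{eq:Qminus2m-negative}, note that \(\Delta_j\ge0\) and \(E_j\in(0,1]\) whenever \(\alpha\ge 0\). Then \(E_j-E_j^2\ge0\), so \(R\ge0\), and also \(S_1\ge 0\). The term \(b_\pi R\,S_1\) is therefore nonnegative and can be dropped, giving the first inequality. The second uses \(b_\pi=\max_i b_i\ge 1/N\), so \(b_\pi^2\ge 1/N^2\).

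The key step is the bracket simplification; the rest is routine substitution. The one real caveat is the sign condition \(\alpha\ge0\), which is needed for \(E_j\le1\) and \(b_\pi\ge 1/N\); it holds along the trajectory by Lemma~\ref{prop:invariant}. Ties among the \(U_i\) are a null event.
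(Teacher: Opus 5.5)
Your proposal is correct and follows essentially the same route as the paper. Both arguments normalize via \(b_j=b_\pi e^{-\alpha(U_\pi-U_j)}\), substitute into \(Q'=2\sum_i b_i^2(U_i-\mu)\), and collapse the bracket using \(b_\pi\bigl(1+\sum_{j\neq\pi}e^{-\alpha(U_\pi-U_j)}\bigr)=1\). You also state explicitly that the final inequality requires \(\alpha\ge 0\) (for the dropped term to be nonnegative and for \(b_\pi\ge 1/N\)); the paper's proof uses this only implicitly, and it holds where the lemma is applied, since there \(\alpha\ge A_\varepsilon\ge 1\).
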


\begin{proof}
For \(j\neq\pi\),
\[
b_j(\alpha)=b_\pi(\alpha)e^{-\alpha(U_\pi-U_j)}.
\]
Hence
\[
\mu(\alpha)
=
b_\pi(\alpha)U_\pi+\sum_{j\neq\pi}b_\pi(\alpha)e^{-\alpha(U_\pi-U_j)}U_j
=
U_\pi-b_\pi(\alpha)\sum_{j\neq\pi}(U_\pi-U_j)e^{-\alpha(U_\pi-U_j)},
\]
which gives
\[
U_\pi-\mu(\alpha)
=
b_\pi(\alpha)\sum_{j\neq\pi}(U_\pi-U_j)e^{-\alpha(U_\pi-U_j)}.
\]
Therefore
\[
\partial_\alpha b_\pi(\alpha)
=
b_\pi(\alpha)\bigl(U_\pi-\mu(\alpha)\bigr)
=
b_\pi(\alpha)^2\sum_{j\neq\pi}(U_\pi-U_j)e^{-\alpha(U_\pi-U_j)}.
\]

Next,
\[
Q(\alpha)
=
b_\pi(\alpha)^2+\sum_{j\neq\pi}b_\pi(\alpha)^2e^{-2\alpha(U_\pi-U_j)}
=
b_\pi(\alpha)^2\left(1+\sum_{j\neq\pi}e^{-2\alpha(U_\pi-U_j)}\right).
\]
For \(Q'(\alpha)\), using
\[
Q'(\alpha)=2\sum_{i=1}^N b_i(\alpha)^2\bigl(U_i-\mu(\alpha)\bigr),
\]
we obtain
\begin{align*}
Q'(\alpha)
&=
2b_\pi(\alpha)^2\bigl(U_\pi-\mu(\alpha)\bigr)
+
2\sum_{j\neq\pi}b_\pi(\alpha)^2e^{-2\alpha(U_\pi-U_j)}\bigl(U_j-\mu(\alpha)\bigr) \\
&=
2b_\pi(\alpha)^3\sum_{j\neq\pi}(U_\pi-U_j)e^{-\alpha(U_\pi-U_j)} \\
&\quad
+
2b_\pi(\alpha)^2\sum_{j\neq\pi}e^{-2\alpha(U_\pi-U_j)}
\left(
b_\pi(\alpha)\sum_{\ell\neq\pi}(U_\pi-U_\ell)e^{-\alpha(U_\pi-U_\ell)}
-(U_\pi-U_j)
\right).
\end{align*}
Subtracting \(2\partial_\alpha b_\pi(\alpha)\), we get
\begin{align*}
Q'(\alpha)-2\partial_\alpha b_\pi(\alpha)
&=
2b_\pi(\alpha)^2\Biggl(
\Bigl(
b_\pi(\alpha)\Bigl(1+\sum_{j\neq\pi}e^{-2\alpha(U_\pi-U_j)}\Bigr)-1
\Bigr)
\sum_{j\neq\pi}(U_\pi-U_j)e^{-\alpha(U_\pi-U_j)} \\
&\qquad\qquad
-
\sum_{j\neq\pi}(U_\pi-U_j)e^{-2\alpha(U_\pi-U_j)}
\Biggr).
\end{align*}
Since
\[
b_\pi(\alpha)\Bigl(1+\sum_{j\neq\pi}e^{-2\alpha(U_\pi-U_j)}\Bigr)-1
=
-\,
b_\pi(\alpha)\sum_{j\neq\pi}\bigl(e^{-\alpha(U_\pi-U_j)}-e^{-2\alpha(U_\pi-U_j)}\bigr),
\]
we obtain the displayed identity. Because \(b_\pi(\alpha)\ge 1/N\), \eqref{eq:Qminus2m-negative} follows.
\end{proof}

\begin{lemma}\label{lem:Sigma2-lower-bound}
There exist constants \(c_0>0\), \(A_0\ge1\), depending only on \(N\) and \(d\), such that for every \(\alpha\ge A_0\),
\[
\mathbb E\!\left[\sum_{j\neq\pi}(U_\pi-U_j)e^{-2\alpha(U_\pi-U_j)}\right] \ge c_0\,\alpha^{-d}.
\]
\end{lemma}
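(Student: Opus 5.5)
The plan is to lower-bound the expectation by restricting to a single favourable event in which the nearest neighbour and the second nearest neighbour are within distance $1/\alpha$ of each other, in inner-product coordinates. All summands are nonnegative, so we may drop every term except the one for the second-largest index $j=\pi(2)$. Writing $\Delta\coloneqq U_\pi-U_{\pi(2)}>0$, we obtain
\[
\mathbb E\!\left[\sum_{j\neq\pi}(U_\pi-U_j)e^{-2\alpha(U_\pi-U_j)}\right]
\ge
\mathbb E\!\left[\Delta e^{-2\alpha\Delta}\mathbf 1_{\{1/(2\alpha)\le\Delta\le1/\alpha\}}\right]
\ge
\frac{e^{-2}}{2\alpha}\,\mathbb P\!\left(\tfrac{1}{2\alpha}\le \Delta\le \tfrac1\alpha\right).
\]
It therefore suffices to show that $\mathbb P(1/(2\alpha)\le\Delta\le1/\alpha)\ge c\,\alpha^{-(d-1)}$ for large $\alpha$.

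To get this lower bound I will avoid order statistics near the top of the distribution. The reason is that the density $f_d$ is tiny near $u=1$, and for $d=2$ it blows up, which complicates the analysis there. Instead, I will work on an event where all $U_i$ lie in the bulk of the distribution. Conditional on $\mathbf x_q$, the $U_i$ are i.i.d.\ with density $f_d(u)=\kappa_d(1-u^2)^{(d-3)/2}$. This density is bounded above and below by positive constants on $[-1/2,1/2]$. Fix an interval $J=[0,1/4]$. Consider the event $E$ that $U_1\in J$, that $U_2\in[U_1-1/\alpha,\,U_1-1/(2\alpha)]$, and that $U_j<U_2-1$ for all $j\ge3$. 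The last condition is impossible once $U_2\ge -1/2$, so I will modify it. The requirement on $j\ge 3$ should be $U_j\le U_2$, and the point is to show this has probability bounded below. I will require $U_j\in[-1,-1/2]$ for $j\ge3$; this has probability at least $p^{N-2}$ for some constant $p>0$. On $E$ we have $\pi=1$, $\pi(2)=2$, and $\Delta\in[1/(2\alpha),1/\alpha]$. By independence and the two-sided density bounds on $[-1/2,1/2]$, $\mathbb P(E)\ge c_N\cdot\frac{1}{2\alpha}$. This actually gives the stronger bound $\mathbb P(\cdot)\gtrsim\alpha^{-1}$, so the expectation is at least $c\,\alpha^{-2}$, which is $\ge c\,\alpha^{-d}$ for $d\ge2$ and $\alpha\ge A_0\ge1$. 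Multiplying by the symmetry factor for the choice of labels is unnecessary because we only need a lower bound.

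The main obstacle I anticipate is simply bookkeeping: checking that $U_2$ stays in the region where $f_d$ is bounded below. This holds because $U_2\ge -1/\alpha\ge -1/2$ for $\alpha\ge2$. It also requires that the constants depend only on $(N,d)$, which they do, since $\inf_{[-1/2,1/2]}f_d$ and $\mathbb P(U\le-1/2)$ depend only on $d$. The statement's exponent $d$ is weaker than what this argument yields, so no sharper analysis near $u=1$ is needed. Taking $A_0=2$ and $c_0=e^{-2}c_N/4$ completes the proof.
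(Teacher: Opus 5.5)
Your argument is correct, and it takes a genuinely different route from the paper.

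\textbf{The paper's route.} The paper works at the upper endpoint of the support. It uses the event $U_1\in[1-\tfrac{1}{8\alpha},1]$, $U_2\in[1-\tfrac{3}{8\alpha},1-\tfrac{1}{4\alpha}]$, and $U_j\le 1-\tfrac{1}{2\alpha}$ for $j\ge3$. Because $f_d(1-s)\asymp s^{(d-3)/2}$, each of the two $O(1/\alpha)$ windows costs a factor $\alpha^{-(d-1)/2}$. So the event has probability $\gtrsim\alpha^{-(d-1)}$, and the gap size contributes another $1/\alpha$, giving $\alpha^{-d}$.

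\textbf{Your route.} You put the top two points in the bulk, where $f_d$ is bounded above and below, and pay only an $\alpha$-independent price $p^{N-2}$ to push the other $N-2$ points below $-1/2$. The $\tfrac{1}{2\alpha}$-window for $U_2$ and the gap size then give $\alpha^{-2}$. This dominates $\alpha^{-d}$ for $d\ge2$ and $\alpha\ge1$.

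\textbf{What each buys.} Your route is more elementary: it needs no endpoint asymptotics of $f_d$, and the $d=2$ singularity never enters. It also gives the sharp rate. Running the argument of Lemma~\ref{lem:S-global-rigorous-final} with $2\alpha$ in place of $\alpha$ bounds the same expectation by $N(N-1)I_d(2\alpha)\le C\alpha^{-2}$ for $d\ge3$. So the quantity is in fact $\Theta(\alpha^{-2})$ for $d\ge3$, and the paper's $\alpha^{-d}$ is lossy.

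What the paper's event buys is a constant that is not exponentially small in $N$. Its factor $\mathbb P(U_3\le 1-\tfrac{1}{2\alpha})^{N-2}$ tends to $1$, so the $N$-dependence is pushed into $A_0$, whereas your $c_0$ carries $p^{N-2}$. The lemma allows dependence on $(N,d)$, and Lemma~\ref{lem:exterior-gradient-bounds} only needs some polynomial lower bound to beat the $e^{-\varepsilon\alpha}$ terms, so either version suffices downstream.

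\textbf{Cosmetic point.} Delete the false start about $U_j<U_2-1$; that condition is not actually impossible when $U_2>0$, and you do not use it. On your final event, $\pi=1$ holds simply because $U_1\ge 0>-\tfrac12\ge U_j$ for $j\ge3$ and $U_1>U_2$. That alone guarantees the $j=2$ term appears in the sum.
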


\begin{proof}
Let
\[
a\coloneqq \frac{d-1}{2}.
\]
Since
\[
f_d(u)=\kappa_d(1-u^2)^{(d-3)/2}\mathbf 1_{(-1,1)},
\]
we have, as \(s\downarrow0\),
\[
f_d(1-s)\asymp s^{a-1}.
\]
Hence there exist constants \(c_1,c_2>0\) and \(s_0\in(0,1)\) such that for every \(0<s\le s_0\),
\begin{equation}\label{eq:near-one-tail-bounds}
\mathbb P(U_1\in[1-s,1])\ge c_1 s^a,
\qquad
\mathbb P\!\left(U_1\in\left[1-\frac{3s}{8},\,1-\frac{s}{4}\right]\right)\ge c_2 s^a.
\end{equation}
Fix \(\alpha\) so large that \(1/(2\alpha)\le s_0\). Define the event
\[
E_\alpha
\coloneqq 
\left\{
U_1\in\left[1-\frac{1}{8\alpha},\,1\right],\
U_2\in\left[1-\frac{3}{8\alpha},\,1-\frac{1}{4\alpha}\right],\
U_j\le 1-\frac{1}{2\alpha}\ \forall j\ge3
\right\}.
\]
On \(E_\alpha\), we have
\[
U_1>U_2>U_j\qquad (j\ge3),
\]
hence \(\pi=1\), and
\[
U_\pi-U_2=U_1-U_2\in\left[\frac{1}{8\alpha},\,\frac{3}{8\alpha}\right].
\]
Therefore
\begin{equation}\label{eq:sigma2-on-Ealpha}
\sum_{j\neq\pi}(U_\pi-U_j)e^{-2\alpha(U_\pi-U_j)}
\ge
(U_\pi-U_2)e^{-2\alpha(U_\pi-U_2)}
\ge
\frac{1}{8\alpha}e^{-3/4}
\qquad\text{on }E_\alpha.
\end{equation}
By independence and \eqref{eq:near-one-tail-bounds},
\[
\mathbb P(E_\alpha)
=
\mathbb P\!\left(U_1\in\left[1-\frac{1}{8\alpha},1\right]\right)
\mathbb P\!\left(U_2\in\left[1-\frac{3}{8\alpha},1-\frac{1}{4\alpha}\right]\right)
\mathbb P\!\left(U_3\le 1-\frac{1}{2\alpha}\right)^{N-2}.
\]
The first two factors are bounded below by \(c\alpha^{-a}\), while the third factor tends to \(1\) as \(\alpha\to\infty\). Thus there exist \(c_3>0\) and \(A_0\ge1\) such that
\begin{equation}\label{eq:prob-Ealpha-lower}
\mathbb P(E_\alpha)\ge c_3\,\alpha^{-2a}=c_3\,\alpha^{-(d-1)}
\qquad (\alpha\ge A_0).
\end{equation}
Combining \eqref{eq:sigma2-on-Ealpha} and \eqref{eq:prob-Ealpha-lower},
\[
\mathbb E\!\left[\sum_{j\neq\pi}(U_\pi-U_j)e^{-2\alpha(U_\pi-U_j)}\right]
\ge
\frac{1}{8\alpha}e^{-3/4}\,\mathbb P(E_\alpha)
\ge
c_0\,\alpha^{-d}
\]
for a suitable constant \(c_0>0\).
\end{proof}

\begin{lemma}
\label{lem:exterior-gradient-bounds}
Fix \(\varepsilon>0\). There exist constants \(A_\varepsilon\ge1\), \(c_\varepsilon>0\), \(C_\varepsilon>0\), depending only on \(\varepsilon,N,d\), such that whenever
\[
\alpha\ge A_\varepsilon,
\qquad
\lambda\coloneqq -\frac{\beta}{\alpha}\ge 2+\varepsilon,
\]
we have
\[
0\le \partial_\beta \mathcal L(\alpha,\beta)\le C_\varepsilon e^{-\varepsilon\alpha},
\]
\[
\partial_\alpha \mathcal L(\alpha,\beta)\le -\,c_\varepsilon\,\alpha^{-d},
\]
and hence
\[
\frac{1}{N+1}\partial_\beta \mathcal L(\alpha,\beta)
+
\frac{\lambda}{d}\,\partial_\alpha \mathcal L(\alpha,\beta)
\le -\,c_\varepsilon\,\alpha^{-d}.
\]
\end{lemma}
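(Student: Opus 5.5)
The plan is to work pointwise on the sample-level gradient formulas of Lemma~\ref{prop:gradients}, using only that in the regime $\lambda\ge 2+\varepsilon$ the self-token weight $a_0$ is exponentially small. For every realization,
\[
a_0(\alpha,\beta)\le \frac{e^{\alpha+\beta}}{e^{\alpha U_\pi}}=e^{\alpha(1-\lambda-U_\pi)}\le e^{\alpha(2-\lambda)}\le e^{-\varepsilon\alpha},
\]
since $U_\pi\ge -1$. This single bound drives all three claims.

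For $\partial_\beta\mathcal L$, we use the factorization $\partial_\beta\varphi=a_0(1-a_0)(a_0+2b_\pi-2(1-a_0)Q)$. The bracket is bounded by $3$, so $|\partial_\beta\varphi|\le 3a_0\le 3e^{-\varepsilon\alpha}$. Nonnegativity $\partial_\beta\mathcal L\ge 0$ follows from Lemma~\ref{prop:invariant} because $\alpha\ge 0$. This gives the first inequality with $C_\varepsilon=3$.

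For $\partial_\alpha\mathcal L$, we split \eqref{eq:sample-alpha-gradient} into the $a_0$-terms and the context term $(1-a_0)^2Q'-2(1-a_0)\partial_\alpha b_\pi$. We rewrite the latter as
\[
(Q'-2\partial_\alpha b_\pi)+\bigl((1-a_0)^2-1\bigr)Q'+2a_0\,\partial_\alpha b_\pi.
\]
The first part has expectation at most $-\tfrac{2}{N^2}\,\mathbb E\sum_{j\neq\pi}(U_\pi-U_j)e^{-2\alpha(U_\pi-U_j)}\le -\tfrac{2c_0}{N^2}\alpha^{-d}$ by \eqref{eq:Qminus2m-negative} and Lemma~\ref{lem:Sigma2-lower-bound}, provided $\alpha\ge A_0$. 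All remaining pieces (the first line of \eqref{eq:sample-alpha-gradient}, and the two correction terms, using $|Q'|\le 4$ and $|\partial_\alpha b_\pi|\le 2$) are bounded pointwise by a constant times $a_0$, hence by $Ce^{-\varepsilon\alpha}$. Choosing $A_\varepsilon$ large so that $Ce^{-\varepsilon\alpha}\le \tfrac{c_0}{N^2}\alpha^{-d}$ yields $\partial_\alpha\mathcal L\le -c_\varepsilon\alpha^{-d}$ with $c_\varepsilon=c_0/N^2$.

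The combined drift inequality is then immediate. We have $\tfrac{\lambda}{d}\partial_\alpha\mathcal L\le -\tfrac{2}{d}c_\varepsilon\alpha^{-d}$ since $\lambda\ge 2$ and $\partial_\alpha\mathcal L<0$, while $\tfrac{1}{N+1}\partial_\beta\mathcal L\le 3e^{-\varepsilon\alpha}$. Enlarging $A_\varepsilon$ absorbs this exponential term, and renaming constants gives the claim.

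The only genuine obstacle is the polynomial lower bound on the sharpening term, and that is already supplied by Lemma~\ref{lem:Sigma2-lower-bound}. The remaining care is to check that no $a_0$-free term of $\partial_\alpha\varphi$ has been overlooked. The decomposition above ensures this, because every term other than $Q'-2\partial_\alpha b_\pi$ carries a factor of $a_0$.
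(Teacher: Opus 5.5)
Your proposal is correct and follows essentially the same route as the paper. Both arguments bound $a_0\le e^{-\varepsilon\alpha}$ pointwise, isolate the $a_0$-free sharpening term $Q'-2\partial_\alpha b_\pi$ and control it via \eqref{eq:Qminus2m-negative} and Lemma~\ref{lem:Sigma2-lower-bound}, and absorb every $a_0$-weighted remainder into the polynomial rate $\alpha^{-d}$. The differences are cosmetic: you use cruder constants (e.g.\ $|Q'|\le 4$ in place of the paper's $\frac{N-1}{e\alpha}$ bound), and you cite Lemma~\ref{prop:invariant} for $\partial_\beta\mathcal L\ge 0$ rather than re-deriving nonnegativity of the bracket $a_0+2b_\pi-2(1-a_0)Q$.
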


\begin{proof}
Fix a sample. Write
\[
\mathcal B\coloneqq a_0(\alpha,\beta)+2b_\pi(\alpha)-2\bigl(1-a_0(\alpha,\beta)\bigr)Q(\alpha).
\]
Since
\[
b_\pi(\alpha)\ge Q(\alpha)\ge0,
\]
one may rewrite
\[
\mathcal B
=
a_0(\alpha,\beta)+2\bigl(b_\pi(\alpha)-Q(\alpha)\bigr)+2a_0(\alpha,\beta)Q(\alpha),
\]
hence \(\mathcal B\ge0\). Also \(b_\pi(\alpha),Q(\alpha)\le1\), so
\begin{equation}\label{eq:exterior-B-bound}
\mathcal B\le a_0(\alpha,\beta)+2+2a_0(\alpha,\beta)\le 5.
\end{equation}
If \(\lambda\ge 2+\varepsilon\), then for every sample
\[
\lambda-(1-U_i)\ge \lambda-2\ge \varepsilon
\qquad\text{for all }i\in[N],
\]
hence
\begin{equation}\label{eq:exterior-c-bound}
a_0(\alpha,\beta)
=
\left(1+\sum_{i=1}^N e^{\alpha(\lambda-(1-U_i))}\right)^{-1}
\le e^{-\varepsilon\alpha}.
\end{equation}
Using the exact sample formula
\[
\partial_\beta\varphi
=
a_0(\alpha,\beta)\bigl(1-a_0(\alpha,\beta)\bigr)\mathcal B,
\]
together with \eqref{eq:exterior-B-bound} and \eqref{eq:exterior-c-bound},
\[
0\le \partial_\beta\varphi \le 5e^{-\varepsilon\alpha}.
\]
Taking expectations gives
\begin{equation}\label{eq:exterior-beta-bound}
0\le \partial_\beta\mathcal L(\alpha,\beta)\le 5e^{-\varepsilon\alpha}.
\end{equation}
For \(\partial_\alpha\varphi\), the exact formula is
\[
\partial_\alpha\varphi
=
a_0(\alpha,\beta)\bigl(1-a_0(\alpha,\beta)\bigr)\bigl(1-\mu(\alpha)\bigr)\mathcal B
+
\bigl(1-a_0(\alpha,\beta)\bigr)^2Q'(\alpha)
-
2\bigl(1-a_0(\alpha,\beta)\bigr)\partial_\alpha b_\pi(\alpha).
\]
The first term is nonnegative, and using \(1-\mu(\alpha)\le 2\), \eqref{eq:exterior-B-bound}, and \eqref{eq:exterior-c-bound},
\begin{equation}\label{eq:exterior-alpha-firstterm}
0\le a_0(\alpha,\beta)\bigl(1-a_0(\alpha,\beta)\bigr)\bigl(1-\mu(\alpha)\bigr)\mathcal B \le 10 e^{-\varepsilon\alpha}.
\end{equation}
For the second part,
\begin{align*}
&\bigl(1-a_0(\alpha,\beta)\bigr)^2Q'(\alpha) - 2\bigl(1-a_0(\alpha,\beta)\bigr)\partial_\alpha b_\pi(\alpha) \\
&=
\bigl(Q'(\alpha)-2\partial_\alpha b_\pi(\alpha)\bigr)
+
\bigl((1-a_0(\alpha,\beta))^2-1\bigr)Q'(\alpha)
+
2a_0(\alpha,\beta)\,\partial_\alpha b_\pi(\alpha) \\
&\le
\bigl(Q'(\alpha)-2\partial_\alpha b_\pi(\alpha)\bigr)
+
2a_0(\alpha,\beta)|Q'(\alpha)|
+
2a_0(\alpha,\beta)\,\partial_\alpha b_\pi(\alpha).
\end{align*}
Using the earlier deterministic bounds
\[
|Q'(\alpha)|\le 4\sum_{j\neq\pi}(U_\pi-U_j)e^{-\alpha(U_\pi-U_j)},
\qquad
\partial_\alpha b_\pi(\alpha)\le \sum_{j\neq\pi}(U_\pi-U_j)e^{-\alpha(U_\pi-U_j)},
\]
and
\[
x e^{-\alpha x}\le \frac{1}{e\alpha}\qquad(x\ge0),
\]
we have
\[
\sum_{j\neq\pi}(U_\pi-U_j)e^{-\alpha(U_\pi-U_j)}\le \frac{N-1}{e\alpha}.
\]
Therefore
\begin{equation}\label{eq:exterior-alpha-secondterm}
\bigl(1-a_0(\alpha,\beta)\bigr)^2Q'(\alpha) - 2\bigl(1-a_0(\alpha,\beta)\bigr)\partial_\alpha b_\pi(\alpha)
\le
\bigl(Q'(\alpha)-2\partial_\alpha b_\pi(\alpha)\bigr)
+
\frac{10(N-1)}{e\alpha}e^{-\varepsilon\alpha}.
\end{equation}
Now apply Lemma~\ref{lem:context-softmax-identities}:
\begin{equation}\label{eq:exterior-Qminus2m}
Q'(\alpha)-2\partial_\alpha b_\pi(\alpha)
\le
-\frac{2}{N^2}\sum_{j\neq\pi}(U_\pi-U_j)e^{-2\alpha(U_\pi-U_j)}.
\end{equation}
Combining \eqref{eq:exterior-alpha-firstterm}, \eqref{eq:exterior-alpha-secondterm}, and \eqref{eq:exterior-Qminus2m},
\[
\partial_\alpha\varphi
\le
10e^{-\varepsilon\alpha}
+
\frac{10(N-1)}{e\alpha}e^{-\varepsilon\alpha}
-
\frac{2}{N^2}\sum_{j\neq\pi}(U_\pi-U_j)e^{-2\alpha(U_\pi-U_j)}.
\]
Taking expectations and using Lemma~\ref{lem:Sigma2-lower-bound},
\[
\partial_\alpha\mathcal L(\alpha,\beta)
\le
10e^{-\varepsilon\alpha}
+
\frac{10(N-1)}{e\alpha}e^{-\varepsilon\alpha}
-
\frac{2c_0}{N^2}\alpha^{-d}.
\]
Since the first two terms are exponentially small while the last term is polynomial, there exist \(A_\varepsilon\ge A_0\) and \(c_\varepsilon>0\) such that
\begin{equation}\label{eq:exterior-alpha-bound}
\partial_\alpha\mathcal L(\alpha,\beta)\le -\,c_\varepsilon\,\alpha^{-d}
\qquad
(\alpha\ge A_\varepsilon,\ \lambda\ge 2+\varepsilon).
\end{equation}
Finally, combining \eqref{eq:exterior-beta-bound} and \eqref{eq:exterior-alpha-bound}, for \(\lambda\ge 2+\varepsilon\) we have
\[
\frac{1}{N+1}\partial_\beta \mathcal L(\alpha,\beta)
+
\frac{\lambda}{d}\partial_\alpha \mathcal L(\alpha,\beta)
\le
\frac{5}{N+1}e^{-\varepsilon\alpha}
-
\frac{(2+\varepsilon)c_\varepsilon}{d}\,\alpha^{-d}.
\]
Since the exponential term is negligible compared with \(\alpha^{-d}\), by increasing \(A_\varepsilon\) and decreasing \(c_\varepsilon\) if necessary, we obtain
\[
\frac{1}{N+1}\partial_\beta \mathcal L(\alpha,\beta)
+
\frac{\lambda}{d}\partial_\alpha \mathcal L(\alpha,\beta)
\le
-{c_\varepsilon}\,\alpha^{-d}
\]
for all \(\alpha\ge A_\varepsilon\) and \(\lambda\ge2+\varepsilon\).
\end{proof}

\begin{theorem}\label{thm:limsup-lambda-le-2}
Along the gradient-descent trajectory,
\[
\limsup_{t\to\infty}\lambda_t\le 2.
\]
Consequently,
\[
\lambda_t\to 2
\qquad\text{as }t\to\infty.
\]
\end{theorem}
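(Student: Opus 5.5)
The plan is to show that, for every fixed $\varepsilon>0$, $\lambda_t<2+\varepsilon$ for all sufficiently large $t$. Combined with Theorem~\ref{thm:liminf-lambda-ge-2-final} this gives $\lambda_t\to2$. Throughout I use the exact recursion
\[
\lambda_{t+1}-\lambda_t=\frac{\eta\bigl(\frac{1}{N+1}\partial_\beta\mathcal L+\frac{\lambda_t}{d}\partial_\alpha\mathcal L\bigr)}{\alpha^{(t)}-\frac{\eta}{d}\partial_\alpha\mathcal L}
\]
from Lemma~\ref{lem:uniform-interior-step-final}, together with the facts $\alpha^{(t)}\to\infty$ and $\beta^{(t)}\to-\infty$ from Theorem~\ref{thm:main}.

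\textbf{Step 1 (drift in the exterior region).} Apply Lemma~\ref{lem:exterior-gradient-bounds} with $\varepsilon/2$ in place of $\varepsilon$. For $t$ large enough that $\alpha^{(t)}\ge A_{\varepsilon/2}$, whenever $\lambda_t\ge2+\varepsilon/2$ the lemma gives $\partial_\alpha\mathcal L\le -c\,(\alpha^{(t)})^{-d}<0$. Hence the denominator of the recursion is at least $\alpha^{(t)}$, while the numerator is at most $-\eta c(\alpha^{(t)})^{-d}$. This yields $\lambda_{t+1}-\lambda_t\le -\eta c(\alpha^{(t)})^{-(d+1)}$. Along the same steps $\alpha^{(t+1)}-\alpha^{(t)}\ge \eta c(\alpha^{(t)})^{-d}/d$, and $0\le\beta^{(t)}-\beta^{(t+1)}\le \eta C e^{-\varepsilon\alpha^{(t)}/2}/(N+1)$.

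\textbf{Step 2 (the trajectory cannot stay in the exterior).} Suppose $\lambda_t\ge2+\varepsilon/2$ for all $t\ge T$. Then $\alpha^{(t)}$ is increasing with increments at least $c'(\alpha^{(t)})^{-d}$. Comparing with the ODE $\dot\alpha=c'\alpha^{-d}$ gives $\alpha^{(t)}\ge c''t^{1/(d+1)}$. Therefore $\sum_t e^{-\varepsilon\alpha^{(t)}/2}<\infty$, so $\beta^{(t)}$ converges to a finite limit. This contradicts $\beta^{(t)}\to-\infty$. Consequently, the set of times with $\lambda_t<2+\varepsilon/2$ is infinite; in fact it is unbounded after any time.

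\textbf{Step 3 (no return jump).} I need a uniform small-step bound: for large $t$, $\lambda_t\le2+\varepsilon$ implies $|\lambda_{t+1}-\lambda_t|<\varepsilon/2$. This follows from the recursion, since $|\partial\mathcal L|\le10$ makes the numerator bounded by $\eta(10/(N+1)+10(2+\varepsilon)/d)$, while the denominator is at least $\alpha^{(t)}-10\eta/d\to\infty$.

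\textbf{Step 4 (conclusion).} Fix a late time $s$ with $\lambda_s<2+\varepsilon/2$, which exists by Step 2. I argue by induction that $\lambda_t<2+\varepsilon$ for all $t\ge s$. Suppose $\lambda_t<2+\varepsilon$. There are two cases:
\begin{itemize}
\item if $\lambda_t<2+\varepsilon/2$, then Step 3 gives $\lambda_{t+1}<2+\varepsilon$;
\item if $2+\varepsilon/2\le\lambda_t<2+\varepsilon$, then Step 1 gives $\lambda_{t+1}<\lambda_t<2+\varepsilon$.
\end{itemize}
Hence $\limsup_t\lambda_t\le2+\varepsilon$ for every $\varepsilon>0$, so $\limsup_t\lambda_t\le2$, and together with the liminf bound $\lambda_t\to2$.

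The main obstacle is Step 2. It requires a careful lower bound on the growth of $\alpha^{(t)}$ strong enough to make $e^{-\varepsilon\alpha^{(t)}/2}$ summable. The polynomial rate $t^{1/(d+1)}$ should suffice, since $\sum_t\exp(-c t^{1/(d+1)})<\infty$. Some care is also needed to restrict to times $t$ large enough that $\alpha^{(t)}\ge A_{\varepsilon/2}$ throughout.
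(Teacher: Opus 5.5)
Your proposal is correct and follows essentially the same route as the paper: exterior drift from Lemma~\ref{lem:exterior-gradient-bounds} with $\varepsilon/2$, ruling out an exterior tail via polynomial growth of $\alpha^{(t)}$ and summable $\beta$-increments (contradicting $\beta^{(t)}\to-\infty$), the small-step bound below $2+\varepsilon$, and the absorbing induction. One minor slip: in Step~1 the lower bound $\alpha^{(t)}-\frac{\eta}{d}\partial_\alpha\mathcal L\ge\alpha^{(t)}$ points the wrong way for upper-bounding a negative quotient (the paper instead uses $\alpha^{(t)}-\frac{\eta}{d}\partial_\alpha\mathcal L\le\alpha^{(t)}+10\eta/d\le 2\alpha^{(t)}$, giving $-\frac{\eta c}{2}(\alpha^{(t)})^{-(d+1)}$), but this is harmless because Step~4 only uses the strict decrease $\lambda_{t+1}<\lambda_t$.
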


\begin{proof}
We already know that
\[
\alpha_t\to+\infty,
\qquad
\liminf_{t\to\infty}\lambda_t\ge 2,
\]
and that the exact \(\lambda\)-recursion is
\begin{equation}\label{eq:lambda-recursion-ext-proof}
\lambda_{t+1}-\lambda_t
=
\frac{\eta\left(
\frac{1}{N+1}\partial_\beta\mathcal L(\alpha_t,\beta_t)
+\frac{\lambda_t}{d}\partial_\alpha\mathcal L(\alpha_t,\beta_t)
\right)}
{\alpha_t-\frac{\eta}{d}\,\partial_\alpha\mathcal L(\alpha_t,\beta_t)}.
\end{equation}
We also use the previously proved global gradient bounds
\begin{equation}\label{eq:global-grad-bounds-ext-proof}
|\partial_\alpha\mathcal L(\alpha,\beta)|\le 10,
\qquad
|\partial_\beta\mathcal L(\alpha,\beta)|\le 10
\qquad\text{for all }(\alpha,\beta),
\end{equation}
and the previously established exterior gradient bounds: for every \(\delta>0\), there exist constants
\[
A_\delta\ge 1,\qquad c_\delta>0,\qquad C_\delta>0
\]
such that whenever
\[
\alpha\ge A_\delta,
\qquad
\lambda\coloneqq -\frac{\beta}{\alpha}\ge 2+\delta,
\]
we have
\begin{equation}\label{eq:exterior-grad-lemma-ext-proof}
0\le \partial_\beta\mathcal L(\alpha,\beta)\le C_\delta e^{-\delta\alpha},
\qquad
\partial_\alpha\mathcal L(\alpha,\beta)\le -\,c_\delta\,\alpha^{-d},
\qquad
\frac{1}{N+1}\partial_\beta\mathcal L(\alpha,\beta)
+
\frac{\lambda}{d}\partial_\alpha\mathcal L(\alpha,\beta)
\le -\,c_\delta\,\alpha^{-d}.
\end{equation}
Fix \(\varepsilon>0\). We shall prove that
\begin{equation}\label{eq:eventual-upper-threshold}
\lambda_t<2+\varepsilon
\qquad\text{for all sufficiently large }t.
\end{equation}
Since \(\varepsilon>0\) is arbitrary, this will imply
\[
\limsup_{t\to\infty}\lambda_t\le 2.
\]

\medskip
\noindent
\textbf{Step 1: exterior negative drift above \(2+\varepsilon/2\).}

Apply \eqref{eq:exterior-grad-lemma-ext-proof} with \(\delta=\varepsilon/2\). Thus there exist constants
\[
A_{\varepsilon/2}\ge 1,
\qquad
c_{\varepsilon/2}>0
\]
such that, whenever
\[
\alpha\ge A_{\varepsilon/2},
\qquad
\lambda\ge 2+\frac{\varepsilon}{2},
\]
we have
\begin{equation}\label{eq:exterior-alpha-bound-step1}
\partial_\alpha\mathcal L(\alpha,\beta)\le -\,c_{\varepsilon/2}\,\alpha^{-d},
\end{equation}
and
\begin{equation}\label{eq:exterior-combo-bound-step1}
\frac{1}{N+1}\partial_\beta\mathcal L(\alpha,\beta)
+
\frac{\lambda}{d}\partial_\alpha\mathcal L(\alpha,\beta)
\le -\,c_{\varepsilon/2}\,\alpha^{-d}.
\end{equation}
Since \(\alpha_t\to+\infty\), there exists \(T_1(\varepsilon)\) such that
\[
\alpha_t\ge A_{\varepsilon/2}
\qquad\text{for all }t\ge T_1(\varepsilon).
\]
Because \eqref{eq:exterior-alpha-bound-step1} implies \(\partial_\alpha\mathcal L(\alpha_t,\beta_t)\le 0\) whenever \(\lambda_t\ge 2+\varepsilon/2\), the denominator in \eqref{eq:lambda-recursion-ext-proof} satisfies
\[
\alpha_t-\frac{\eta}{d}\,\partial_\alpha\mathcal L(\alpha_t,\beta_t)\ge \alpha_t.
\]
We also claim that, after increasing \(T_1(\varepsilon)\) if necessary, we have
\begin{equation}\label{eq:denominator-upper-comparable}
\lambda_t\ge 2+\frac{\varepsilon}{2},\ t\ge T_1(\varepsilon)
\quad\Longrightarrow\quad
\alpha_t-\frac{\eta}{d}\,\partial_\alpha\mathcal L(\alpha_t,\beta_t)\le 2\alpha_t.
\end{equation}
Indeed, by \eqref{eq:global-grad-bounds-ext-proof},
\[
-\frac{\eta}{d}\,\partial_\alpha\mathcal L(\alpha_t,\beta_t)
\le \frac{\eta}{d}\,|\partial_\alpha\mathcal L(\alpha_t,\beta_t)|
\le \frac{10\eta}{d}.
\]
Since \(\alpha_t\to+\infty\), for all sufficiently large \(t\) we have \(10\eta/d\le \alpha_t\). Hence
\[
\alpha_t-\frac{\eta}{d}\,\partial_\alpha\mathcal L(\alpha_t,\beta_t)\le \alpha_t+\frac{10\eta}{d}\le 2\alpha_t,
\]
which proves \eqref{eq:denominator-upper-comparable}.

Combining \eqref{eq:lambda-recursion-ext-proof}, \eqref{eq:exterior-combo-bound-step1}, and \eqref{eq:denominator-upper-comparable}, we obtain: for all \(t\ge T_1(\varepsilon)\),
\begin{equation}\label{eq:strict-negative-drift}
\lambda_t\ge 2+\frac{\varepsilon}{2}
\quad\Longrightarrow\quad
\lambda_{t+1}-\lambda_t
\le
-\frac{\eta c_{\varepsilon/2}}{2}\,\alpha_t^{-d-1}
<0.
\end{equation}
Likewise, the \(\alpha\)-update gives, for all \(t\ge T_1(\varepsilon)\),
\begin{equation}\label{eq:alpha-growth-exterior}
\lambda_t\ge 2+\frac{\varepsilon}{2}
\quad\Longrightarrow\quad
\alpha_{t+1}-\alpha_t
=
-\frac{\eta}{d}\,\partial_\alpha\mathcal L(\alpha_t,\beta_t)
\ge
\frac{\eta c_{\varepsilon/2}}{d}\,\alpha_t^{-d}.
\end{equation}

\medskip
\noindent
\textbf{Step 2: there is no tail staying above \(2+\varepsilon/2\).}

We claim that there does not exist \(T\ge T_1(\varepsilon)\) such that
\begin{equation}\label{eq:no-high-tail-assumption}
\lambda_t\ge 2+\frac{\varepsilon}{2}
\qquad\text{for all }t\ge T.
\end{equation}
Assume, toward a contradiction, that such a \(T\) exists. Then \eqref{eq:alpha-growth-exterior} gives, for all \(t\ge T\),
\[
\alpha_{t+1}-\alpha_t \ge \frac{\eta c_{\varepsilon/2}}{d}\,\alpha_t^{-d}.
\]
Set
\[
a_{\varepsilon}\coloneqq \frac{\eta c_{\varepsilon/2}}{d}>0.
\]
Since the function \(x\mapsto x^{d+1}\) is convex on \((0,\infty)\), we have
\[
\alpha_{t+1}^{d+1}-\alpha_t^{d+1}
\ge
(d+1)\alpha_t^d(\alpha_{t+1}-\alpha_t)
\ge
(d+1)a_{\varepsilon}
\qquad\text{for all }t\ge T.
\]
Summing from \(T\) to \(t-1\), we obtain
\[
\alpha_t^{d+1}
\ge
\alpha_T^{d+1}+(d+1)a_{\varepsilon}(t-T),
\qquad t\ge T.
\]
Hence there exists a constant \(C_\varepsilon>0\) such that
\begin{equation}\label{eq:alpha-polynomial-growth-tail}
\alpha_t\ge C_\varepsilon (1+t)^{1/(d+1)}
\qquad\text{for all }t\ge T.
\end{equation}
On the other hand, Lemma~\ref{lem:exterior-gradient-bounds} with \(\varepsilon/2\) yields a constant
\[
\widetilde C_{\varepsilon}>0
\]
such that, for all \(t\ge T\),
\[
0\le \partial_\beta \mathcal L(\alpha_t,\beta_t)\le \widetilde C_{\varepsilon}e^{-(\varepsilon/2)\alpha_t}.
\]
Therefore
\[
0\le \beta_t-\beta_{t+1}
=
\frac{\eta}{N+1}\,\partial_\beta \mathcal L(\alpha_t,\beta_t)
\le
\frac{\eta \widetilde C_{\varepsilon}}{N+1}e^{-(\varepsilon/2)\alpha_t}.
\]
Using \eqref{eq:alpha-polynomial-growth-tail}, we get
\[
0\le \beta_t-\beta_{t+1}
\le
\frac{\eta \widetilde C_{\varepsilon}}{N+1}
\exp\bigl(-(\varepsilon/2)C_\varepsilon (1+t)^{1/(d+1)}\bigr).
\]
The right-hand side is summable in \(t\). Hence
\[
\sum_{t=T}^{\infty}(\beta_t-\beta_{t+1})<\infty,
\]
so the monotone sequence \((\beta_t)_{t\ge T}\) converges to a finite limit
\[
\beta_\infty>-\infty.
\]
This contradicts the result that \(\beta_t\to-\infty\).
The contradiction proves that no tail satisfying \eqref{eq:no-high-tail-assumption} can exist.

\medskip
\noindent
\textbf{Step 3: complete proof of the small-step statement below \(2+\varepsilon\).}

Suppose \(\lambda_t\le 2+\varepsilon\). Then by \eqref{eq:global-grad-bounds-ext-proof},
\[
\left|
\frac{1}{N+1}\partial_\beta\mathcal L(\alpha_t,\beta_t)
+
\frac{\lambda_t}{d}\partial_\alpha\mathcal L(\alpha_t,\beta_t)
\right|
\le
\frac{1}{N+1}|\partial_\beta\mathcal L(\alpha_t,\beta_t)|
+\frac{\lambda_t}{d}|\partial_\alpha\mathcal L(\alpha_t,\beta_t)|
\le
\frac{10}{N+1}+\frac{10(2+\varepsilon)}{d}.
\]
Substituting this into the exact recursion \eqref{eq:lambda-recursion-ext-proof}, when \(\alpha_t>10\eta/d\) we obtain
\[
\lambda_t\le 2+\varepsilon
\quad\Longrightarrow\quad
|\lambda_{t+1}-\lambda_t|
\le
\frac{\eta\left(\frac{10}{N+1}+\frac{10(2+\varepsilon)}{d}\right)}
{\alpha_t-\frac{\eta}{d}\,\partial_\alpha\mathcal L(\alpha_t,\beta_t)}.
\]
Again using \eqref{eq:global-grad-bounds-ext-proof},
\[
\alpha_t-\frac{\eta}{d}\,\partial_\alpha\mathcal L(\alpha_t,\beta_t)\ge \alpha_t-\frac{10\eta}{d}.
\]
Therefore
\begin{equation}\label{eq:conditional-small-step-raw}
\lambda_t\le 2+\varepsilon
\quad\Longrightarrow\quad
|\lambda_{t+1}-\lambda_t|
\le
\frac{\eta\left(\frac{10}{N+1}+\frac{10(2+\varepsilon)}{d}\right)}
{\alpha_t-\frac{10\eta}{d}}.
\end{equation}
Since \(\alpha_t\to+\infty\), there exists \(T_0(\varepsilon)\) such that for all \(t\ge T_0(\varepsilon)\),
\[
\alpha_t>\frac{10\eta}{d},\quad
\frac{\eta\left(\frac{10}{N+1}+\frac{10(2+\varepsilon)}{d}\right)}
{\alpha_t-\frac{10\eta}{d}}
<\frac{\varepsilon}{2}.
\]
Combining this with \eqref{eq:conditional-small-step-raw}, we conclude that for all \(t\ge T_0(\varepsilon)\),
\begin{equation}\label{eq:conditional-small-step-final}
\lambda_t\le 2+\varepsilon
\quad\Longrightarrow\quad
|\lambda_{t+1}-\lambda_t|<\frac{\varepsilon}{2}.
\end{equation}

\medskip
\noindent
\textbf{Step 4: once the trajectory drops below \(2+\varepsilon\) late enough, it never returns.}

Set
\[
T_\varepsilon\coloneqq \max\{T_0(\varepsilon),T_1(\varepsilon)\}.
\]
We claim that if \(t\ge T_\varepsilon\) and
\[
\lambda_t<2+\varepsilon,
\]
then automatically
\begin{equation}\label{eq:automatic}
\lambda_{t+1}<2+\varepsilon.
\end{equation}
To prove this, distinguish two cases. If
\[
\lambda_t\le 2+\frac{\varepsilon}{2},
\]
then \eqref{eq:conditional-small-step-final} gives
\[
\lambda_{t+1}
\le
\lambda_t+|\lambda_{t+1}-\lambda_t|
<
\left(2+\frac{\varepsilon}{2}\right)+\frac{\varepsilon}{2}
=
2+\varepsilon.
\]
If instead
\[
2+\frac{\varepsilon}{2}<\lambda_t<2+\varepsilon,
\]
then \(\lambda_t\ge 2+\varepsilon/2\), so \eqref{eq:strict-negative-drift} applies and yields
\[
\lambda_{t+1}<\lambda_t<2+\varepsilon.
\]
Thus in either case,
\[
t\ge T_\varepsilon,\ \lambda_t<2+\varepsilon
\quad\Longrightarrow\quad
\lambda_{t+1}<2+\varepsilon,
\]
which proves \eqref{eq:automatic}. By induction on \(n\), \eqref{eq:automatic} immediately implies:
\begin{equation}\label{eq:absorbing-below-threshold}
t\ge T_\varepsilon,\ \lambda_t<2+\varepsilon
\quad\Longrightarrow\quad
\lambda_s<2+\varepsilon
\qquad\text{for all }s\ge t.
\end{equation}

\medskip
\noindent
\textbf{Step 5: conclude eventual boundedness by \(2+\varepsilon\).}

Assume, toward a contradiction, that there are infinitely many indices \(t\ge T_\varepsilon\) such that
\begin{equation}\label{eq:final-invariance}
\lambda_t\ge 2+\varepsilon.
\end{equation}
Choose one such index \(t_\ast\ge T_\varepsilon\). Step 2 shows that the trajectory cannot stay forever in the region \([2+\varepsilon/2,\infty)\). Hence there exists \(s\ge t_\ast\) such that
\[
\lambda_s<2+\frac{\varepsilon}{2}<2+\varepsilon.
\]
Since \(s\ge t_\ast\ge T_\varepsilon\), the absorbing property \eqref{eq:absorbing-below-threshold} applies and yields
\[
\lambda_t<2+\varepsilon
\qquad\text{for all }t\ge s.
\]
This means that after time \(s\), the trajectory can never again return to \([2+\varepsilon,\infty)\). But \eqref{eq:final-invariance} asserts that there are infinitely many such return times \(t\ge T_\varepsilon\), hence in particular some after \(s\), which is impossible. This contradiction proves that only finitely many \(t\) satisfy \(\lambda_t\ge 2+\varepsilon\).

Therefore
\[
\lambda_t<2+\varepsilon
\qquad\text{for all sufficiently large }t.
\]
This establishes \eqref{eq:eventual-upper-threshold}. Since \(\varepsilon>0\) was arbitrary,
\[
\limsup_{t\to\infty}\lambda_t\le 2.
\]
Finally, we already know that
$
\liminf_{t\to\infty}\lambda_t\ge 2.
$
Hence
$
\lambda_t\to 2.
$
\end{proof}

\begin{theorem}\label{thm:ct-square-to-zero-final}
We have
\[
\mathbb E\!\left[a_0(\alpha^{(t)},\beta^{(t)})^2\right]\to 0
\qquad\text{as }t\to\infty.
\]
\end{theorem}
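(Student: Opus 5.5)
The plan is a subsequence argument built on Lemma~\ref{lem:subseq-ct-limit-final} and Theorem~\ref{thm:limsup-lambda-le-2}. Since \(0\le a_0^2\le 1\), it suffices to show that every subsequence \(t_k\to\infty\) has a further subsequence along which \(\mathbb E[a_0(\alpha^{(t_k)},\beta^{(t_k)})^2]\to0\).

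\textbf{Step 1.} Theorem~\ref{thm:limsup-lambda-le-2} gives \(\lambda_t\to2\), so along any subsequence \(\lambda_{t_k}\to\lambda_\ast=2\). Lemma~\ref{lem:subseq-ct-limit-final} then yields
\[
a_0(\alpha^{(t_k)},\beta^{(t_k)})\to\mathbf 1_{\{1-U_\pi>2\}}
\]
almost surely.

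\textbf{Step 2.} The event \(\{1-U_\pi>2\}\) is \(\{U_\pi<-1\}\), which is empty because \(U_\pi\in[-1,1]\). Hence \(a_0\to0\) almost surely along the whole sequence, and in fact we do not even need to pass to a further subsequence.

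\textbf{Step 3.} Bounded convergence applies since \(0\le a_0^2\le1\), so \(\mathbb E[a_0^2]\to0\).

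The only delicate point is the boundary case in Lemma~\ref{lem:subseq-ct-limit-final}, namely whether \(\mathbb P(1-U_\pi=\lambda_\ast)=0\) at \(\lambda_\ast=2\). This holds because \(U_\pi=-1\) has probability zero, as \(U_\pi\) has a continuous law.

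If one prefers not to rely on that lemma at the endpoint, a direct argument works. We have
\[
a_0\le\bigl(1+e^{\alpha(\lambda-(1-U_\pi))}\bigr)^{-1}.
\]
Fix an outcome with \(1-U_\pi<2\) and set \(\varepsilon=(2-(1-U_\pi))/2\). For large \(t\), \(\lambda_t\ge 2-\varepsilon\), so \(\lambda_t-(1-U_\pi)\ge\varepsilon\). Since \(\alpha^{(t)}\to\infty\) by Theorem~\ref{thm:main}, this gives \(a_0\le e^{-\varepsilon\alpha^{(t)}}\to0\).

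As a consequence, combining this with Lemma~\ref{prop:loss-reduction-c} (equation \eqref{eq:L-c2-asymptotic}) immediately yields \(\mathcal L(\alpha^{(t)},\beta^{(t)})\to0\).
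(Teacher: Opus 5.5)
Your proof is correct and follows essentially the paper's route. You take $\lambda_{t_k}\to\lambda_\ast\ge 2$, apply Lemma~\ref{lem:subseq-ct-limit-final}, note that the limiting indicator $\mathbf 1_{\{1-U_\pi>\lambda_\ast\}}$ vanishes almost surely because $1-U_\pi\in(0,2)$, and finish with bounded convergence. The one difference is that the paper argues by contradiction along a sub-subsequence using only $\liminf_{t}\lambda_t\ge 2$ (Theorem~\ref{thm:liminf-lambda-ge-2-final}), whereas you invoke the full limit $\lambda_t\to 2$; this shows the upper bound $\limsup_t\lambda_t\le 2$ is not needed for this step, and your direct fallback argument likewise uses only the lower bound.
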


\begin{proof}
Suppose, toward a contradiction, that
\[
\mathbb E\!\left[a_0(\alpha^{(t)},\beta^{(t)})^2\right]\not\to0.
\]
Then there exist \(\varepsilon>0\) and a subsequence \(t_k\to\infty\) such that
\begin{equation}\label{eq:ct-square-to-zero-final-lower}
\mathbb E\!\left[a_0(\alpha^{(t_k)},\beta^{(t_k)})^2\right]\ge \varepsilon
\qquad\text{for every }k.
\end{equation}
Passing to a further subsequence if necessary, we may assume
\begin{equation}\label{eq:ct-square-to-zero-final-lambda-subseq}
\lambda_{t_k}\to \lambda_\ast\in[0,\infty].
\end{equation}
By Theorem~\ref{thm:liminf-lambda-ge-2-final},
\begin{equation}\label{eq:ct-square-to-zero-final-lambda-ge-two}
\lambda_\ast\ge2.
\end{equation}
Lemma~\ref{lem:subseq-ct-limit-final} therefore gives
\[
a_0(\alpha^{(t_k)},\beta^{(t_k)})\to \mathbf 1_{\{1-U_\pi>\lambda_\ast\}}
\qquad\text{almost surely}.
\]
Since \(1-U_\pi\in(0,2)\) almost surely and \(\lambda_\ast\ge2\),
\[
\mathbf 1_{\{1-U_\pi>\lambda_\ast\}}=0
\qquad\text{almost surely}.
\]
Hence
\[
a_0(\alpha^{(t_k)},\beta^{(t_k)})\to0
\qquad\text{almost surely}.
\]
Because
$
0\le a_0(\alpha^{(t_k)},\beta^{(t_k)})^2\le1,
$
bounded convergence yields
$
\mathbb E\!\left[a_0(\alpha^{(t_k)},\beta^{(t_k)})^2\right]\to0,
$
contradicting \eqref{eq:ct-square-to-zero-final-lower}. Therefore
\[
\mathbb E\!\left[a_0(\alpha^{(t)},\beta^{(t)})^2\right]\to0.
\]
\end{proof}

\begin{theorem}\label{thm:population-loss-to-zero-final}
Under Assumption \ref{ass:distribution}, suppose that
$
d\ge 2$, $ N\ge 100\sqrt d$,  $0<\eta\le {1}/{840(N+1)^2}.
$
Then, under the conditions of Theorem \ref{thm:training convergence},
\[
\mathcal L(\alpha^{(t)},\beta^{(t)})\to0
\qquad\text{as }t\to\infty,
\]
or equivalently,
\begin{equation*}
    \mathcal{L}(\Wb^{(t)})\to0
\qquad\text{as }t\to\infty.
\end{equation*}
\end{theorem}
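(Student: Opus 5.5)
The plan is to assemble Theorem~\ref{thm:population-loss-to-zero-final} from results already proved, with no new estimates. The first step is the setup. Under the stated hypotheses ($d\ge2$, $N\ge100\sqrt d$, $0<\eta\le 1/(840(N+1)^2)$), Theorem~\ref{thm:reduced_k1_training_structure} places every iterate on the manifold $\Wb^{(t)}=\diag(\alpha^{(t)}\Ib_d,0,\beta^{(t)}\Ib_{N+1})$. Lemma~\ref{prop:population-reduction} then identifies $\mathcal L(\Wb^{(t)})$ with the reduced objective $\mathcal L(\alpha^{(t)},\beta^{(t)})$, so the two formulations in the statement are equivalent. Theorem~\ref{thm:main} supplies $\alpha^{(t)}\to+\infty$ and $\beta^{(t)}\to-\infty$. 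With these in hand, the asymptotic representation \eqref{eq:L-c2-asymptotic} of Lemma~\ref{prop:loss-reduction-c} is available:
\[
\mathcal L(\alpha^{(t)},\beta^{(t)})=\tfrac32\,\EE\bigl[a_0(\alpha^{(t)},\beta^{(t)})^2\bigr]+o(1).
\]

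The second step is to kill the leading term. Theorem~\ref{thm:ct-square-to-zero-final} gives $\EE[a_0(\alpha^{(t)},\beta^{(t)})^2]\to0$. That result rests on $\lambda_t\to2$: Theorem~\ref{thm:liminf-lambda-ge-2-final} gives the lower bound, and Theorem~\ref{thm:limsup-lambda-le-2} gives the upper bound, although only the $\liminf$ bound is actually needed for $a_0\to0$. Substituting into the representation yields $\mathcal L\to0$. Since this conclusion only uses the a.s.\ convergence $a_0\to0$, I would also note $\mathcal L\ge0$ to make the limit statement unambiguous.

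As a cross-check, I would redo the decomposition of Lemma~\ref{prop:loss-reduction-c} directly:
\[
\mathcal L=\EE[\psi_t]+2\EE[a_0(b_\pi-Q_t)]+\EE[a_0^2(Q_t+\tfrac12)].
\]
Each term is bounded and converges to $0$ almost surely. For the first two, this follows from Lemma~\ref{lem:context-concentration-loss}, using $\alpha^{(t)}\to\infty$. For the last, it follows because $a_0\to0$ a.s.\ along every subsequence, via Lemma~\ref{lem:subseq-ct-limit-final} together with $\liminf\lambda_t\ge2$ and the fact that $1-U_\pi<2$ a.s. Dominated convergence then finishes the argument.

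The only point needing care is the subsequence argument. Along an arbitrary subsequence I would extract $\lambda_{t_k}\to\lambda_*\in[2,\infty]$ and check that the indicator $\mathbf 1\{1-U_\pi>\lambda_*\}$ vanishes almost surely. This already holds at the boundary case $\lambda_*=2$, because the event $U_\pi=-1$ has probability zero.
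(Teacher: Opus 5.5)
Your proposal is correct and matches the paper's own proof, which likewise combines Theorem~\ref{thm:ct-square-to-zero-final} with the asymptotic representation $\mathcal L(\alpha^{(t)},\beta^{(t)})=\tfrac32\,\mathbb E[a_0(\alpha^{(t)},\beta^{(t)})^2]+o(1)$ from Lemma~\ref{prop:loss-reduction-c}. As you observe, that theorem uses only $\liminf_t\lambda_t\ge 2$ together with Lemma~\ref{lem:subseq-ct-limit-final}, and your three-term cross-check simply unpacks the same lemma.
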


\begin{proof}
By Theorem~\ref{thm:ct-square-to-zero-final} we have
\[
\mathbb E\!\left[a_0(\alpha^{(t)},\beta^{(t)})^2\right]\to0.
\]
Using the exact loss reduction \eqref{eq:exact-loss-reduction-final},
\[
\mathcal L(\alpha^{(t)},\beta^{(t)})
=
\frac32\,\mathbb E\!\left[a_0(\alpha^{(t)},\beta^{(t)})^2\right]+o(1),
\]
we conclude that
\[
\mathcal L(\alpha^{(t)},\beta^{(t)})\to0.
\]
Equivalently, by definition, we have
\begin{equation*}
    \mathcal{L}(\Wb^{(t)})\to0
\qquad\text{as }t\to\infty.
\end{equation*}
\end{proof}

\section{Additional Experimental Results}

\subsection{Joint Training}\label{app:joint_training}

In Section \ref{sec:numerical experiment}, we fix the value matrix $\Vb$ according to the construction in Theorem~\ref{thm:expressive power} and optimize only the attention matrix $\Wb$. In this subsection, we provide an additional experiment in which both $\Wb$ and $\Vb$ are trained jointly from random initialization.
We use the same experimental setup as in the main experiment, except that both $\Wb$ and $\Vb$ are randomly initialized and optimized jointly.

\noindent\textbf{Learned Parameter Structure.}
Figures~\ref{fig:joint_v_heatmap} and \ref{fig:joint_w_heatmap} show the learned matrices $\Vb$ and $\Wb$ after training. Although $\Vb$ is no longer fixed, the learned $\Vb$ still matches the construction in Theorem~\ref{thm:expressive power}. In particular, all entries are close to zero except for the entry corresponding to the $y$ coordinate and the diagonal entries corresponding to the positional coordinates. Moreover, these nonzero entries are all close to $1$, which is consistent with the theoretical construction.

The learned $\Wb$ also recovers the construction in Theorem~\ref{thm:expressive power}. Specifically, the diagonal entries in the data coordinates are positive, whereas the diagonal entries in the positional coordinates are negative, with the remaining entries close to zero. Furthermore, the magnitude of the negative diagonal entries is approximately twice that of the positive diagonal entries, again matching the predicted construction. These results show that even when $\Vb$ is trained jointly with $\Wb$, the learned transformer still converges to the same parameter structure as predicted by the theory.

\begin{figure*}[t]
    \centering

    \begin{subfigure}[t]{0.31\textwidth}
        \centering
        \includegraphics[width=\linewidth]{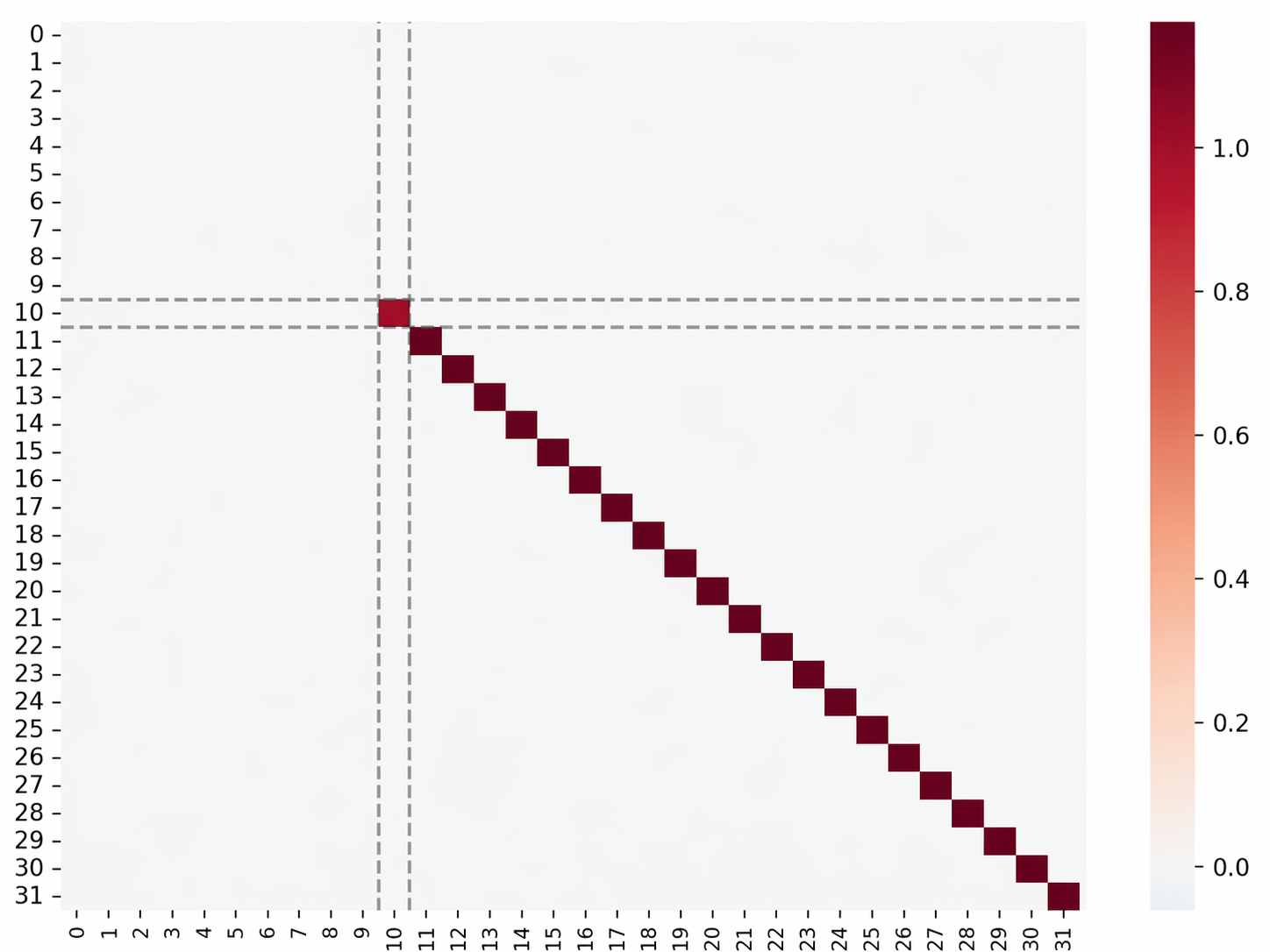}
        \caption{\textbf{Learned $\Vb$}}
        \label{fig:joint_v_heatmap}
    \end{subfigure}
    \hfill
    \begin{subfigure}[t]{0.31\textwidth}
        \centering
        \includegraphics[width=\linewidth]{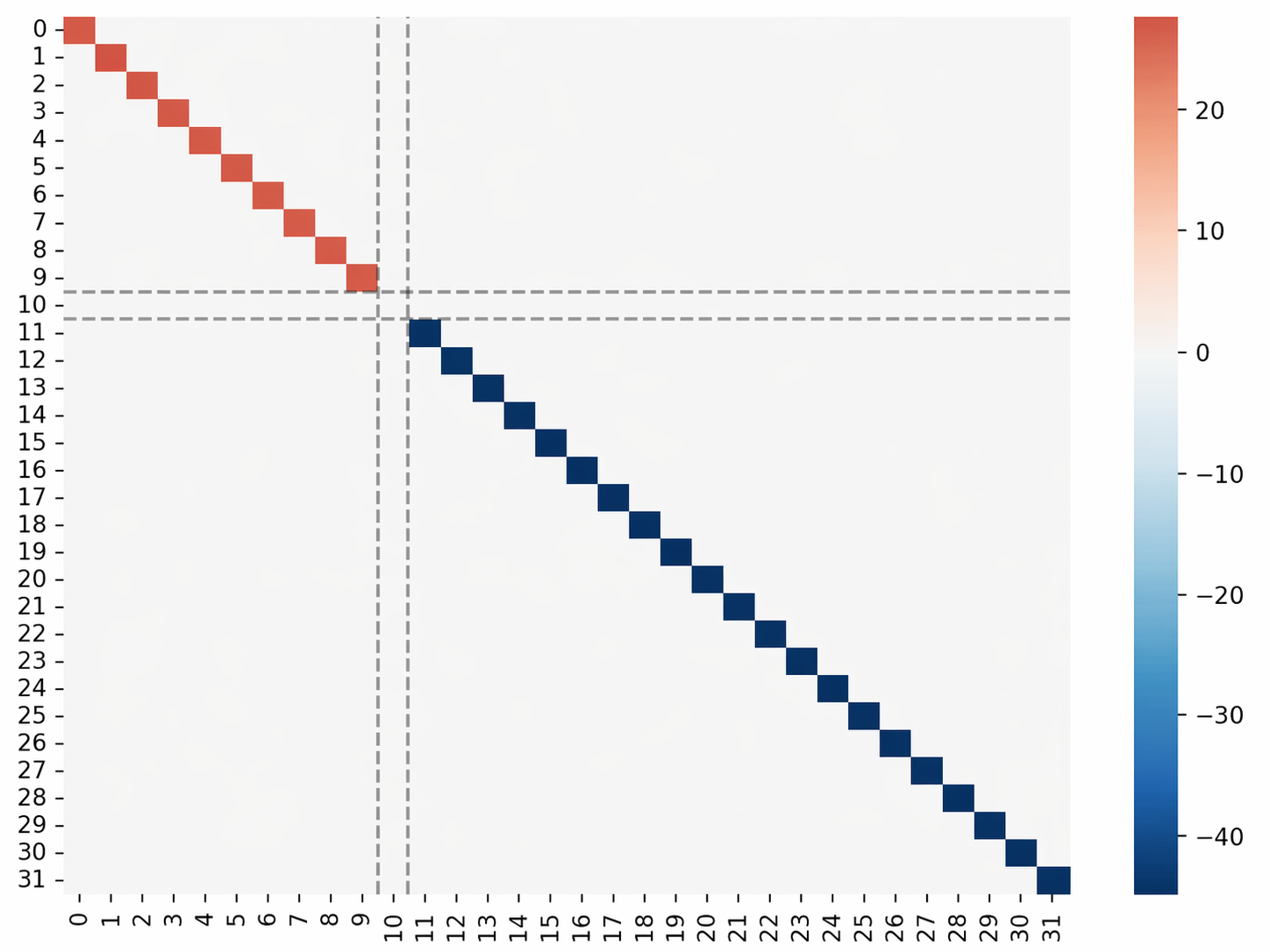}
        \caption{\textbf{Learned $\Wb$}}
        \label{fig:joint_w_heatmap}
    \end{subfigure}
    \hfill
    \begin{subfigure}[t]{0.36\textwidth}
        \centering
        \includegraphics[width=\linewidth]{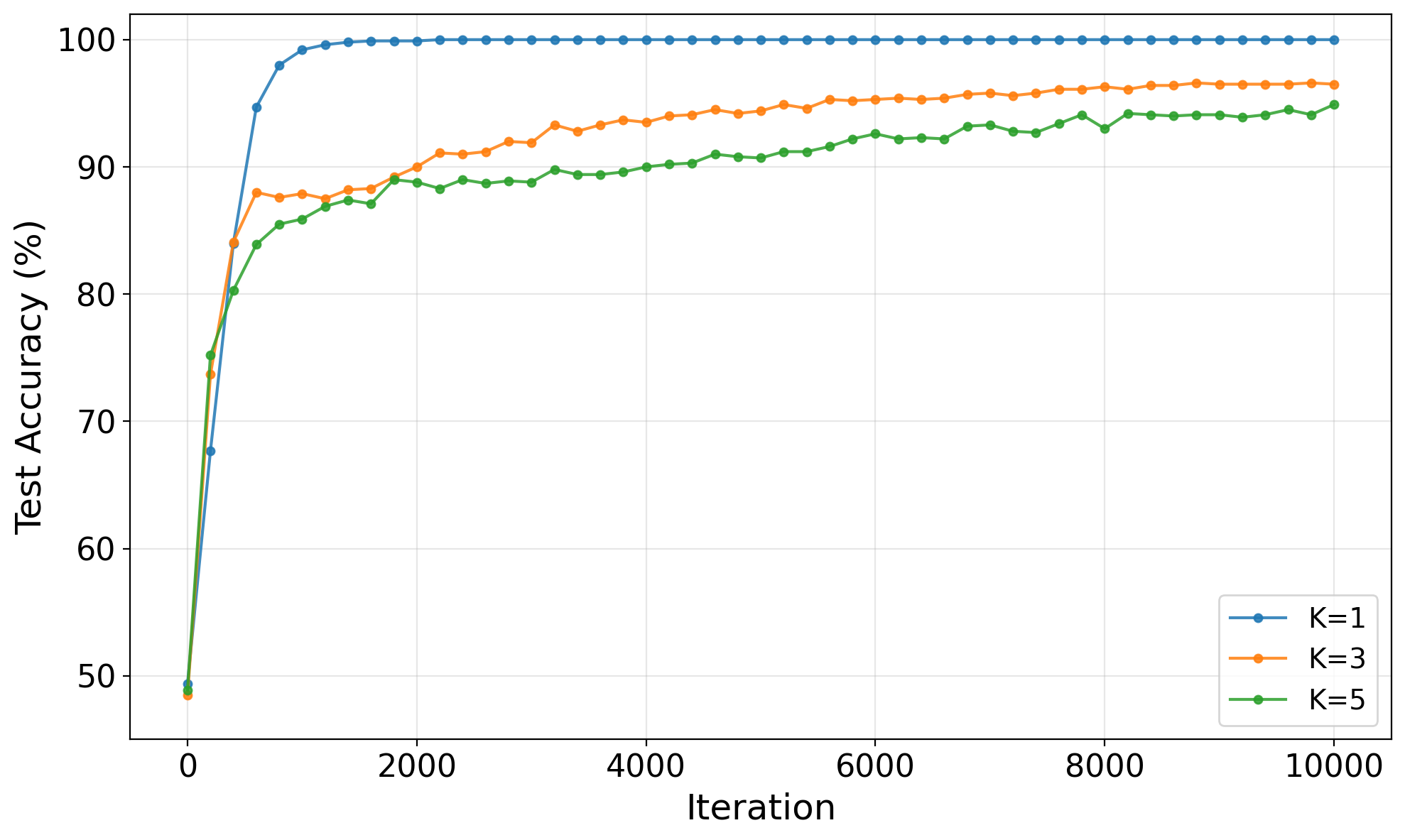}
        \caption{\textbf{Test accuracy}}
        \label{fig:joint_test_accuracy}
    \end{subfigure}

    \caption{\textbf{Joint training of $\Wb$ and $\Vb$.}
    We train both matrices jointly from random initialization. The learned $\Vb$ and $\Wb$ recover the construction in Theorem~\ref{thm:expressive power}, and the resulting transformer achieves high test accuracy on the $\mathrm{K}$-NN task.}
    \label{fig:joint_training_results}
\end{figure*}

\noindent\textbf{Test Accuracy.}
We further evaluate the jointly trained transformer on the $\mathrm{K}$-NN task for $K=1,3,5$. As shown in Figure~\ref{fig:joint_test_accuracy}, the test accuracy converges to nearly 1 across different values of $K$. This demonstrates that the jointly trained model not only recovers the construction in Theorem~\ref{thm:expressive power}, but also retains the multi-step CoT behavior.

\subsection{Direct-inference}\label{app:direct_inference}

In this subsection, we further examine direct inference using transformers with additional heads or layers and compare their performance with CoT. We also report results for the one-head, one-layer transformer with $N=40$ and $d=10$, complementing the corresponding results with $N=20$ reported in Section~\ref{sec:numerical experiment}.

Specifically, we set $N=40$ and $d=10$ and consider three architectures: a one-head, one-layer transformer, a four-head, one-layer transformer, and a one-head, two-layer transformer. For each architecture, we train a separate direct-inference baseline for each $K\in\{1,3,5\}$ for $10000$ iterations. During training, each model receives only the initial input and is supervised to predict the  final $\mathrm{K}$-NN target state $
\qb^{(K)}=\big[\xb_q^\top,\ \sum_{i=1}^{K} y_{\pi(i)},\ \pb_{N+1}^\top+\sum_{i=1}^{K} \pb_{\pi(i)}^\top\big]^\top
$ directly in a single forward pass.

At inference time, each trained direct-inference model performs only a single forward pass on the initial input, without intermediate-state generation or autoregressive rollout. We evaluate each direct model on the test task corresponding to the value of $K$ for which it was trained. In addition, for the models trained with $K=3$ and $K=5$, we also evaluate their predictions on the $K=1$ test set. This additional evaluation allows us to examine whether models trained for larger $K$ nevertheless retain behavior close to $1$-NN prediction.

We summarize the new experiments and compare them with the CoT results in table \ref{tab:cot_vs_direct}.

\begin{table}[t]
    \centering
    \caption{Test accuracy of CoT and direct models across different
    values of $K$.}
    \label{tab:cot_vs_direct}
    \small
    \setlength{\tabcolsep}{5pt}
    \begin{tabular}{llccc}
        \toprule
        Model & Training / Inference
        & $K=1$ & $K=3$ & $K=5$ \\
        \midrule
        
        \multirow{4}{*}{1-head, 1-layer}
        & CoT / CoT rollout
        & 100.0 & 99.8 & 98.3 \\
        & Direct ($K=1$) / Single-pass
        & 100.0 & -- & -- \\
        & Direct ($K=3$) / Single-pass
        & 100.0 & 74.7 & -- \\
        & Direct ($K=5$) / Single-pass
        & 99.6 & -- & 68.8 \\
        
        \midrule

        \multirow{4}{*}{1-head, 2-layer}
        & CoT / CoT rollout
        & 100.0 & 99.1 & 92.8 \\
        & Direct ($K=1$) / Single-pass
        & 100.0 & -- & -- \\
        & Direct ($K=3$) / Single-pass
        & 100.0 & 74.7 & -- \\
        & Direct ($K=5$) / Single-pass
        & 97.9 & -- & 69.2 \\
        \midrule

        \multirow{4}{*}{4-head, 1-layer}
        & CoT / CoT rollout
        & 100.0 & 99.7 & 98.2 \\
        & Direct ($K=1$) / Single-pass
        & 100.0 & -- & -- \\
        & Direct ($K=3$) / Single-pass
        & 100.0 & 75.1 & -- \\
        & Direct ($K=5$) / Single-pass
        & 99.9 & -- & 68.5 \\
        \bottomrule
    \end{tabular}
\end{table}

First, the results for the one-head, one-layer transformer with $N=40$ are consistent with those reported in Section \ref{sec:numerical experiment} for $N=20$. In particular, the direct-inference accuracies for $K=3$ and $K=5$ remain close to $75\%$ and $69\%$, respectively, while the corresponding CoT accuracies remain near perfect. Moreover, direct-inference learning of $\mathrm{K}$-NN remains difficult even with additional heads or layers: the direct models solve $1$-NN well, but their accuracies remain far below those of the CoT models for $K=3$ and $K=5$.

Second, the training of direct-inference models converges toward suboptimal $1$-NN-like solutions. Specifically, these models achieve nearly perfect $1$-NN accuracy but low $3$-NN and $5$-NN accuracies, even when supervised by $3$-NN or $5$-NN prediction rules. Since we use i.i.d. $\{\pm 1\}$ labels in the experiments, we can verify that the theoretical accuracies of a $1$-NN predictor on the $3$-NN and $5$-NN prediction tasks are $75\%$ and $68.75\%$, respectively. These values are close to our experimental results and explain the test accuracies achieved by the direct-inference models on the $3$-NN and $5$-NN tasks.


These results suggest that, during training, direct-inference models with additional heads or layers may continue to attend to the same context data point rather than retrieve multiple distinct neighbors. In contrast, CoT not only provides a natural sequential mechanism for the models to express $\mathrm{K}$-NN prediction rules, but also helps training successfully converge to such predictors. These results are consistent with both our expressive-power results and our training guarantees.

Although universal approximation results suggest that sufficiently large models may be able to represent the $\mathrm{K}$-NN prediction rule in principle, such results do not imply that gradient-based training will discover this mechanism, nor that the learned model will reliably generalize to test data. Such large-scale models are beyond the scope of this work.

\bibliography{ref}

\bibliographystyle{ims}

\end{document}